\newtheorem*{prop*}{Proposition}
\newtheorem{proposition}{Proposition}
\newtheorem{theorem}{Theorem}
\newtheorem*{theorem*}{Theorem}
\newtheorem*{lemma*}{Lemma}
\newtheorem*{property*}{Property}
\newtheorem*{remark}{Remark}
\newtheorem{definition}{Definition}
\newtheorem*{definition*}{Definition}
\newtheorem*{corollary*}{Corollary}
\newtheorem{assumption}{Assumption}
\newtheorem*{assumption*}{Assumption}
\newcommand{\rrank}{\textnormal{rank}}
\newcommand{\rvec}{\textnormal{vec}}
\newcommand{\rdet}{\textnormal{det}}
\newcommand{\rtr}{\textnormal{Tr}}
\newcommand{\rrow}{\textnormal{row}}
\newcommand{\rcol}{\textnormal{col}}
\newcommand{\rspan}{\textnormal{span}}
\title{Connectivity Shapes Implicit Regularization in Matrix Factorization Models for Matrix Completion}
\author{
Zhiwei Bai\textsuperscript{\rm 1}, Jiajie Zhao\textsuperscript{\rm 1},
Yaoyu Zhang\textsuperscript{\rm 1}\thanks{Corresponding author: zhyy.sjtu@sjtu.edu.cn.}\\
\textsuperscript{\rm 1}  School of Mathematical Sciences, Institute of Natural Sciences, MOE-LSC, \\ Shanghai Jiao Tong University, Shanghai 200240, P.R. China. \\
\{bai299,  zjj0216, zhyy.sjtu\}@sjtu.edu.cn.
  % examples of more authors
  % \And
  % Coauthor \\
  % Affiliation \\
  % Address \\
  % \texttt{email} \\
  % \AND
  % Coauthor \\
  % Affiliation \\
  % Address \\
  % \texttt{email} \\
  % \And
  % Coauthor \\
  % Affiliation \\
  % Address \\
  % \texttt{email} \\
  % \And
  % Coauthor \\
  % Affiliation \\
  % Address \\
  % \texttt{email} \\
}
\author{
First Author$^1$
\and
Second Author$^2$\and
Third Author$^{2,3}$\And
Fourth Author$^4$
\affiliations
$^1$First Affiliation\\
$^2$Second Affiliation\\
$^3$Third Affiliation\\
$^4$Fourth Affiliation
\emails
\{first, second\}@example.com,
third@other.example.com,
fourth@example.com
}
\begin{document}

\maketitle

\begin{abstract}
Matrix factorization models have been extensively studied as a valuable test-bed for understanding the implicit biases of overparameterized models. Although both low nuclear norm and low rank regularization have been studied for these models, a unified understanding of when, how, and why they achieve different implicit regularization effects remains elusive. In this work, we systematically investigate the implicit regularization of matrix factorization for solving matrix completion problems. We empirically discover that the connectivity of observed data plays a crucial role in the implicit bias, with a transition from low nuclear norm to low rank as data shifts from disconnected to connected with increased observations. We identify a hierarchy of intrinsic invariant manifolds in the loss landscape that guide the training trajectory to evolve from low-rank to higher-rank solutions. Based on this finding, we theoretically characterize the training trajectory as following the hierarchical invariant manifold traversal process, generalizing the characterization of~\cite{li2020towards} to include the disconnected case. Furthermore, we establish conditions that guarantee minimum nuclear norm, closely aligning with our experimental findings, and we provide a dynamics characterization condition for ensuring minimum rank. Our work reveals the intricate interplay between data connectivity, training dynamics, and implicit regularization in matrix factorization models.
\end{abstract}

%-----------------------------------------
\section{Introduction}
Overparameterized models have the capacity to easily fit data with random labels~\citep{zhang2017understanding, zhang2021understanding}. However, in real-world applications, models with more parameters than training samples still generalize well. This has led researchers to hypothesize that overparameterized models undergo implicit regularization, favoring certain functions as outputs. Overparameterized matrix factorization models, $\vf_{\vtheta} = \vA \vB$ with $\vtheta = (\vA, \vB), \vA, \vB \in \sR^{d\times d}$, have served as a simplified test-bed for studying this implicit regularization. In the context of matrix completion problems like the Netflix challenge, these models aim to find a low-rank completion of a partially observed matrix $\vM \in \sR^{d\times d}$. Prior works have offered seemingly conflicting perspectives on the implicit regularization at play, with some claiming it promotes low nuclear norm~\citep{gunasekar2017implicit} and others arguing for low rank~\citep{arora2019implicit,li2020towards,razin2020implicit}. However, a unified understanding of when, how, and why they achieve different implicit regularization effects remains elusive.

Unlike previous works that focus on either low rank or low nuclear norm regularization,, we systematically investigate the training dynamics and implicit regularization of matrix factorization for matrix completion. Through extensive experiments, we found that a certain connectivity property of observed data plays a key role in the implicit regularization effects. Data connectivity, in the context of this paper, refers to the way observed entries in the matrix are linked through shared rows or columns. A set of observations is considered connected if there's a path between any two observed entries via other observed entries in the same rows or columns. This concept plays a crucial role in determining the behavior of matrix factorization models, as we will demonstrate throughout this paper. As shown in Fig.~\ref{fig:Nuclear norm rank}, we sample observations randomly from a ground truth matrix $\vM^*\in \sR^{d\times d}$ with $\rrank(\vM^*)<d$ and train models $\vf_{\vtheta} = \vA \vB, \vA, \vB\in \sR^{d\times d}$ from small random initialization without any rank constraints. For each observation set, we calculate the solutions with the minimum nuclear norm and minimum rank, which serve as the ground truth benchmarks. These are then compared with the completion matrix obtained by the model. From Fig.~\ref{fig:Nuclear norm rank}, we observe that:

(i) \textbf{Low rank bias in connected case:} When the observed entries are connected, the model consistently learns the lowest-rank solution.

(ii) \textbf{Low nuclear norm bias in certain disconnected case:} When the observed entries are disconnected, the model generally does not find the minimum nuclear norm or lowest-rank solution. However, in the special case where each connected component is a complete bipartite subgraph, the model consistently finds the minimum nuclear norm solution.

To understand how data connectivity modulates the implicit bias, we analyze the loss landscape and optimization dynamics. We find a hierarchy of intrinsic invariant manifolds $\vOmega_k$ of different ranks in the loss landscape. These manifolds constrain the optimization trajectory, causing the model to learn by incrementally ascending through higher ranks. In the disconnected case, additional sub-$\vOmega_k$ invariant manifolds emerge within the $\vOmega_k$ invariant manifold, preventing the model from reaching the global lowest-rank solution. However, we prove that the minimum nuclear norm solution is guaranteed in the disconnected with complete bipartite subgraph case.

The contributions of our work are summarized as follows:

(i) We systematically investigate the influence of data connectivity on the implicit regularization. Our empirical findings indicate that the connectivity of observed data plays a key role in the implicit bias, leading to a transition from favoring solutions with a low nuclear norm to those with a low rank as the data becomes more connected with an increase in observations (refer to Sec.~\ref{sec:connectivity_and_implicit}).

(ii) We characterize the training dynamics of matrix factorization theoretically, showing that the optimization trajectory follows a \textit{Hierarchical Invariant Manifold Traversal (HIMT)} process. This generalizes the characterization of~\citet{li2020towards}, whose proposed \textit{Greedy Low-Rank Learning(GLRL)} algorithm equivalence only corresponding to the connected case (refer to Sec.~\ref{sec:training_dynamics} and Sec.~\ref{sec:theoretical_dynamics}).

(iii) Regarding the minimum nuclear norm regularization, we establish conditions that provide guarantees closely aligned with our empirical findings, which complement the results of~\citet{gunasekar2017implicit}. For the minimum rank regularization, we present a dynamic characterization condition that assures the attainment of the minimum rank solution (refer to Sec.~\ref{sec:implicit_bias}).

\begin{figure}[t]
    \centering
    % \subfloat[Rank]{\includegraphics[width=0.46\textwidth]{}}
    % \subfloat[Nuclear norm]{\includegraphics[width=0.46\textwidth]{}}
    \includegraphics[width=0.9\textwidth]{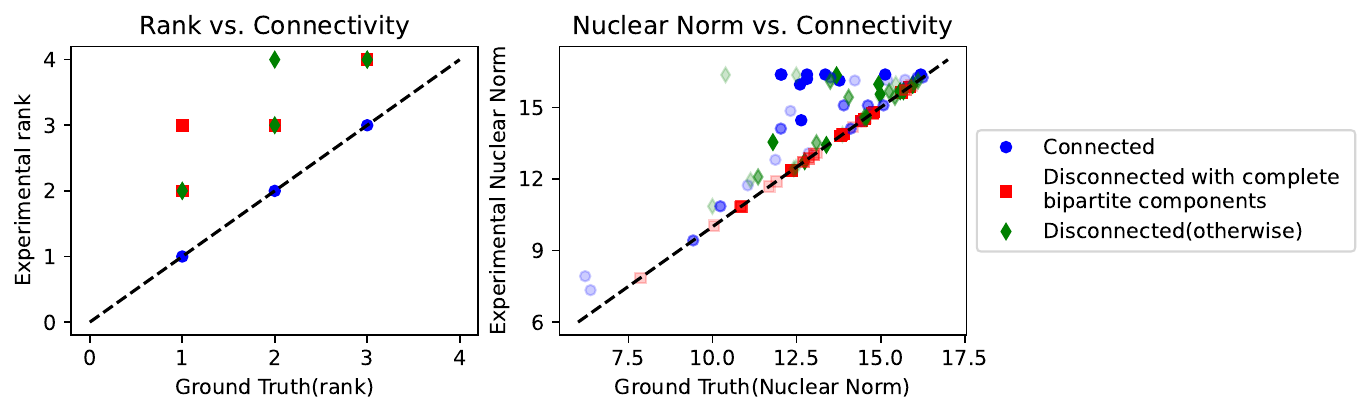}
\caption{\textbf{The connectivity of observed data affects the implicit regularization.} The ground truth matrix $\vM^*\in \sR^{4\times 4}$ has rank ranging from 1 to 3. The sample size $n$ covers settings where $n$ is equal to, smaller than, and larger than the $2rd-r^2$ threshold required for exact reconstruction. Darker scatter points indicate a greater number of samples, while lighter points indicate fewer samples. The positions of observed entries are randomly chosen, and the experiment is repeated 10 times for each sample size. (Please refer to Appendix~\ref{app:B} for additional experiments and detailed methodology.)
% The positions of observed entries are randomly chosen for each instance. We record the connectivity structure of the observation data and compute the minimum nuclear norm and minimum rank solutions as ground truth.
% We conduct extensive experiments on $4\times 4$ matrix completion problems under small initialization. 
% The sample size $n$ is randomly selected, covering settings where $n$ is equal to, smaller than, and larger than the $2rd-r^2$ threshold required for exact reconstruction. 
% The positions of observed entries are randomly chosen for each instance. We record the connectivity structure of the observation data and compute the minimum nuclear norm and minimum rank solutions as ground truth. These are compared with the solutions learned by the matrix factorization model. To determine the effective rank of the learned matrix, we use an extrapolation approach: if a singular value consistently decreases towards zero as the initialization scale is reduced, it is not counted towards the rank (see Appendix~\ref{sec:rank_extrapolation} for details).
}
\label{fig:Nuclear norm rank}
\end{figure}

\section{Related works}
\paragraph{Norm minimization and rank minimization.} Extensive research has been conducted on the implicit regularization of matrix factorization models, focusing on norm minimization and rank minimization. For norm minimization, \citet{gunasekar2017implicit} proved that gradient flow with infinitesimal initialization converges to the minimum nuclear norm solution in the special case of commutative observations. \citet{ji2019gradient,gunasekar2018implicit} studied norm minimization regularization in deep linear networks. For rank minimization, numerous works have shown that matrix factorization models favor low-rank solutions. \citet{arora2019implicit,gidel2019implicit,gissin2019implicit,razin2020implicit,jiang2023algorithmic,belabbas2020implicit} investigated how infinitesimal initialization of gradient flow encourages low rank in specific settings. \citet{li2020towards} showed that under certain assumptions, matrix factorization dynamics are equivalent to a greedy low-rank learning heuristic. \citet{li2018algorithmic,stoger2021small,jin2023understanding} established low-rank recovery guarantees for matrix sensing problems under the Restricted Isometry Property (RIP) condition. \citet{zhang2022linear,zhang2023optimistic} studied a broader class of model rank minimization for nonlinear models, of which the matrix factorization model is a special case. 
% In contrast to these works, this work systematically investigates how different observation patterns in matrix completion problems affect implicit regularization behavior.

\paragraph{Nonlinear dynamics.} The initialization scale can significantly influence the implicit regularization of neural networks. Large initialization typically leads to linear dynamics~\citep{jacot2018neural} and poor generalization~\citep{chizat2019lazy}, while small initialization induces nonlinear dynamics~\citep{luo2021phase}. In this work, we focus on the case of infinitesimal initialization, which corresponds to highly nonlinear dynamics. An important characteristic of nonlinear neural network dynamics is the phenomenon of condensation \citep{luo2021phase, zhou2022towards}, where the network's effective complexity is small. The low-rank $\vOmega_k$ invariant manifolds we propose are essentially a manifestation of condensation. \citet{zhang2021embedding,embedding2021long,bai2022embedding,fukumizu2019semi,csimcsek2021geometry} established the embedding principle of the loss landscape of neural networks and empirically demonstrated that the training process traverses critical points embedded from smaller subnetworks. \citet{jacot2021saddle} conjectured a saddle to saddle dynamics for deep linear networks, which is conceptually analogous to the dynamics characterization in this work.

% These subnetwork critical points are conceptually analogous to the critical points of the $\vOmega_k$ invariant manifolds we study in this paper. 

\section{Preliminaries}
% This section introduces the main problem settings and notation used throughout the paper.
\paragraph{Matrix completion problem.} This study focuses on the matrix completion problem, which involves estimating missing entries within a partially observed matrix. Given an incomplete matrix $\vM \in \mathbb{R}^{d \times d}$, the goal is to predict the entirety of $\vM$ based on its observed elements. The set of observed entries is represented as $S = \{(i_k, j_k), \vM_{i_k, j_k}\}_{k=1}^{n}$, where $(i_k, j_k)$ indicates the row and column indices, and $\vM_{i_k, j_k}$ is the corresponding value assumed non-zero in the matrix. The set of observed indices is defined as $S_{\vx} = \{(i_k, j_k)\}_{k=1}^{n}$. Entries that are not observed, denoted by $\star$, are considered missing or unknown. The positions of observed elements in the matrix $\vM$ are defined by a binary observation matrix $\vP$, where $\vP_{ij}=1$ indicates that $\vM_{ij}$ is observed, and $\vP_{ij}=0$ indicates that $\vM_{ij}$ is unobserved.

\paragraph{Matrix factorization model.} Matrix factorization is a prevalent approach for addressing the matrix completion problem. It reconstructs the matrix $\vW \in \sR^{d \times d}$ through the product $\vW = \vA \vB$, where $\vA \in \sR^{d \times r}$ and $\vB \in \sR^{r \times d}$. This work studies the overparameterized scenario with $r = d$, aiming to understand the implicit regularization effect in the absence of explicit rank restrictions, paralleling prior research~\citep{gunasekar2017implicit, arora2019implicit, li2020towards, jin2023understanding}. In this work, we focus on the asymmetric factorization, which can be represented as a parametric model:
\begin{equation}
\vf_{\vtheta} = \vA \vB, \quad \vA, \vB\in \sR^{d\times d}.
\end{equation}
The matrix factorization model parameters are denoted by $\vtheta = (\vA, \vB)$, identified with its vectorized form $\mathrm{vec}(\vtheta) \in \sR^{2d^2}$. The augmented matrix is $\boldsymbol{W}_{\mathrm{aug}}^{\top} = \left[\begin{matrix} \boldsymbol{A}^{\top} & \boldsymbol{B} \end{matrix}\right]^{\top} \in \sR^{d \times 2d}$, and $\rrow(\vA)$ and $\rcol(\vB)$ denote the row and column spaces of $\vA$ and $\vB$, respectively. The augmented matrix $\vW_\text{aug}$ plays a crucial role in our subsequent analysis, particularly in characterizing the intrinsic invariant manifolds $\vOmega_k$ of the optimization process. Specifically, it allows us to establish the relationship $\text{rank}(\vA) = \text{rank}(\vB^\top) = \text{rank}(\vW_{\text{aug}})$, which is important to understanding the invariance property under gradient flow.

\paragraph{Loss function.} The learning process for the parameters $\vtheta = (\vA, \vB)$ involves minimizing a loss function that measures the difference between observed and estimated entries.  In this work, we focus on the mean squared error, and the empirical risk is thus formulated as
\begin{equation}
R_S(\boldsymbol{\theta})= \frac{1}{n}\|(\vA \vB - \vM)_{S_{\vx}}\|_F^2:=\dfrac{1}{n}\sum\limits_{k=1}^{n}(\boldsymbol{a}_{i_k}\cdot\boldsymbol{b}_{\cdot, j_k}-\boldsymbol{M}_{i_k, j_k})^2,
\label{eq:loss_function}
\end{equation}
where $\boldsymbol{a}_i$ and  $\boldsymbol{b}_{\cdot, j}$ represent the $i$-th row and $j$-th column of matrix $\boldsymbol{A}$ and $\boldsymbol{B}$, respectively. The residual matrix $\delta \vM = (\vA\vB - \vM)_{S_{\vx}}$ has elements $\delta \vM_{ij} = (\vA\vB)_{ij} - \vM_{ij}$ for $(i, j)\in S_{\vx}$ and $\delta \vM_{ij} = 0$ for $(i, j)\notin S_{\vx}$. The training dynamics follow the gradient flow of $\RS(\vtheta)$:
\begin{equation}
\dfrac{\D \vtheta}{\D t} = -\nabla_{\vtheta} \RS(\vtheta), \quad 
\vtheta(0) = \vtheta_0.
\label{eq:GF}
\end{equation}
In all experiments, $\vtheta_0 \sim N(0, \sigma^2)$ is initialized from a Gaussian distribution with mean 0 and small variance $\sigma^2$. We use gradient descent with a small learning rate to approximate the gradient flow dynamics (Please refer to Appendix~\ref{app:B:experimental_setup} for the detailed experiment setup).
% In practice, we use gradient descent with a small learning rate to approximate the gradient flow dynamics. Our experiments and analysis focus on small initialization to understand the training dynamics and implicit biases, in line with established literature~\citep{gunasekar2017implicit, arora2019implicit, li2020towards, jin2023understanding}.
% \begin{definition}[\textbf{Second-order stationary point within a manifold}]
% Given a manifold $\fM$ and a critical point $\vtheta_c\in \fM$ of the loss function $R_S(\vtheta)$, i.e., $\nabla R_S(\vtheta) = 0$, we say $\vtheta_c$ is a second-order stationary point within $\fM$ if $\vtheta^\top \nabla^2 R_S(\vtheta_c)\vtheta \geq 0$ for all $\vtheta \in \fM$.
% \end{definition}
% 
\section{Connectivity affects implicit regularization}
\label{sec:connectivity_and_implicit}
In this section, we define connectivity and present experimental results on implicit regularization for connected and disconnected observational data.
\begin{definition}[\textbf{Associated Observation Graph}]
Given a incomplete matrix $\vM$ to be completed and its observation matrix $\vP$, the associated observation graph $G_{\vM}$ is the bipartite graph with adjacency matrix $\begin{bmatrix}\vzero & \vP^\top \\ \vP & \vzero\end{bmatrix}$, with isolated vertices removed.
\end{definition}

\begin{definition}[\textbf{Connectivity}]
Given a incomplete matrix $\vM$ to be completed, it is considered connected if its associated observation graph $G_{\vM}$ is connected; otherwise, it is disconnected. The connected components of $\vM$ are defined as the connected components of $G_{\vM}$.
\label{def:connected}
\end{definition}

The connectivity of the graph, as defined above, reflects the connectivity of the observed data. Appendix~\ref{app:A} Sec.~\ref{app:A_sec:connectivity} provides a detailed discussion on the equivalent definition of connectivity.

In the case of disconnectivity, there is a special case where each connected component has full observations, characterized by disconnectivity with complete bipartite components.
\begin{definition}[\textbf{Disconnectivity with Complete Bipartite Components}]
A incomplete matrix $\vM$ is considered disconnected with complete bipartite components if its associated observation graph $G_{\vM}$ is disconnected and each connected component forms a complete bipartite subgraph.
\end{definition}
We present examples to demonstrate how connectivity influences the characteristics of the learned solutions. Consider three matrices to be completed, each obtained by adding one more observation to the previous matrix: $\vM_1$ (disconnected), $\vM_2$ (disconnected with complete bipartite components), and $\vM_3$ (connected). Fig.~\ref{fig:connectivity_examples} of Appendix~\ref{app:B} illustrates the associated graphs $G_{\vM}$.
\begin{equation}
\vM_1 = \begin{bmatrix}1 & 2 & \star\\ 3 & \star & \star\\ \star & \star & 5\end{bmatrix}, \vM_2 = \begin{bmatrix}1 & 2 & \star\\ 3 & 4 & \star\\ \star & \star & 5\end{bmatrix}, \vM_3 = \begin{bmatrix}1 & 2 & \star\\ 3 & 4 & \star\\ 6 & \star & 5\end{bmatrix}.
\label{eq:examples}
\end{equation}
Figs.~\ref{fig:examples}(a-b) compare the learned matrices with the ground truth (GT) solutions having the smallest nuclear norm and rank. For disconnected $\vM_1$ (blue bars), the learned solution achieves neither the smallest nuclear norm nor rank. For disconnected $\vM_2$ with complete bipartite components (green bars), the learned matrix has the smallest nuclear norm but not rank. For connected $\vM_3$ (red bars), the lowest rank-2 solution is not unique;  the model identifies a particular lowest rank-2 solution, but it does not correspond to the one with the minimum nuclear norm.

\begin{figure}[htb]
    \centering
    \subfloat[Nuclear norm]{\includegraphics[height=0.175\textwidth]{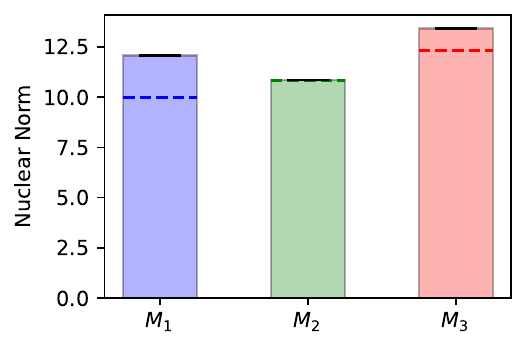}}
    \subfloat[Rank]{\includegraphics[height=0.175\textwidth]{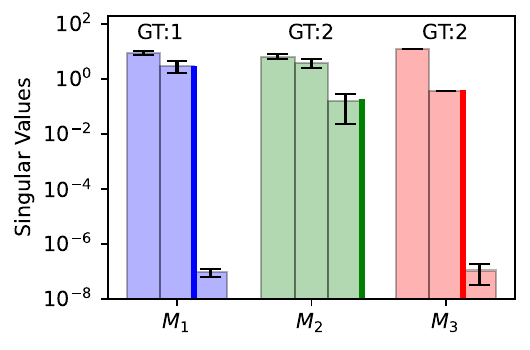}}
    % \subfloat[$4\times 4$ matrix]{\includegraphics[height=0.175\textwidth]{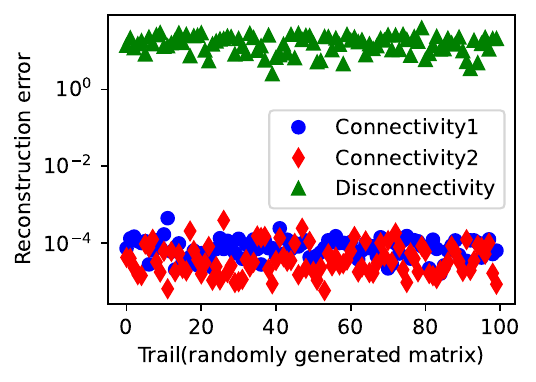}}
    \subfloat[$3\times 3$ matrix]{\includegraphics[height=0.175\textwidth]{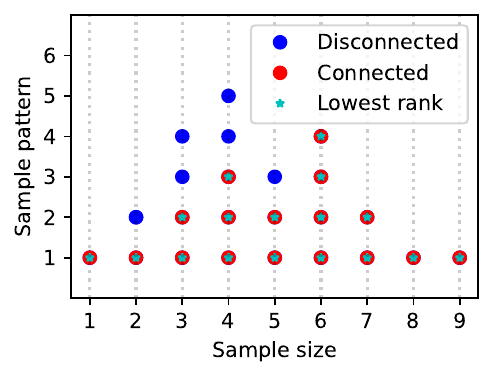}}
    \subfloat[$10\times 10$ matrix]{\includegraphics[height=0.175\textwidth]{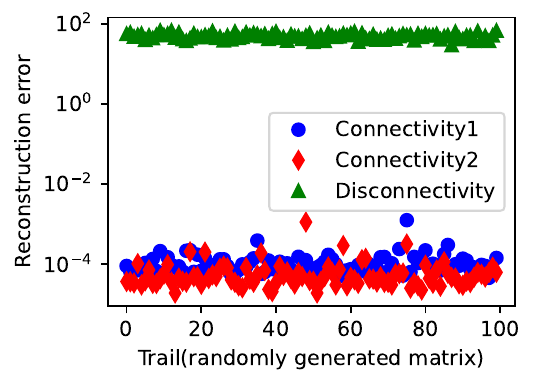}}
    \caption{(a) Nuclear norms of the learned solutions for $\vM_1$, $\vM_2$, and $\vM_3$. Dashed lines represent theoretically computed smallest nuclear norms. (b) Singular values of the learned matrices for $\vM_1, \vM_2, \vM_3$. Each set of three bars represents the singular values of a matrix. The thick vertical lines partition significantly nonzero singular values, which serves as the empirical rank. The text (GT) shows the ground truth minimum rank. Mean and standard deviation are recorded over 100 repetitions. (c) All equivalent sampling patterns of the $3\times 3$ matrix completion problem (see Appendix~\ref{app:B} for details). Cyan stars marked the case learning the lowest-rank solution. (d) Reconstruction error of the solutions for a $10\times 10$ matrix reconstruction problem with $\vM^*$ randomly sampled at rank $r=1$ and sample size set to the minimum reconstruction setting $n = 2rd-r^2$. 
    % Red and blue scatter points represent two connected sampling patterns, while green points represent a disconnected pattern.
    % The initialization scale considerably influences the ultimate learning outcome. See Appendix~\ref{app:B} for a detailed discussion.
    }
    \label{fig:examples}
\end{figure}

To thoroughly study all possible cases, we examine all sampling patterns of the $3\times 3$ matrix completion. Fig.~\ref{fig:examples}(c) shows that the model consistently learns the lowest-rank solution for connected sampling patterns but fails to do so for disconnected patterns. Fig.~\ref{fig:examples}(d) further verifies the impact of connectivity on low-rank matrix recovery by comparing the reconstruction error for 100 randomly sampled rank-1 matrices using two connected sampling patterns (red and blue dots) and one disconnected sampling pattern (green dots). The model consistently achieves small reconstruction errors under connected sampling patterns, while the error is significantly larger for the disconnected pattern.

These empirical results demonstrate an implicit preference for low rank induced by connectivity and a preference for low nuclear norm in a particular kind of disconnection. In the following section, we will investigate the training dynamics under both connected and disconnected scenarios.

\section{Training dynamics in connected and disconnected cases}
\label{sec:training_dynamics}
% In this section, we begin by examining the detailed dynamics of matrix factorization models. We will introduce a special learning behavior exhibited by these models under small initialization, which we term \textit{Hierarchical Invariant Manifold Traversal (HIMT)}.

\subsection{Connected case}
\label{sec:connected_case}
This section empirically demonstrates the detailed dynamics of connected observed data. Fig.~\ref{fig:adaptive-learning}(a) shows the connected target matrix $\vM$ with a single unknown element denoted by $\star$. The rank of $\vM$ is at least three and equals three if and only if $\star = 1.2$.

\begin{figure}[htbp]
\centering
\subfloat[Matrix completion]{
\includegraphics[height=0.15\textwidth]{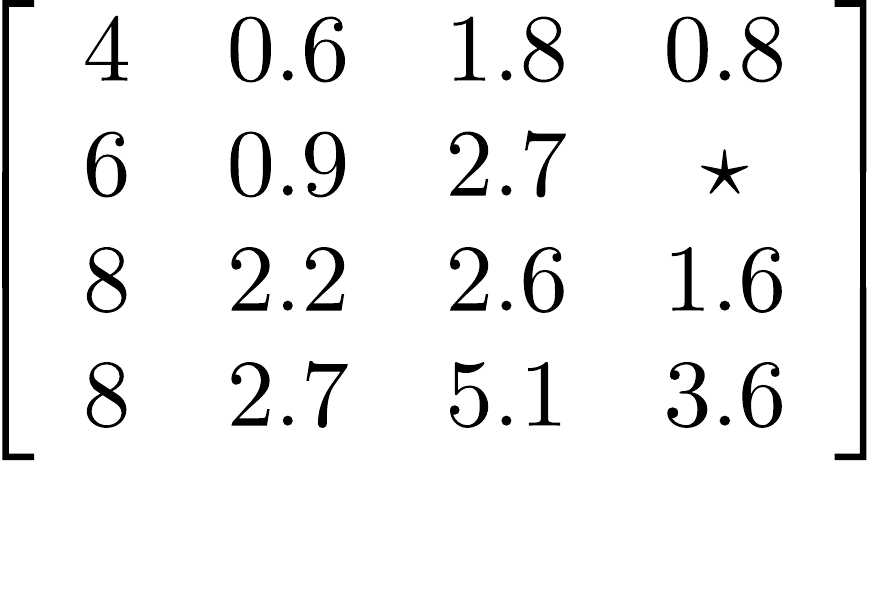}
}
\subfloat[Initialization scale]{
\includegraphics[height=0.17\textwidth]{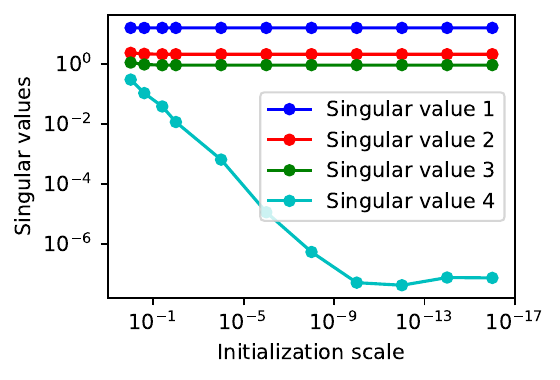}
}
\subfloat[Training loss]{
\includegraphics[height=0.17\textwidth]{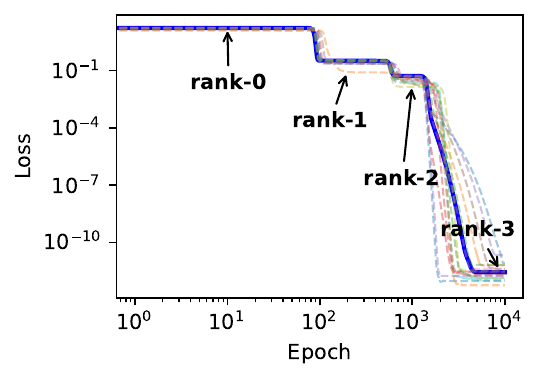}
}
\subfloat[Gradient norm]{
\includegraphics[height=0.17\textwidth]{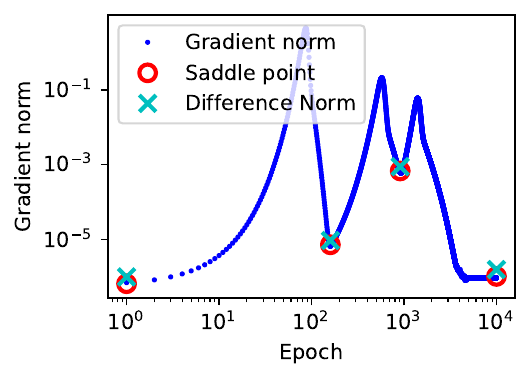}
}
\\
\subfloat[Singular values of $\mW$]{
\includegraphics[height=0.178\textwidth]{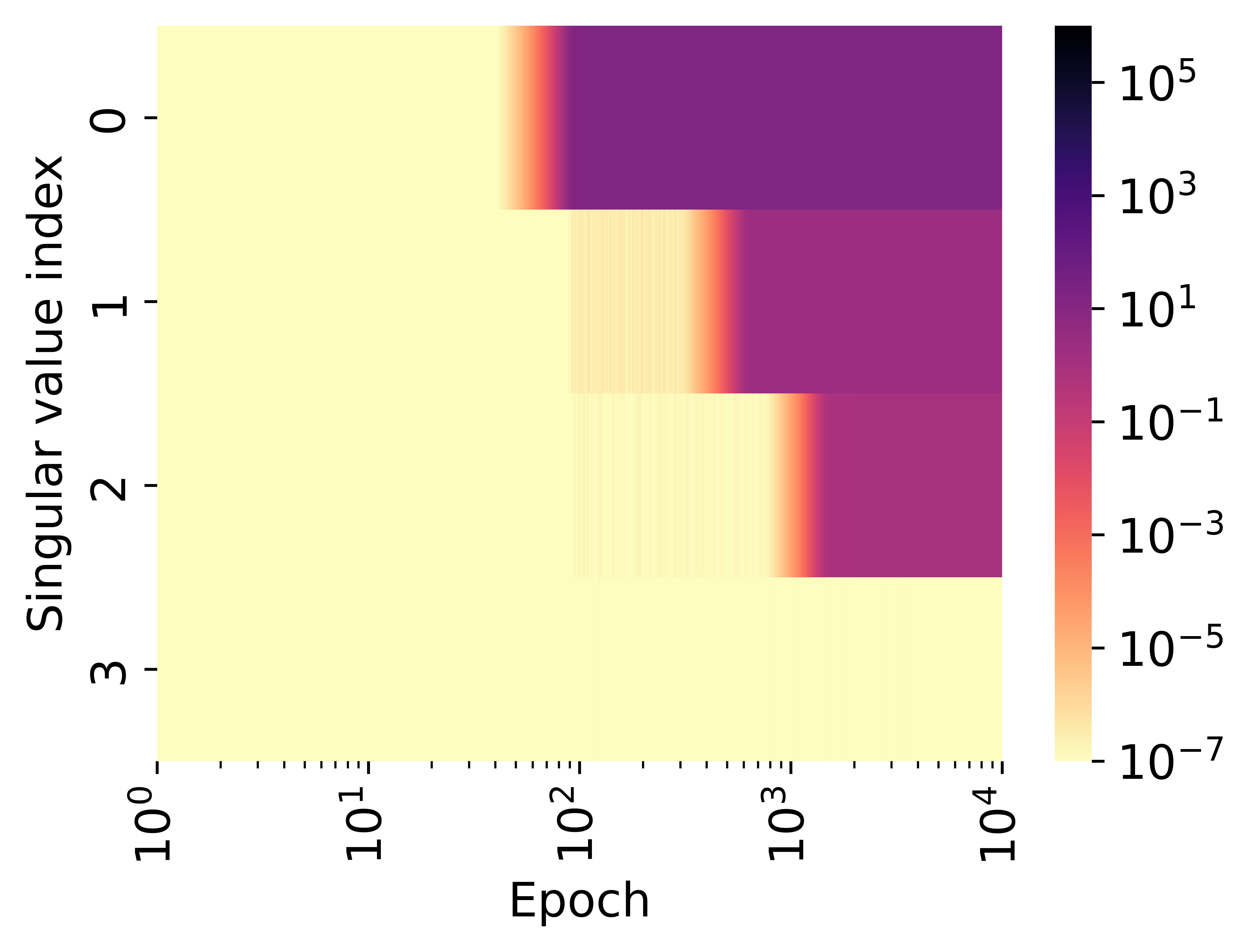}
}
\subfloat[Singular values of $\mA$]{
\includegraphics[height=0.178\textwidth]{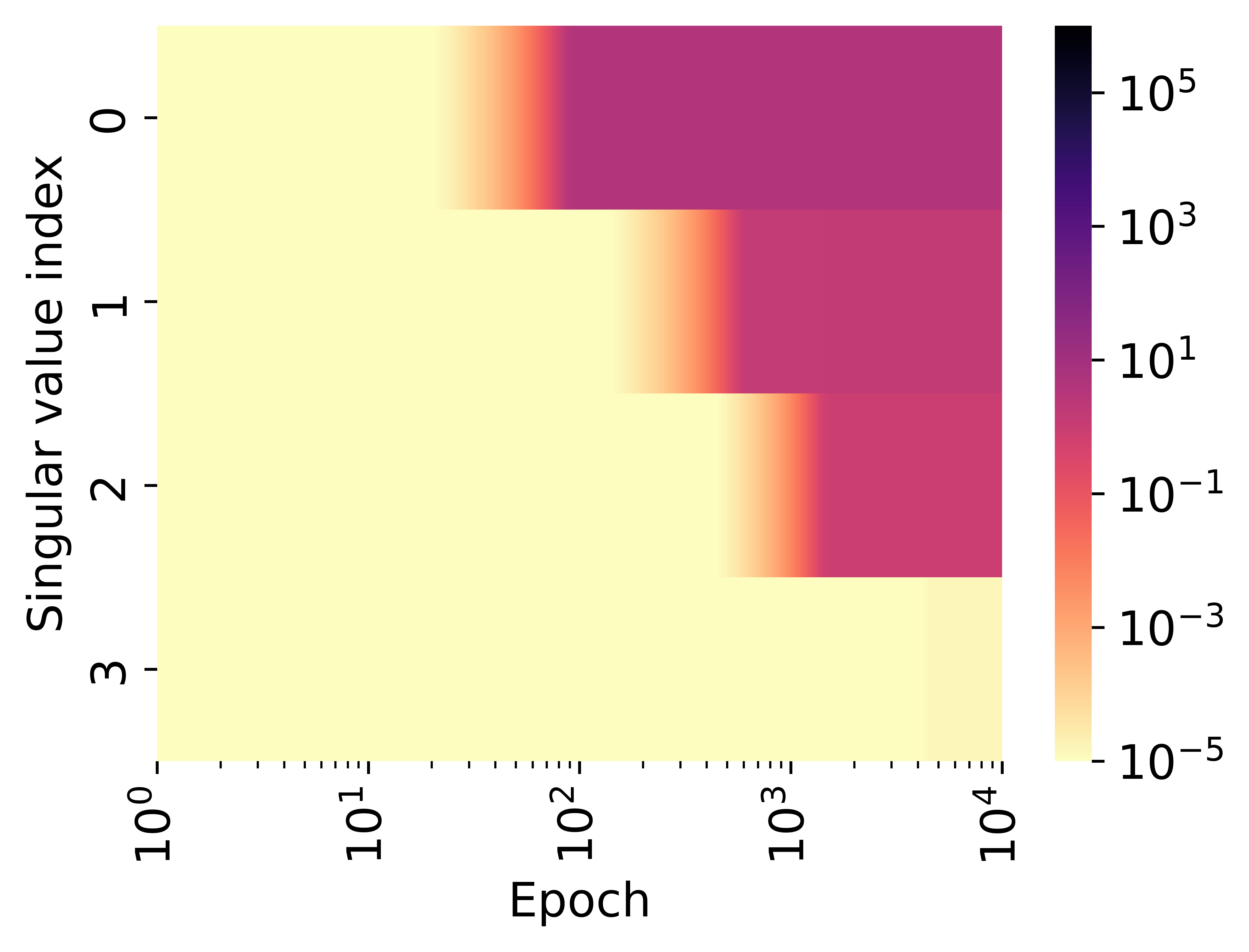}
}
\subfloat[Singular values of $\mB$]{
\includegraphics[height=0.178\textwidth]{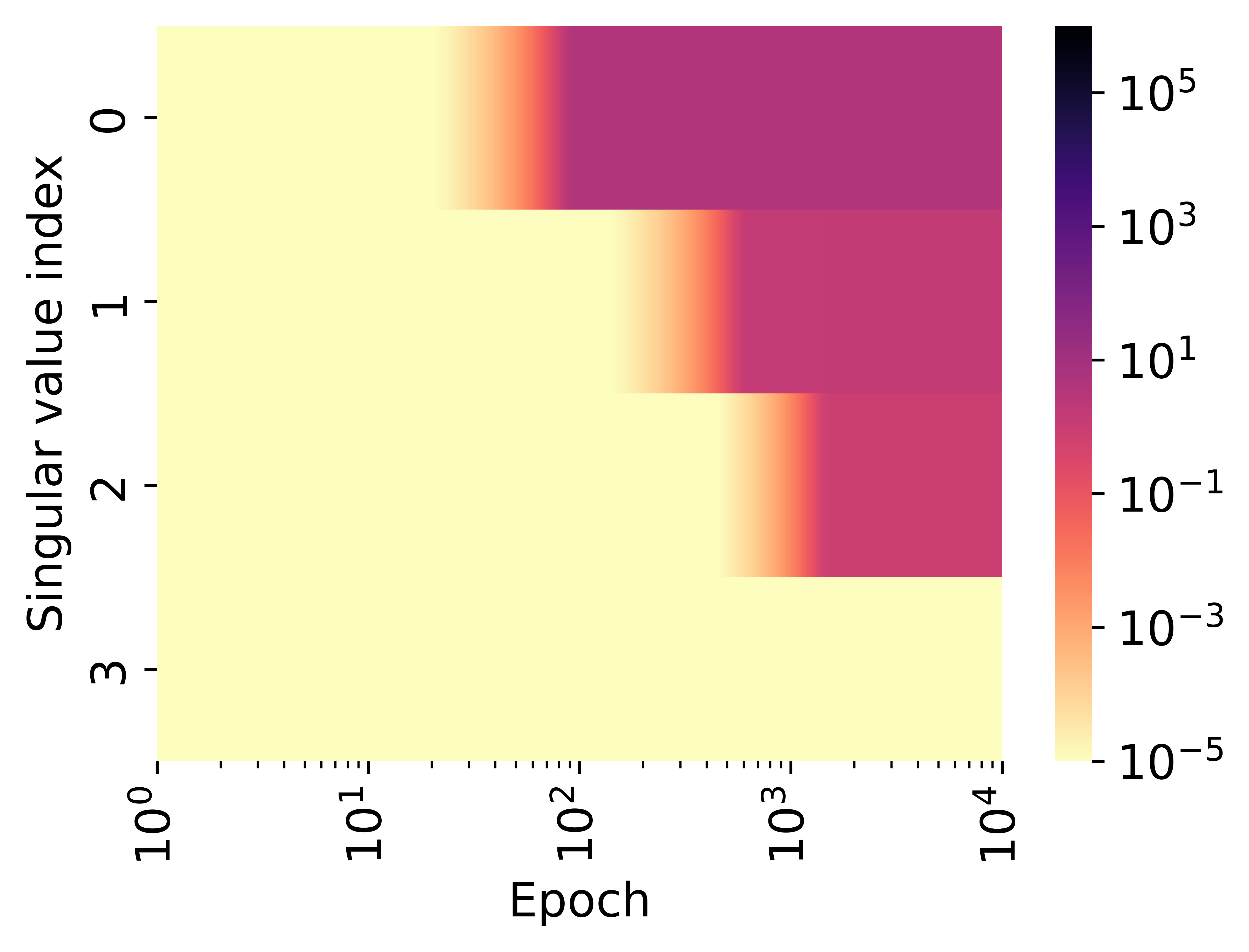}
}
\subfloat[Singular values of $\mW_{\text{aug}}$]{
\includegraphics[height=0.178\textwidth]{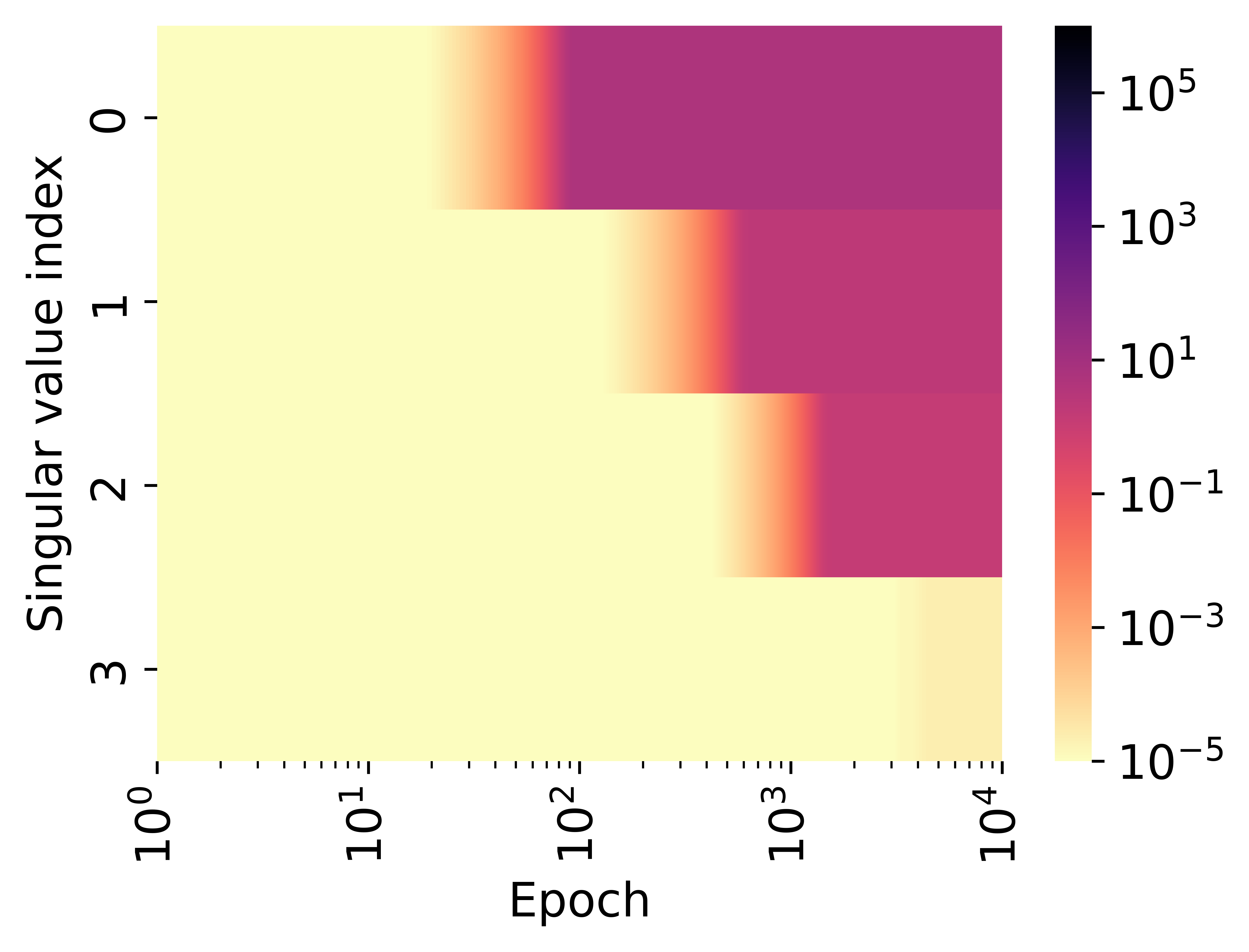}
}
\caption{(a) The matrix $\vM$ to be completed, with the $\star$ position unknown. (b) The four singular values of the learned solution at different initialization scale (Gaussian distribution, mean 0, variance from $10^0$ to $10^{-16}$). (c) Training loss for 16 connected sampling patterns in a $4\times 4$ matrix, each covering 1 element and observing the remaining 15 in a fixed rank-3 matrix. (d) Evolution of the $l_2$-norm of the gradients throughout the training process. The cyan crosses represent the difference between the matrix corresponding to the saddle point and the optimal approximation at each rank. (e-h) Evolution of singular values for matrices $\vW, \mA, \mB$, and $\mW_{\mathrm{aug}}$ during training.}\label{fig:adaptive-learning}
\end{figure}

\paragraph{Learning lowest-rank solution.}We initialize $\mA$ and $\mB$ with different scales and record the singular values of the learned matrix. As depicted in Fig.~\ref{fig:adaptive-learning}(b), when starting with larger initialization, the learned solutions are almost always rank-4. Conversely, as the initialization scale decreases, the first three singular values of the learned solution are consistently maintained in magnitude, but the fourth singular value keeps decreasing, resulting in the model learning the lowest rank-3 solution.

\paragraph{Traversing progressive optima at each rank.} For a small random initialization (Gaussian distribution with mean 0 and variance $10^{-16}$), the loss curves exhibit a steady, stepwise decline (Fig.~\ref{fig:adaptive-learning}(c)). The flat periods correspond to small gradient norms, indicating potential saddle points (Fig.~\ref{fig:adaptive-learning}(d)). We compare the matrices learned at these saddle points with the optimal approximation of each rank and plot their difference in Fig.~\ref{fig:adaptive-learning}(d), which is very small. These findings suggest that the model starts near $\boldsymbol{0}$ (rank-0) and progressively finds optimal approximations within rank-1, rank-2, and higher-rank manifolds until reaching a global minimum.

\paragraph{Alignment of the row space of $\vA$ and the column space of $\vB$.} Starting with small initialization, we track the rank (number of significantly non-zero singular values) of $\vW=\vA\vB$, $\vA$, $\vB$, and the augmented matrix $\vW_{\mathrm{aug}}$ during the training process. We observe that the rank gradually increases, with singular values growing rapidly one after another (Fig.~\ref{fig:adaptive-learning}(e-h)). Throughout the entire process, we consistently find that $\rrank(\mA) = \rrank(\mB^\top) = \rrank(\mW_{\mathrm{aug}})$, which implies that the row space of $\vA$ and the column space of $\vB$ remain aligned at all times. This alignment corresponds to a special structure that we refer to as the ``Hierarchical Intrinsic Invariant Manifold'' in Sec.~\ref{sec:theoretical_dynamics}, which plays a crucial role in the overall dynamics of the system.
% This experiment suggests that the training process will first evolve on $\vOmega_1$, reach near an optimum there, then evolve on $\vOmega_2$, reach near an optimum in $\vOmega_2$, and finally reach near an optimum in $\vOmega_3$ without skipping any steps. As a result, the lowest rank solution is learned.

The dynamics of increasing ranks step by step aligns with the description of \textit{Greedy Low Rank Learning (GLRL)}~\citep{li2020towards}. However, we will show next that when the observed data are disconnected, the learning process is not equivalent to GLRL.

% our approach differs in that we highlight the intrinsic invariant manifolds that guide training trajectories. More importantly, the characterization based on invariant manifolds is more inclusive: 

\subsection{Disconnected case}\label{sec:disconnected_case}
In this section, we present a typical experiment in the disconnected situation. As depicted in Fig.~\ref{fig:adaptive-learning2}(a), the target matrix $\vM$ contains four unknown elements denoted by $\star$ and is disconnected. The rank of $\vM$ is at least one, and there are infinitely many rank-1 solutions.

\begin{figure}[htb]
\centering
\subfloat[Matrix completion]{
\includegraphics[height=0.18\textwidth]{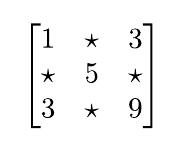}
}
\subfloat[Singular values of $\mA$]{
\includegraphics[height=0.18\textwidth]{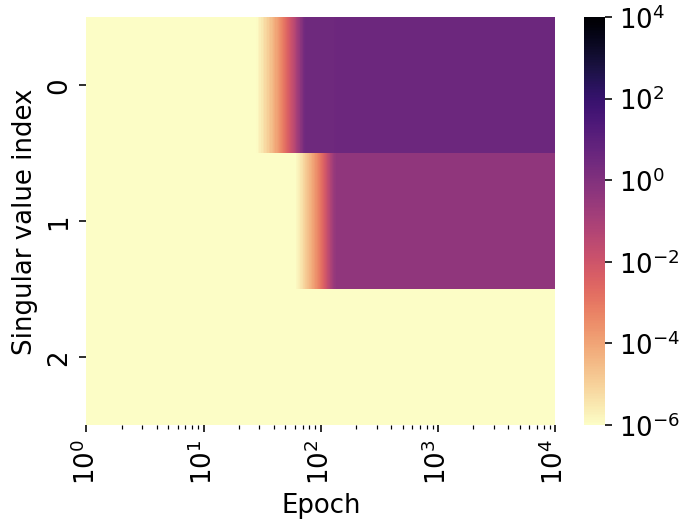}
}
\subfloat[Singular values of $\mB$]{
\includegraphics[height=0.18\textwidth]{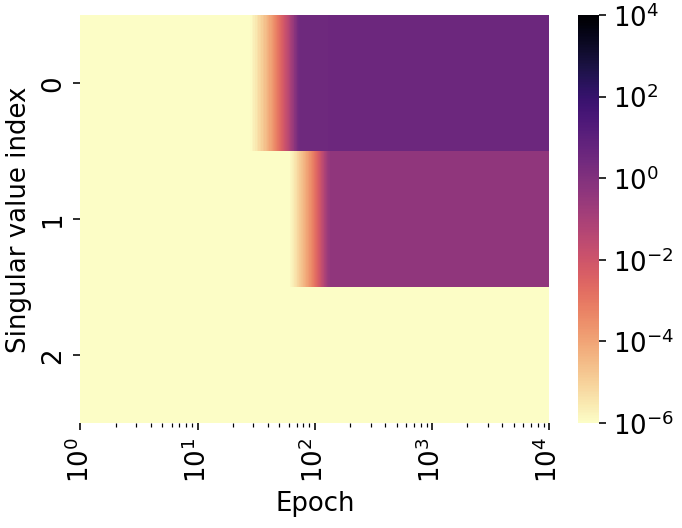}
}
\subfloat[Singular values of $\mW_{\mathrm{aug}}$]{
\includegraphics[height=0.18\textwidth]{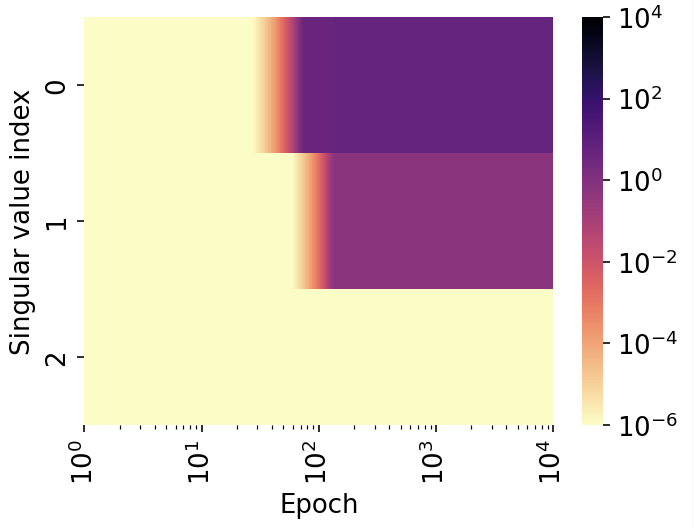}
}
\\
\subfloat[Training loss]{
\includegraphics[height=0.18\textwidth]{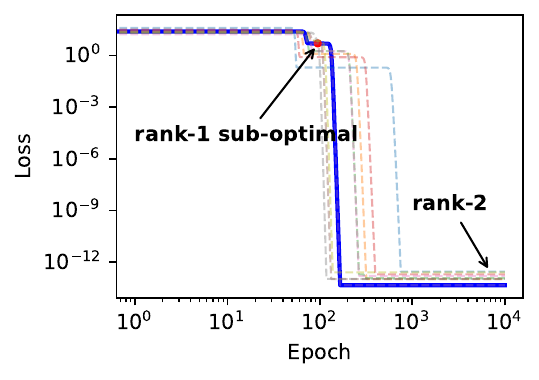}
}
\subfloat[Output at $\star$]{
\includegraphics[height=0.175\textwidth]{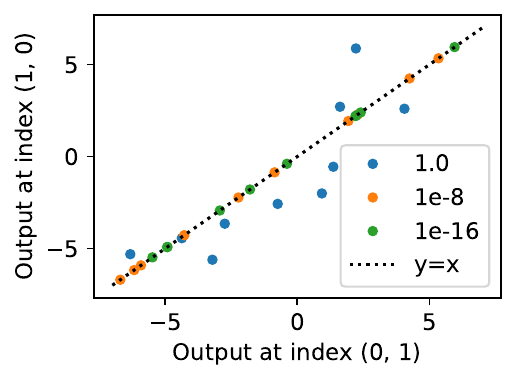}
}
\subfloat[Saddle point]{
\includegraphics[height=0.18\textwidth]{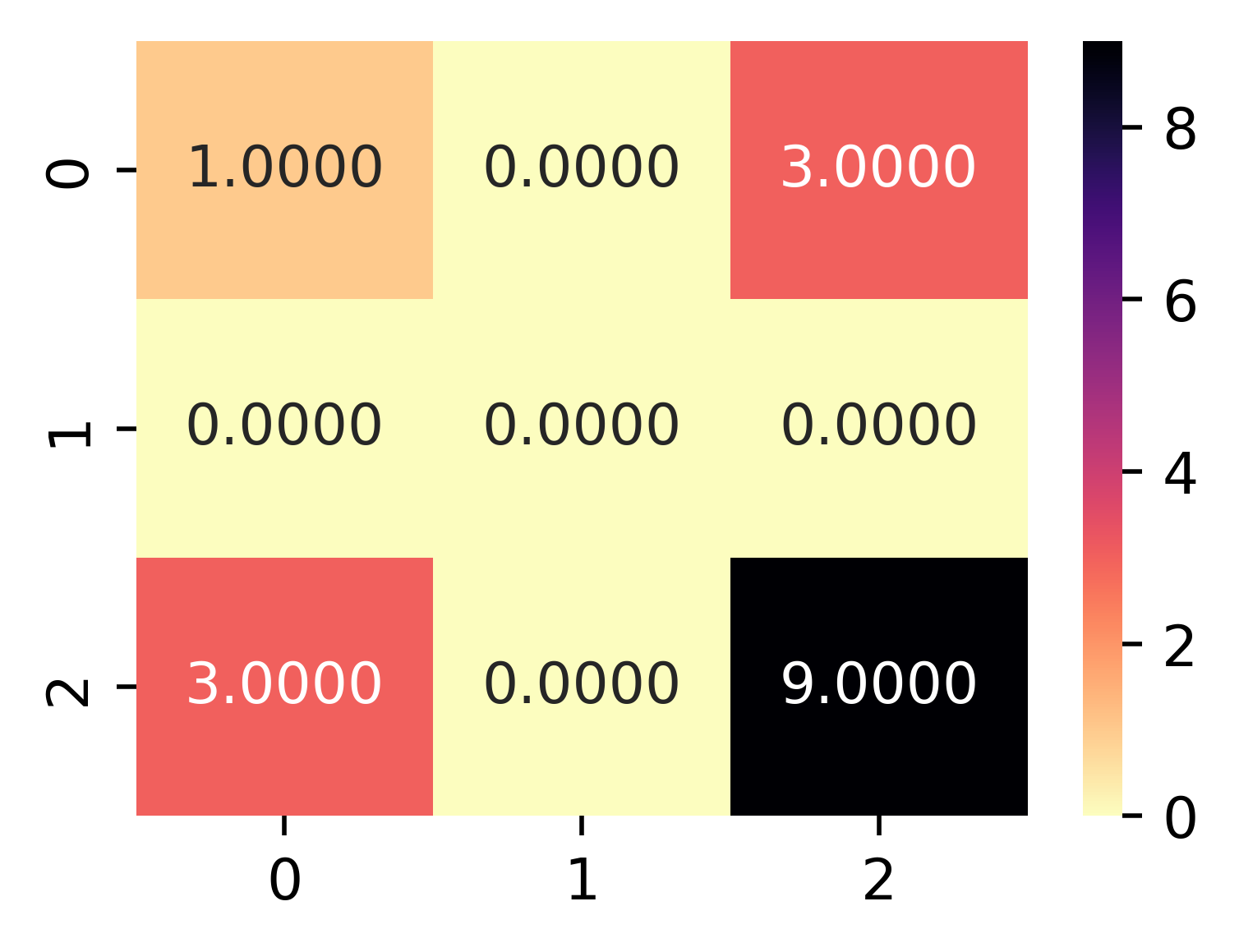}
}
\subfloat[GLRL solution]{
\includegraphics[height=0.18\textwidth]{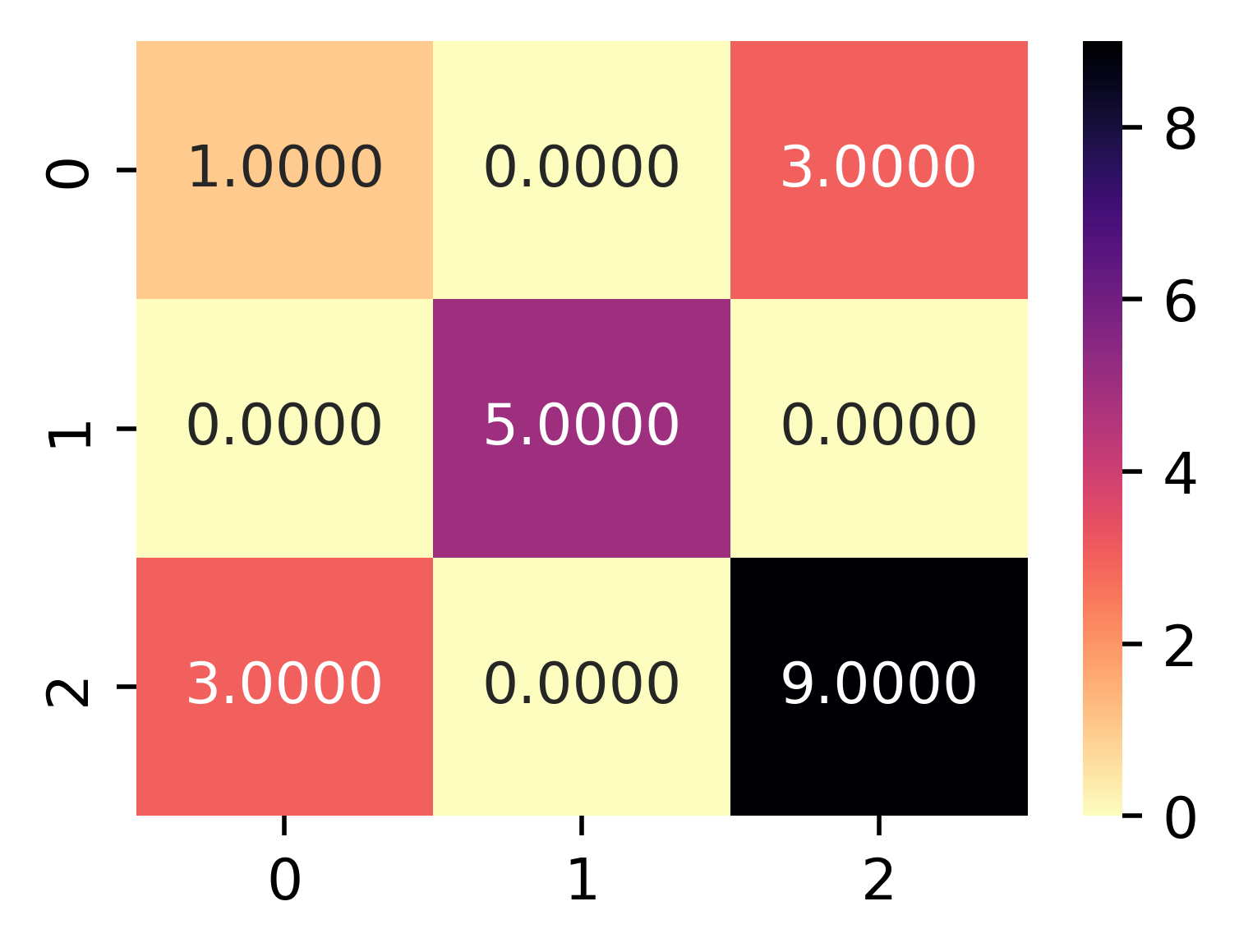}
}
\caption{
(a) The matrix to be completed, with unknown entries marked by $\star$. (b-d) Evolution of singular values for $\mA$, $\mB$, and $\mW_{\mathrm{aug}}$ during training. (e) Training loss for 9 disconnected sampling patterns in a $3\times 3$ matrix, each covering 4 elements and observing the remaining 5 in a fixed rank-1 matrix. (f) Learned values at symmetric positions $(1, 2)$ and $(2, 1)$ under varying initialization scales (zero mean, varying variance). Each point represents one of ten random experiments per variance; labels show initialization variance. Other symmetric positions exhibit similar behavior. (g) Learned output at the saddle point corresponding to the red dot in (e). (h) Final learned solution of the GLRL algorithm \citep{li2020towards}.}\label{fig:adaptive-learning2}
\end{figure}

\paragraph{Alignment of the row space of $\vA$ and the column space of $\vB$.} As shown in Fig.~\ref{fig:adaptive-learning2}(b-e), the learning process in the disconnected case is similar to the previous experiment: the model naturally evolves from low-rank to high-rank, with each step increasing a singular value and satisfying $\rrank(\mA) = \rrank(\mB^\top) = \rrank(\mW_{\mathrm{aug}})$. Fig.~\ref{fig:adaptive-learning2}(f) illustrates that as the initialization scale decreases, the model tends to learn symmetric solutions. However, unlike the connected case, the output does not approach a particular solution as the initialization decreases. For this specific disconnected $\vM$, we will show that every symmetric solution learned is a minimal nuclear norm solution(see Sec.~\ref{sec:implicit_bias} Thm.~\ref{thm:nuclear_norm}). For fewer observations, the experimental phenomena are similar (see Appendix~\ref{app:B} Fig.~\ref{fig:adaptive-learning3}). 

\paragraph{Lowest-rank solution is not learned.} Despite the adaptive learning behavior, the final learned solution has rank 2, as evidenced by the two significantly non-zero singular values in Fig.~\ref{fig:adaptive-learning2}(b-d). Examining the dynamics~\eqref{eq:GF}, we find that they decouple into two independent systems: one for the 1st and 3rd rows of $\vA$ and columns of $\vB$, and another for the 2nd row of $\vA$ and column of $\vB$. Fig.~\ref{fig:adaptive-learning2}(g) shows that the model first learns the surrounding elements ${1, 3, 3, 9}$ (rank-1 saddle point), then learns the middle element ${5}$ in the next stage. The decoupling of dynamics is equivalent to the definition of disconnection (see Appendix~\ref{app:A} Prop.~\ref{oldprop:coupling_equivalence} for proof). In Fig.~\ref{fig:adaptive-learning2}(e), we fixed a rank-1 matrix and explored all nine disconnected sampling patterns with 5 observations. For each pattern, we conducted experiments with small initializations. The loss curves consistently indicate that in disconnected cases, the model learns a sub-optimal solution in the rank-1 manifold, ultimately resulting in a rank-2 solution. This demonstrates that regardless of the specific disconnected sampling pattern, the model fails to achieve the optimal low-rank solution.

\paragraph{Not equivalent to GLRL in disconnected case.} We compare the GLRL algorithm~\citep{li2020towards} with the matrix factorization model for solving the same matrix completion problem (Fig.~\ref{fig:adaptive-learning2}). \citet{li2020towards} claim that the matrix factorization dynamics is mathematically equivalent to the GLRL algorithm under reasonable assumptions. While GLRL learns the same rank-1 saddle point shown in Fig.~\ref{fig:adaptive-learning2}(g) in the first stage, it then fills unobserved elements with 0, resulting in a unique rank-2 solution (Fig.~\ref{fig:adaptive-learning2}(h)). In contrast, the matrix factorization model learns symmetric solutions with some degree of freedom depending on the random seed (Fig.~\ref{fig:adaptive-learning2}(f)). The key difference is that the first critical point (Fig.~\ref{fig:adaptive-learning2}(g)) reached by the trajectory is a sub-optimal and not a second-order stationary point of the rank-1 manifold as assumed by~\citet{li2020towards}. Therefore, the equivalence assumption between GLRL and matrix factorization does not hold in the disconnected case.

\section{Theoretical analysis of training dynamics and implicit regularization}

\subsection{Characterization of training dynamics}
\label{sec:theoretical_dynamics}
Matrix factorization models exhibit a distinctive adaptive learning behavior, progressively evolving from low rank to high rank. Understanding this phenomenon is rooted in grasping the global dynamics of matrix factorization models, where the role of intrinsic invariant manifolds becomes critical.

% From Eq.~\eqref{eq:GF}, the corresponding gradient flow for the matrix factorization model is
% \begin{equation}
% \begin{aligned}
%    \dot{\boldsymbol{a}}_{i} & = -\dfrac{2}{n}\sum_{j\in I_i}(\boldsymbol{a}_i\cdot \boldsymbol{b}_{\cdot, j}-\vM_{ij})\boldsymbol{b}_{\cdot, j}^\top, \quad 
%    \dot{\boldsymbol{b}}_{\cdot, j} & = -\dfrac{2}{n}\sum_{i\in I_j}(\boldsymbol{a}_i\cdot \boldsymbol{b}_{\cdot, j}-\boldsymbol{M}_{ij})\boldsymbol{a}_{i}^\top,
% \end{aligned}  
% \label{eq:dynamics_a_b}
% \end{equation}
% where $I_i = \{j|\exists i: (i, j)\in S_{\boldsymbol{x}}\}$, $I_j = \{i|\exists j: (i, j)\in S_{\boldsymbol{x}}\}$, $\boldsymbol{a}_i$ and  $\boldsymbol{b}_{\cdot, j}$ represent the $i$-th row and $j$-th column of $\boldsymbol{A}$ and $\boldsymbol{B}$, respectively. Examining the dynamics~\eqref{eq:dynamics_a_b}, we found that the matrix factorization model has a special intrinsic structure - a hierarchical data-independent invariant manifold.

\begin{proposition}[\textbf{Hierarchical Intrinsic Invariant Manifold (HIIM)}](see Appendix~\ref{app:A} Prop.~\ref{oldprop:invariance1} for Proof)
Let $\vf_{\vtheta}=\vA\vB$ be a matrix factorization model and $\{\valpha_1, \cdots, \valpha_k\}$ be $k$ linearly independent vectors. Define the manifold $\vOmega_k$ as $\vOmega_k := \vOmega_k(\valpha_1, \cdots, \valpha_k)=
\{\vtheta = (\mA, \mB)\mid \rrow(\vA) = \rcol(\vB) = \rspan\{\valpha_1, \cdots, \valpha_k\}\}$. The manifold $\vOmega_k$ possesses the following properties:

(i) \textbf{Invariance under Gradient Flow:} Given data $S$ and the gradient flow dynamics $\dot{\vtheta} = -\nabla R_S(\vtheta)$, if the initial point $\vtheta_0 \in \vOmega_k$, then $\vtheta(t) \in \vOmega_k$ for all $t \geq 0$.

(ii) \textbf{Intrinsic Property:} $\vOmega_k$ is a data-independent invariant manifold, meaning that for any data $S$, $\vOmega_k$ remains invariant under the gradient flow dynamics.

(iii) \textbf{Hierarchical Structure:} The manifolds $\vOmega_k$ form a hierarchy: $\vOmega_0 \subsetneqq \vOmega_1 \subsetneqq \cdots \subsetneqq \vOmega_{k-1} \subsetneqq \vOmega_k$.
\label{def:invariant_manifold}
\end{proposition}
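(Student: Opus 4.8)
The plan is to establish the three properties in order, with the bulk of the work going into the invariance claim (i); property (ii) is then a free corollary, and (iii) is essentially a statement about spans.

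For property (i), I would work directly with the gradient flow equations for $\vA$ and $\vB$. A direct computation gives $\dot{\vA} = -\frac{2}{n}\,\delta\vM\,\vB^\top$ and $\dot{\vB} = -\frac{2}{n}\,\vA^\top\,\delta\vM$, where $\delta\vM$ is the (sparse) residual matrix defined in the preliminaries. The key algebraic observation is that $\dot{\vA}$ has its columns in $\rcol(\delta\vM) \subseteq \sR^d$ but, more usefully, its \emph{rows} lie in $\rrow(\vB)$ — indeed each row of $\dot{\vA}$ is a linear combination of rows of $\vB^\top$, i.e.\ of the vectors $\valpha_1,\dots,\valpha_k$ when $\vtheta \in \vOmega_k$; symmetrically each column of $\dot{\vB}$ lies in $\rspan\{\valpha_1,\dots,\valpha_k\}$. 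So the tangent vector to the trajectory never pushes $\rrow(\vA)$ or $\rcol(\vB)$ outside $V := \rspan\{\valpha_1,\dots,\valpha_k\}$. To turn this infinitesimal statement into the manifold invariance, I would fix an orthogonal complement and write $\vA = \vA_\parallel + \vA_\perp$ (rows decomposed w.r.t.\ $V$ and $V^\perp$), similarly for $\vB$; the dynamics show $\dot{\vA}_\perp \equiv 0$ and $\dot{\vB}_\perp \equiv 0$ identically, so if $\vA_\perp(0) = 0$ and $\vB_\perp(0) = 0$ they stay zero. That gives $\rrow(\vA(t)), \rcol(\vB(t)) \subseteq V$ for all $t$. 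The one subtlety is the \emph{equality} $\rrow(\vA) = \rcol(\vB) = V$ (not merely $\subseteq V$): membership in $\vOmega_k$ requires the spans to be exactly $k$-dimensional and equal. I would argue this as follows — on $\vOmega_k$ the augmented matrix $\vW_{\mathrm{aug}}$ has rank exactly $k$, and the set where the row/column spaces drop rank or fail to coincide is a lower-dimensional subvariety; more cleanly, I would note that the dynamics restricted to the linear subspace $\{\vA_\perp = 0, \vB_\perp = 0\}$ is a gradient flow of a (reduced) factorization problem in the $k$-dimensional coordinates, and the condition $\rrow(\vA) = \rcol(\vB)$ (both $= V$) is itself preserved because it is equivalent to the balancedness/alignment invariant $\vA^\top\vA - \vB\vB^\top$ having a fixed kernel structure — here I would lean on the standard fact that $\frac{\D}{\D t}(\vA^\top\vA - \vB\vB^\top) = 0$ under gradient flow, so $\vA^\top\vA - \vB\vB^\top$ is conserved, which pins the relationship between the two row/column spaces for all time once it holds at $t=0$.

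For property (ii), observe that the argument in (i) never used any property of $S$ beyond the fact that $\delta\vM$ vanishes off $S_{\vx}$ — and even that was irrelevant: for \emph{any} residual matrix whatsoever, $\dot{\vA}$ has rows in $\rrow(\vB)$ and $\dot{\vB}$ has columns in $\rcol(\vA)$. Hence the same proof shows $\vOmega_k$ is invariant under the gradient flow of $R_S$ for every data set $S$, which is exactly the intrinsic-property claim; I would state this as an immediate consequence rather than reproving it.

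For property (iii), the inclusions $\vOmega_0 \subseteq \vOmega_1 \subseteq \cdots \subseteq \vOmega_k$ follow by choosing the defining vectors compatibly: $\vOmega_j = \vOmega_j(\valpha_1,\dots,\valpha_j)$ sits inside $\vOmega_{j+1}(\valpha_1,\dots,\valpha_{j+1})$ because a pair $(\vA,\vB)$ whose row/column spaces equal $\rspan\{\valpha_1,\dots,\valpha_j\}$ also has those spaces contained in $\rspan\{\valpha_1,\dots,\valpha_{j+1}\}$ — wait, that gives containment of spans but not equality, so strictly speaking $\vOmega_j$ is a subset of the \emph{closure} of $\vOmega_{j+1}$, or of $\bigcup_{i\le j+1}\vOmega_i$; I would phrase the hierarchy using the closures $\overline{\vOmega_k}$ (which is the set of $\vtheta$ with $\rrow(\vA),\rcol(\vB) \subseteq V$ and the alignment condition), so that $\overline{\vOmega_0} \subsetneq \overline{\vOmega_1} \subsetneq \cdots \subsetneq \overline{\vOmega_k}$ holds with strict inclusions by the dimension count, and note $\vOmega_k$ is the top stratum of $\overline{\vOmega_k}$. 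The strictness is clear since $\dim \vW_{\mathrm{aug}}$ ranges over $0,1,\dots,k$ across the strata. The main obstacle, as flagged above, is the bookkeeping around equality versus containment of the row and column spaces — getting the statement of invariance exactly right for the set $\vOmega_k$ (as opposed to its closure) without hand-waving, which is where the conserved quantity $\vA^\top\vA - \vB\vB^\top$ does the real work.
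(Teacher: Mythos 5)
Your core argument for (i) is the same as the paper's. The paper writes the flow element-wise, $\dot{\va}_i = -\tfrac{2}{n}\sum_{j\in I_i}(\va_i\cdot\vb_{\cdot,j}-\vM_{ij})\vb_{\cdot,j}^\top$ and its counterpart for $\vb_{\cdot,j}$, and observes that each row of $\vA$ moves within the span of the columns of $\vB$, and symmetrically; your matrix-level form $\dot\vA = -\tfrac{2}{n}\,\delta\vM\,\vB^\top$, $\dot\vB = -\tfrac{2}{n}\,\vA^\top\delta\vM$ says exactly the same thing. Your (ii) and (iii) likewise track the paper: the data enter only through $\delta\vM$, which sits on the ``safe'' side of the product, and the nesting is a span-containment statement.

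Where you diverge from the paper is in scrutinizing the ``$=$'' in the definition of $\vOmega_k$, and you are right that the containment argument alone does not preserve the equalities $\rrow(\vA)=\rcol(\vB)=\rspan\{\valpha_1,\dots,\valpha_k\}$ — the rank of $\vA$ or $\vB$ could a priori drop along the trajectory. But this is a defect of the paper's phrasing, not of either proof: the paper's own proof of (i) establishes only containment; its one-line proof of (iii) asserts that $\vOmega_k$ ``encompasses matrices of rank up to $k$,'' which is false under a literal equality reading; and elsewhere (the Minimum Rank theorem and Prop.~A.8) the paper explicitly rewrites $\vOmega_k$ as $\{(\mA,\mB)\mid \va_i,\vb_{\cdot,j}\in\rspan\{\valpha_1,\dots,\valpha_k\}\ \forall i,j\}$. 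So the intended invariant is the containment set, under which your argument and the paper's both close cleanly with no extra ingredients. Your proposed patch via the conserved quantity $\vA^\top\vA-\vB\vB^\top$ is a correct conservation law (both $\tfrac{\D}{\D t}(\vA^\top\vA)$ and $\tfrac{\D}{\D t}(\vB\vB^\top)$ equal $-\tfrac{2}{n}(\vA^\top\delta\vM\,\vB^\top+\vB\,\delta\vM^\top\vA)$), but it does not forbid rank from dropping — in the balanced case $\vA^\top\vA=\vB\vB^\top$ the flow can still approach a lower-rank saddle — so it cannot rescue the literal equality statement, and you should not build the proof on it. Your closure reformulation of (iii) is exactly the reading the paper implicitly uses; the right fix is to relax the definition to containment, not to add a lemma.
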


% \begin{remark}
% Given a set of $k$ linearly independent vectors $H = (\valpha_1, \cdots, \valpha_k)$
% \end{remark}

% \begin{proposition}[\textbf{Invariance under Gradient Flow}](see Appendix~\ref{app:A} Prop.~\ref{oldprop:invariance1} for Proof)

% \label{prop:invariance1}
% \end{proposition}
Figs.~\ref{fig:adaptive-learning}(f-h) and Figs.~\ref{fig:adaptive-learning2}(b-d) show that the training process with small initialization consistently satisfies $\rrank(\mA) = \rrank(\mB^{\top}) = \rrank(\mW_{\mathrm{aug}})$, aligning with the $\vOmega_k$ invariant manifold. Since a non-zero initialization in practice, the training trajectory is close to the $\vOmega_k$ invariant manifold, approaches a critical point, and transitions to the next level invariant manifold without getting trapped.

In both connected and disconnected scenarios, we observe a step-by-step hierarchical $\vOmega_k$ invariant manifold traversal. In the connected case, at each level we observe that the model reaches an optimal solution (Fig.~\ref{fig:adaptive-learning}). However, in the disconnected case, we can prove that each connected component induces a sub-$\vOmega_k$ invariant manifold, leading to the experimentally observed sub-optimal solution (see Fig.~\ref{fig:adaptive-learning2}).
\begin{proposition}[\textbf{Intrinsic Sub-$\vOmega_k$ Invariant Manifold}](see Appendix~\ref{app:A} Prop.~\ref{oldprop:invariance2} for Proof)
Let $\vf_{\vtheta}=\vA\vB$ be a matrix factorization model, $\vM$ be an incomplete matrix and $\vOmega_k$ be an invariant manifold defined in Prop.~\ref{def:invariant_manifold}. If $\vM$ is disconnected with $m$ connected components, then there exist $m$ sub-$\vOmega_k$ manifolds $\vomega_k$ such that $\vomega_k \subsetneqq \vOmega_k$, each possessing the following properties:

(i) \textbf{Invariance under Gradient Flow:} Given data $S$ and the gradient flow dynamics $\dot{\vtheta} = -\nabla R_S(\vtheta)$, if the initial point $\vtheta_0 \in \vomega_k$, then $\vtheta(t) \in \vomega_k$ for all $t \geq 0$.

(ii) \textbf{Intrinsic Property:} $\vomega_k$ is a data-value-independent invariant manifold, meaning that for a fixed sampling pattern in $\vM$ and any observed values $S$, $\vomega_k$ remains invariant under the gradient flow.

(iii) \textbf{Strict Subset Relation:} The output set $\{\vf_{\vtheta} \mid \vtheta \in \vomega_k\}$ is a proper subset of $\{\vf_{\vtheta} \mid \vtheta \in \vOmega_k\}$, namely, $\{\vf_{\vtheta}\mid \vtheta \in \vomega_k\}\subsetneqq \{\vf_{\vtheta}\mid \vtheta \in \vOmega_k\}$.
\label{def:sub-invariant_manifold}
\end{proposition}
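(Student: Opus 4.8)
\subsection*{Proof proposal for Proposition~\ref{def:sub-invariant_manifold}}

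The plan is to exploit the fact that, when $\vM$ is disconnected, the gradient flow~\eqref{eq:GF} splits into $m$ non-interacting subsystems, and then to build $\vomega_k$ as a ``product'' of smaller $\vOmega$-type manifolds, one sitting inside each subsystem. Concretely, since $\vM$ is disconnected the row indices split as $\{1,\dots,d\}=I_0\sqcup I_1\sqcup\cdots\sqcup I_m$ and the column indices as $\{1,\dots,d\}=J_0\sqcup J_1\sqcup\cdots\sqcup J_m$, where for $\ell\ge1$ the sets $I_\ell,J_\ell$ collect the rows and columns incident to component $\ell$, while $I_0,J_0$ collect the rows and columns carrying no observation. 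The defining feature is that $S_{\vx}\subseteq\bigcup_{\ell\ge1}(I_\ell\times J_\ell)$, so for every value assignment the residual $\delta\vM$ is block-diagonal along this partition. Writing $\vA^{(\ell)}:=\vA_{I_\ell,:}$ and $\vB^{(\ell)}:=\vB_{:,J_\ell}$, the equations $\dot\vA=-\tfrac{2}{n}\,\delta\vM\,\vB^{\top}$ and $\dot\vB=-\tfrac{2}{n}\,\vA^{\top}\delta\vM$ then decouple: each pair $(\vA^{(\ell)},\vB^{(\ell)})$ evolves under the gradient flow of the sub-completion problem carried by $\vM_{I_\ell,J_\ell}$ alone, while the rows of $\vA$ indexed by $I_0$ and the columns of $\vB$ indexed by $J_0$ remain frozen. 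This is precisely the decoupling established in Prop.~\ref{oldprop:coupling_equivalence}, which I would invoke directly.

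Next I would construct $\vomega_k$. Set $U:=\rspan\{\valpha_1,\dots,\valpha_k\}$. Choose subspaces $V_1,\dots,V_m\subseteq U$ with $V_1+\cdots+V_m=U$, with $\dim V_\ell\le\min(|I_\ell|,|J_\ell|)$, and with at least one $V_{\ell_0}\subsetneq U$ (such a choice exists whenever $\sum_\ell\min(|I_\ell|,|J_\ell|)\ge k$, which is automatic in the regime where the decoupled dynamics actually reaches $\vOmega_k$; the distinguished component may be picked in $m$ ways, which is the source of the ``$m$ sub-manifolds''). Define
\[
\vomega_k:=\Bigl\{\vtheta=(\vA,\vB)\in\vOmega_k(\valpha_1,\dots,\valpha_k)\ \Big|\ \rrow\bigl(\vA^{(\ell)}\bigr)=\rcol\bigl(\vB^{(\ell)}\bigr)=V_\ell,\ \ell=1,\dots,m\Bigr\}
\]
and, in addition, require the $I_0$-rows of $\vA$ and the $J_0$-columns of $\vB$ to lie in $U$; then $\vomega_k\subseteq\vOmega_k$ and $\vomega_k$ is non-empty. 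Property~(i) follows by applying the argument of Prop.~\ref{def:invariant_manifold} inside each subsystem --- each constraint $\rrow(\vA^{(\ell)})=\rcol(\vB^{(\ell)})=V_\ell$ is an $\vOmega$-type manifold for the $\ell$-th subsystem --- and combining with the decoupling and the fact that the $I_0,J_0$ coordinates do not move. Property~(ii) is then immediate: the partition $\{I_\ell,J_\ell\}$ depends only on the observation pattern $\vP$, the subspaces $V_\ell$ are parameter-space objects independent of the data, and the block-diagonality of $\delta\vM$ (hence the decoupling) holds for every value assignment.

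For the strict inclusion~(iii) I would compare cross-component blocks. Fix $\ell_0$ with $V_{\ell_0}\subsetneq U$ and any $\ell_1\ne\ell_0$ (possible since $m\ge2$). For every $\vtheta\in\vomega_k$ the block $(\vA\vB)_{I_{\ell_1},J_{\ell_0}}=\vA^{(\ell_1)}\vB^{(\ell_0)}$ has rank at most $\dim V_{\ell_0}$, hence at most $k-1$, because $\vB^{(\ell_0)}$ has column space $V_{\ell_0}\subsetneq U$. On the other hand, every rank-$k$ matrix factors as a product of a $d\times k$ and a $k\times d$ matrix of rank $k$, from which one checks that $\{\vf_{\vtheta}\mid\vtheta\in\vOmega_k\}$ is exactly the set of $d\times d$ matrices of rank $k$; and among these there is one whose $(I_{\ell_1},J_{\ell_0})$ block has rank $\dim V_{\ell_0}+1$ --- taking $\dim V_{\ell_0}$ as small as the construction allows (for instance $0$, when the remaining $m-1$ components can jointly carry a basis of $U$) makes the only size requirement, $\min(|I_{\ell_1}|,|J_{\ell_0}|)\ge\dim V_{\ell_0}+1$, trivial since every $I_\ell,J_\ell$ is non-empty. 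Such a matrix lies in $\{\vf_{\vtheta}\mid\vtheta\in\vOmega_k\}$ but not in $\{\vf_{\vtheta}\mid\vtheta\in\vomega_k\}$, giving $\{\vf_{\vtheta}\mid\vtheta\in\vomega_k\}\subsetneqq\{\vf_{\vtheta}\mid\vtheta\in\vOmega_k\}$.

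The conceptual core --- decoupling $\Rightarrow$ product structure of the invariant manifolds $\Rightarrow$ cross-component rank deficiency --- is the part doing the real work, and the main obstacle is the size bookkeeping it hides: one must choose the $V_\ell$ so that they still span $U$ (keeping $\vomega_k\subseteq\vOmega_k$) yet leave some cross-block strictly more rank-deficient than $\vOmega_k$ permits. This is possible exactly when the components are collectively large enough to carry $k$ dimensions --- the same condition under which $\vOmega_k$ is visited by the dynamics at all --- but making the argument uniform over degenerate configurations (very small components, many isolated rows or columns) requires the care indicated above.
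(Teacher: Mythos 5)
Your proof is correct and rests on the same core observation as the paper's — namely that disconnectivity decouples the gradient flow into independent subsystems (Prop.~\ref{oldprop:coupling_equivalence}), so one can impose component-wise $\vOmega$-type constraints that the flow respects — but your construction of $\vomega_k$ is genuinely different, and more general. The paper's $\vomega_k^{L_p}$ is the extreme special case of yours: it puts the \emph{entire} span $U=\rspan\{\valpha_1,\dots,\valpha_k\}$ on the single component $L_p$ (i.e.\ $\rrow(\vA^{L_p})=\rcol(\vB^{L_p})=U$) and forces all other rows of $\vA$ and columns of $\vB$ to be $\vzero$; in your notation this is $V_p=U$ and $V_{p'}=\{0\}$ for $p'\ne p$. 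This choice makes the strict subset relation~(iii) immediate by a support argument — every output $\vA\vB$ with $\vtheta\in\vomega_k^{L_p}$ is supported on $R_p\times C_p$ — which is cleaner than your cross-block rank-deficiency comparison, and it produces exactly $m$ canonically-indexed manifolds without needing to pick $V_\ell$'s. The price the paper pays is an unstated non-emptiness assumption: $\vomega_k^{L_p}$ is empty unless $\min(|R_p|,|C_p|)\ge k$. Your distributed construction only needs $\sum_\ell\min(|I_\ell|,|J_\ell|)\ge k$, so it survives configurations with many small components where the paper's manifolds degenerate, at the cost of the dimension bookkeeping you flag in your last paragraph. Both are valid proofs of the proposition as stated; if you adopt the paper's concentrated choice of $V_\ell$'s your argument collapses to theirs, and your version of~(iii) becomes the trivial statement that a matrix with zero rows outside $R_p$ cannot be a generic rank-$k$ matrix.
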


% \begin{definition}[\textbf{Sub-$\vOmega_k$ Invariant Manifold}]
% \label{def:sub-invariant_manifold}
% Consider a matrix $\vM$ to be completed, with its associated observation graph comprising $C$ connected components, denoted as $L_1, L_2, \cdots, L_C$. Let $S_{\vx}^{L_m} \subseteq S_{\vx}$ represent the subset of observed indices corresponding to the connected component $L_m$, where $1 \leq m \leq C$ and $S_{\vx}$ denotes the complete set of observed indices. Given the model $\vf_{\vtheta}=\vA\vB$ and $k$ linearly independent vectors $\boldsymbol{\alpha}_1, \cdots, \boldsymbol{\alpha}_k\in \sR^d$, the sub-$\vOmega_k$ invariant manifold associated with the connected component $L_m$ is defined as: $\vOmega_k^{L_m} := \vOmega^{L_m}_k(\boldsymbol{\alpha}_1, \cdots, \boldsymbol{\alpha}_k)=\{(\mA, \mB)|\boldsymbol{a}_i, \boldsymbol{b}_{\cdot, j}\in \mathrm{span}\{\boldsymbol{\alpha}_1, \cdots, \boldsymbol{\alpha}_k\}, \forall (i, j)\in S^{L_m}_{\vx}; \va_i, \vb_{\cdot, j}=\vzero, \forall (i, j)\notin S^{L_m}_{\vx}\}$. 
% \end{definition}

% \begin{proposition}[\textbf{Invariance under Gradient Flow}](see Appendix~\ref{app:A} Prop.~\ref{oldprop:invariance2} for Proof) Consider the gradient flow dynamics~\eqref{eq:GF}, if the initial parameter $\vtheta_0$  $\in \vOmega^{L_m}_k$, then $\vtheta(t) \in \vOmega_k^{L_m}$ for all $t \geq 0$.
% \label{prop:invariance2}
% \end{proposition}

\begin{figure}[htb]
    \centering
    \subfloat[Illustration of training trajectories in disconnected case]{\includegraphics[width=0.59\textwidth]{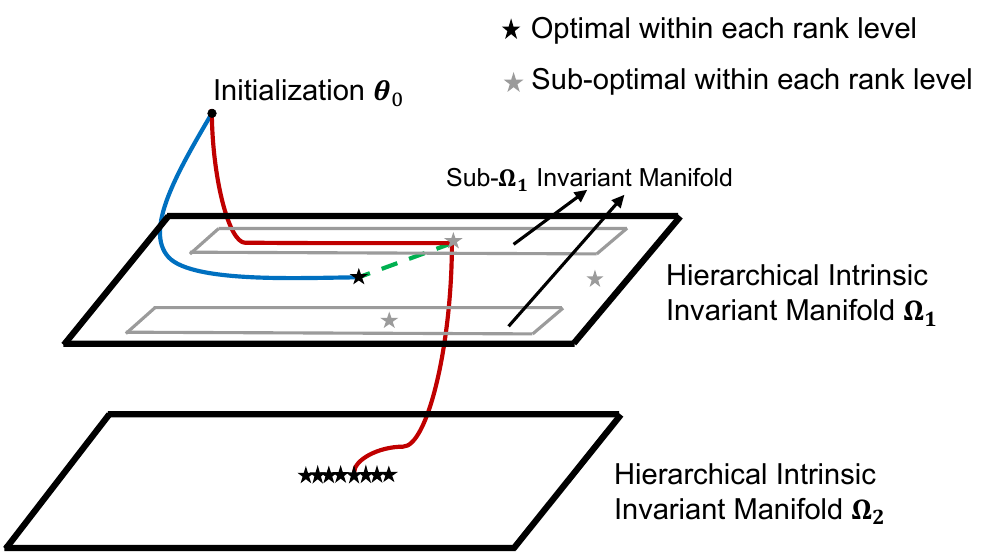}}
    \subfloat[Alignment of $\rrow(\vA)$ and $\rcol(\vB)$]{\includegraphics[width=0.34\textwidth]{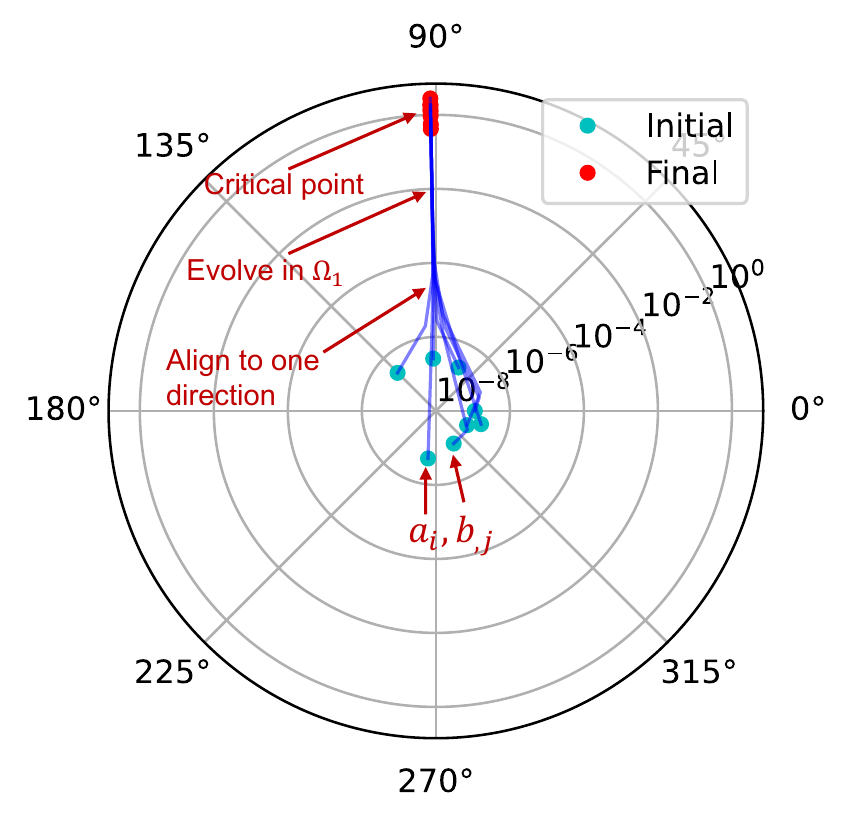}}
\caption{(a) Illustrated trajectories for the experiment in Fig.~\ref{fig:adaptive-learning2}. The blue line represents the trajectory converging to the lowest-rank solution, and the red line represents the actual trajectory experienced by the model. (b) The parameter trajectory escaping from a second-order stationary point to reach the next critical point for the experiment in Fig.~\ref{fig:adaptive-learning}. The 8 scatter points represent the 4 row vectors of matrix $\vA$ and the 4 column vectors of matrix $\vB$. For ease of visualization, we randomly project them onto two dimensions and plot them in polar coordinates.}
    \label{fig:loss_landscape}
\end{figure}

Fig.~\ref{fig:loss_landscape}(a) illustrates the trajectory of the experiment in Fig.~\ref{fig:adaptive-learning2}. In the disconnected case, sub-$\vOmega_k$ invariant manifolds exist and attract the dynamics, leading the model to learn sub-optimal solutions on the entire $\vOmega_k$ invariant manifold. In fact, we can prove that these sub-optimal solutions are necessarily strict saddle points. This loss landscape result extends Theorem 5.10 from \citet{li2020towards}, which established the findings for the specific case of symmetric matrix factorization models (see Appendix~\ref{app:A} Sec.~\ref{app:A_sec:loss_landscape} for a detailed discussion). 

\begin{theorem}[\textbf{Loss Landscape}](see Appendix~\ref{app:A} Thm.~\ref{oldthm:global_convergence} for Proof)
Given any data $S$, the critical points of $R_S(\boldsymbol{\theta})$ are either strict saddle points or global minima.
\end{theorem}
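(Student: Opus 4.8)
The plan is to prove the dichotomy directly: any critical point with nonzero residual is a strict saddle. Write $\delta\vM=(\vA\vB-\vM)_{S_{\vx}}$ for the residual at $\vtheta=(\vA,\vB)$. At a critical point the stationarity equations $\nabla_{\vA}R_S=\frac{2}{n}\delta\vM\vB^{\top}=\vzero$ and $\nabla_{\vB}R_S=\frac{2}{n}\vA^{\top}\delta\vM=\vzero$ hold. If $\delta\vM=\vzero$ then $R_S(\vtheta)=0$, and since $R_S\ge 0$ everywhere this is a global minimum. So it suffices to fix a critical point with $\delta\vM\neq\vzero$ and exhibit a direction of negative curvature of $R_S$ at $\vtheta$.

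First I would extract the rank inequality $\rrank(\delta\vM)\le\dim\ker(\vA)$: since $\vA^{\top}\delta\vM=\vzero$, every column of $\delta\vM$ lies in $\ker(\vA^{\top})$, hence $\rrank(\delta\vM)=\dim\rcol(\delta\vM)\le\dim\ker(\vA^{\top})=d-\rrank(\vA)=\dim\ker(\vA)$. Put $r:=\rrank(\delta\vM)\ge 1$, take a rank-$r$ SVD $\delta\vM=\sum_{k=1}^{r}\sigma_k\vu_k\vv_k^{\top}$ with $\sigma_k>0$ and $\{\vu_k\}$, $\{\vv_k\}$ orthonormal, and — using the rank inequality — pick any orthonormal $\vc_1,\dots,\vc_r\in\ker(\vA)$. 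Define the perturbation matrices $\vU:=-\sum_{k=1}^{r}\sigma_k\vu_k\vc_k^{\top}$ and $\vV:=\sum_{k=1}^{r}\vc_k\vv_k^{\top}$. Orthonormality of the $\vc_k$ gives $\vU\vV=-\sum_k\sigma_k\vu_k\vv_k^{\top}=-\delta\vM$, while $\vc_k\in\ker(\vA)$ gives $\vA\vV=\sum_k(\vA\vc_k)\vv_k^{\top}=\vzero$; the matrix $\vU\vB$ plays no further role beyond being a fixed matrix.

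Next I would probe the Hessian of $R_S$ at $\vtheta$ on the two–dimensional subspace spanned by $(\vU,\vzero)$ and $(\vzero,\vV)$, i.e.\ along the curve $\epsilon\mapsto(\vA+\epsilon a\vU,\ \vB+\epsilon b\vV)$. Expanding the product, the restricted residual is $\delta\vM+\epsilon(a\vU\vB+b\vA\vV)_{S_{\vx}}+\epsilon^{2}ab(\vU\vV)_{S_{\vx}}$; the first–order term of $R_S$ vanishes, which is exactly stationarity, since $\langle\delta\vM,\vU\vB\rangle=\langle\delta\vM\vB^{\top},\vU\rangle=0$ and $\langle\delta\vM,\vA\vV\rangle=\langle\vA^{\top}\delta\vM,\vV\rangle=0$. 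Using $\vA\vV=\vzero$, $\vU\vV=-\delta\vM$, and $\langle\delta\vM,(\vU\vV)_{S_{\vx}}\rangle=\langle\delta\vM,\vU\vV\rangle=-\|\delta\vM\|_F^{2}$ (as $\delta\vM$ is supported on $S_{\vx}$), the second–order coefficient is $\frac{2}{n}\bigl(a^{2}\|(\vU\vB)_{S_{\vx}}\|_F^{2}-2ab\|\delta\vM\|_F^{2}\bigr)$. This is a quadratic form in $(a,b)$ of the shape $a^{2}P-2abD$ with $P:=\|(\vU\vB)_{S_{\vx}}\|_F^{2}\ge 0$ and $D:=\|\delta\vM\|_F^{2}>0$, whose discriminant $D^{2}-P\cdot 0=D^{2}$ is strictly positive, so it is indefinite. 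Hence the Hessian of $R_S$ at $\vtheta$ is indefinite; in particular it has a strictly negative eigenvalue, so $\vtheta$ is a strict saddle (and, having also a positive eigenvalue, not a local maximum). Combining with the $\delta\vM=\vzero$ case gives the claim.

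The crux — and the only real obstacle — is that the destabilizing "bilinear curvature'' term $2ab\langle\delta\vM,\vU\vV\rangle$ comes, for the naive single–parameter perturbation $\vtheta+\epsilon(\vU,\vV)$, bundled with the nonnegative first–order–map term $\|(\vU\vB+\vA\vV)_{S_{\vx}}\|_F^{2}$, which is generically of the same order and could dominate. The asymmetric two–parameter scaling $(a,b)$ resolves this, because the curvature term scales as $ab$ whereas the surviving first–order piece scales only as $a^{2}$; but this works only because one can choose $\vV$ with $\vA\vV=\vzero$ and $\vU\vV=-\delta\vM$ simultaneously, which is exactly what $\rrank(\delta\vM)\le\dim\ker(\vA)$ — valid at any critical point via $\vA^{\top}\delta\vM=\vzero$ — guarantees. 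As a sanity check, the boundary case $\rrank(\vA)=d$ needs no separate treatment: then $\vA^{\top}\delta\vM=\vzero$ forces $\delta\vM=\vzero$, so such a critical point is already a global minimum.
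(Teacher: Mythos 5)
Your proof is correct and takes a genuinely different route from the paper's. The paper argues by case analysis on $\rrank(\vA)$: if $\rrank(\vA)<d$ (or $\rrank(\vB)<d$ by symmetry), it picks a single vector $\vv\in\ker(\vA)$, builds a \emph{rank-one} perturbation that is nonzero only in one row of $\veps$ and one column of $\veta$, and tunes a scalar $\alpha$ in $\vw=-\alpha\,\delta\vM_{ij}\vv$ to make the Hessian quadratic form negative; if $\rrank(\vA)=\rrank(\vB)=d$, it shows separately via $\veps=\alpha\,\delta\vM\vB^{-1}$, $\veta=0$ that stationarity already forces $\delta\vM=\vzero$. Your argument removes the case split by observing that $\vA^{\top}\delta\vM=\vzero$ alone gives $\rrank(\delta\vM)\le\dim\ker(\vA)$ at \emph{every} critical point (so the full-rank case is subsumed: $\delta\vM\ne\vzero$ forces $\ker(\vA)\ne\{\vzero\}$), and then builds a \emph{rank-$r$} perturbation $(\vU,\vV)$ from the SVD of $\delta\vM$ and an orthonormal basis of a subspace of $\ker(\vA)$, engineered so that $\vA\vV=\vzero$ and $\vU\vV=-\delta\vM$. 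The two-parameter $(a,b)$ scaling then cleanly separates the nonnegative first-order-map term ($\propto a^{2}$) from the indefinite bilinear term ($\propto ab$), making the indefiniteness of the restricted Hessian immediate. What your route buys is a unified, case-free proof and a somewhat more transparent reason the curvature is negative (the perturbation exactly reproduces $-\delta\vM$ at second order in the factorization while being orthogonal, via $\vA\vV=\vzero$, to the stabilizing first-order map); what the paper's route buys is the concreteness of a single rank-one sparse probe, which is closer to the ``escape direction'' picture used elsewhere in the paper. Both establish exactly the dichotomy claimed.
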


Gradient descent easily escapes saddle points~\citep{lee2016gradient,lee2019first}. Fig.~\ref{fig:loss_landscape}(b) shows that when the model escapes a saddle point, the parameters initially appear chaotic but align in one direction after some time, consistent with the ``condensation'' phenomenon in neural networks~\citep{luo2021phase,zhou2022towards}. For matrix factorization models, by meticulously analyzing the Hessian matrix structure (see Appendix~\ref{app:A-eigenvalue}), we find that this alignment corresponds to an $\vOmega_1$ invariant manifold, resulting in a rank increase of one at a time. Under reasonable assumptions, we prove that the training trajectory follows the $\vOmega_k$ invariant manifold step by step.

\begin{assumption}[\textbf{Unique Top Singular Value}]
Let $\delta \vM = (\vA_c\vB_c - \vM)_{S_{\vx}}$ be the residual matrix at the critical point $\vtheta_c=(\vA_c, \vB_c)$. Assume that the largest singular value of $\delta \vM$ is unique.
\label{assumption:eigenvalue_unique}
\end{assumption}

\begin{assumption}[\textbf{Second-order Stationary Point}] Let $\vOmega$ be an $\vOmega_k$ invariant manifold or sub-$\vOmega_k$ invariant manifold defined in Prop.~\ref{def:invariant_manifold} or~\ref{def:sub-invariant_manifold}.
Assume $\vtheta_c$ is a second-order stationary point within $\vOmega$, i.e., $\nabla R_S(\vtheta_c)=0$ and $\vtheta^{\top} \nabla^2 R_S(\vtheta_c)\vtheta \geq 0$ for all $\vtheta \in \vOmega$.  
\label{assumption:second-order}
\end{assumption}

\begin{theorem}[\textbf{Transition to the Next Rank-level Invariant Manifold}](see Appendix~\ref{app:A} Thm.~\ref{oldthm:transition_to_next_level} for proof) Consider the dynamics $\dot{\vtheta} = -\nabla R_S(\vtheta)$. Let $\varphi(\boldsymbol{\theta}_0, t)$ denote the value of $\boldsymbol{\theta}(t)$ when $\boldsymbol{\theta}(0) = \boldsymbol{\theta}_0$.
Let $\vOmega$ be an $\vOmega_k$ or sub-$\vOmega_k$ invariant manifold. Let $\vtheta_c \in \vOmega$ be a critical point satisfying Assump.~\ref{assumption:eigenvalue_unique} and \ref{assumption:second-order}. Then, for randomly selected $\vtheta_0$, with probability 1 with respect to $\vtheta_0$, the limit
\begin{equation}
\tilde{\varphi}(\vtheta_c, t) := \lim_{\alpha \to 0}\varphi\left(\vtheta_c +\alpha\boldsymbol{\theta}_{0}, t+\frac{1}{\lambda_1} \log \frac{1}{\alpha}\right)
\end{equation}
exists and falls into an invariant manifold $\vOmega_{k+1}$. Here $\lambda_1$ is the top eigenvalue of $-\nabla^2 R_S(\vtheta_c)$.
\label{thm:transition_to_next_level}
\end{theorem}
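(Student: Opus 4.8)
The plan is to treat this as a saddle-to-saddle escape statement. I would linearize the gradient flow at $\vtheta_c$, use the Hessian structure (Appendix~\ref{app:A-eigenvalue}) together with Assumptions~\ref{assumption:eigenvalue_unique}--\ref{assumption:second-order} to pin down the dominant escape direction, then match the rescaled limit $\tilde\varphi(\vtheta_c,\cdot)$ with the branch of the unstable manifold of $\vtheta_c$ selected by $\vtheta_0$, and finally check that this branch lies in a rank-$(k{+}1)$ intrinsic invariant manifold. For the Hessian step, put $\boldsymbol{H} = -\nabla^2 R_S(\vtheta_c)$. Since $\vtheta_c$ is a critical point, $\vA_c^{\top}\delta\vM = \vzero$ and $\delta\vM\vB_c^{\top} = \vzero$, and expanding $R_S$ to second order gives $-\delta\vtheta^{\top}\boldsymbol{H}\delta\vtheta = \tfrac{2}{n}\|(\delta\vA\vB_c+\vA_c\delta\vB)_{S_{\vx}}\|_F^2 + \tfrac{4}{n}\langle\delta\vM,\delta\vA\delta\vB\rangle$ for $\delta\vtheta=(\delta\vA,\delta\vB)$. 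The bound $\langle\delta\vM,\delta\vA\delta\vB\rangle\ge-\sigma_1\|\delta\vA\|_F\|\delta\vB\|_F$ (with $\sigma_1$ the largest singular value of $\delta\vM$, i.e.\ the top eigenvalue in Assumption~\ref{assumption:eigenvalue_unique}) then shows $\lambda_1 = \tfrac{2}{n}\sigma_1$, and that equality forces the $\lambda_1$-eigenvectors to be ``rank-one lifts'' $(\boldsymbol{\phi}_1\boldsymbol{\beta}^{\top},\ \boldsymbol{\beta}\boldsymbol{\psi}_1^{\top})$, where $(\boldsymbol{\phi}_1,\boldsymbol{\psi}_1)$ is the (unique, by Assumption~\ref{assumption:eigenvalue_unique}) top singular-vector pair of $-\delta\vM$ and $\boldsymbol{\beta}$ ranges over the unit sphere of $V^{\perp}$, $V:=\rspan\{\valpha_1,\dots,\valpha_k\}$. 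So the $\lambda_1$-eigenspace $E_1$ is a gauge family indexed by $\boldsymbol{\beta}$: different $\boldsymbol{\beta}$ give parameters related by $\vA\mapsto\vA\boldsymbol{G},\ \vB\mapsto\boldsymbol{G}^{-1}\vB$, hence with identical output $\vA\vB$. By Assumption~\ref{assumption:second-order}, $\boldsymbol{H}$ is negative semidefinite on $T_{\vtheta_c}\vOmega$, so $E_1$ (and every other positive eigenspace) is transversal to $\vOmega$.

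For the escape limit, write $\vtheta_0 = \boldsymbol{\xi}+\boldsymbol{\eta}$ with $\boldsymbol{\xi}$ the component of $\vtheta_0$ in $E_1$; for random $\vtheta_0$, $\boldsymbol{\xi}\neq\vzero$ with probability $1$, and this is the only property of $\vtheta_0$ we need. Setting $s = t+\tfrac{1}{\lambda_1}\log\tfrac{1}{\alpha}$, the linearized flow makes the $E_1$-component of $\varphi(\vtheta_c+\alpha\vtheta_0,s)$ equal to $\approx e^{\lambda_1 t}\boldsymbol{\xi}$, an $O(1)$ vector on the fixed line through $\boldsymbol{\xi}$; meanwhile any mode of $\boldsymbol{H}$ with rate $\lambda_j<\lambda_1$ contributes only $O(\alpha^{1-\lambda_j/\lambda_1})\to 0$, and the center/stable part stays $o(1)$. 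A standard multiscale / strong-$\lambda$-lemma estimate for the smooth (polynomial) field $-\nabla R_S$ upgrades this to convergence uniform on compact $t$-intervals: $\varphi(\vtheta_c+\alpha\vtheta_0,s)\to\Psi(t)$, where $\Psi$ is the unique gradient-flow trajectory on the unstable manifold of $\vtheta_c$ that is backward-asymptotic to $\vtheta_c$ along the line through $\boldsymbol{\xi}$. By construction this is $\tilde\varphi(\vtheta_c,t)$, so the limit exists.

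For membership, let $\valpha_{k+1}$ be the unit $V^{\perp}$-direction carried by $\boldsymbol{\xi}$ and let $\vOmega_{k+1}$ be the rank-$(k{+}1)$ intrinsic invariant manifold through $\valpha_1,\dots,\valpha_k,\valpha_{k+1}$ of the same type as $\vOmega$ (a genuine $\vOmega_{k+1}$ if $\vOmega$ is an $\vOmega_k$, a sub-$\vOmega_{k+1}$ if $\vOmega$ is a sub-$\vOmega_k$). By Prop.~\ref{def:invariant_manifold} (resp.\ Prop.~\ref{def:sub-invariant_manifold}) this set is gradient-flow invariant, hence so is its closure $\overline{\vOmega_{k+1}}$, which contains $\vtheta_c$ since $\vtheta_c\in\vOmega\subsetneqq\overline{\vOmega_{k+1}}$. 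The rank-one-lift form of $\boldsymbol{\xi}$ makes it tangent to $\overline{\vOmega_{k+1}}$ at $\vtheta_c$, so rerunning the previous step for the flow restricted to the invariant set $\overline{\vOmega_{k+1}}$ identifies $\Psi$ with the same unstable branch inside $\overline{\vOmega_{k+1}}$; thus $\Psi(t)\in\overline{\vOmega_{k+1}}$ for all $t$, and since $\Psi$ carries the genuinely new direction $\valpha_{k+1}$ it does not collapse back to a lower stratum, so $\Psi(t)\in\vOmega_{k+1}$.

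The main obstacle is the uniform convergence in the escape step. One must patch three regimes — the $\Theta(\log\tfrac{1}{\alpha})$ near-linear drift away from $\vtheta_c$, the $O(1)$ nonlinear transition, and the subsequent flow — and, most delicately, control the center directions (those from the reparametrization $\vA\mapsto\vA\boldsymbol{G},\ \vB\mapsto\boldsymbol{G}^{-1}\vB$ and from $\vtheta_c$ possibly lying on a positive-dimensional critical set), which the linearization neither expands nor contracts. Assumption~\ref{assumption:second-order} is exactly what rules out a neutral escape inside $\vOmega$, but converting ``second-order stationary'' into the quantitative Gronwall bounds — uniformly over the stratum, so the argument covers the connected ($\vOmega_k$) and disconnected (sub-$\vOmega_k$) cases and a relatively open set of admissible $\valpha_{k+1}$ — is the technical heart of the proof. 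A secondary point is that $\boldsymbol{H}$ may carry further positive eigenvalues $\tfrac{2}{n}\sigma_2,\dots$, so one must verify that only the strictly-largest $\lambda_1$ survives the rescaling, which is precisely why Assumption~\ref{assumption:eigenvalue_unique} is imposed.
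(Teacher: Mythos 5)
Your proposal follows the same high-level architecture as the paper's proof (Appendix Thm.~A.4): linearize at $\vtheta_c$, show the top eigendirection of $-\nabla^2 R_S(\vtheta_c)$ has a rank-one tensor structure tangent to an $\vOmega_1$ manifold orthogonal to the ambient $\vOmega_k$, invoke an escape lemma to identify $\tilde\varphi$ with the strong-unstable branch, and close with the hierarchy $\vOmega_k + \vOmega_1(\valpha_{k+1}) = \vOmega_{k+1}$ (the paper's Lem.~A.4). The essential difference is in how the escape direction is characterized. The paper splits the Hessian as $\vH^{(1)} + \vH^{(2)}$, computes $\vH^{(2)} = \begin{bmatrix}0 & -\delta\vM\otimes\vI \\ -\delta\vM^\top\otimes\vI & 0\end{bmatrix}$ to get the Kronecker form $\vv\otimes\ve_i$ of its eigenvectors (Lem.~A.2), and then shows (Lem.~A.3) that because $\vOmega$ is gradient-invariant the Hessian block-diagonalizes over $\vOmega\oplus\vOmega^\perp$, so any negative-Hessian eigenvector lies in $\vOmega^\perp\subseteq\ker\vH^{(1)}$ and is thus an eigenvector of $\vH^{(2)}$. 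You instead expand $R_S$ to second order and use a Cauchy--Schwarz/nuclear-norm duality bound to pin $\lambda_1 = \tfrac{2}{n}\sigma_1$ and to force the optimizers to be the rank-one lifts. Both routes work, and yours is a nice self-contained variational re-derivation of the same structural fact. Two caveats, both minor. First, the Cauchy--Schwarz equality analysis by itself only forces $(\delta\vA\vB_c + \vA_c\delta\vB)_{S_{\vx}} = 0$, i.e.\ orthogonality of $\boldsymbol{\beta}$ to the \emph{observed} rows of $\vA_c$ and columns of $\vB_c$ — not to all of $V$; to conclude $\boldsymbol{\beta}\in V^\perp$ you still need the block-diagonal structure of $\vH$ on $\vOmega\oplus\vOmega^\perp$ (the paper's Lem.~A.3), or equivalently the observation that Assump.~\ref{assumption:second-order} plus the invariance of $\vOmega$ forces the $\lambda_1$-eigenspace to lie in $\vOmega^\perp$, which is stronger than the ``transversal'' you state. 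Second, for the uniform convergence of $\varphi(\vtheta_c + \alpha\vtheta_0, t + \tfrac{1}{\lambda_1}\log\tfrac{1}{\alpha})$ the paper delegates to Prop.~A.6, which extends Theorem~5.3 of \citet{li2020towards} from a simple top eigenvalue to a degenerate one — exactly the multiscale estimate you correctly identify as the technical heart, and which the paper itself handles only by citation rather than by a fresh argument. Your discussion of the center/gauge directions $(\vA,\vB)\mapsto(\vA\vG,\vG^{-1}\vB)$ is a correct elaboration of why the degenerate case is delicate, though it goes beyond what the paper actually writes down.
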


% \begin{remark}
% Empirical evidence (Figs.~\ref{fig:adaptive-learning},~\ref{fig:adaptive-learning2})indicates that the critical points encountered by the practical dynamics are either second-order stationary points of the $\vOmega_k$ invariant manifold (connected case) or second-order stationary points of the sub-$\vOmega_k$ invariant manifold (disconnected case), satisfying Assump.~\ref{assumption:second-order}. Assump.~\ref{assumption:eigenvalue_unique} is generally true for random data. 
% % If the top eigenvalue is not unique, the dynamics will jump directly to a higher-rank invariant manifold.
% \end{remark}
\begin{proof}[Proof sketch]
The main idea is to analyze the local dynamics near the critical point $\vtheta_c$. The nonlinear dynamics can be approximated linearly in the vicinity of $\vtheta_c$: $\frac{\mathrm{d}\vtheta}{\mathrm{d}t} \approx \vH(\vtheta_0 - \vtheta_c)$, where $\vH = -\nabla^2R_S(\vtheta_c)$ is the negative Hessian matrix. For exact linear approximation, the solution is: $\vtheta(t) = e^{t\vH}(\vtheta_0 - \vtheta_c) + \vtheta_c$. Let $\lambda_1 > \lambda_2 > ... > \lambda_s$ be the eigenvalues of $\vH$, with corresponding eigenvectors $\vq_{ij}$. We can express $\vtheta(t)$ as: $\vtheta(t) = \sum_{i=1}^s \sum_{j=1}^{l_i} e^{\lambda_i t}\langle\vtheta_0 - \vtheta_c, \vq_{ij}\rangle \vq_{ij} + \vtheta_c$. For sufficiently large $t_0$, the dynamics follows a dominant eigenvalue dynamics: $\vtheta(t_0) = \sum_{j=1}^{l_1} e^{\lambda_1 t_0}\langle\vtheta_0 - \vtheta_c, \vq_{1j}\rangle \vq_{1j} + O(e^{\lambda_2t_0})$. Through detailed analysis of the eigenvalues and eigenvectors of the Hessian matrix (please refer to Lems~\ref{oldlem:eigenvectors_H2}-\ref{lem:invariant_manifold_structure} of Appendix~\ref{app:A}), we show that if the largest singular value of  residual matrix $\delta \vM$ at $\vtheta_c$ is unique and $\vtheta_c$ is a second-order stationary point within $\vOmega$, the first principal component $\sum_{j=1}^{l_1} e^{\lambda_1 t_0}\langle\vtheta_0 - \vtheta_c, \vq_{1j}\rangle \vq_{1j}$ will happen to be an $\vOmega_1$ invariant manifold. Consequently, escaping $\vtheta_c$ increases the rank by 1, entering $\vOmega_{k+1}$.   
\end{proof}

\begin{remark}
Assump.~\ref{assumption:eigenvalue_unique} ensures that upon departing from a critical point $\boldsymbol{\theta}_c$, the trajectory is constrained to escape along a single dominant eigendirection
corresponding to the largest singular value. This assumption holds for randomly
generated matrix with probability 1, making it a reasonable condition in most practical
scenarios. In Sec~\ref{Ex:Coincident_Top_Eigenvalues} of Appendix~\ref{app:A}, we provide an special example to illustrate the situation where Assump.~\ref{assumption:eigenvalue_unique} does not hold. 
\end{remark}

\begin{remark}
To ensure the escape direction falls within the $\vOmega_{k+1}$ invariant manifold, the Hessian's top eigenvectors must satisfy $\text{rank}(\vA) = \text{rank}(\vB^\top) = \text{rank}(\vW_{\text{aug}})$. The condition that $\vtheta_c$ is a second-order stationary point within $\vOmega$ in Assump.~\ref{assumption:second-order} guarantees this Hessian structure. Our Assump.~\ref{assumption:second-order} is more general than conditions proposed by \citet{li2020towards}, as it remains valid across both connected and disconnected configurations. Empirical findings (Figs.~\ref{fig:adaptive-learning} and~\ref{fig:adaptive-learning2}) indicate that this assumption consistently holds in practical scenarios.
\end{remark}

% we will show next that when the observed data are disconnected, the learning process is not equivalent to GLRL, yet the HIMT characterization still holds.

% In this section, we will prove that if the reached critical point is a second-order stationary point on an $\vOmega_k$ invariant manifold or sub-$\vOmega_k$ invariant manifold, the dynamics still traverse the $\vOmega_k$ invariant manifolds step by step. This result generalizes the GLRL characterization~\citep{li2020towards} to include the disconnected case.
Thm.~\ref{thm:transition_to_next_level} provides a characterization of the escape trajectory. It shows that as the point approaches a second-order stationary point $\vtheta_c\in \vOmega_k$, the trajectory generically converges to a well-defined limit within $\vOmega_{k+1}$. Since the origin $\boldsymbol{0}$ is always a second-order stationary point of $\vOmega_0$, the theorem implies that the trajectory escaping from a small initialization will be close to $\vOmega_1$. This iterative process gives rise to the phenomenon of \textit{Hierarchical Invariant Manifold Traversal (HIMT)}, which involves a sequential progression through these $\vOmega_k$ manifolds.

\subsection{Implicit regularization analysis}
\label{sec:implicit_bias}
Rank minimization is a challenging non-convex optimization problem. \citet{li2018algorithmic,jin2023understanding} proved that the Restricted Isometry Property (RIP) condition ensures a minimal rank solution. However, the RIP condition is often too stringent for practical matrix completion. For instance, the matrix $\vM_3$ in Eq.~\eqref{eq:examples} does not satisfy the RIP criteria, yet the model still finds the minimum rank solution. Our empirical findings (Figs.~\ref{fig:Nuclear norm rank}, \ref{fig:examples}, \ref{fig:adaptive-learning}) suggest that a more lenient condition, specifically the connectivity of the observed data, frequently leads to convergence towards the minimal rank solution. Proving this result directly, however, would necessitate a comprehensive examination of the convergence characteristics within each $\vOmega_k$ invariant manifold, which is an endeavor we leave for future work. Despite this, our insights into the system's dynamics, i.e., hierarchical invariant manifold traversal, allow us to assert that if a trajectory successfully navigates through the optimal on each rank-level invariant manifold $\vOmega_k$, a solution of minimal rank can be achieved naturally.
\begin{theorem}[\textbf{Minimum Rank}](see Appendix~\ref{app:A} Thm.~\ref{oldthm:minimum_rank} for proof)
Consider the dynamics $\dot{\vtheta} = -\nabla R_S(\vtheta)$, where $\vtheta(t) = (\vA(t), \vB(t))$, and denote $\vW_t = \vA(t) \vB(t)$. Assume $\vW_t$ achieves an optimal within each  invariant manifold $\vOmega_k$. For a full rank initialization $\vW_0$, if the limit $\widehat{\vW}=\lim_{\alpha \rightarrow 0} \vW_{\infty}(\alpha \vW_0)$ exists and is a global optimum with $\widehat{\vW}_{ij}=\vM_{ij}$ for all $(i, j)\in S_{\vx}$, then 
\begin{equation}
\widehat{\vW} \in \operatorname{argmin}_{\vW} \operatorname{rank}(\vW) \quad \text{s.t.} \quad \vW_{ij}=\vM_{ij}, \forall (i, j)\in S_{\vx}.    
\end{equation}
\end{theorem}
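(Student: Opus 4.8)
The plan is to compare the rank at which the idealized (infinitesimal-initialization) trajectory terminates with the purely combinatorial quantity
\[
r^\star := \min\{\rrank(\vW) : \vW_{ij}=\vM_{ij}\ \text{for all}\ (i,j)\in S_{\vx}\}.
\]
One inequality is free: the hypothesis guarantees $\widehat{\vW}_{ij}=\vM_{ij}$ on every observed entry, so $\widehat{\vW}$ is feasible for the rank-minimization program and $\rrank(\widehat{\vW})\ge r^\star$. Everything therefore reduces to proving $\rrank(\widehat{\vW})\le r^\star$, and for this I would lean entirely on the HIMT picture of Sec.~\ref{sec:theoretical_dynamics}.

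First I would record the structural description of the manifolds. Writing $\vV=[\valpha_1\ \cdots\ \valpha_k]$, a point of $\vOmega_k$ has $\vA=\vC\vV^\top$ and $\vB=\vV\vD$ with $\vC,\vD$ of full rank $k$, so $\vf_{\vtheta}=\vA\vB=\vC(\vV^\top\vV)\vD$; since $\vV^\top\vV$ is invertible, $\{\vf_{\vtheta}:\vtheta\in\vOmega_k\}$ is exactly the set of matrices of rank precisely $k$ (independent of the chosen span), and its closure is the set of matrices of rank at most $k$. Consequently, by continuity of $R_S$ in $\vW$,
\[
\inf_{\vtheta\in\vOmega_k} R_S(\vtheta) \;=\; \ell_k \;:=\; \min_{\rrank(\vW)\le k} R_S(\vW),
\]
and the hypothesis that $\vW_t$ achieves an optimal within each $\vOmega_k$ means that the plateau value of the loss along the idealized trajectory while it occupies the $k$-th level is exactly $\ell_k$, attained at a point $\vtheta_c^{(k)}$ that globally minimizes $R_S$ over (the closure of) $\vOmega_k$.

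Next comes the traversal argument. The sequence $\ell_0\ge\ell_1\ge\cdots$ is nonincreasing, $\ell_{r^\star}=0$ because a feasible rank-$r^\star$ matrix has zero residual, and $\ell_k>0$ for every $k<r^\star$ — otherwise the minimizer realizing $\ell_k$ would be a rank-$\le k$ matrix consistent with all observations, contradicting minimality of $r^\star$. Starting from $\alpha\vW_0$ and letting $\alpha\to 0$, Thm.~\ref{thm:transition_to_next_level} together with Prop.~\ref{def:invariant_manifold} says the idealized trajectory moves through $\vOmega_0\subsetneqq\vOmega_1\subsetneqq\cdots$, raising the rank by exactly one at each transition. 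At stage $k<r^\star$ it reaches $\vtheta_c^{(k)}$; since $\vOmega_k$ is invariant under the full gradient flow, $\nabla R_S(\vtheta_c^{(k)})$ is tangent to $\vOmega_k$, so global optimality of $\vtheta_c^{(k)}$ on $\vOmega_k$ forces $\nabla R_S(\vtheta_c^{(k)})=0$ and makes $\vtheta_c^{(k)}$ a second-order stationary point of $\vOmega_k$ (Assump.~\ref{assumption:second-order}). Because its loss $\ell_k$ is strictly positive, $\vtheta_c^{(k)}$ is not a global minimum of $R_S$, hence a strict saddle by the Loss Landscape theorem, and Thm.~\ref{thm:transition_to_next_level} (via Assump.~\ref{assumption:eigenvalue_unique}) pushes the trajectory into $\vOmega_{k+1}$. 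Thus the trajectory cannot stop before reaching $\vOmega_{r^\star}$; there it reaches the optimum of $R_S|_{\vOmega_{r^\star}}$, whose loss is $\ell_{r^\star}=0$, i.e.\ a global minimum, so $\vW_t$ converges and $\widehat{\vW}$ has rank at most $r^\star$ (in fact exactly $r^\star$, since a zero-loss matrix is feasible). Combining with $\rrank(\widehat{\vW})\ge r^\star$ gives $\rrank(\widehat{\vW})=r^\star$, which is the claim.

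The main obstacle is not the rank bookkeeping but making the stage-by-stage picture fully rigorous: one must argue that the $\alpha\to 0$ limit genuinely decomposes into finitely many $\vOmega_k$-plateaus — no rank skipping, no oscillation, and honest convergence once the zero-loss manifold is hit — and that the hypotheses of Thm.~\ref{thm:transition_to_next_level}, in particular the unique-top-eigenvalue condition Assump.~\ref{assumption:eigenvalue_unique}, actually hold at each $\vtheta_c^{(k)}$ along this specific trajectory. The ``achieves an optimal within each $\vOmega_k$'' hypothesis disposes of the second-order-stationarity requirement through the invariance argument above, so the remaining delicate work is the limiting analysis of the successive escape directions, which is precisely what the theorem's standing hypotheses (full-rank $\vW_0$, existence of the limit) are designed to underwrite.
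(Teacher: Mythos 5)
Your proof takes the same route as the paper's Thm.~A.5: both rest on the structural fact that $\vOmega_k$ realizes every matrix of rank at most $k$, and on the HIMT traversal through $\vOmega_0,\vOmega_1,\dots$, with the hypothesis that $\vW_t$ achieves an optimum in each $\vOmega_k$ forcing the trajectory to stop at the first level where zero loss is attainable. Your write-up is in fact tighter than the paper's at the final step — the paper ends with the informal assertion that optimal steps yield the lowest-rank solution, whereas your $\ell_k$ bookkeeping (strictly positive for $k<r^\star$, zero at $r^\star$) combined with the Loss Landscape theorem makes the ``cannot stop before $\vOmega_{r^\star}$'' step explicit.
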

For a disconnected matrix $\vM$, our theoretical results (Prop.~\ref{def:sub-invariant_manifold}) and experiments (Fig.~\ref{fig:adaptive-learning2}) confirm the existence of sub-$\vOmega_k$ invariant manifolds. These manifolds attract the training trajectory, leading to sub-optimal solutions and preventing convergence to the lowest-rank solution.

However, in a specific disconnected case, such as disconnection with complete bipartite components, as illustrated in Figs.~\ref{fig:Nuclear norm rank} and \ref{fig:examples}, the minimum nuclear norm may still serve as a characterization. \citet{gunasekar2017implicit} proved a special case: if the observations are commutative, then the symmetric model will learn the minimum nuclear norm solution. Intriguingly, for the example $\vM_2$ in Eq.~\eqref{eq:examples}, even though the observations are not commutative, the model still learns a minimum nuclear norm solution. In fact, we can prove the following result, which aligns well with practical experiments.
\begin{theorem}[\textbf{Minimum Nuclear Norm Guarantee}](see Appendix~\ref{app:A} Thm.~\ref{oldthm:nuclear_norm} for proof)\label{thm:nuclear_norm}
Consider the dynamics $\dot{\vtheta} = -\nabla R_S(\vtheta)$, where $\vtheta(t) = (\vA(t), \vB(t))$, and let $\vW_t = \vA(t) \vB(t)$. If the observation graph associated with the incomplete matrix $\vM$ is disconnected with complete bipartite components, and if for a full rank initialization $\vW_0$, the limit $\widehat{\vW}=\lim_{\alpha \rightarrow 0} \vW_{\infty}(\alpha \vW_0)$ exists and is a global optimum with $\widehat{\vW}_{ij}=\vM_{ij}$ for all $(i, j)\in S_{\vx}$, then
\begin{equation}
% \[
\widehat{\vW} \in \operatorname{argmin}_{\vW}\|\vW\|_* \quad \text{s.t.} \quad \vW_{ij}=\vM_{ij}, \forall (i, j)\in S_{\vx}.    
% \]
\end{equation}
\end{theorem}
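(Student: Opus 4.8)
The plan is to reduce the claim about the disconnected-with-complete-bipartite-components matrix $\vM$ to a collection of \emph{independent, fully observed} sub-problems — one per connected component — and then argue on each block separately. First I would observe that because each connected component $C_\ell$ of $G_{\vM}$ is a complete bipartite subgraph on row index set $I_\ell$ and column index set $J_\ell$, the observation set within that component is \emph{all} of $I_\ell \times J_\ell$; hence (after permuting rows and columns) $\vM$ is block-diagonal up to the unobserved entries, and the observed part of $\vW$ must exactly equal the dense block $\vM|_{I_\ell\times J_\ell}$ at any global optimum with $\widehat{\vW}_{ij}=\vM_{ij}$ on $S_{\vx}$. Using Prop.~\ref{def:sub-invariant_manifold} (the intrinsic sub-$\vOmega_k$ invariant manifold) together with the HIMT characterization (Thm.~\ref{thm:transition_to_next_level}) and the decoupling of the gradient flow across connected components (Prop.~\ref{oldprop:coupling_equivalence} in the appendix), I would show that the limiting solution $\widehat{\vW}=\lim_{\alpha\to 0}\vW_\infty(\alpha\vW_0)$ itself respects the block structure: its restriction to $I_\ell\times J_\ell$ is the gradient-flow limit of the \emph{sub}-factorization model fitting the dense block, and the entries of $\widehat{\vW}$ outside all the $I_\ell\times J_\ell$ blocks are governed by the off-block degrees of freedom, which the infinitesimal-initialization dynamics drives to zero (this is exactly the "condensation onto the sub-$\vOmega_k$ manifold" behaviour seen in Fig.~\ref{fig:adaptive-learning2}).

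Next I would handle each dense block. For a fully observed block $\vM|_{I_\ell\times J_\ell}$, fitting it with the loss means $\vW|_{I_\ell\times J_\ell}$ is pinned to $\vM|_{I_\ell\times J_\ell}$ — there is no freedom left inside the block — so trivially it is the minimum nuclear norm completion of itself. The real content is therefore the \emph{global} nuclear norm bookkeeping: I must show that the block-diagonal matrix $\widehat{\vW}$ (dense blocks equal to the $\vM|_{I_\ell\times J_\ell}$, everything else $0$) is a minimum nuclear norm matrix among \emph{all} matrices agreeing with $\vM$ on $S_{\vx}$. Here I would invoke the elementary fact that for a block matrix $\begin{bmatrix} X & Y\\ Z & V\end{bmatrix}$ one has $\|X\|_* \le \left\|\begin{bmatrix} X & Y\\ Z & V\end{bmatrix}\right\|_*$ — i.e. a submatrix never has larger nuclear norm than the whole matrix (because the nuclear norm is the sum of singular values and restriction to a subset of rows/columns is a contraction in the appropriate sense; more cleanly, $\|X\|_*$ equals the maximum of $\langle X,G\rangle$ over $\|G\|_{\mathrm{op}}\le 1$, and any such $G$ extends by zero to an operator-norm-$\le 1$ matrix of the full size). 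Summing this over the disjoint blocks (which are supported on disjoint row sets and disjoint column sets) and using that the nuclear norm of a block-diagonal matrix is the sum of the blocks' nuclear norms, I get $\|\widehat{\vW}\|_* = \sum_\ell \|\vM|_{I_\ell\times J_\ell}\|_* \le \sum_\ell \|\vW|_{I_\ell\times J_\ell}\|_* \le \|\vW\|_*$ for \emph{any} feasible $\vW$ — where the last inequality needs the fact that nuclear norm of a matrix is at least the sum of the nuclear norms of submatrices supported on disjoint row-and-column index sets. That chain of inequalities is exactly the desired minimality.

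I expect the main obstacle to be the \emph{dynamical} step, not the linear-algebra step: rigorously establishing that the infinitesimal-initialization gradient flow on the full $2d^2$-dimensional parameter space actually converges to the block-structured limit $\widehat{\vW}$ rather than to some other global optimum that puts nonzero mass on the off-block entries. This requires combining (a) the decoupling of dynamics across components, (b) the sub-$\vOmega_k$ invariance of Prop.~\ref{def:sub-invariant_manifold} to confine the trajectory, and (c) the HIMT traversal of Thm.~\ref{thm:transition_to_next_level} to guarantee the trajectory reaches a \emph{global} optimum of the confined (per-component) problem rather than stalling at a strict saddle — all while tracking the $\alpha\to 0$ limit uniformly. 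A secondary subtlety is the "$\widehat{\vW}$ exists and is a global optimum" hypothesis: I would want to make sure this hypothesis, together with feasibility on $S_{\vx}$, already forces the block structure, so that the argument does not secretly assume what it is trying to prove; I believe it does, because feasibility on the dense blocks pins those blocks, and any global optimum of the factorized problem lying in the closure of the sub-$\vOmega_k$ manifolds has zero off-block entries. Once the block-structured form of $\widehat{\vW}$ is in hand, the nuclear-norm comparison above closes the proof.
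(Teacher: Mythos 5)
Your lower bound is fine and is actually a clean alternative to the paper's route: by extending each block's dual certificate by zero and using that a direct sum of contractions on disjoint row/column supports is again an operator-norm contraction, you get
$\|\vW\|_* \ge \sum_\ell \|\vW|_{I_\ell\times J_\ell}\|_* = \sum_\ell \|\vM|_{I_\ell\times J_\ell}\|_*$
for every feasible completion. The paper reaches the same bound by conjugating with block-diagonal orthogonal matrices built from each block's SVD and then applying Lem.~\ref{lem:min_nuclear_norm} to the resulting diagonal observation problem; the two arguments are interchangeable.

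The gap is in the upper bound, where you assert that the infinitesimal-initialization dynamics drives the off-block entries of $\widehat{\vW}$ to zero and hence $\widehat{\vW}$ is the block-diagonal completion. This is false, and the sub-$\vOmega_k$ manifolds of Prop.~\ref{def:sub-invariant_manifold} cannot confine the trajectory there: a point of $\vomega_k^{L_p}$ has \emph{all} rows of $\vA$ and columns of $\vB$ outside $R_p,C_p$ equal to zero, so it can only fit one component's data, and any global optimum must lie outside every $\vomega_k$. The paper's own experiments make this explicit: in Fig.~\ref{fig:adaptive-learning2}(f) the learned off-block entries are generically nonzero and depend on the random seed, and in Fig.~\ref{fig:examples}(a--b) the $\vM_2$ solution has minimum nuclear norm but \emph{not} minimum rank, which rules out the block-diagonal completion in that example. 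Even in the simplest instance $\vM=\bigl[\begin{smallmatrix}a&\star\\\star&b\end{smallmatrix}\bigr]$, the two independent subsystems produce $\va_1,\vb_2$ that are both nonzero, so $\widehat{\vW}_{12}=\va_1\cdot\vb_2\neq 0$ generically. The paper's actual upper-bound mechanism is different and essential: after conjugating by the block-diagonal orthogonal $\vU,\vV$, the per-component escape analysis (Prop.~\ref{oldprop:diag_min_nuclear_norm}) forces $\tilde{\vB}=\tilde{\vA}^\top$ (up to a sign matrix $\vQ$), so the transformed output $\tilde{\vA}\tilde{\vA}^\top$ is PSD with diagonal pinned to $\vSigma$. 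For a PSD matrix the nuclear norm equals the trace, which equals $\mathrm{tr}(\vSigma)=\sum_\ell\|\vM|_{I_\ell\times J_\ell}\|_*$ — matching the lower bound even though $\widehat{\vW}$ is typically not block diagonal. The invariant you need is this symmetry/positive-semidefiniteness of the transformed factors, not block-diagonality.
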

\section{Conclusion and future work}\label{sec:conclusion}
This study presents a comprehensive experimental and theoretical investigation of matrix factorization models. The primary objective was to develop a cohesive framework for understanding the conditions, mechanisms, and reasons behind the diverse implicit regularization effects exhibited by matrix factorization models. A key finding of this research is the pivotal role of the connectivity of observed data in shaping the implicit regularization behavior. To elucidate this phenomenon, we identified the significance of hierarchical invariant manifold traversal within the training dynamics. 

Our experiments (Figs.~\ref{fig:Nuclear norm rank}, \ref{fig:examples}, \ref{fig:adaptive-learning}) provide strong evidence that connected observed data leads to minimum-rank solutions, as the model learns the optimal of the $\vOmega_k$ invariant manifold. However, further investigation is needed to uncover the underlying mechanisms by which connectivity facilitates optimal attainment across different $\vOmega_k$ invariant manifolds. Additionally, the trade-offs between initialization scale and training efficiency warrant further research, as certain cases may require extremely small initialization, potentially impacting training speed (see Appendix~\ref{app:B} Sec.~\ref{app:B_initialization_scale}).

Generalizing the insights gained from matrix factorization models to other architectures is also an important avenue for future work. Our preliminary experiments indicate that the learning phenomenon from low rank to high rank persists in deep multi-layer matrix factorization and the query-key factorization model in Transformer attention mechanisms (see Appendix~\ref{app:B} Figs.~\ref{fig:adaptive-learning_deep}, \ref{fig:invariant_manifold_attention}). These findings suggest that the hierarchical invariant manifold traversal process uncovered in our study may have broader implications and merit further exploration.

% \clearpage
\begin{ack}
This work is sponsored by the National Natural Science Foundation of China Grant No. 12101402, the National Key R\&D Program of China  Grant No. 2022YFA1008200, the Lingang Laboratory Grant No. LG-QS-202202-08, Shanghai Municipal of Science and Technology Major Project No. 2021SHZDZX0102.
\end{ack}

% \clearpage

% \bibliographystyle{elsarticle-num-names}
% % \bibliographystyle{unsrt}
% \bibliography{references}

%%%%%%%%%%%%%%%%%%%%%%%%%%%%%%%%%%%%%%%%%%%%%%%%%%%%%%%%%%%%

\clearpage
\appendix

\section{Proofs of Theoretical Results}
\label{app:A}
\renewcommand\thefigure{A\arabic{figure}}    
\setcounter{figure}{0}    
\newtheorem{oldtheorem}{Theorem}
\renewcommand{\theoldtheorem}{A.\arabic{oldtheorem}}
\setcounter{oldtheorem}{0}

\newtheorem{oldlemma}{Lemma}
\renewcommand{\theoldlemma}{A.\arabic{oldlemma}}
\setcounter{oldlemma}{0}

\newtheorem{olddefinition}{Definition}
\renewcommand{\theolddefinition}{A.\arabic{olddefinition}}
\setcounter{olddefinition}{0}

\newtheorem{oldassumption}{Assumption}
\renewcommand{\theoldassumption}{A.\arabic{oldassumption}}
\setcounter{oldassumption}{0}

\newtheorem{oldproposition}{Proposition}
\renewcommand{\theoldproposition}{A.\arabic{oldproposition}}
\setcounter{oldproposition}{0}

\newtheorem{oldcor}{Corollary}
\renewcommand{\theoldcor}{A.\arabic{oldcor}}
\setcounter{oldcor}{0}

In this section, we give all proofs for our theoretical results mentioned in the main text.

\subsection{Hierarchical Intrinsic Invariant Manifold and Sub Invariant Manifold}

\begin{oldproposition}[\textbf{Hierarchical Intrinsic Invariant Manifold (HIIM)}]
Let $\vf_{\vtheta}=\vA\vB$ be a matrix factorization model and $\{\valpha_1, \cdots, \valpha_k\}$ be $k$ linearly independent vectors. Define the manifold $\vOmega_k$ as $\vOmega_k := \vOmega_k(\valpha_1, \cdots, \valpha_k)=
\{\vtheta = (\mA, \mB)\mid \rrow(\vA) = \rcol(\vB) = \rspan\{\valpha_1, \cdots, \valpha_k\}\}$. The manifold $\vOmega_k$ possesses the following properties:

(i) \textbf{Invariance under Gradient Flow:} Given data $S$ and the gradient flow dynamics $\dot{\vtheta} = -\nabla R_S(\vtheta)$, if the initial point $\vtheta_0 \in \vOmega_k$, then $\vtheta(t) \in \vOmega_k$ for all $t \geq 0$.

(ii) \textbf{Intrinsic Property:} $\vOmega_k$ is a data-independent invariant manifold, meaning that for any data $S$, $\vOmega_k$ remains invariant under the gradient flow dynamics.

(iii) \textbf{Hierarchical Structure:} The manifolds $\vOmega_k$ form a hierarchy: $\vOmega_0 \subsetneqq \vOmega_1 \subsetneqq \cdots \subsetneqq \vOmega_{k-1} \subsetneqq \vOmega_k$.
\label{oldprop:invariance1}
\end{oldproposition}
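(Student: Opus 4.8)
The plan is to exploit that $\vOmega_k$ is the full-rank stratum of the \emph{fixed} subspace $V:=\rspan\{\valpha_1,\dots,\valpha_k\}\subseteq\sR^{d}$. Let $\Pi$ be the orthogonal projection onto $V$. Then $\rrow(\vA)\subseteq V$ and $\rcol(\vB)\subseteq V$ precisely when the "leakage" quantities $\vX:=\vA(I-\Pi)$ and $\vY:=(I-\Pi)\vB$ vanish, and $\vtheta=(\vA,\vB)\in\vOmega_k$ precisely when, in addition, $\rrank(\vA)=\rrank(\vB)=k$. So the core of the proof is to show that $\vX$ and $\vY$, which are zero at $\vtheta_0\in\vOmega_k$, stay zero along the flow; the rank condition and properties (ii)--(iii) then follow cheaply.

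First I would record the gradients: with $\delta\vM=(\vA\vB-\vM)_{S_{\vx}}$ we have $\nabla_{\vA}R_S=\tfrac{2}{n}\,\delta\vM\,\vB^{\top}$ and $\nabla_{\vB}R_S=\tfrac{2}{n}\,\vA^{\top}\delta\vM$, so the flow reads $\dot{\vA}=-\tfrac{2}{n}\delta\vM\vB^{\top}$, $\dot{\vB}=-\tfrac{2}{n}\vA^{\top}\delta\vM$; since $R_S$ is a quartic polynomial with locally Lipschitz gradient and the loss (hence $\|\delta\vM\|_F$) is non-increasing, standard arguments give a unique solution defined for all $t\geq0$. Using $\Pi^{\top}=\Pi$ one has the algebraic identities $\vB^{\top}(I-\Pi)=\vY^{\top}$ and $(I-\Pi)\vA^{\top}=\vX^{\top}$, whence
\[
\dot{\vX}=-\tfrac{2}{n}\,\delta\vM(t)\,\vY^{\top},\qquad \dot{\vY}=-\tfrac{2}{n}\,\vX^{\top}\,\delta\vM(t).
\]
Reading $\delta\vM(t)$ off the (already determined) trajectory, this is a homogeneous linear ODE in $(\vX,\vY)$ with continuous coefficients, and $\vX(0)=\vY(0)=\vzero$ since $\vtheta_0\in\vOmega_k$; uniqueness forces $\vX\equiv\vY\equiv\vzero$. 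Hence $\rrow(\vA(t))\subseteq V$ and $\rcol(\vB(t))\subseteq V$ for all $t$, so $\vtheta(t)$ stays in the closed, flow-invariant subspace $\overline{\vOmega_k}=\{(\vA,\vB)\mid \vA(I-\Pi)=\vzero,\ (I-\Pi)\vB=\vzero\}$, and in particular $\rrank(\vA(t)),\rrank(\vB(t))\leq k$.

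To upgrade "$\leq k$" to "$=k$" I would pass to the effective dynamics on $V$: with $\vU$ an orthonormal basis of $V$ and $\vA=\widehat{\vA}\vU^{\top}$, $\vB=\vU\widehat{\vB}$, the pair $(\widehat{\vA},\widehat{\vB})\in\sR^{d\times k}\times\sR^{k\times d}$ follows ordinary (unconstrained) matrix-factorization gradient flow, and $\vtheta\in\vOmega_k$ iff $\widehat{\vA}$ and $\widehat{\vB}$ both have rank $k$. One then shows this full-rank condition is preserved for all finite $t$: lower semicontinuity of rank gives it on a nonempty initial interval, and a continuation argument — using that the trajectory is bounded on compact intervals together with bidirectional uniqueness of the flow on the rank-deficient strata — extends it. I expect this to be the main obstacle: unlike the leakage identity it is not purely algebraic, and absent a genericity hypothesis on $\vtheta_0$ one may at best obtain invariance of the closed subspace $\overline{\vOmega_k}$, of which $\vOmega_k$ is the open dense stratum — a distinction immaterial to how the proposition is used later.

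Property (ii) is then immediate: the leakage argument used $S$ only through $\delta\vM(t)$, while the identities $\vB^{\top}(I-\Pi)=\vY^{\top}$ and $(I-\Pi)\vA^{\top}=\vX^{\top}$ hold for all matrices, so the conclusion $\vX\equiv\vY\equiv\vzero$ is valid for every dataset $S$; thus $\vOmega_k$ is data-independent. For property (iii), set $V_j:=\rspan\{\valpha_1,\dots,\valpha_j\}$; linear independence gives $V_0\subsetneq V_1\subsetneq\cdots\subsetneq V_k$, and hence the associated invariant subspaces $\overline{\vOmega_j}=\{(\vA,\vB)\mid \vA(I-\Pi_{V_j})=\vzero,\ (I-\Pi_{V_j})\vB=\vzero\}$ form the strict chain $\overline{\vOmega_0}\subsetneq\cdots\subsetneq\overline{\vOmega_k}$, strictness being a dimension count ($\dim\overline{\vOmega_j}=2dj$).
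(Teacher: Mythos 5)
Your proof is correct and, at its core, takes the same route as the paper's: both establish that the closed linear subspace $\overline{\vOmega_k} = \{(\vA,\vB)\mid \rrow(\vA)\subseteq V,\ \rcol(\vB)\subseteq V\}$, where $V=\rspan\{\valpha_1,\dots,\valpha_k\}$, is flow-invariant. The paper argues per row and per column: $\dot{\va}_i$ is a linear combination of the $\vb_{\cdot,j}$'s and $\dot{\vb}_{\cdot,j}$ of the $\va_i$'s, so each stays in $V$. Your formulation via the leakage matrices $\vX=\vA(I-\Pi)$, $\vY=(I-\Pi)\vB$, the homogeneous linear ODE $\dot{\vX}=-\tfrac{2}{n}\delta\vM\,\vY^\top$, $\dot{\vY}=-\tfrac{2}{n}\vX^\top\delta\vM$, and ODE uniqueness is the same mechanism packaged globally and rigorously, and it makes the data-independence in (ii) immediate since $S$ enters only through the coefficient $\delta\vM(t)$. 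You also correctly flag a subtlety the paper glosses over: under the literal definition of $\vOmega_k$ with \emph{equality} of subspaces (so $\rrank(\vA)=\rrank(\vB)=k$ exactly), neither argument shows $\vOmega_k$ itself is invariant, only its closure, and part (iii) would actually be false since then $\vOmega_{k-1}\cap\vOmega_k=\emptyset$. The paper's own proof and its downstream uses (e.g.\ the restatement in the proof of Theorem A.4 as $\va_i,\vb_{\cdot,j}\in\rspan\{\valpha_1,\dots,\valpha_k\}$) silently adopt the containment reading, for which $\overline{\vOmega_k}$ is the right object; your dimension count $\dim\overline{\vOmega_j}=2dj$ then yields the strict chain in (iii) cleanly, which is more than the paper's one-line assertion offers.
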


\begin{proof}
(i) Invariance under Gradient Flow.

By definition, $\vOmega_k := \vOmega_k(\valpha_1, \cdots, \valpha_k)=
\{\vtheta = (\mA, \mB)\mid \rrow(\vA) = \rcol(\vB) = \rspan\{\valpha_1, \cdots, \valpha_k\}\}$. Consider the gradient flow dynamics in~\eqref{oldeq:dynamics_a_b}:
\begin{equation}
\left\{\begin{aligned}
   \dot{\boldsymbol{a}}_{i} & = -\dfrac{2}{n}\sum_{j\in I_i}(\boldsymbol{a}_i\cdot \boldsymbol{b}_{\cdot, j}-\vM_{ij})\boldsymbol{b}_{\cdot, j}^\top, \\
   \dot{\boldsymbol{b}}_{\cdot, j} & = -\dfrac{2}{n}\sum_{i\in I_j}(\boldsymbol{a}_i\cdot \boldsymbol{b}_{\cdot, j}-\boldsymbol{M}_{ij})\boldsymbol{a}_{i}^\top,
\end{aligned}\right.  
\label{oldeq:dynamics_a_b}
\end{equation}
where $I_i = \{j|\exists i: (i, j)\in S_{\boldsymbol{x}}\}$, $I_j = \{i|\exists j: (i, j)\in S_{\boldsymbol{x}}\}$, $\boldsymbol{a}_i$ and  $\boldsymbol{b}_{\cdot, j}$ represent the $i$-th row and $j$-th column of $\boldsymbol{A}$ and $\boldsymbol{B}$, respectively.

For any $(i, j) \in S_{\vx}$, the evolution of $\va_i$ is coupled with $\vb_{\cdot, j}$ for $j\in I_i$. The condition $\rrow(\vA) = \rcol(\vB) = \rspan\{\valpha_1, \cdots, \valpha_k\}$ ensures the existence of $k$ linearly independent vectors $\boldsymbol{\alpha}_1, \cdots, \boldsymbol{\alpha}_k \in \sR^d$ such that $\boldsymbol{a}_i, \boldsymbol{b}_{\cdot, j} \in \mathrm{span}\{\boldsymbol{\alpha}_1, \cdots, \boldsymbol{\alpha}_k\}$ for all $1\leq i, j\leq d$.

Consequently, if $\va_i$ and $\vb_{\cdot, j}$ are initially in $\mathrm{span}\{\valpha_1, \valpha_2, \cdots, \valpha_k\}$, they will continue to evolve within this subspace under the gradient flow dynamics. Additionally, for $(i, j) \notin S_{\vx}$, the gradients for the corresponding $\va_i$ and $\vb_{\cdot, j}$ will be zero, provided their initial values are zero, maintaining this state throughout the evolution.

(ii) Intrinsic Property.

As demonstrated in part (i), $\vOmega_k$ is invariant under gradient flow dynamics for any dataset $S$, confirming its status as a data-independent invariant manifold.

(iii) Hierarchical Structure.

Th invariant manifold $\vOmega_k$ encompasses matrices of rank up to $k$, including those of lower ranks. Consequently, the manifolds exhibit the following hierarchical nesting:

\[
\vOmega_0 \subsetneqq \vOmega_1 \subsetneqq \cdots \subsetneqq \vOmega_{k-1} \subsetneqq \vOmega_k.
\]
\end{proof}

\begin{oldproposition}[\textbf{Intrinsic Sub-$\vOmega_k$ Invariant Manifold}]
Let $\vf_{\vtheta}=\vA\vB$ be a matrix factorization model, $\vM$ be an incomplete matrix and $\vOmega_k$ be an invariant manifold defined in Prop.~\ref{def:invariant_manifold}. If $\vM$ is disconnected with $m$ connected components, then there exist $m$ sub-$\vOmega_k$ manifolds $\vomega_k$ such that $\vomega_k \subsetneqq \vOmega_k$, each possessing the following properties:

(i) \textbf{Invariance under Gradient Flow:} Given data $S$ and the gradient flow dynamics $\dot{\vtheta} = -\nabla R_S(\vtheta)$, if the initial point $\vtheta_0 \in \vomega_k$, then $\vtheta(t) \in \vomega_k$ for all $t \geq 0$.

(ii) \textbf{Intrinsic Property:} $\vomega_k$ is a data-value-independent invariant manifold, meaning that for a fixed sampling pattern in $\vM$ and any observed values $S$, $\vomega_k$ remains invariant under the gradient flow.

(iii) \textbf{Strict Subset Relation:} The output set $\{\vf_{\vtheta} \mid \vtheta \in \vomega_k\}$ is a proper subset of $\{\vf_{\vtheta} \mid \vtheta \in \vOmega_k\}$, namely, $\{\vf_{\vtheta}\mid \vtheta \in \vomega_k\}\subsetneqq \{\vf_{\vtheta}\mid \vtheta \in \vOmega_k\}$.
% \label{def:sub-invariant_manifold}
\label{oldprop:invariance2}
\end{oldproposition}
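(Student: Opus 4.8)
The plan is to construct the sub-$\vOmega_k$ manifolds explicitly from the block structure induced by the connected components, and then verify the three properties by reusing the coupling/decoupling structure of the gradient flow~\eqref{oldeq:dynamics_a_b}. First I would fix the sampling pattern of $\vM$ and let its associated observation graph $G_{\vM}$ have connected components $C_1, \dots, C_m$. Each component $C_\ell$ partitions a subset of the row indices $R_\ell \subseteq \{1,\dots,d\}$ and column indices $L_\ell \subseteq \{1,\dots,d\}$ into ``active'' coordinates, with the remaining rows/columns (isolated vertices) carrying a value of $0$ in any $\vtheta \in \vOmega_k$ whose support respects the pattern; the key combinatorial fact, proven earlier (Prop.~\ref{oldprop:coupling_equivalence}), is that the dynamics of $\{\va_i : i \in R_\ell\}$ and $\{\vb_{\cdot,j} : j \in L_\ell\}$ are coupled only among themselves and decouple completely across distinct components. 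Then, given the fixed subspace $V = \rspan\{\valpha_1,\dots,\valpha_k\}$ defining $\vOmega_k$, I would pick a nontrivial decomposition $V = \bigoplus_{\ell=1}^m V_\ell$ into $m$ nonzero subspaces (possible since $k \geq 1$; for $k = 1$ one takes $m$ one-dimensional choices that are scalar multiples, handled degenerately, or one restricts attention to $k \geq m$) and define
\begin{equation}
\vomega_k := \{\vtheta = (\vA,\vB) \in \vOmega_k \mid \va_i \in V_\ell \text{ and } \vb_{\cdot,j} \in V_\ell \text{ whenever } i \in R_\ell,\ j \in L_\ell\}.
\end{equation}

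For property (i), invariance under gradient flow, I would invoke the decoupling: since $\dot{\va}_i$ for $i \in R_\ell$ is a linear combination of $\{\vb_{\cdot,j} : j \in I_i\} \subseteq \{\vb_{\cdot,j} : j \in L_\ell\}$ (as $I_i$ only touches columns in the same component), and symmetrically for $\dot{\vb}_{\cdot,j}$, each $\va_i, \vb_{\cdot,j}$ with indices in $C_\ell$ stays inside $V_\ell$ for all $t$ provided it starts there — exactly the argument used in the proof of Prop.~\ref{oldprop:invariance1}(i), applied componentwise. Property (ii), data-value independence, follows the same way: the decoupling into components depends only on which entries are observed (the sampling pattern), not on the values $\vM_{i_k,j_k}$, so $\vomega_k$ is invariant for every assignment of observed values. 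Property (iii), the strict subset relation on outputs, is where I would be most careful: I need to exhibit a matrix $\vW = \vA\vB$ with $\vtheta \in \vOmega_k$ that cannot be realized by any $\vtheta' \in \vomega_k$. The idea is that for $\vtheta \in \vomega_k$ the block of $\vW$ indexed by rows in $C_\ell$ and columns in $C_{\ell'}$ with $\ell \neq \ell'$ is forced to lie in $V_\ell \cdot (V_{\ell'})^\top$, a strictly smaller set of rank profiles than what $\vOmega_k$ allows; concretely, choosing $\vA, \vB$ in $\vOmega_k$ so that the full matrix $\vW$ has rank exactly $k$ on a submatrix spanning two components produces an output outside $\{\vf_\vtheta : \vtheta \in \vomega_k\}$, while the reverse inclusion is immediate since $\vomega_k \subseteq \vOmega_k$.

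The main obstacle I anticipate is making the strict subset claim (iii) fully rigorous and choosing the subspace decomposition $V = \bigoplus V_\ell$ so that it is simultaneously (a) compatible with the component structure, (b) genuinely proper, and (c) well-defined when $k$ is small relative to $m$ — in the extreme case $k = 1$ one cannot split a line into $m \geq 2$ independent pieces, so the statement should be read as: the sub-manifold is obtained by constraining each component's active vectors to a coordinated lower-dimensional slice of $V$, and the ``$m$ sub-$\vOmega_k$ manifolds'' refers to the $m$ possible ways of localizing the rank budget onto a single component (mirroring the experimental observation in Fig.~\ref{fig:adaptive-learning2}(g) that one component carries the rank-$1$ mass while the others stay at $\vzero$). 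I would therefore define $\vomega_k^{(\ell)}$ as the locus where $\va_i = \vzero, \vb_{\cdot,j} = \vzero$ for all indices $i,j$ outside component $C_\ell$, check (i)--(ii) by the same decoupling argument, and for (iii) note that such outputs vanish identically outside the $C_\ell$-block whereas a generic element of $\vOmega_k$ does not, giving the strict inclusion of output sets.
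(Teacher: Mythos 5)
Your final construction — defining $\vomega_k^{(\ell)}$ as the locus inside $\vOmega_k$ where $\va_i = \vzero$ and $\vb_{\cdot,j} = \vzero$ for all indices outside connected component $C_\ell$, then arguing invariance componentwise from the decoupling of the dynamics and arguing (iii) from the fact that such outputs vanish outside the $C_\ell$-block — coincides exactly with the paper's definition of $\vomega_k^{L_p}$ and its proof of (i)--(iii). The opening attempt via a decomposition $V = \bigoplus_\ell V_\ell$ would indeed fail for $k < m$, as you noted yourself, so that portion is a detour you correctly abandoned; what remains is the paper's argument.
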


\begin{proof}
Existence.

Let us consider an incomplete matrix $\vM$ whose associated observational graph is divided into $m$ connected components, denoted by $L_1, L_2, \ldots, L_m$. For each component $L_p$, we define $S_{\vx}^{L_p}$ as the subset of observed indices within $L_p$, where $1 \leq p \leq m$ and $S_{\vx}$ is the set of all observed indices.

For each $L_p$, we can identify row indices $R_p$ and column indices $C_p$ corresponding to the observed entries in $L_p$ as follows:
\begin{equation*}
R_p = \{i | \exists j: (i, j)\in S_{\vx}^{L_p}\}, \quad C_p = \{j | \exists i: (i, j)\in S_{\vx}^{L_p}\}.
\end{equation*}
Here, $R_p$ includes the row indices and $C_p$ includes the column indices of the entries observed in $L_p$. 
% We denote $n^R_p$ as the element number of $R_p$ and $n^C_p$ as the element number of $C_p$ and $n_p = \min\{n^R_p, n^C_p\}$.

Define $\vA^{L_p}$ and $\vB^{L_p}$ as the submatrices of $\vA$ and $\vB$ corresponding to $R_p$ and $C_p$, respectively, and let $\vA^{L_p}_{r}$ and $\vB^{L_p}_{r}$ be the remaining rows not in $R_p$ and $C_p$. 
% The augmented matrix for $L_p$ is then given by:
% \[
% \boldsymbol{W}^{L_p}_{\mathrm{aug}} = \begin{bmatrix} \vA^{L_p} \\ (\vB^{L_p})^\top \end{bmatrix}.
% \]

Let $\vOmega_k := \vOmega_k(\valpha_1, \cdots, \valpha_k)=
\{\vtheta = (\mA, \mB)\mid \rrow(\vA) = \rcol(\vB) = \rspan\{\valpha_1, \cdots, \valpha_k\}\}$ be the given $\vOmega_k$ invariant manifold. 
% We can 将这 $k$ 个线性无关的向量

% Given the integer $k$ and a partition $P$, satisfying that 
% \[
% P: k = k_1 + k_2 + \cdots + k_m, \quad k_p \in \mathbb{N}.
% \]

% Then there exists a sub-$\vOmega_k$ in invariant manifold associated with the partition $P$ of $k$:
% \[
% \vomega_k^P := \{(\vtheta = (\vA, \vB)) \mid \rrow(\vA^{L_p}) = \rcol((\vB^{L_p})) = \rspan\{\valpha_1, \cdots, \valpha_{k_p}\}, \forall 1\leq p \leq m\}.
% \]

The sub-$\vOmega_k$ invariant manifold associated with the connected component $L_p$ can be defined as 
\begin{equation}
\vomega_k^{L_p} := \{(\vtheta = (\vA, \vB)) \mid \rrow(\vA^{L_p}) = \rcol((\vB^{L_p})) = \rspan\{\valpha_1, \cdots, \valpha_k\}, \vA^{L_p}_{r} = \vB^{L_p}_r = \vzero\}.
\end{equation}
It is easy to check $\vomega_k^{L_p}$ is a proper subset of $\vOmega_k$.

(i) Invariance under Gradient Flow.

The condition $\rrow(\vA^{L_p}) = \rcol((\vB^{L_p})) = \rspan\{\valpha_1, \cdots, \valpha_k\}$ along with $\vA^{L_p}_{r} = \vB^{L_p}_r = \vzero$ guarantees that $\va_i, \vb_{\cdot, j} \in \mathrm{span}\{\boldsymbol{\alpha}_1, \ldots, \boldsymbol{\alpha}_k\}$ for all $(i, j)\in S^{L_p}_{\vx}$, and $\va_i, \vb_{\cdot, j} = \vzero$ for all $(i, j)\notin S^{L_p}_{\vx}$.

In other words, the sub-$\vOmega_k$ invariant manifold $\vomega^{L_p}_k$ is the set of all pairs $(\mA, \mB)$ where, for each observed position $(i, j)$ in the connected component $L_p$, the vectors $\boldsymbol{a}_i$ and $\boldsymbol{b}_{\cdot, j}$ lie within the span of $\{\boldsymbol{\alpha}_1, \cdots, \boldsymbol{\alpha}_k\}$, and for any position not in $S^{L_p}_{\vx}$, the vectors are zero. 

Considering the dynamics expressed in equation~\eqref{oldeq:dynamics_a_b}, it is evident that the evolution of $\va_i$ is influenced by $\vb_{\cdot, j}$ for $(i, j) \in S^{L_p}_{\vx}$. Hence, if $\va_i$ and $\vb_{\cdot, j}$ are initially in the span of $\{\valpha_1, \valpha_2, \cdots, \valpha_k\}$, they will continue to evolve within this span under the gradient flow dynamics. Moreover, for positions $(i, j) \notin S^{L_p}_{\vx}$, we consider the following scenarios:
\begin{itemize}
    \item For $(i, j)\notin S_{\vx}$, since the matrix entry $\vM_{ij}$ does not contribute to the loss $R_S(\vtheta)$, the gradients for corresponding $\va_i$ and $\vb_{\cdot, j}$ will perpetually be zero. Thus, if their initial values are zero, they will remain zero throughout the evolution.
    \item For $(i, j)\in S_{\vx}$ but not in $S_{\vx}^{L_p}$, the dynamics corresponding to different connected components are decoupled. Therefore, if the initial values for $\va_i$ and $\vb_{\cdot, j}$ are zero, they will stay zero during the evolution.
\end{itemize}

(ii) Intrinsic Property.

As established in (i), the manifold $\vomega_k^{L_p}$ is invariant under gradient flow for any data $S$ with a fixed sampling pattern, qualifying it as a data-value-independent invariant manifold.

(iii) Strict Subset Relation.

The output set $\{\vf_{\vtheta} \mid \vtheta \in \vOmega_k\}$ encompasses all matrices of rank $k$, whereas $\{\vf_{\vtheta} \mid \vtheta \in \vomega_k^{L_p}\}$ is limited to rank-$k$ matrices with specific row and column indices confined to $R_p$ and $C_p$. Consequently, $\{\vf_{\vtheta} \mid \vtheta \in \vomega_k\}$ forms a strict subset of $\{\vf_{\vtheta} \mid \vtheta \in \vOmega_k\}$, as stated by $\{\vf_{\vtheta}\mid \vtheta \in \vomega_k\}\subsetneqq \{\vf_{\vtheta}\mid \vtheta \in \vOmega_k\}$.
\end{proof}

\subsection{Connectivity}
\label{app:A_sec:connectivity}
\begin{olddefinition}[\textbf{Associated Observation Graph}]
Given a incomplete matrix $\vM$ to be completed and its observation matrix $\vP$, the associated observation graph $G_{\vM}$ is the bipartite graph with adjacency matrix $\begin{bmatrix}\vzero & \vP^\top \\ \vP & \vzero\end{bmatrix}$, with isolated vertices removed.
\end{olddefinition}

\begin{olddefinition}[\textbf{Connectivity}]
A matrix $\vM$ to be completed is considered connected if its associated observation graph $G_{\vM}$ is connected, otherwise, we call it disconnected. The connected components of $\vM$ are defined as the connected components of this graph.
\label{olddef:connected}
\end{olddefinition}

\begin{olddefinition}[\textbf{Disconnectivity with Complete Bipartite Components}]
A matrix $\vM$ to be completed is considered disconnected with complete bipartite components if its associated observation graph $G_{\vM}$ is disconnected and each connected component forms a complete bipartite subgraph.
\end{olddefinition}

% Isolated vertices in the graph correspond to entire rows or columns of $\vM$ without observed elements. These rows or columns do not contribute to the loss calculation and have no impact on the dynamics under infinitesimal initialization. Thus, they can be ignored. In the case of disconnectivity, there is a special case where each connected components have full observation, defined as follows:

\begin{remark}
In the bipartite graph representation of the observed data, isolated vertices correspond to entire rows or columns of the matrix $\vM$ that are not observed. These rows or columns do not contribute to the loss calculation and have no influence on the dynamics of the matrix factorization under infinitesimal initialization. Consequently, when analyzing the connectivity of the observed data and its impact on the learning dynamics, these isolated vertices can be safely disregarded.
\end{remark}

\begin{remark}
The disconnectivity of the bipartite graph representing the observed data is equivalent to the reducibility of the adjacency matrix $\begin{bmatrix}\vzero & \vP^\top \\ \vP & \vzero\end{bmatrix}$, where $\vP$ is the binary observation matrix indicating the positions of the observed entries in $\vM$.

In the context of matrix completion problems, such as the Netflix problem, connectivity has a practical interpretation. Connected components in the bipartite graph indicate groups of users and movies that are linked by the users' viewing history. Users within the same connected component are related through the movies they have watched in common. Due to this practical significance, we prefer to use the term ``connectivity'' instead of ``reducibility'' when discussing the structure of the observed data in matrix completion problems.
\end{remark}

\begin{olddefinition}[\textbf{Connectivity of Observed Data}]
Given a matrix $\vM$ to be completed, an undirected simple graph $G$ can be induced from it: the nodes of the graph are the observed elements in the matrix, and two nodes are adjacent if and only if they are in the same row or column of the matrix $\vM$. A matrix $\vM$ to be completed is considered connected if its induced graph $G$ is connected, otherwise, we call it disconnected. 
\label{olddef:connected_observed_data}
\end{olddefinition}

\begin{oldlemma}
For any simple graph $G$, if we remove all isolated vertices from $G$ to obtain a new graph $G'$, then $G'$ is connected if and only if the line graph of $G'$, denoted as $L(G')$, is connected.
\label{oldlem:line_connectivity}
\end{oldlemma}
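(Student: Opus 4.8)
The plan is to prove the two implications of the equivalence separately, in each case using the elementary correspondence between walks in the line graph $L(G')$ and sequences of edges of $G'$ in which consecutive edges share an endpoint. Write $H := G'$ for brevity; by construction $H$ has no isolated vertices, so every nonempty connected component of $H$ contains at least one edge. If $H$ is the empty graph then $H$ and $L(H)$ are both vacuously connected and there is nothing to prove, so I assume $H$ has at least one vertex and hence, since it has no isolated vertices, at least one edge.

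For the forward direction, assume $H$ is connected and fix two edges $e=\{u,v\}$ and $f=\{x,y\}$ of $H$. Since $H$ is connected there is a walk $u=w_0,w_1,\dots,w_m=x$ in $H$; the consecutive pairs $g_i:=\{w_{i-1},w_i\}$ for $i=1,\dots,m$ are edges of $H$, and $g_i$ and $g_{i+1}$ share the vertex $w_i$, so $g_1,\dots,g_m$ is a walk in $L(H)$. Prepending $e$ (which shares the vertex $u=w_0$ with $g_1$) and appending $f$ (which shares $x=w_m$ with $g_m$) yields a walk in $L(H)$ from $e$ to $f$; the degenerate cases $m=0$, or $e$ or $f$ already occurring among the $g_i$, only shorten this walk and cause no difficulty. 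Hence any two vertices of $L(H)$ are joined by a walk, so $L(H)$ is connected.

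For the reverse direction I would argue the contrapositive. Suppose $H$ is disconnected; let $H_1$ be one connected component and $H_2$ the subgraph induced by the remaining vertices. Because $H$ has no isolated vertices, both $H_1$ and $H_2$ contain an edge, say $e_1\in E(H_1)$ and $e_2\in E(H_2)$. Now any walk $h_0,h_1,\dots,h_k$ in $L(H)$ is, by definition, a sequence of edges of $H$ in which $h_{i-1}$ and $h_i$ share an endpoint; a one-line induction shows that all the $h_i$ lie in a single connected component of $H$. Consequently no walk in $L(H)$ joins $e_1$ to $e_2$, so $L(H)$ is disconnected, which is what we wanted.

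The whole argument is elementary; the only points needing care are the bookkeeping of the degenerate cases in the forward direction (empty graph, zero-length walk, coincident edges) and the observation — precisely where the hypothesis ``remove all isolated vertices'' is used — that in the reverse direction every component of a disconnected $H$ still supplies an edge that witnesses the disconnectedness of $L(H)$. I do not anticipate any genuine obstacle beyond stating the walk/edge-sequence correspondence cleanly.
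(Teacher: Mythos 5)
Your proof is correct and follows essentially the same route as the paper's: both directions rest on the correspondence between walks in $L(G')$ and sequences of edges of $G'$ in which consecutive edges share an endpoint. The only cosmetic difference is that you prove the reverse implication via its contrapositive (disconnectedness of $G'$ forces disconnectedness of $L(G')$) and are somewhat more explicit about degenerate cases, whereas the paper argues the reverse direction directly, but both use the no-isolated-vertices hypothesis at the same point in the same way.
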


\begin{proof}
    $\implies$ Assume $G'$ is connected. Consider any two nodes in $L(G')$, which correspond to two edges in $G'$, say $e_1$ and $e_2$. Since $G'$ is connected, there exists a path connecting the endpoints of $e_1$ and $e_2$. This path corresponds to a sequence of edges in $G'$, which in turn corresponds to a path connecting the nodes representing $e_1$ and $e_2$ in $L(G')$. Therefore, $L(G')$ is connected.

    $\impliedby$ Conversely, assume $L(G')$ is connected. Consider any two vertices $v_1$ and $v_2$ in $G'$. Since $G'$ has no isolated vertices, each of $v_1$ and $v_2$ is incident to at least one edge. Let these edges be $e_1$ and $e_2$, respectively. Since $L(G')$ is connected, there exists a path connecting the nodes representing $e_1$ and $e_2$ in $L(G')$. This path corresponds to a sequence of edges in $G'$, which in turn corresponds to a path connecting $v_1$ and $v_2$ in $G'$. Therefore, $G'$ is connected.
    
In conclusion, we have proven that for any simple graph $G$, if we remove all isolated vertices from $G$ to obtain a new graph $G'$, then $G'$ is connected if and only if the line graph of $G'$, denoted as $L(G')$, is connected.
\end{proof}

\begin{oldproposition}
Given a incomplete matrix $\vM$, the connectivity of $\vM$ defined in Def.~\ref{olddef:connected} and Def.~\ref{olddef:connected_observed_data} is equivalent.
\label{oldprop:connected_equivalence}
\end{oldproposition}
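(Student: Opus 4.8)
The plan is to identify the simple graph $G$ of Def.~\ref{olddef:connected_observed_data} with the line graph of the bipartite graph $G_{\vM}$ of Def.~\ref{olddef:connected}, and then to read off the equivalence directly from Lemma~\ref{oldlem:line_connectivity}. So the whole argument reduces to one structural observation plus an invocation of the lemma.

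First I would set up the identification. The bipartite graph underlying $G_{\vM}$ has one vertex class indexed by the rows of $\vM$ and another indexed by the columns, with an edge joining row-vertex $i$ to column-vertex $j$ precisely when $\vP_{ij}=1$, i.e.\ when $(i,j)$ is an observed entry; the construction of $G_{\vM}$ then deletes all isolated vertices, which are exactly the rows and columns containing no observed entry. Hence the edge set of $G_{\vM}$ is in canonical bijection with the set of observed entries of $\vM$, which is the vertex set of $G$. Moreover, two observed entries $(i,j)$ and $(i',j')$ lie in the same row or the same column if and only if the corresponding edges of $G_{\vM}$ share an endpoint (the row-vertex, when $i=i'$, or the column-vertex, when $j=j'$). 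Thus $G$ and the line graph $L(G_{\vM})$ have the same vertices and the same adjacency relation, so $G = L(G_{\vM})$.

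Next, since $G_{\vM}$ has no isolated vertices by construction, I would apply Lemma~\ref{oldlem:line_connectivity} with $G' = G_{\vM}$, which yields that $G_{\vM}$ is connected if and only if $L(G_{\vM}) = G$ is connected -- exactly the claimed equivalence of Def.~\ref{olddef:connected} and Def.~\ref{olddef:connected_observed_data}. The same edge--vertex correspondence restricts to a correspondence between connected components, so the two resulting notions of connected component of $\vM$ coincide as well. I do not expect any real obstacle here: once the line-graph identification is made the argument is bookkeeping, and the only point needing a little care -- the treatment of rows or columns with no observations -- is already handled by the fact that $G_{\vM}$ is defined with its isolated vertices removed, so that the hypothesis of Lemma~\ref{oldlem:line_connectivity} is met with $G'=G_{\vM}$ itself.
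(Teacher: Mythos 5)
Your proposal is correct and takes essentially the same route as the paper: identify the observed-data graph $G$ of Def.~\ref{olddef:connected_observed_data} with the line graph $L(G_{\vM})$, note that $G_{\vM}$ has no isolated vertices by construction, and invoke Lemma~\ref{oldlem:line_connectivity}. Your write-up is if anything a bit more explicit than the paper's about the edge--vertex bijection and the adjacency correspondence, but the underlying argument is identical.
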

\begin{proof}
By definition, each edge of a bipartite graph corresponds to an observed data item, and two edges in a bipartite graph are adjacent if and only if the two corresponding observed data items are in the same row or column. Therefore, the connectivity of the observed data is equivalent to the connectivity of the edges of the bipartite graph, which is, in turn, equivalent to the connectivity of the line graph of the bipartite graph.

According to Lem.~\ref{oldlem:line_connectivity}, for any graph $G$, if we remove all isolated vertices from $G$ to obtain a new graph $G'$, then $G'$ is connected if and only if the line graph of $G'$, denoted as $L(G')$, is connected.

In the context of the bipartite graph representation of the observed data, removing isolated vertices corresponds to removing rows and columns that contain no observed entries. Thus, the connectivity of the bipartite graph after removing isolated vertices is equivalent to the connectivity of the observed data as defined in Def.~\ref{olddef:connected_observed_data}.

Consequently, the connectivity of the observed data as defined in Def.~\ref{olddef:connected} (based on the line graph of the bipartite graph) is equivalent to the connectivity of the observed data as defined in Def.~\ref{olddef:connected_observed_data} (based on the connectivity of observed data).
\end{proof}

\begin{olddefinition}[\textbf{Decoupling of Dynamics}]Given an incomplete matrix $\vM$, consider the gradient flow dynamics of matrix factorization models, $\forall 1 \leq i, j \leq d$,
\begin{equation}
\left\{\begin{aligned}
   \dot{\boldsymbol{a}}_{i} & = -\dfrac{2}{n}\sum_{j\in I_i}(\boldsymbol{a}_i\cdot \boldsymbol{b}_{\cdot, j}-\vM_{ij})\boldsymbol{b}_{\cdot, j}^\top, \\
   \dot{\boldsymbol{b}}_{\cdot, j} & = -\dfrac{2}{n}\sum_{i\in I_j}(\boldsymbol{a}_i\cdot \boldsymbol{b}_{\cdot, j}-\boldsymbol{M}_{ij})\boldsymbol{a}_{i}^\top.
\end{aligned}\right.  
\end{equation}
The dynamics are said to be decoupled if there exist disjoint subsets of indices $R_1, R_2, \dots, R_k \subseteq \{1, 2, \dots, d\}$ for the rows of $\vA$ and $C_1, C_2, \dots, C_k \subseteq \{1, 2, \dots, d\}$ for the columns of $\vB$, such that for each $l \in \{1, 2, \dots, k\}$, the dynamics of $\{\boldsymbol{a}_i : i \in R_l\}$ and $\{\boldsymbol{b}_{\cdot, j} : j \in C_l\}$ form an independent system of equations. In other words, the dynamics can be divided into $k(k>1)$ independent subsystems, each involving a subset of rows of $\vA$ and a subset of columns of $\vB$. If such a division is not possible, the dynamics are said to be coupled.
\label{olddef:decoupling}
\end{olddefinition}

\begin{oldproposition}
Given an incomplete matrix $\vM$, if it is disconnected as defined by Def.~\ref{olddef:connected}, then the dynamics are decoupled as defined by Def.~\ref{olddef:decoupling}; if it is connected as defined by Def.~\ref{olddef:connected}, then the dynamics are coupled as defined by Def.~\ref{olddef:decoupling}.
\label{oldprop:coupling_equivalence}
\end{oldproposition}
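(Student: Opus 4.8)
The plan is to read off, directly from the gradient-flow equations~\eqref{oldeq:dynamics_a_b}, which variable appears in the right-hand side of which equation, and to show that the resulting \emph{interaction graph} on the $2d$ variables $\{\boldsymbol{a}_1,\dots,\boldsymbol{a}_d\}\cup\{\boldsymbol{b}_{\cdot,1},\dots,\boldsymbol{b}_{\cdot,d}\}$ is exactly the bipartite observation graph $G_{\vM}$. Following the Remark preceding Def.~\ref{olddef:connected_observed_data}, I first reduce to the case in which every row and every column of $\vM$ carries at least one observation: a fully unobserved row $i$ has $I_i=\varnothing$, so $\dot{\boldsymbol{a}}_i\equiv\vzero$ and that variable is vacuously a closed one-element subsystem; such rows and columns are precisely the isolated vertices discarded when forming $G_{\vM}$, so removing them affects neither the connectivity of $G_{\vM}$ nor the existence of a nontrivial decoupling in the sense of Def.~\ref{olddef:decoupling}. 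Under this reduction the vertex set of $G_{\vM}$ is all of $\{\boldsymbol{a}_1,\dots,\boldsymbol{a}_d,\boldsymbol{b}_{\cdot,1},\dots,\boldsymbol{b}_{\cdot,d}\}$, and each $R_p$ (row indices in a component $L_p$) and each $C_p$ (column indices in $L_p$) partitions $\{1,\dots,d\}$, since every observed entry in a fixed row lies in a single component.

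The key step is the following dictionary. In~\eqref{oldeq:dynamics_a_b}, $\dot{\boldsymbol{a}}_i$ is a sum over $j\in I_i$ of terms $(\boldsymbol{a}_i\cdot\boldsymbol{b}_{\cdot,j}-\vM_{ij})\boldsymbol{b}_{\cdot,j}^\top$, hence a function of $\boldsymbol{a}_i$ together with exactly the $\boldsymbol{b}_{\cdot,j}$ for $j\in I_i$; symmetrically for $\dot{\boldsymbol{b}}_{\cdot,j}$. Each such dependence is genuine and cannot be destroyed by cancellation: because $\vM_{ij}\neq 0$, the summand contains $-\vM_{ij}\boldsymbol{b}_{\cdot,j}^\top$, and the coordinate directions $\boldsymbol{b}_{\cdot,j}^\top$ for distinct $j\in I_i$ are linearly independent in phase space, so $\boldsymbol{b}_{\cdot,j}$ truly appears in $\dot{\boldsymbol{a}}_i$. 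Since $j\in I_i\iff(i,j)\in S_{\vx}\iff$ the row-vertex $i$ and the column-vertex $j$ are adjacent in $G_{\vM}$, the interaction graph of~\eqref{oldeq:dynamics_a_b} coincides with $G_{\vM}$. The two implications then follow by an elementary graph argument. If $\vM$ is disconnected with components $L_1,\dots,L_m$ ($m\ge 2$), then because no edge of $G_{\vM}$ leaves $L_p$ we have $I_i\subseteq C_p$ for $i\in R_p$ and $I_j\subseteq R_p$ for $j\in C_p$, so $\{\boldsymbol{a}_i:i\in R_p\}\cup\{\boldsymbol{b}_{\cdot,j}:j\in C_p\}$ carries a closed subsystem; the $R_p$ and the $C_p$ each cover $\{1,\dots,d\}$ and $m\ge 2$, which is exactly decoupling in the sense of Def.~\ref{olddef:decoupling}. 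Conversely, if the dynamics decouple into $k\ge 2$ nonempty subsystems $(R_l,C_l)$, closedness forces $I_i\subseteq C_l$ for $i\in R_l$ and $I_j\subseteq R_l$ for $j\in C_l$, i.e.\ no $G_{\vM}$-edge leaves $R_l\cup C_l$; so each nonempty $R_l\cup C_l$ is a union of connected components of $G_{\vM}$, forcing $G_{\vM}$ to have at least $k\ge 2$ components, contradicting connectedness. Hence a connected $\vM$ has coupled dynamics.

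I expect the only real subtlety to be pinning down the precise meaning of an ``independent system of equations'' in Def.~\ref{olddef:decoupling} and ruling out spurious decouplings arising from algebraic cancellation inside the sums $\sum_{j\in I_i}(\boldsymbol{a}_i\cdot\boldsymbol{b}_{\cdot,j}-\vM_{ij})\boldsymbol{b}_{\cdot,j}^\top$; the standing assumption that every observed value $\vM_{ij}$ is non-zero is exactly what prevents such a sum from factoring through fewer variables, and so it is what makes the interaction-graph-equals-$G_{\vM}$ identification exact rather than merely generic. A secondary bookkeeping point is the handling of fully unobserved rows and columns, dispatched by the reduction in the first paragraph and consistent with the convention of removing isolated vertices when forming $G_{\vM}$. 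Once these are settled, the argument is just the observation that ``closed under the interaction relation'' equals ``a union of connected components,'' which places this proposition alongside Prop.~\ref{oldprop:connected_equivalence} and Lem.~\ref{oldlem:line_connectivity} as another translation between the combinatorics of $G_{\vM}$ and a structural property of the problem.
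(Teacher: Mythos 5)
Your proof is correct and follows the same strategy as the paper's: from the connected components $L_p$ of $G_{\vM}$, build the row/column index sets $R_p$, $C_p$ and check that these furnish the decoupled subsystems of Def.~\ref{olddef:decoupling}. You supply several details the paper leaves implicit---notably the reduction discarding fully unobserved rows and columns (consistent with the isolated-vertex convention in the definition of $G_{\vM}$), the use of $\vM_{ij}\neq 0$ to guarantee that each dependence appearing in~\eqref{oldeq:dynamics_a_b} is genuine rather than removable by cancellation, and an actual contrapositive argument for the direction connected~$\Rightarrow$~coupled, which the paper's proof simply asserts in one line.
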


\begin{proof}
Consider a matrix $\vM$ to be completed, with its associated observation graph comprising $m$ connected components, denoted as $L_1, L_2, \cdots, L_m$. Let $S_{\vx}^{L_p} \subseteq S_{\vx}$ represent the subset of observed indices corresponding to the connected component $L_p$, where $1 \leq p \leq m$ and $S_{\vx}$ denotes the complete set of observed indices. If the incomplete matrix $\vM$ is disconnected, then for each connected component $L_p$, the subset $S_{\vx}^{L_p}$ can be partitioned into two subsets $R_p$ and $C_p$, $1\leq p\leq m$, such that
\begin{equation}
R_p = \{i | \exists j: (i, j)\in S_{\vx}^{L_p}\}, \quad C_p = \{j | \exists i: (i, j)\in S_{\vx}^{L_p}\}.
\end{equation}
In other words, $R_p$ contains the row indices and $C_p$ contains the column indices of the observed entries in the connected component $L_p$.

It can be easily verified that the dynamics are decoupled in this case, as the subsets $\{R_p, C_p\}_{p=1}^m$ satisfy the conditions in Def.~\ref{olddef:decoupling}. Each connected component $L_p$ corresponds to an independent subsystem involving the rows of $\vA$ indexed by $R_p$ and the columns of $\vB$ indexed by $C_p$.

If $\vM$ is connected, then its associated observation graph consists of a single connected component, and the entire dynamics are coupled.
\end{proof}

\paragraph{Examples of connectivity and disconnectivity.} Consider three matrices to be completed, each obtained by adding one more observation to the previous matrix: $\boldsymbol{M}_1$ (disconnected), $\boldsymbol{M}_2$ (disconnected with complete bipartite components), and $\boldsymbol{M}_3$ (connected).
\begin{equation}
\boldsymbol{M}_1=\left[\begin{array}{ccc} 1 & 2 & \star \\ 3 & \star & \star \\ \star & \star & 5 \end{array}\right], \boldsymbol{M}_2=\left[\begin{array}{ccc} 1 & 2 & \star \\ 3 & 4 & \star \\ \star & \star & 5 \end{array}\right], \boldsymbol{M}_3=\left[\begin{array}{ccc} 1 & 2 & \star \\ 3 & 4 & \star \\ 6 & \star & 5 \end{array}\right]
\end{equation}

The observation matrix P is:
\begin{equation}
\boldsymbol{P}_1=\left[\begin{array}{ccc} 1 & 1 & 0 \\ 1 & 0 & 0 \\ 0 & 0 & 1 \end{array}\right], \boldsymbol{P}_2=\left[\begin{array}{ccc} 1 & 1 & 0 \\ 1 & 1 & 0 \\ 0 & 0 & 1 \end{array}\right], \boldsymbol{P}_3=\left[\begin{array}{ccc} 1 & 1 & 0 \\ 1 & 1 & 0 \\ 1 & 0 & 1 \end{array}\right]  
\end{equation}

And the adjacency matrix is:
\begin{equation}
\boldsymbol{A}_1=\left[\begin{array}{ccc} \boldsymbol{0} & \boldsymbol{P}_1^\top \\ \boldsymbol{P}_1 & \boldsymbol{0} \end{array}\right], \boldsymbol{A}_2=\left[\begin{array}{ccc} \boldsymbol{0} & \boldsymbol{P}_2^\top \\ \boldsymbol{P}_2 & \boldsymbol{0} \end{array}\right], \boldsymbol{A}_3=\left[\begin{array}{ccc} \boldsymbol{0} & \boldsymbol{P}_3^\top \\ \boldsymbol{P}_3 & \boldsymbol{0} \end{array}\right]
\end{equation}

Given the adjacency matrix $\boldsymbol{A}$, we can obtain a graph $G_{\vM}$. Fig.~\ref{fig:connectivity_examples} illustrates the associated graphs $G_{\vM}$, from which we can see that $\vM_1$ is disconnected, with its associated observation graph consisting of two connected components. $\vM_2$ is also disconnected, but each connected component of its associated observation graph forms a complete bipartite subgraph. In contrast, $\vM_3$ is connected, and its associated observation graph consists of a single connected component.

\begin{figure}[htbp]
\centering
\subfloat[$\vM_1$]{
\includegraphics[height=0.21\textwidth]{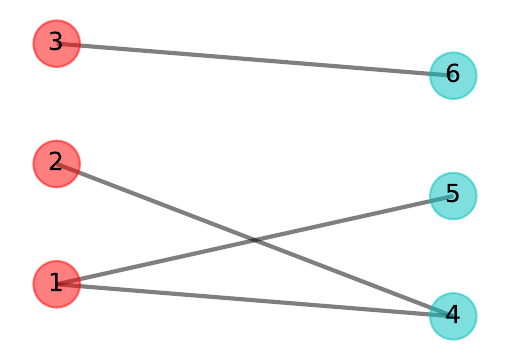}
}
\subfloat[$\vM_2$]{
\includegraphics[height=0.21\textwidth]{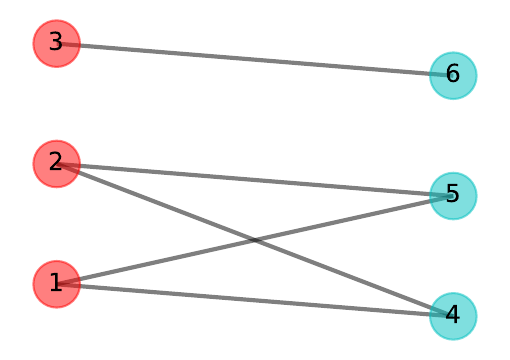}
}
\subfloat[$\vM_3$]{
\includegraphics[height=0.21\textwidth]{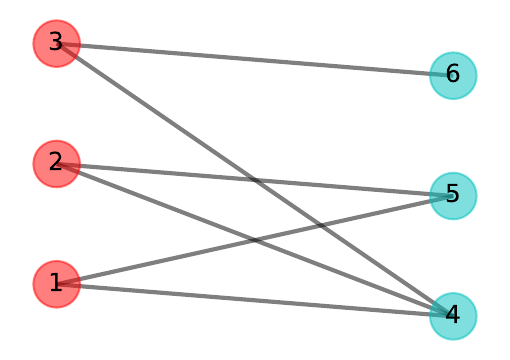}
}
\caption{
\textbf{The associated observation graphs $G_{\vM}$ of the incomplete matrices $\vM_1$, $\vM_2$, and $\vM_3$ in Eq.~\ref{eq:examples}.} $\vM_1$ is disconnected, with its associated observation graph consisting of two connected components. $\vM_2$ is also disconnected, but each connected component of its associated observation graph forms a complete bipartite subgraph. In contrast, $\vM_3$ is connected, and its associated observation graph consists of a single connected component.
}
\label{fig:connectivity_examples}
\end{figure}

\subsection{Loss Landscape}
\label{app:A_sec:loss_landscape}
In this paper, we focus on the problem of asymmetric matrix factorization. Previous literature~\citep{gunasekar2017implicit,li2018algorithmic,li2020towards,jin2023understanding} has predominantly concentrated on symmetric matrix factorization problems. Although asymmetric matrix factorization models can be transformed into symmetric cases, studying symmetric matrix factorization does not necessarily cover all aspects of the asymmetric scenarios.

Generally, an asymmetric matrix factorization model $\vW = \vA\vB$ can be transformed into a symmetric situation by setting
\[
\vU = \begin{bmatrix}
    \vA\\
    \vB^\top
\end{bmatrix}\in \sR^{2d\times d}.
\]
We then consider the model $\vW' = \vU \vU^\top$, which corresponds to the following matrix completion problem:
\[
\begin{bmatrix}
\vA\vA^\top & \vA \vB\\
\vB^\top \vA^\top & \vB^\top \vB
\end{bmatrix}.
\]
We define the loss as
\[
\mathcal{L}^{\prime}\left(\begin{bmatrix}
    \vA & \vB\\
    \vC & \vD
\end{bmatrix}\right) = \frac{1}{2} \mathcal{L}(\vB) + \frac{1}{2} \mathcal{L}(\vC^\top).
\]

\citet{li2020towards} established the following results:
\begin{oldtheorem}[Theorem 5.10 in \citet{li2020towards}]
Let $f: \mathbb{R}^{d \times d} \rightarrow \mathbb{R}$ be a convex $\mathcal{C}^2$-smooth function. (1). All stationary points of $\mathcal{L}: \mathbb{R}^{d \times d} \rightarrow \mathbb{R}, \mathcal{L}(\boldsymbol{U})=\frac{1}{2} f\left(\boldsymbol{U} \boldsymbol{U}^{\top}\right)$ are either strict saddles or global minimizers; (2). For any random initialization, $G F(1)$ converges to strict saddles of $\mathcal{L}(\boldsymbol{U})$ with probability 0.
\label{oldthm:li2020towards}
\end{oldtheorem}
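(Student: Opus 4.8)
The plan is to establish the two claims separately, using only that $f$ is convex and $\mathcal{C}^2$-smooth. Write $W = UU^\top$ and $G = \nabla f(W)$; since $\mathcal{L}(U) = \tfrac12 f(UU^\top)$ depends on $U$ only through $UU^\top$, only the symmetrized matrix $\bar{G} := \tfrac12(G + G^\top)$ matters, and a direct computation gives the Euclidean gradient $\nabla\mathcal{L}(U) = \bar{G}\,U$. Hence $U$ is a stationary point iff $\bar{G}\,U = 0$, which in particular forces $\langle \bar{G}, W\rangle = \langle \bar{G}U, U\rangle = 0$.

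For part (1), I would dichotomize on the sign of $\bar{G}$ at a stationary point $U$. If $\bar{G} \succeq 0$, then for every $U'$ convexity of $f$ yields $f(U'U'^\top) \ge f(W) + \langle \bar{G}, U'U'^\top - W\rangle = f(W) + \langle \bar{G}, U'U'^\top\rangle \ge f(W)$, using $\langle \bar{G}, W\rangle = 0$ and $\bar{G}\succeq 0$, $U'U'^\top\succeq 0$ (note the cross term drops because $U'U'^\top - W$ is symmetric, so only $\bar{G}$ enters); thus $\mathcal{L}(U') \ge \mathcal{L}(U)$ for all $U'$ and $U$ is a global minimizer. Otherwise $\bar{G}$ has an eigenvalue $-\lambda < 0$ with unit eigenvector $v$; here $U$ cannot be invertible, since $\bar{G}U = 0$ with $U$ invertible would give $\bar{G} = 0$, a contradiction. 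Picking $0 \ne w \in \ker U$ and the perturbation direction $\Delta = v w^\top$, the first-order term $U\Delta^\top + \Delta U^\top = (Uw)v^\top + v(Uw)^\top = 0$ vanishes, so the second derivative of $t \mapsto \tfrac12 f\big((U+t\Delta)(U+t\Delta)^\top\big)$ at $t = 0$ collapses to the single term $\langle \bar{G}, \Delta\Delta^\top\rangle = \|w\|^2\, v^\top \bar{G} v = -\lambda\|w\|^2 < 0$. Therefore $\nabla^2\mathcal{L}(U)$ has a strictly negative eigenvalue, i.e., $U$ is a strict saddle.

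For part (2), with the dichotomy in hand I would invoke the center--stable manifold theory for $\mathcal{C}^2$ descent dynamics (the Lee-type results already referenced in the paper, \citep{lee2016gradient,lee2019first}): near each strict saddle the center--stable set is an embedded submanifold of dimension strictly less than the ambient dimension, hence Lebesgue-null, and a second-countability (Lindel\"of) covering argument promotes this to: the set of initial points whose gradient-flow trajectory converges to some strict saddle has Lebesgue measure zero. Combined with part (1), any absolutely continuous random initialization avoids all strict saddles almost surely, so $GF(1)$ converges to a strict saddle with probability $0$ (and, when it converges, to a global minimizer; convergence itself can be secured via coercivity of $f$ or a {\L}ojasiewicz argument if required).

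The step I expect to be the crux is the strict-saddle construction in part (1): one must be careful that $\mathcal{L}$ only sees $\bar{G}$ (so the gradient and Hessian formulas must use the symmetrized $G$), handle the full-rank case $\mathrm{rank}(U) = d$ on the side, and --- most importantly --- certify a genuinely negative Hessian eigenvalue rather than mere failure of local minimality. The device $\Delta = v w^\top$ with $w \in \ker U$ is exactly what makes this work, because it annihilates the curvature contribution of $\nabla^2 f$ (which is only sign-definite after the composition, not usable directly here) and isolates the negative quadratic $v^\top\bar{G}v$. Part (2) is then a fairly mechanical application of the known saddle-avoidance machinery once $\mathcal{C}^2$-regularity and the dichotomy are established.
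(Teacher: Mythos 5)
The theorem you proved is presented in the paper as a citation of Theorem~5.10 of Li et al.\ (2020), not as a result the paper establishes itself; the paper explicitly notes that its proof in the cited work rests on Theorem~3.1 of Du and Lee (2018). So there is no proof of this particular statement in the paper to compare against line by line. That said, your proof is correct and self-contained, and it is essentially the Du--Lee argument. The gradient formula $\nabla\mathcal{L}(U) = \bar{G}U$ with $\bar{G} = \tfrac12(G+G^\top)$ is right, the stationarity identity $\langle\bar{G},W\rangle = 0$ follows, the convexity step in the $\bar{G}\succeq 0$ case correctly exploits that $U'U'^\top - W$ is symmetric (so the skew part of $G$ drops out), and the perturbation $\Delta = vw^\top$ with $w\in\ker U$ cleanly kills the first-order term $U\Delta^\top + \Delta U^\top$, leaving $g(t) = \tfrac12 f(W + t^2\Delta\Delta^\top)$ whose second derivative at $0$ is exactly $\|w\|^2\,v^\top\bar{G}v < 0$. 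The dichotomy is exhaustive because $\bar{G}\not\succeq 0$ forces $U$ singular. Part~(2) is the standard center--stable-manifold saddle-avoidance argument and is stated at the appropriate level of detail.

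It is worth pointing out that the argument you gave is structurally parallel to the proof the paper \emph{does} supply of its own asymmetric analogue (Theorem~\ref{oldthm:global_convergence}, the ``Loss Landscape'' theorem for $R_S(\vA,\vB)$): there too the proof splits on whether a factor matrix is full rank, handles the full-rank case by showing stationarity forces zero residual (hence global optimality), and in the rank-deficient case builds an explicit second-order descent direction from a kernel vector of the factor. Your kernel perturbation $\Delta = vw^\top$, which decouples the curvature of $\nabla^2 f$ and isolates the sign of $v^\top\bar{G}v$, plays precisely the role that the paper's choice $\veta^*_{\cdot,j}=\vv$, $\veps^*_i = -\alpha\,\delta\vM_{ij}\vv$ plays in the asymmetric proof. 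So while you did not reproduce a proof that appears in the paper, you reconstructed the standard argument for the symmetric case, and your route is the same style as the one the paper develops independently for the asymmetric setting.
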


The proof of this theorem relies heavily on Theorem~\ref{oldthm:du2018power} of \citet{du2018power}, which requires the parameter matrix $\vU\in \sR^{d\times k}$ to satisfy the condition that $k \geq d$. In the case of symmetric matrix factorization, where $\vU \in \sR^{d\times d}$, this condition is naturally met. However, for asymmetric matrix factorization, where $\vU = \begin{bmatrix}
    \vA\\
    \vB^\top
\end{bmatrix}\in \sR^{2d\times d}$, this condition is not satisfied, and thus the proof of Theorem~\ref{oldthm:li2020towards} is only applicable to the symmetric case of matrix factorization.

\begin{oldtheorem}[Theorem 3.1 in~\citet{du2018power})]
Let $f: \mathbb{R}^{d \times d} \rightarrow \mathbb{R}$ be a $\mathcal{C}^2$ convex function. Then $\mathcal{L}: \mathbb{R}^{d \times k} \rightarrow \mathbb{R}, \mathcal{L}(\boldsymbol{U})=f\left(\boldsymbol{U} \boldsymbol{U}^{\top}\right), k \geq d$ satisfies that (1). Every local minimizer of $\mathcal{L}$ is also a global minimizer; (2). All saddles are strict. Here saddles denote those stationary points whose hessian are not positive semi-definite (thus including local maximizers).   
\label{oldthm:du2018power}
\end{oldtheorem}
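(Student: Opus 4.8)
The plan is to reduce the whole statement to a single dichotomy for first-order stationary points of $\mathcal{L}(\boldsymbol{U})=f(\boldsymbol{U}\boldsymbol{U}^{\top})$. Writing $\boldsymbol{X}=\boldsymbol{U}\boldsymbol{U}^{\top}$ and $\boldsymbol{G}=\nabla f(\boldsymbol{X})$ (symmetric), I would show that at any stationary $\boldsymbol{U}$ exactly one of two things holds: either $\boldsymbol{G}\succeq\boldsymbol{0}$, in which case convexity of $f$ forces $\boldsymbol{U}$ to be a global minimizer, or $\boldsymbol{G}$ has a strictly negative eigenvalue, in which case one can write down an explicit direction of strict negative curvature, so $\boldsymbol{U}$ is a strict saddle. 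Claim (1) follows because a local minimizer has positive semidefinite Hessian and hence cannot be a strict saddle, so it lands in the first branch; claim (2) is exactly the assertion that the ``non-global'' branch is always strict.

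The first step is to record the derivatives. The chain rule gives $\nabla\mathcal{L}(\boldsymbol{U})=2\boldsymbol{G}\boldsymbol{U}$, so stationarity means $\boldsymbol{G}\boldsymbol{U}=\boldsymbol{0}$, and consequently $\langle\boldsymbol{G},\boldsymbol{X}\rangle=\langle\boldsymbol{G}\boldsymbol{U},\boldsymbol{U}\rangle=0$; this identity is the bridge to convexity. Differentiating once more, for any $\boldsymbol{\Delta}\in\mathbb{R}^{d\times k}$,
\[
\nabla^{2}\mathcal{L}(\boldsymbol{U})[\boldsymbol{\Delta},\boldsymbol{\Delta}]=\big\langle\nabla^{2}f(\boldsymbol{X})[\boldsymbol{\Delta}\boldsymbol{U}^{\top}+\boldsymbol{U}\boldsymbol{\Delta}^{\top}],\,\boldsymbol{\Delta}\boldsymbol{U}^{\top}+\boldsymbol{U}\boldsymbol{\Delta}^{\top}\big\rangle+2\langle\boldsymbol{G},\boldsymbol{\Delta}\boldsymbol{\Delta}^{\top}\rangle ,
\]
where the first term is nonnegative since $f$ is convex; hence the sign of the whole form is governed by $\langle\boldsymbol{G},\boldsymbol{\Delta}\boldsymbol{\Delta}^{\top}\rangle$ whenever we can arrange $\boldsymbol{\Delta}\boldsymbol{U}^{\top}+\boldsymbol{U}\boldsymbol{\Delta}^{\top}=\boldsymbol{0}$.

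For the case $\boldsymbol{G}\succeq\boldsymbol{0}$: for any $\boldsymbol{Y}\succeq\boldsymbol{0}$ the first-order convexity inequality gives $f(\boldsymbol{Y})\ge f(\boldsymbol{X})+\langle\boldsymbol{G},\boldsymbol{Y}-\boldsymbol{X}\rangle=f(\boldsymbol{X})+\langle\boldsymbol{G},\boldsymbol{Y}\rangle\ge f(\boldsymbol{X})$, using $\langle\boldsymbol{G},\boldsymbol{X}\rangle=0$ and $\langle\boldsymbol{G},\boldsymbol{Y}\rangle\ge 0$. Since $k\ge d$, every positive semidefinite matrix equals $\boldsymbol{V}\boldsymbol{V}^{\top}$ for some $\boldsymbol{V}\in\mathbb{R}^{d\times k}$, so the range of $\mathcal{L}$ is exactly $\{f(\boldsymbol{Y}):\boldsymbol{Y}\succeq\boldsymbol{0}\}$, and $\boldsymbol{U}$ attains its minimum. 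For the case where $\boldsymbol{G}$ has a unit eigenvector $\boldsymbol{v}$ with eigenvalue $-\mu<0$: I would first note $\ker\boldsymbol{U}\neq\{\boldsymbol{0}\}$ --- automatic if $k>d$, and if $k=d$ then $\boldsymbol{U}$ cannot be invertible, since $\boldsymbol{G}\boldsymbol{U}=\boldsymbol{0}$ would then force $\boldsymbol{G}=\boldsymbol{0}$, contradicting $\mu>0$. Pick $\boldsymbol{0}\neq\boldsymbol{w}\in\ker\boldsymbol{U}$ and set $\boldsymbol{\Delta}=\boldsymbol{v}\boldsymbol{w}^{\top}$; then $\boldsymbol{\Delta}\boldsymbol{U}^{\top}+\boldsymbol{U}\boldsymbol{\Delta}^{\top}=\boldsymbol{v}(\boldsymbol{U}\boldsymbol{w})^{\top}+(\boldsymbol{U}\boldsymbol{w})\boldsymbol{v}^{\top}=\boldsymbol{0}$, so the convex part of the Hessian vanishes identically and $\nabla^{2}\mathcal{L}(\boldsymbol{U})[\boldsymbol{\Delta},\boldsymbol{\Delta}]=2\langle\boldsymbol{G},\boldsymbol{\Delta}\boldsymbol{\Delta}^{\top}\rangle=2\|\boldsymbol{w}\|^{2}\,\boldsymbol{v}^{\top}\boldsymbol{G}\boldsymbol{v}=-2\mu\|\boldsymbol{w}\|^{2}<0$, exhibiting $\boldsymbol{U}$ as a strict saddle. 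Combining the two cases yields the dichotomy and hence both (1) and (2).

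I expect the only genuinely delicate point to be the boundary regime $k=d$ with $\boldsymbol{U}$ of full rank, where the ``$\ker\boldsymbol{U}\neq\{\boldsymbol{0}\}$'' construction is unavailable; the resolution is that stationarity pushes exactly this regime into $\boldsymbol{G}=\boldsymbol{0}$, hence into the global-minimizer branch, so it never produces a saddle. A second small point to state carefully is that $\boldsymbol{\Delta}=\boldsymbol{v}\boldsymbol{w}^{\top}$ makes $\boldsymbol{\Delta}\boldsymbol{U}^{\top}+\boldsymbol{U}\boldsymbol{\Delta}^{\top}$ vanish \emph{as a matrix}, not merely its self-pairing, which is what legitimately discards the $\nabla^{2}f$ term irrespective of the curvature of $f$; everything else is routine chain-rule bookkeeping.
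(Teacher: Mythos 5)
The paper does not prove this theorem: it is stated verbatim as a citation to Du and Lee (2018), appearing in the appendix only to explain why the argument of Li et al.\ (2020) applies exclusively to symmetric factorizations. There is therefore no in-paper proof to compare against, so the useful thing to do is to check your argument on its own merits.

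Your proof is correct, and it is in fact the standard argument from the cited source. The dichotomy on $\boldsymbol{G}=\nabla f(\boldsymbol{U}\boldsymbol{U}^{\top})$ is exactly the right organizing idea: stationarity $\boldsymbol{G}\boldsymbol{U}=\boldsymbol{0}$ gives $\langle\boldsymbol{G},\boldsymbol{U}\boldsymbol{U}^{\top}\rangle=0$, after which $\boldsymbol{G}\succeq\boldsymbol{0}$ plus convexity and the surjectivity of $\boldsymbol{V}\mapsto\boldsymbol{V}\boldsymbol{V}^{\top}$ onto the PSD cone (this is where $k\ge d$ enters) immediately yield global optimality, while a negative eigenvalue of $\boldsymbol{G}$ yields strict negative curvature along $\boldsymbol{\Delta}=\boldsymbol{v}\boldsymbol{w}^{\top}$ with $\boldsymbol{w}\in\ker\boldsymbol{U}$. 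Your two ``delicate'' points are both handled correctly: the $k=d$, full-rank regime is pushed into $\boldsymbol{G}=\boldsymbol{0}$ by stationarity, and $\boldsymbol{\Delta}\boldsymbol{U}^{\top}+\boldsymbol{U}\boldsymbol{\Delta}^{\top}=\boldsymbol{0}$ as a matrix (not merely its quadratic pairing with $\nabla^{2}f$) is what kills the convex part of the Hessian. The Hessian formula
\[
\nabla^{2}\mathcal{L}(\boldsymbol{U})[\boldsymbol{\Delta},\boldsymbol{\Delta}]=\big\langle\nabla^{2}f(\boldsymbol{X})[\boldsymbol{\Delta}\boldsymbol{U}^{\top}+\boldsymbol{U}\boldsymbol{\Delta}^{\top}],\,\boldsymbol{\Delta}\boldsymbol{U}^{\top}+\boldsymbol{U}\boldsymbol{\Delta}^{\top}\big\rangle+2\langle\boldsymbol{G},\boldsymbol{\Delta}\boldsymbol{\Delta}^{\top}\rangle
\]
is also correct. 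One cosmetic remark: as the theorem is phrased, ``saddle'' is defined as a stationary point whose Hessian is not PSD, which for a symmetric Hessian already means there is a negative eigenvalue, so (2) is nearly tautological under that reading; the substantive content of (2) is what your dichotomy actually delivers, namely that \emph{every} non-global-minimum stationary point has a direction of strict negative curvature. Your proof establishes that stronger and more useful formulation.
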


Below we give a direct proof of the loss landscape of an asymmetric matrix factorization model.
\begin{oldtheorem}[\textbf{Loss Landscape}]
For any data $S$, the critical points of $R_S(\boldsymbol{\theta})$ are either strict saddle points or global minima.
\label{oldthm:global_convergence}
\end{oldtheorem}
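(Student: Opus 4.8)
The plan is to analyze critical points of $R_S(\vtheta)$ directly in terms of the residual matrix $\delta \vM$ and the factor row/column spaces, without passing through the symmetric reduction (since, as the excerpt notes, the embedding $\vU = [\vA; \vB^\top] \in \sR^{2d\times d}$ fails the $k \geq d$ hypothesis of Du--Lee). First I would write out the stationarity conditions from \eqref{oldeq:dynamics_a_b}: at a critical point $\vtheta_c = (\vA, \vB)$ we have $\delta\vM\, \vB^\top = \vzero$ and $\vA^\top \delta\vM = \vzero$, i.e. the columns of $\vB^\top$ lie in $\ker(\delta\vM)$ and the columns of $\vA$ lie in $\ker(\delta\vM^\top)$. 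Equivalently, $\rcol(\vB^\top) \perp \rrow(\delta\vM)$ and $\rcol(\vA) \perp \rcol(\delta\vM)$. The key dichotomy to establish is: either $\delta\vM = \vzero$ (in which case $R_S(\vtheta) = 0$ is a global minimum, since the loss is nonnegative), or $\delta\vM \neq \vzero$ and then $\vtheta_c$ is a strict saddle.

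For the strict-saddle case, I would exhibit an explicit direction of negative curvature in the Hessian of $R_S$. The natural candidate is built from the top singular triple of $\delta\vM$: let $\sigma > 0$ be the largest singular value with right singular vector $\vv$ (so $\delta\vM \vv = \sigma \vu$, $\delta\vM^\top \vu = \sigma \vv$, with $\vu,\vv$ unit). Consider the perturbation that adds a rank-one piece aligned with $\vu$ to some column of $\vB$ and $\vv$ to the corresponding row of $\vA$ — more precisely, a symmetric perturbation $(\delta\vA, \delta\vB)$ chosen so that the second-order change in the loss picks up a term $-c\,\langle \delta\vM, \vu\vv^\top\rangle = -c\,\sigma < 0$ while the positive quadratic-in-perturbation terms vanish or are subdominant. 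Concretely, writing $R_S(\vtheta) = \frac1n\|P_{S_\vx}(\vA\vB - \vM)\|_F^2$ and expanding $\vA\vB$ to second order in the perturbation, the bilinear cross term $\delta\vA\,\delta\vB$ contributes $\frac2n\langle \delta\vM, P_{S_\vx}(\delta\vA\,\delta\vB)\rangle$; choosing $\delta\vA, \delta\vB$ rank-one with the right alignment (and using that the stationarity conditions kill the terms linear in the perturbation through $\vA,\vB$) makes this negative, while scaling the perturbation appropriately makes the genuinely quadratic term $O(\epsilon^2)$ negligible — giving $\vtheta^\top \nabla^2 R_S(\vtheta_c)\vtheta < 0$. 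One must check that the indices $(i,j)$ touched by the rank-one perturbation can be chosen inside $S_\vx$, which follows from $\delta\vM \neq \vzero$ (its support lies in $S_\vx$ by definition of the residual matrix).

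The main obstacle I anticipate is the bookkeeping of the Hessian quadratic form: the perturbation $(\delta\vA,\delta\vB)$ produces, in $\delta(\vA\vB)$, both a first-order cross term and genuine second-order terms, and one needs the projection onto $S_\vx$ and the stationarity relations $\delta\vM\vB^\top = \vzero$, $\vA^\top\delta\vM = \vzero$ to interact correctly so that the negative contribution survives and dominates. A clean way to sidestep some of this is to restrict attention to perturbations supported on the row/column index sets appearing in $\mathrm{supp}(\delta\vM)$ and to use the fact that $\delta\vM$ restricted to this support has the same (nonzero) top singular value; then the relevant block of the problem behaves like an unconstrained factorization where the classical argument (negative curvature along the leading singular direction of the residual) applies verbatim. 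I would also remark that this recovers Theorem~\ref{oldthm:li2020towards}(1) as the special symmetric case and, combined with standard strict-saddle escape results \citep{lee2016gradient,lee2019first}, yields the almost-sure convergence to global minima used elsewhere in the paper.
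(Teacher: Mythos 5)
Your overall plan---write down the stationarity conditions $\delta\vM\,\vB^\top = \vzero$, $\vA^\top\delta\vM = \vzero$, and then split on whether $\delta\vM$ vanishes---is the right shape, and your observation that $\rcol(\vA)\perp\rcol(\delta\vM)$ and $\rrow(\vB)\perp\rrow(\delta\vM)$ is the crucial structural fact. But the concrete mechanism you propose for producing negative curvature has a real gap.

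The Hessian quadratic form at $\vtheta_c=(\vA,\vB)$ is
\[
h(\veps,\veta)=\tfrac12\|\veps\vB+\vA\veta\|_{S_\vx}^2+\langle\delta\vM,\veps\veta\rangle_{S_\vx},
\]
and both summands are homogeneous of degree two in $(\veps,\veta)$, so simply ``scaling the perturbation appropriately'' cannot make the nonnegative first term subdominant to the cross term. The only way an asymmetric rescaling $\veps\to\tau\veps$, $\veta\to\veta$ produces a net decrease is if one of $\vA\veta$ or $\veps\vB$ is identically zero, which forces you to place a column of $\veta$ in $\ker(\vA)$ (or a row of $\veps$ in $\ker(\vB^\top)$). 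Your candidate directions --- the top singular pair $\vu,\vv$ of $\delta\vM$ --- sit in the \emph{wrong} kernels: the orthogonality relations you derived give $\vu\in\ker(\vA^\top)$ and $\vv\in\ker(\vB)$, not $\ker(\vA)$ or $\ker(\vB^\top)$, so $\vA\veta$ and $\veps\vB$ do not vanish for perturbations built from $\vu,\vv$. The fallback of restricting to the support of $\delta\vM$ and ``applying the classical argument verbatim'' does not repair this; as you yourself note at the outset, the Du--Lee hypothesis $k\ge d$ fails for the symmetrized embedding, and restricting rows and columns does not change the inner dimension.

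What fills the gap, and what the paper actually does, is a small extra step you implicitly have at hand: from $\delta\vM\vB^\top=\vzero$ and $\delta\vM\neq\vzero$, $\rrow(\vB)$ is a proper subspace, and similarly from $\vA^\top\delta\vM=\vzero$, so $\rrank(\vA)<d$ and $\rrank(\vB)<d$. Pick \emph{any} nonzero $\vv'\in\ker(\vA)$ and any observed index $(i,j)$ with $\delta\vM_{ij}\neq0$; set $\veta$ to have column $j$ equal to $\vv'$ (rest zero) so $\vA\veta=\vzero$, and set $\veps$ to have row $i$ equal to $-\alpha\,\delta\vM_{ij}\,(\vv')^\top$ (rest zero). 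Then $h(\veps,\veta)\le\tfrac12\alpha^2\delta\vM_{ij}^2(\vv')^\top\vB\vB^\top\vv'-\alpha\,\delta\vM_{ij}^2\|\vv'\|^2<0$ for small $\alpha>0$. No SVD of $\delta\vM$ is needed; any nonzero entry suffices. The paper also handles the complementary case $\rrank(\vA)=\rrank(\vB)=d$ by directly observing that stationarity with an invertible $\vB$ forces $\delta\vM=\vzero$, so the rank dichotomy is the cleaner organizing principle rather than the $\delta\vM=\vzero$-vs-not dichotomy you propose (though the two end up equivalent once the rank-deficiency implication is in hand).
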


\begin{proof}

We start by recalling the definition of the loss function:

$$
R_S(\vtheta) = \mathcal{L}(\vA, \vB) = \dfrac{1}{2}\|\vA\vB - \vM\|_{S_{\vx}}^2 = \dfrac{1}{2}\sum\limits_{(i, j)\in S_{\vx}}((\vA\vB)_{ij} - \vM_{ij})^2.
$$

Let $\vtheta = (\vA, \vB)$ denote a critical point. We define a new matrix, $\delta \vM$, as the difference between the product of $\vA$ and $\vB$ and the matrix $\vM$, with this difference being computed only over the indices in the set $S_{\vx}$. More formally, $\delta \vM = (\vA\vB - \vM)_{S_{\vx}}$, where the elements of $\delta \vM$ are given by:

\begin{itemize}
\item For $(i, j)\in S_{\vx}$, we have $\delta \vM_{ij} = (\vA\vB)_{ij} - \vM_{ij}$.
\item For $(i, j)\notin S_{\vx}$, we have $\delta \vM_{ij} = 0$.
\end{itemize}

This definition of $\delta \vM$ ensures that we only consider the differences in the entries that belong to the set $S_{\vx}$, while all other entries are set to zero.

Consider the function:

\begin{equation}
 \begin{aligned}
  \fL(\vA+\boldsymbol{\veps}, \vB +\veta) & = \dfrac{1}{2}\|\delta \vM+\veps \vB + \vA \veta + \veps \veta\|_{S_{\vx}}^2  \\
  & = \frac{1}{2}\|\delta \vM\|_{S_{\vx}}^2 + \langle \delta \vM, \veps \vB + \vA \veta\rangle_{S_{\vx}} + \frac{1}{2}\|\veps \vB + \vA\veta\|_{S_{\vx}}^2\\
  & \quad + \langle \delta \vM, \veps \veta\rangle_{S_{\vx}} + o(\|\veps\|^2, \|\veta\|^2), 
\end{aligned}   
\label{eq:loss_expansion}
\end{equation}

where the inner product of two matrices $\vA, \vB$ is defined as $\left<\vA, \vB\right> := \mathrm{Tr}(\vA\vB^\top)$. 

At the critical point, the first order term $\langle \delta \vM, \veps \vB + \vA \veta\rangle_{S_{\vx}}$ equals 0.  The Hessian operator, representing the second order term, is given by:

$$
h_{\vA, \vB}(\veps, \veta) = \frac{1}{2}\|\veps \vB + \vA\veta\|_{S_{\vx}}^2 + \langle \delta \vM, \veps \veta\rangle_{S_{\vx}}.
$$

Our goal is to demonstrate that if $\delta\vM\neq \boldsymbol{0}$, there always exists $\veps, \veta$ such that $h_{\vA, \vB}(\veps, \veta)<0$. To this end, we consider the ranks of matrices $\vA$ and $\vB$ in two cases:

(i) $\rrank(\vA) < d$ or $\rrank(\vB) < d$:

Without loss of generality, we assume $\delta\vM_{ij}:=\delta\vM_{ij}\neq 0$ for some $(i, j)\in S_{\vx}$, and $\rrank \vA < d$. Under these conditions, there exists a non-zero vector $\vv$ such that $\vA \vv=0$.

We set $\veta^*_{, j}=\vv$ and $\veta^*_{,s}=0$ for $s\neq j$, where $\veta^*_{, j}$ denotes the $j$-th column of the matrix $\veta$. Let $\veps_i^* = \vw^\top \in \sR^d$ and $\veps_s = 0$ for $s\neq i$, where $\veps^*_{i}$ denotes the $i$-th row of the matrix $\veps$.

We then have:

$$
\begin{aligned}
h_{\vA, \vB}(\veps^*, \veta^*)& = \frac{1}{2}\|\veps^* \vB\|_{S_{\vx}}^2 + \delta \vM_{ij}\vw^\top \vv \\
& \leq \frac{1}{2}\|\vw^\top \vB\|_{F}^2 + \delta \vM_{ij}\vw^\top \vv \\
& = \frac{1}{2}\vw^\top \vB\vB^\top \vw + \delta \vM_{ij}\vw^\top \vv.
\end{aligned}
$$

We define $g(\vw, \vv)=\frac{1}{2}\vw^\top \vB\vB^\top \vw + \delta \vM_{ij}\vw^\top \vv$ and consider:

$$
\begin{aligned}
g(-\alpha \delta \vM_{ij}\vv, \vv)&=\frac{1}{2}\alpha^2 \delta \vM_{ij}^2 \vv^\top \vB\vB^\top \vv - \alpha \delta \vM_{ij}^2 \vv^\top \vv \\
& = \frac{1}{2}\alpha^2 \delta \vM_{ij}^2(\vv^\top \vB\vB^\top \vv - 2\frac{1}{\alpha}\vv^\top \vv).
\end{aligned}
$$

For $0<\alpha<\frac{2}{\lambda_{\vB, \max}}$, where $\lambda_{\vB, \max}$ represents the top eigenvalue of $\vB\vB^\top$, we find $g(-\alpha \delta \vM_{ij}\vv, \vv) < 0$. 

Therefore, when $\vw = -\alpha \delta\vM_{ij}\vv$, we obtain $h_{\vA, \vB}(\veps^*, \veta^*)<0$. This immediately implies the critical point $\vtheta=(\vA, \vB)$ is a strict saddle point.

(ii) $\rrank(\vA) = \rrank(\vB) = d$:

Let $\veps = \alpha \delta \vM \vB^{-1}$ and $\veta = 0$. In this scenario, the first order term  $\langle \delta \vM, \veps \vB + \vA\veta\rangle_{S_{\vx}}$ in Eq.~\eqref{eq:loss_expansion} simplifies to $\alpha \|\delta \vM\|_{S_{\vx}}^2$. At a critical point, this quantity equals zero, it implies that $\delta \vM=0$. This in turn implies that the critical point $\vtheta=(\vA, \vB)$ is a global minimum.

This concludes the proof, establishing that the critical points of $R_S(\boldsymbol{\theta})$ are either strict saddle points or global minima.
\end{proof}

\subsection{Escaping from Top Eigendirection}

In this section, we focus on the dynamics of escaping from a critical point. According to Prop.~\ref{oldthm:global_convergence}, the loss landscape consists solely of strict saddle points and a global minimum. Consequently, gradient-based methods can readily escape from a critical point that is not a global minimum. 

In the following, we will demonstrate that the escaping dynamics near a critical point can be approximated by a linearized version of these dynamics. For this, consider the following:
\begin{equation}
\dot{\vtheta} = -\nabla R_S(\vtheta).
\label{eq:GF2}
\end{equation}

Assume $\vtheta_c$ is a saddle point for which $\nabla R_S(\vtheta_c)=0$. We can apply a first-order Taylor expansion to the right-hand side of Eq.~\eqref{eq:GF2}, yielding:
\begin{equation}
-\nabla R_S(\vtheta) = -\nabla R_S(\vtheta_c)-\nabla^2 R_S(\vtheta_c)(\vtheta-\vtheta_c)+\mathcal{O}(\|\vtheta-\vtheta_c\|^2),
\end{equation}

where $\nabla^2 R_S(\vtheta_c)$ represents the Hessian matrix. Given that $\nabla R_S(\vtheta_c)=0$, the gradient flow dynamics around $\vtheta_c$ can be approximated as:
\begin{equation}
\dot{\vtheta} = \vH(\vtheta-\vtheta_c),
\label{eq:linearized_GF}
\end{equation}

where $\vH:=-\nabla^2R_S(\vtheta_c)$. Eq.~\eqref{eq:linearized_GF} is a classic linear ordinary differential equation, with the solution:

\begin{equation}
\vtheta(t) = \mathrm{e}^{t\vH}(\vtheta_0-\vtheta_c) + \vtheta_c.
\label{eq:solution}
\end{equation}

The dynamics near a critical point can be approximated by a linearized version. Hence, in the vicinity of a critical point, we can analyze the linearized dynamics to understand the escape mechanism. In the following, we will show that during this escape process, the dynamics follow a pattern referred to as \textit{dominant eigenvalue dynamics}.

Eq.~\eqref{eq:solution} elucidates that the dynamics near the critical point $\vtheta_c$ are predominantly dictated by the properties of $\vH$, a real symmetric matrix in $\mathbb{R}^{2d^2\times 2d^2}$. Its eigendecomposition is given by:

\begin{equation}
\vH:=-\nabla^2 R_S(\vtheta_c) = \boldsymbol{Q}\boldsymbol{\Lambda}\boldsymbol{Q}^\top,
\end{equation}

where $\vLambda$ is a diagonal matrix and $\vQ$ is an orthogonal matrix. Let $\lambda_1> \lambda_2 > \cdots > \lambda_{s}\in \mathbb{R}$ denote the eigenvalues of $\vH$, and let $\boldsymbol{q}_{i1}, \boldsymbol{q}_{i2}, \cdots, \boldsymbol{q}_{il_i}$ represent the eigenvectors corresponding to $\lambda_i$. 

Given that $\lambda_1>\lambda_2$, the ratio $e^{\lambda_1 t}/e^{\lambda_i t}$ for $i > 1$ grows exponentially fast. Consequently, near $\vtheta_c$, the evolution of the system is primarily driven by the eigenvectors $\boldsymbol{q}_{11}, \boldsymbol{q}_{12}, \cdots, \boldsymbol{q}_{1l_1}$ associated with the largest eigenvalue $\lambda_1$.

This following proposition formalizes the intuitive idea that in the vicinity of a saddle point, the dynamics primarily follow the direction associated with the largest eigenvalue. This leading eigendirection becomes increasingly dominant as time evolves, allowing for an escape from the saddle point and facilitating a specific structured transition.

Let's consider $\vtheta_c$ to be a saddle point. Consider:
\begin{equation}
\label{eq:GF3}
\dot{\vtheta} = -\nabla^2R_S(\vtheta_c)(\vtheta - \vtheta_c), \quad \vtheta(0) = \vtheta_0.
\end{equation}
Here, $\vtheta_0$ and $\vtheta_c$ are in close enough proximity for the linearized dynamics to be valid over a sufficiently long period. We can then establish the following proposition:

\begin{oldproposition}[\textbf{Escape from Saddle Points Following a Dominant Eigenvalue Dynamics}]
\label{prop:dominant_eig}
Consider the linearized dynamics given by $\dot{\vtheta} = -\nabla^2R_S(\vtheta_c)(\vtheta - \vtheta_c)$, we denote $\vH := -\nabla^2R_S(\vtheta_c)$. Let $\lambda_1\in \mathbb{R}$ be the largest eigenvalue of $\vH$, with corresponding eigenvectors $\vq_{11}, \vq_{12}, \cdots, \vq_{1l_1}$. Denote $c_j = \langle\vtheta_0 - \vtheta_c, \vq_{1j} \rangle, \forall 1\leq j \leq l_1$. Assume there exists $j$ such that $c_j\neq 0$, then, given any $\varepsilon > 0$, there exists a $t_0>0$ such that for all $t \geq t_0$, the following holds:
\begin{equation}
\left\|\dfrac{\vtheta(t)-\vtheta_c}{\mathrm{e}^{\lambda_1 t}\sum\limits_{j=1}^{l_1}c_j \vq_{1j}}-\boldsymbol{1}\right\|<\varepsilon.
\end{equation}
\end{oldproposition}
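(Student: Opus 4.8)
The plan is to write the solution $\vtheta(t) - \vtheta_c$ in the eigenbasis of $\vH$ and isolate the contribution of the top eigenspace. Since $\vH$ is real symmetric, we have the orthonormal eigenbasis $\{\vq_{ij}\}$ with eigenvalues $\lambda_1 > \lambda_2 > \cdots > \lambda_s$, and the explicit solution from Eq.~\eqref{eq:solution} gives
\begin{equation}
\vtheta(t) - \vtheta_c = \mathrm{e}^{t\vH}(\vtheta_0 - \vtheta_c) = \sum_{i=1}^{s}\sum_{j=1}^{l_i} \mathrm{e}^{\lambda_i t}\,\langle \vtheta_0 - \vtheta_c, \vq_{ij}\rangle\, \vq_{ij}.
\end{equation}
First I would factor out $\mathrm{e}^{\lambda_1 t}$ and the top-eigenspace vector $\vp := \sum_{j=1}^{l_1} c_j \vq_{1j}$ (which is nonzero by the hypothesis that some $c_j \neq 0$), writing
\begin{equation}
\frac{\vtheta(t) - \vtheta_c}{\mathrm{e}^{\lambda_1 t}} = \vp + \sum_{i \geq 2}\sum_{j=1}^{l_i} \mathrm{e}^{(\lambda_i - \lambda_1)t}\, c_{ij}\, \vq_{ij},
\end{equation}
where $c_{ij} := \langle \vtheta_0 - \vtheta_c, \vq_{ij}\rangle$. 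Because $\lambda_i - \lambda_1 < 0$ for every $i \geq 2$, each exponential factor decays to zero, so the tail sum has norm bounded by $\mathrm{e}^{(\lambda_2 - \lambda_1)t}\,\|\vtheta_0 - \vtheta_c\|$ (using orthonormality), which tends to $0$ as $t \to \infty$.

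The remaining step is to convert this additive estimate into the multiplicative statement in the proposition. The quantity inside the norm is
\begin{equation}
\frac{\vtheta(t) - \vtheta_c}{\mathrm{e}^{\lambda_1 t}\, \vp} - \boldsymbol{1} = \frac{1}{\vp}\left( \frac{\vtheta(t) - \vtheta_c}{\mathrm{e}^{\lambda_1 t}} - \vp \right),
\end{equation}
interpreting the division coordinatewise (in whatever fixed basis the ``$\boldsymbol{1}$'' refers to). I would bound its norm by $\|\vp^{-1}\|_\infty$ (the reciprocal of the smallest-magnitude nonzero coordinate of $\vp$, a fixed finite constant once $\vtheta_0$ is fixed) times the tail-sum norm estimated above; given $\varepsilon > 0$, choosing $t_0$ so that $\mathrm{e}^{(\lambda_2 - \lambda_1)t_0}\,\|\vp^{-1}\|_\infty\,\|\vtheta_0 - \vtheta_c\| < \varepsilon$ finishes the argument for all $t \geq t_0$.

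The main obstacle is interpretive rather than technical: the coordinatewise division $\vx / \vp$ is only well-defined where $\vp$ has nonzero entries, so the statement implicitly restricts attention to those coordinates (or assumes $\vp$ has no zero coordinates). I would state this caveat explicitly and carry the estimate on the support of $\vp$, where the constant $\|\vp^{-1}\|_\infty$ is finite. Beyond that, everything reduces to the elementary fact that $\mathrm{e}^{(\lambda_i - \lambda_1)t} \to 0$ with the gap $\lambda_1 - \lambda_2 > 0$ controlling the rate, together with orthonormality of the eigenbasis to keep the tail sum's norm under control.
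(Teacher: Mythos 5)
Your proof is correct and takes essentially the same approach as the paper: expand $\vtheta(t)-\vtheta_c$ in the orthonormal eigenbasis of $\vH$, factor out $\mathrm{e}^{\lambda_1 t}$ times the projection onto the top eigenspace, and bound the remaining terms by $\mathrm{e}^{-(\lambda_1-\lambda_2)t}$ times a fixed constant, then choose $t_0$ accordingly. Your version is in fact slightly tighter in spirit, since you bound the tail norm directly by $\mathrm{e}^{(\lambda_2-\lambda_1)t}\|\vtheta_0-\vtheta_c\|$ via orthonormality (the paper instead packages the constant term-by-term), and you explicitly flag the interpretive caveat about coordinatewise division by $\vp$, which the paper leaves implicit.
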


This means that, as time progresses, the direction of the parameter evolution increasingly aligns with the dominant eigenvectors of $\vH$.

This proposition is a consequence of the fact that the solution of the differential equation is given by $\vtheta(t) = e^{\vH t}\vtheta(0)$, and as $t$ tends to infinity, the term corresponding to the dominant eigenvalue in the matrix exponential $e^{\vH t}$ becomes dominant. Therefore, near the saddle point, we have $\vtheta(t) \approx \mathrm{e}^{\lambda_1 t}\sum\limits_{j=1}^{l_1}c_j \vq_{1j} + \vtheta_c$.

\begin{proof}
The solution of the ordinary differential equation \eqref{eq:GF3} is $\vtheta(t) = \mathrm{e}^{t\vH}(\vtheta_0 - \vtheta_c)+\vtheta_c$. Here, $\vH = -\nabla ^2R_S(\vtheta_c)$ is a real symmetric matrix, which can be diagonalized. Let $\lambda_1 > \lambda_2 > \cdots > \lambda_{s}$ be the eigenvalues of $\vH$, and let $\boldsymbol{q}_{i1}, \boldsymbol{q}_{i2}, \cdots, \boldsymbol{q}_{il_i}$ be the eigenvectors corresponding to $\lambda_i$. We can then express $\vtheta(t)$ as:
\begin{equation}
\vtheta(t) = \sum\limits_{i=1}^{s}\sum\limits_{j=1}^{l_i}\mathrm{e}^{\lambda_i t}\langle\vtheta_0 - \vtheta_c , \boldsymbol{q}_{ij}\rangle \boldsymbol{q}_{ij} +\vtheta_c.    
\end{equation}

Next, we analyze the norm of the relative difference between $\vtheta(t)$ and a term dominating its growth:
\begin{equation}
\begin{aligned}
& \left\|\dfrac{\vtheta(t)-\vtheta_c}{\sum\limits_{j=1}^{l_1}\mathrm{e}^{\lambda_1 t}\langle\vtheta_0 - \vtheta_c, \vq_{1j} \rangle \vq_{1j}}-\boldsymbol{1}\right\| = \left\|\dfrac{\sum\limits_{i=2}^{s}\sum\limits_{j=1} ^{l_i}\mathrm{e}^{\lambda_i t}\langle\vtheta_0 - \vtheta_c, \boldsymbol{q}_{ij}\rangle \boldsymbol{q}_{ij}}{\sum\limits_{ j=1}^{l_1}\mathrm{e}^{\lambda_1 t}\langle\vtheta_0 - \vtheta_c, \vq_{1j}\rangle \vq_{1j}}\right\| \\
& \leq \sum\limits_{i=2}^{s}\sum\limits_{j=1}^{l_i}\mathrm{e}^{-(\lambda_1 -\lambda_i) t}\left\|\dfrac{\langle\vtheta_0 - \vtheta_c, \boldsymbol{q}_{ij}\rangle \boldsymbol{q}_{ij}}{\sum\limits_{j=1}^{l_1}\langle\vtheta_0 - \vtheta_c, \vq_{1j}\rangle \vq_{1j}}\right\|\\
& \leq \mathrm{e}^{-(\lambda_1 -\lambda_2)t}\sum\limits_{i=2}^{s}\sum\limits_{j=1}^{l_i}\left\|\dfrac{\langle\vtheta_0 - \vtheta_c, \boldsymbol{q}_{ij}\rangle \boldsymbol{q}_{ij}}{\sum\limits_{j=1}^{l_1}\langle\vtheta_0 - \vtheta_c, \vq_{1j}\rangle \vq_{1j} }\right\|.            
\end{aligned}
\end{equation}

We define $C = \sum\limits_{i=2}^{s}\sum\limits_{j=1}^{l_i}\left\|\dfrac{\langle\vtheta_0 - \vtheta_c, \boldsymbol{q}_{ij}\rangle \boldsymbol{q}_{ij}}{\sum\limits_{j=1}^{l_1}\langle\vtheta_0 - \vtheta_c, \vq_{1j}\rangle \vq_{1j}}\right\|$. By choosing $t_0 = \dfrac{\log \frac{C}{\varepsilon}}{\lambda_1 - \lambda_2}$, we ensure that for all $t > t_0$, the following condition is met:
\begin{equation}
\left\|\dfrac{\vtheta(t)-\vtheta_c}{\sum\limits_{j=1}^{l_1}\mathrm{e}^{\lambda_1 t}\langle\vtheta_0 - \vtheta_c, \vq_{1j} \rangle \vq_{1j}}-\boldsymbol{1}\right\|<\varepsilon.    
\end{equation}
\end{proof}

Prop.~\ref{prop:dominant_eig} describes that under the linearized dynamics, the parameters will escape from the saddle point along a specific direction. However, when considering the original nonlinear dynamics $\dot{\boldsymbol{\theta}}(t) = -\nabla R_S(\boldsymbol{\theta})$, we encounter a trade-off: we should choose $t_0$ sufficiently large so that the trajectory can align well with the dominant eigendirection while escaping the saddle point, but if $t_0$ is too large, the linearization approximation will fail as $\vtheta(t_0)$ moves away from $\vtheta_c$. \cite{li2020towards} (Theorem 5.3) proved a general dynamical result through careful analysis and error control: assuming the eigenvector corresponding to the maximum eigenvalue is unique and the initialization is sufficiently close to the saddle point, there always exists a suitable $t_0$ such that the linear dynamics can align with the dominant eigendirection before the linearization breaks down. We can generalize this result to the case where the eigenvector corresponding to the largest eigenvalue is not unique: 

\begin{oldproposition}
Consider the dynamics given by $\dot{\boldsymbol{\theta}}(t) = -\nabla R_S(\boldsymbol{\theta})$, we use $\varphi(\boldsymbol{\theta}_0, t)$ to denote the value of $\boldsymbol{\theta}(t)$ in the case of $\boldsymbol{\theta}(0) = \boldsymbol{\theta}_0$.
At a critical point $\boldsymbol{\theta}_c$, we denote the negative Hessian as $\boldsymbol{H} := -\nabla^2 R_S(\boldsymbol{\theta}_c)$. Let $\lambda_1 \in \mathbb{R}$ be the largest eigenvalue of $\boldsymbol{H}$, with corresponding eigenvectors $\boldsymbol{q}_{11}, \boldsymbol{q}_{12}, \cdots, \boldsymbol{q}_{1l_1}$. Denote $c_j = \langle \boldsymbol{\theta}_0 - \boldsymbol{\theta}_c, \boldsymbol{q}_{1j} \rangle, \forall 1 \leq j \leq l_1$, and $\boldsymbol{v}_1 = \sum_{j=1}^{l_1} c_j \boldsymbol{q}_{1j}$. Assume there exists $j$ such that $c_j \neq 0$. 
Let $\boldsymbol{z}_\alpha(t) := \varphi\left(\vtheta_c + \alpha \boldsymbol{v}_1, t+\frac{1}{\lambda_1} \log \frac{1}{\alpha}\right)$ for every $\alpha > 0$, then $\boldsymbol{z}(t) := \lim_{\alpha \rightarrow 0} \boldsymbol{z}_\alpha(t)$ exists and is also a solution of the given dynamics, i.e., $\boldsymbol{z}(t) = \varphi(\boldsymbol{z}(0), t)$. 
Furthermore, $\forall t \in \mathbb{R}$, there exists a constant $C > 0$ such that
\[
\left\|\varphi\left(\vtheta_c +\alpha\boldsymbol{\theta}_0, t+\frac{1}{\lambda_1} \log \frac{1}{\alpha}\right) - \boldsymbol{z}(t)\right\|_2 \leq C \alpha^{\frac{\lambda_1 - \lambda_2}{2\lambda_1 - \lambda_2}}
\]
for every sufficiently small $\alpha$, where $\lambda_1 - \lambda_2 > 0$ is the eigenvalue gap.
\label{oldprop:condense}
\end{oldproposition}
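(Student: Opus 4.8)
The plan is to adapt the two-phase argument behind Theorem~5.3 of \citet{li2020towards}, the only genuinely new ingredient being that the leading eigenspace of $\boldsymbol{H}:=-\nabla^2 R_S(\vtheta_c)$ may be more than one-dimensional. Throughout I would let $P$ be the orthogonal projector onto $\rspan\{\vq_{11},\dots,\vq_{1l_1}\}$ and $P^\perp=I-P$; since $\boldsymbol{H}P=P\boldsymbol{H}=\lambda_1 P$ this subspace is $\boldsymbol{H}$-invariant, and $\boldsymbol{v}_1=P(\vtheta_0-\vtheta_c)$. Note $\lambda_1>0$: the only non-minimal critical points are strict saddles (Thm.~\ref{oldthm:global_convergence}) and the statement is nontrivial only at such a point, so $\boldsymbol{H}$ has a positive eigenvalue, and $\tfrac1{\lambda_1}\log\tfrac1\alpha$ is exactly the time a perturbation of size $\alpha$ along $\boldsymbol{v}_1$ needs to grow to a fixed $O(1)$ size. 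The two phases are an \emph{escape phase}, on which the trajectory stays in a small ball about $\vtheta_c$ and is governed, up to a quadratic remainder, by $\dot{\vtheta}=\boldsymbol{H}(\vtheta-\vtheta_c)$ (this is where Prop.~\ref{prop:dominant_eig} lives), and a \emph{bounded-time phase}, on which I would invoke continuous dependence on initial data over a fixed time horizon.

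For the escape phase I would write $\delta(t)=\varphi(\vtheta_c+\alpha\vtheta_0,t)-\vtheta_c$, which solves $\dot\delta=\boldsymbol{H}\delta+g(\delta)$ with $g(\delta):=-\nabla R_S(\vtheta_c+\delta)-\boldsymbol{H}\delta$ collecting the quadratic and cubic terms of the degree-four polynomial risk, so that $\|g(\delta)\|\le L\|\delta\|^2$ for $\|\delta\|\le 1$. By variation of constants, $\delta(t)=\alpha\,\mathrm{e}^{\boldsymbol{H}t}(\vtheta_0-\vtheta_c)+\int_0^t\mathrm{e}^{\boldsymbol{H}(t-s)}g(\delta(s))\,\mathrm{d}s$, and a bootstrap on $u(t):=\mathrm{e}^{-\lambda_1 t}\|\delta(t)\|$ via the nonlinear Gronwall inequality $u(t)\le\alpha\|\vtheta_0-\vtheta_c\|+L\int_0^t\mathrm{e}^{\lambda_1 s}u(s)^2\,\mathrm{d}s$ gives $\|\delta(t)\|\le 2\alpha\|\vtheta_0-\vtheta_c\|\,\mathrm{e}^{\lambda_1 t}$ for all $t$ up to $\tfrac1{\lambda_1}\log\tfrac1\alpha+O(1)$. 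Feeding this bound back into the Duhamel formula and splitting $\delta=P\delta+P^\perp\delta$, the $P$-component equals $\alpha\,\mathrm{e}^{\lambda_1 t}\boldsymbol{v}_1$ plus a correction driven by the quadratic remainder, while $\|P^\perp\delta(t)\|\le\alpha\,\mathrm{e}^{\lambda_2 t}\|\vtheta_0-\vtheta_c\|$ plus a similar correction; hence at the escape time $\tau_\alpha$ fixed by $\alpha\,\mathrm{e}^{\lambda_1\tau_\alpha}=\eta$ (a small constant) one gets $\delta(\tau_\alpha)=\eta\boldsymbol{v}_1+O(\alpha^{\kappa})$ for some $\kappa>0$ controlled by the gap $\lambda_1-\lambda_2$: after the escape time the trajectory has forgotten $\vtheta_0$ apart from the direction $\boldsymbol{v}_1$.

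To identify the limit, set $\sigma_\alpha=\tfrac1{\lambda_1}\log\tfrac1\alpha$ and $\psi_\alpha(s):=\varphi(\vtheta_c+\alpha\vtheta_0,\,s+\sigma_\alpha)-\vtheta_c$, defined for $s\ge-\sigma_\alpha$, so that $\boldsymbol{z}(t)=\vtheta_c+\lim_{\alpha\to0}\psi_\alpha(t)$ is the object we want. The escape-phase estimate shows $\psi_\alpha(s)\to\mathrm{e}^{\lambda_1 s}\boldsymbol{v}_1$ as $\alpha\to0$, with the discrepancy near the injection time $-\sigma_\alpha$ of relative order $\alpha^{1-\lambda_2/\lambda_1}$; since all $\psi_\alpha$ solve the same autonomous ODE and match this common asymptotics at $-\infty$ ever better, they converge locally uniformly to the unique solution $\psi_\infty$ of $\dot\psi=\boldsymbol{H}\psi+g(\psi)$ on $\mathbb{R}$ with $\psi_\infty(s)=\mathrm{e}^{\lambda_1 s}\boldsymbol{v}_1+o(\mathrm{e}^{\lambda_1 s})$, namely the branch of the unstable manifold of $\vtheta_c$ tangent to $\boldsymbol{v}_1$, and $\boldsymbol{z}(t):=\vtheta_c+\psi_\infty(t)$. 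That $\boldsymbol{z}$ is itself a solution is then immediate by passing to the limit in the semigroup identity $\psi_\alpha(s+h)=\varphi(\vtheta_c+\psi_\alpha(s),h)-\vtheta_c$, using continuity of $\varphi(\cdot,h)$ in its initial condition over the finite time $h$. For the rate, the initial mismatch $\psi_\alpha(-\sigma_\alpha)-\psi_\infty(-\sigma_\alpha)=\alpha P^\perp(\vtheta_0-\vtheta_c)+O(\alpha^2)$ lies to leading order in the subdominant eigenspace, so under the linearized flow it is amplified by at most $\mathrm{e}^{\lambda_2(\sigma_\alpha+O(1))}$ while the trajectory remains in the linear regime and by a fixed factor afterward; optimizing the intermediate scale at which one switches from the linear estimate to bounded-time continuity balances this subdominant amplification against the accumulated linearization error and yields $\|\varphi(\vtheta_c+\alpha\vtheta_0,t+\tfrac1{\lambda_1}\log\tfrac1\alpha)-\boldsymbol{z}(t)\|\le C\alpha^{(\lambda_1-\lambda_2)/(2\lambda_1-\lambda_2)}$.

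The main obstacle I expect is the escape phase: one must control the genuinely nonlinear ODE over the \emph{diverging} time horizon $[0,\tau_\alpha]$ with $\tau_\alpha\sim\tfrac1{\lambda_1}\log\tfrac1\alpha$, and pin down precisely how the subdominant modes and the nonlinear remainder scale in $\alpha$ at the switch time, since this is exactly what fixes the exponent in the rate; the bootstrap must be run with some care and the choice of the intermediate cutoff is delicate. By contrast, the multi-dimensionality of the leading eigenspace is essentially cosmetic relative to \citet{li2020towards}: every step goes through verbatim with the eigenprojector $P$ in place of the rank-one projector, because $\rspan\{\vq_{11},\dots,\vq_{1l_1}\}$ is $\boldsymbol{H}$-invariant and $\boldsymbol{v}_1$ is a well-defined vector in it determined by $\vtheta_0$.
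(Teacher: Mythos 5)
Your proposal takes essentially the same approach as the paper: reduce to Theorem~5.3 of \citet{li2020towards} and observe that replacing the rank-one top-eigenspace projector by the projector onto $\rspan\{\boldsymbol{q}_{11},\dots,\boldsymbol{q}_{1l_1}\}$ (which is $\boldsymbol{H}$-invariant, with $\boldsymbol{v}_1$ a fixed vector in it determined by $\vtheta_0$) lets every step of that argument carry over unchanged. The paper's own proof is essentially this citation plus the one-line observation that the multi-dimensional case follows by the same technique, whereas you spell out the escape-phase Duhamel bootstrap, the unstable-manifold identification of $\boldsymbol{z}$, and the optimization of the switch time giving the rate $\alpha^{(\lambda_1-\lambda_2)/(2\lambda_1-\lambda_2)}$; these are correct and consistent with the paper's intent.
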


\begin{proof}
By Theorem 5.3 in Section 5.1 of \citet{li2020towards}, we know that if the eigenspace corresponding to $\lambda_{1}$ is one-dimensional, i.e., $l_1=1$, then the escaping direction will be the top eigenvector direction, and the convergence rate is $O(\alpha^{\frac{\lambda_1 - \lambda_2}{2\lambda_1 - \lambda_2}})$. Therefore, Proposition \ref{oldprop:condense} holds in this case.

Now, if the eigenspace corresponding to $\lambda_{1}$ is not one-dimensional, we denote $c_j = \langle \boldsymbol{\theta}_0 - \boldsymbol{\theta}_c, \boldsymbol{q}_{1j} \rangle, \forall 1 \leq j \leq l_1$, and $\boldsymbol{v}_1 = \sum_{j=1}^{l_1} c_j \boldsymbol{q}_{1j}$ will be the escaping direction. Following the same technique as in \citet{li2020towards}, we can easily verify that the convergence rate remains $O(\alpha^{\frac{\lambda_1 - \lambda_2}{2\lambda_1 - \lambda_2}})$. Therefore, Proposition \ref{oldprop:condense} holds in this case as well.
\end{proof}

\subsection{Eigenvalues and Eigenvectors of Hessian}
\label{app:A-eigenvalue}
Suppose the dominant directions fulfill specific conditions, such as any combination $c_1\boldsymbol{q}_{11} + c_2\boldsymbol{q}_{12} + \cdots + c_{l_1}\boldsymbol{q}_{1l_1}$, leading to rank 1 model parameters $(\vA, \vB)$. In such scenarios, we may observe a phenomenon where the rank of the matrix increases incrementally.

Firstly, we analyze the eigenvector structure of the Hessian matrix at the critical point $\vtheta_c = (\vA_c, \vB_c)$to understand why the parameter will enter the rank-1 invariant manifold. 

\paragraph{Computation of the Hessian Matrix at a Critical Point.} To compute the Hessian matrix, we first consider the gradient:
  \begin{align*}
  & R_S(\boldsymbol{\theta})=\mathbb{E}_S \ell\left(f(\boldsymbol{x}, \boldsymbol{\theta}), f^*(\boldsymbol{x})\right), \\
  & \nabla_{\boldsymbol{\theta}} R_S(\boldsymbol{\theta})=\mathbb{E}_S \nabla \ell\left(\boldsymbol{f}(\boldsymbol{x}, \boldsymbol{\theta}), \boldsymbol{f}^*(\boldsymbol{x})\right)^{\top} \nabla_{\boldsymbol{\theta}} \boldsymbol{f}_{\boldsymbol{\theta}}(\boldsymbol{x}),\\
  & =\sum_{i=1}^{d^2} \mathbb{E}_S \partial_i \ell\left(\boldsymbol{f}_{\boldsymbol{\theta}}, \boldsymbol{f}^*\right) \nabla_{\boldsymbol{\theta}}\left(\boldsymbol{f}_{\boldsymbol{\theta}}\right)_i,\\
  & = \sum_{i=1}^{d^2} \mathbb{E}_S (\boldsymbol{f}_{\boldsymbol{\theta}} - \boldsymbol{f}^*)_i \nabla_{\boldsymbol{\theta}}\left(\boldsymbol{f}_{\boldsymbol{\theta}}\right)_i,
  \end{align*}
  where $\partial_i \ell\left(\boldsymbol{f}_{\boldsymbol{\theta}}, \boldsymbol{f}^*\right)$ is the $i$-th element of $\nabla \ell\left(\boldsymbol{f}(\boldsymbol{x}, \boldsymbol{\theta}), \boldsymbol{f}^*(\boldsymbol{x})\right)$, and $\left(\boldsymbol{f}_{\boldsymbol{\theta}}\right)_i$ is the $i$-th element of the vectorization of $\boldsymbol{f}_{\boldsymbol{\theta}}$. 

For the Hessian matrix $\boldsymbol{H}_S(\boldsymbol{\theta})$, we have
  \begin{align*}
  \boldsymbol{H}(\boldsymbol{\theta}) & :=\nabla_{\boldsymbol{\theta}} \nabla_{\boldsymbol{\theta}} R_S(\boldsymbol{\theta})=\sum_{i=1}^{d^2} \mathbb{E}_S \nabla_{\boldsymbol{\theta}}\left(\partial_i \ell\left(\boldsymbol{f}_{\boldsymbol{\theta}}, \boldsymbol{f}^*\right)\right) \nabla_{\boldsymbol{\theta}}\left(\boldsymbol{f}_{\boldsymbol{\theta}}\right)_i+\sum_{i=1}^{d^2} \mathbb{E}_S \partial_i \ell\left(\boldsymbol{f}_{\boldsymbol{\theta}}, \boldsymbol{f}^*\right) \nabla_{\boldsymbol{\theta}} \nabla_{\boldsymbol{\theta}}\left(\left(\boldsymbol{f}_{\boldsymbol{\theta}}\right)_i\right) \\
  & =\sum_{i, j=1}^{d^2} \mathbb{E}_S \partial_{i j} \ell\left(\boldsymbol{f}_{\boldsymbol{\theta}}, \boldsymbol{f}^*\right) \nabla_{\boldsymbol{\theta}}\left(\boldsymbol{f}_{\boldsymbol{\theta}}\right)_i\left(\nabla_{\boldsymbol{\theta}}\left(\boldsymbol{f}_{\boldsymbol{\theta}}\right)_j\right)^{\top}+\sum_{i=1}^{d^2} \mathbb{E}_S \partial_i \ell\left(\boldsymbol{f}_{\boldsymbol{\theta}}, \boldsymbol{f}^*\right) \nabla_{\boldsymbol{\theta}} \nabla_{\boldsymbol{\theta}}\left(\left(\boldsymbol{f}_{\boldsymbol{\theta}}\right)_i\right) ,\\
  & = \sum_{i, j=1}^{d^2}\nabla_{\boldsymbol{\theta}}\left(\boldsymbol{f}_{\boldsymbol{\theta}}\right)_i\left(\nabla_{\boldsymbol{\theta}}\left(\boldsymbol{f}_{\boldsymbol{\theta}}\right)_j\right)^{\top} + \sum_{i=1}^{d^2} \mathbb{E}_S(\boldsymbol{f_{\theta}}-\boldsymbol{f^*})_i\nabla_{\boldsymbol{\theta}} \nabla_{\boldsymbol{\theta}}\left(\left(\boldsymbol{f}_{\boldsymbol{\theta}}\right)_i\right),
  \end{align*}
  where $\partial_{i j} \ell\left(\boldsymbol{f}_{\boldsymbol{\theta}}, \boldsymbol{f}^*\right)$ is the $(i, j)$-th element of $\nabla \nabla \ell\left(\boldsymbol{f}(\boldsymbol{x}, \boldsymbol{\theta}), \boldsymbol{f}^*(\boldsymbol{x})\right)$.

We define matrices $\boldsymbol{H}^{(1)}(\boldsymbol{\theta})$ and $\boldsymbol{H}^{(2)}(\boldsymbol{\theta})$ as follows:
\begin{align*}
& \boldsymbol{H}^{(1)}(\boldsymbol{\theta}):=\sum_{i, j=1}^{d^2}\nabla_{\boldsymbol{\theta}}\left(\boldsymbol{f}_{\boldsymbol{\theta}}\right)_i\left(\nabla_{\boldsymbol{\theta}}\left(\boldsymbol{f}_{\boldsymbol{\theta}}\right)_j\right)^{\top}, \\
& \boldsymbol{H}^{(2)}(\boldsymbol{\theta}):=\sum_{i=1}^{d^2} \mathbb{E}_S(\boldsymbol{f_{\theta}}-\boldsymbol{f^*})_i\nabla_{\boldsymbol{\theta}} \nabla_{\boldsymbol{\theta}}\left(\left(\boldsymbol{f}_{\boldsymbol{\theta}}\right)_i\right),
\end{align*}

We further denote that $\boldsymbol{H}(\boldsymbol{\theta}):=\boldsymbol{H}^{(1)}(\boldsymbol{\theta})+\boldsymbol{H}^{(2)}(\boldsymbol{\theta})$.

For matrix factorization model, the eigenvectors of $\vH^{(2)}$ has a special structure, as characterized by Lem.~\ref{oldlem:eigenvectors_H2}.

\begin{oldlemma}[\textbf{Data-Independent Interleaved Structure of Eigenvectors of $\vH^{(2)}$}]
Let $\vtheta_c = (\vA_c, \vB_c)$ be any critical point of the matrix factorization model. If $\lambda$ is an eigenvalue of $\vH^{(2)}(\vtheta_c)\in \sR^{2d^2\times 2d^2}$, then there exist at least $d$ eigenvectors associated with $\lambda$. These $d$ eigenvectors take the form $\vv \otimes \ve_1, \vv \otimes \ve_2, \cdots, \vv \otimes \ve_d\in \sR^{2d^2}$, where $\vv\in \sR^{2d}$ is a vector to be determined and $\ve_i$ is the unit vector representing the $i$-th column of the identity matrix $\vI_d\in \sR^{d\times d}$.
\label{oldlem:eigenvectors_H2}
\end{oldlemma}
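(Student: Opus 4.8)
The plan is to exhibit a fixed (orthogonal) reordering of the $2d^2$ coordinates of $\vtheta$ under which $\vH^{(2)}(\vtheta_c)$ becomes block diagonal with $d$ \emph{identical} $2d\times 2d$ blocks; once this is done the lemma is read off block by block. Throughout write $\delta\vM=(\vA_c\vB_c-\vM)_{S_{\vx}}$ for the residual matrix at the critical point and $\vG:=\begin{bmatrix}\vzero & \delta\vM\\ \delta\vM^{\top} & \vzero\end{bmatrix}\in\sR^{2d\times 2d}$.

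First I would recall the quadratic form attached to $\vH^{(2)}$. By its definition $\vH^{(2)}(\vtheta_c)=c\sum_{(i,j)\in S_{\vx}}\delta\vM_{ij}\,\nabla^2_{\vtheta}(\vA\vB)_{ij}$ for a scalar $c$ fixed by the normalization of $R_S$, and equivalently (this piece already appears in the expansion used to prove Thm.~\ref{oldthm:global_convergence}, Eq.~\eqref{eq:loss_expansion}) its value on a perturbation $(\veps,\veta)$ of $(\vA,\vB)$ is proportional to $\langle\delta\vM,\veps\veta\rangle_{S_{\vx}}$. The decisive algebraic fact is that expanding the product gives $\langle\delta\vM,\veps\veta\rangle_{S_{\vx}}=\sum_{(i,j)\in S_{\vx}}\delta\vM_{ij}\sum_{k=1}^{d}\veps_{ik}\veta_{kj}=\sum_{k=1}^{d}(\veps_{\cdot,k})^{\top}\,\delta\vM\,(\veta_{k,\cdot})^{\top}$, so the $k$-th summand involves only the $k$-th column of $\veps$ and the $k$-th row of $\veta$. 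This decoupling is exactly the statement that $(\vA\vB)_{ij}=\sum_k\vA_{ik}\vB_{kj}$ is bilinear, so $\nabla^2_{\vtheta}(\vA\vB)_{ij}$ carries no $\vA$-$\vA$ or $\vB$-$\vB$ entries and couples $\vA_{ik}$ only with $\vB_{kj}$ of the same middle index $k$.

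Next I would make the reordering explicit: for $k=1,\dots,d$ let $\vu_k\in\sR^{2d}$ be the $k$-th column of $\vA$ stacked on the transpose of the $k$-th row of $\vB$ (that is, the $k$-th column of $\vW_{\mathrm{aug}}$), so that $\vtheta$ is recovered from $(\vu_1,\dots,\vu_d)$ by a fixed permutation of entries. If $\delta\vu_k$ denotes the induced perturbation of $\vu_k$, the last display reads $\langle\delta\vM,\veps\veta\rangle_{S_{\vx}}=\tfrac12\sum_{k=1}^{d}\delta\vu_k^{\top}\vG\,\delta\vu_k$, i.e.\ in the $(\vu_1,\dots,\vu_d)$ coordinates $\vH^{(2)}(\vtheta_c)=c\,(\vI_d\otimes\vG)$. (Equivalently and directly: in these coordinates $\nabla^2_{\vtheta}(\vA\vB)_{ij}$ has, in each of the $d$ diagonal blocks, a $1$ in positions $(i,d+j)$ and $(d+j,i)$ and zeros elsewhere, so summing against $\delta\vM_{ij}$ over $S_{\vx}$ turns every diagonal block into $\vG$ and kills all off-diagonal blocks.)

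The conclusion is then immediate from the tensor form. If $\lambda$ is an eigenvalue of $\vH^{(2)}(\vtheta_c)$ then $\lambda=c\mu$ for some eigenvalue $\mu$ of $\vG$ with an eigenvector $\vv\in\sR^{2d}$ ($\vG\vv=\mu\vv$); for each $k=1,\dots,d$, the vector that equals $\vv$ in the $k$-th coordinate block and $\vzero$ in the others is mapped by $\vH^{(2)}(\vtheta_c)$ to $\lambda$ times itself, and these $d$ vectors are linearly independent. In the vectorization convention of the statement these are precisely $\vv\otimes\ve_1,\dots,\vv\otimes\ve_d$, and their interleaved form is data-independent since only $\vv$ (through $\vG$, hence $\delta\vM$) carries any dependence on $S$. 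I do not expect an analytic obstacle here; the only real work is the bookkeeping — committing to one vectorization convention and verifying that the permutation $\vtheta\mapsto(\vu_1,\dots,\vu_d)$ conjugates $\vH^{(2)}(\vtheta_c)$ into $c\,(\vI_d\otimes\vG)$.
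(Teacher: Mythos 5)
Your proposal is correct and, at bottom, it is the same argument the paper uses: both identify $\vH^{(2)}(\vtheta_c)$ as (a permutation conjugate of) a Kronecker product $\begin{bmatrix}\vzero&\delta\vM\\\delta\vM^{\top}&\vzero\end{bmatrix}\otimes\vI_d$ and then read the interleaved eigenvectors off that tensor structure. The only difference is presentational: the paper asserts the block form $\vH^{(2)}=\begin{bmatrix}\vzero&-\delta\vM\otimes\vI_d\\-\delta\vM^{\top}\otimes\vI_d&\vzero\end{bmatrix}$ ``by direct calculation'' and then multiplies out $(\vH^{(2)}-\lambda\vI_{2d^2})(\vv\otimes\ve_i)$, using a determinant identity to get a nonzero $\vv$, whereas you derive the same structure from the quadratic form $\langle\delta\vM,\veps\veta\rangle_{S_{\vx}}=\sum_{k}(\veps_{\cdot,k})^{\top}\delta\vM(\veta_{k,\cdot})^{\top}$ and the decoupling by the middle summation index, and you handle the existence of $\vv$ by the standard fact that the spectrum of $\vI_d\otimes\vG$ is that of $\vG$. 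Your route makes the origin of the interleaving more transparent, but it is not a genuinely different proof.
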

\begin{proof}
Let's denote the residual matrix at the critical point as $\delta \vM = (\vA_c \vB_c - \vM)_{S_{\vx}}$, where $(\vA_c, \vB_c)$ is a critical point. For the vectorized parameter $\vtheta_c$, by direct calculation the matrix $\vH^{(2)} := -\nabla^2 R_S(\vtheta_c)$ can be formulated as a block matrix, with the diagonal blocks being $0$. The specific format is as follows:
\begin{equation}
\vH^{(2)} = \left[\begin{matrix}
\boldsymbol{0} & -\boldsymbol{\delta \vM}\otimes \boldsymbol{I_d}\\
-\boldsymbol{\delta \vM}^\top\otimes \boldsymbol{I_d} & \boldsymbol{0}
\end{matrix}\right].    
\end{equation}
Next, we compute the eigenvectors of $\vH^{(2)}$. Let $\lambda$ be an eigenvalue of $\vH^{(2)}$. We need to verify that $\vv\otimes \ve_1, \vv\otimes \ve_2, \cdots, \vv\otimes \ve_d\in\mathbb{R}^{2d^2}$ are the eigenvectors of $\vH^{(2)}$ corresponding to $\lambda$, for a particular $\vv\in \mathbb{R}^{2d}$ yet to be determined. That is, we need to ensure that for all $1\leq i\leq d$, the equation $(\vH^{(2)}-\lambda \vI_{2d^2})(\vv\otimes \ve_i) = \boldsymbol{0}$ has a non-zero solution for $\vv$. Notice that
\begin{equation}
\begin{aligned}
(\vH^{(2)}-\lambda \vI_{2d^2})(\vv\otimes \ve_i) & = \left[\begin{matrix}
-\lambda\boldsymbol{I}_d\otimes \boldsymbol{I}_d & -\boldsymbol{\delta \vM}\otimes \boldsymbol{I_d}\\
-\boldsymbol{\delta \vM}^\top\otimes \boldsymbol{I_d} & -\lambda\boldsymbol{I}_d\otimes \boldsymbol{I}_d
\end{matrix}\right](\vv\otimes \ve_i)\\
& = \left(\left[\begin{matrix}
-\lambda\boldsymbol{I}_d & -\boldsymbol{\delta \vM}\\
-\boldsymbol{\delta \vM}^\top & -\lambda\boldsymbol{I}_d
\end{matrix}\right]\otimes \boldsymbol{I}_d\right)(\vv\otimes \ve_i)\\
& = \left(\left[\begin{matrix}
-\lambda\boldsymbol{I}_d & -\boldsymbol{\delta \vM}\\
-\boldsymbol{\delta \vM}^\top & -\lambda\boldsymbol{I}_d
\end{matrix}\right]\vv\right)\otimes \ve_i.
\end{aligned}
\label{eq:eigenvectors}
\end{equation}
Since $\lambda$ is an eigenvalue of $\vH^{(2)}$, the determinant of the matrix $\vH^{(2)}-\lambda \vI_{2d^2}$ equals zero. Hence
\begin{equation}
\rdet\left(\left[\begin{matrix}
-\lambda\boldsymbol{I}_d & -\boldsymbol{\delta \vM}\\
-\boldsymbol{\delta \vM}^\top & -\lambda\boldsymbol{I}_d
\end{matrix}\right]\right)^{2d} = \rdet\left(\left[\begin{matrix}
-\lambda\boldsymbol{I}_d & -\boldsymbol{\delta \vM}\\
-\boldsymbol{\delta \vM}^\top & -\lambda\boldsymbol{I}_d
\end{matrix}\right]\otimes \boldsymbol{I}_d\right)=0.    
\end{equation}
Consequently, from Eq.~\eqref{eq:eigenvectors}, we conclude that there always exists a non-zero vector $\vv\in \mathbb{R}^{2d}$ such that $(\vH^{(2)}-\lambda \vI_{2d^2})(\vv\otimes \ve_i)=0$. Since $\boldsymbol{v}\neq \boldsymbol{0}$, it is evident that $\vv\otimes \ve_1, \vv\otimes \ve_2, \cdots, \vv\otimes \ve_d\in\mathbb{R}^{2d^2}$ are linearly independent, and thus they represent $d$ eigenvectors corresponding to $\lambda$.
\end{proof}

\begin{oldproposition}[\textbf{Eigenvectors Structure of $\vH$ at the Origin}]
Consider the dynamics given by Eq.~\eqref{eq:GF3}, where we denote $\vH := -\nabla^2R_S(\vzero)$. If $\lambda$ is an eigenvalue of $\vH\in \sR^{2d^2\times 2d^2}$, then there exist at least $d$ eigenvectors associated with $\lambda$ in $\vH$. These $d$ eigenvectors take the form $\vv \otimes \ve_1, \vv \otimes \ve_2, \cdots, \vv \otimes \ve_d\in \sR^{2d^2}$, where $\vv\in \sR^{2d}$ is a vector to be determined and $\ve_i$ is the unit vector representing the $i$-th column of the identity matrix $\vI_d\in \sR^{d\times d}$.
\label{prop:eigenvectors_origin}
\end{oldproposition}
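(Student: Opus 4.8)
The plan is to obtain Prop.~\ref{prop:eigenvectors_origin} as an immediate corollary of Lemma~\ref{oldlem:eigenvectors_H2}. The only observation needed is that at the origin the ``Gram'' part $\vH^{(1)}$ of the Hessian vanishes, so that $\vH=-\nabla^2 R_S(\vzero)$ coincides with $\vH^{(2)}(\vzero)$, which Lemma~\ref{oldlem:eigenvectors_H2} already equips with the interleaved eigenvector structure $\vv\otimes\ve_i$. Accordingly I do not expect any real obstacle; the single substantive step is showing $\vH^{(1)}(\vzero)=\vzero$, which rests entirely on the bilinearity of the factorization map, and the one point requiring care is to see that the sampling restriction $S_{\vx}$ and the convention $\delta\vM_{ij}=0$ off $S_{\vx}$ feed only into $\vH^{(2)}$ (through $\delta\vM=(\vzero-\vM)_{S_{\vx}}$) and never resurrect a nonzero $\vH^{(1)}$ contribution.

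\emph{Step 1: the origin is a critical point.} This is needed so that Lemma~\ref{oldlem:eigenvectors_H2} may be applied at $\vtheta_c=\vzero$. From the gradient formula~\eqref{oldeq:dynamics_a_b}, the $\va_i$-block of $\nabla R_S(\vtheta)$ is $\tfrac{2}{n}\sum_{j\in I_i}(\va_i\cdot\vb_{\cdot,j}-\vM_{ij})\vb_{\cdot,j}^{\top}$ and the $\vb_{\cdot,j}$-block is $\tfrac{2}{n}\sum_{i\in I_j}(\va_i\cdot\vb_{\cdot,j}-\vM_{ij})\va_{i}^{\top}$; each summand carries a factor $\vb_{\cdot,j}$ (resp.\ $\va_i$), hence vanishes when $\vA=\vB=\vzero$. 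Therefore $\nabla R_S(\vzero)=\vzero$.

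\emph{Step 2: $\vH^{(1)}(\vzero)=\vzero$.} Each entry of $\vf_{\vtheta}=\vA\vB$ is the bilinear form $(\vf_{\vtheta})_{kl}=\va_k\cdot\vb_{\cdot,l}$, whose parameter gradient is $\vb_{\cdot,l}^{\top}$ in the $\va_k$-block, $\va_k^{\top}$ in the $\vb_{\cdot,l}$-block, and $\vzero$ in every other block; this vector is linear in the entries of $(\vA,\vB)$ and so equals $\vzero$ at $\vtheta=\vzero$. Since $\vH^{(1)}(\vtheta)=\sum_{i,j}\nabla_{\vtheta}(\vf_{\vtheta})_i(\nabla_{\vtheta}(\vf_{\vtheta})_j)^{\top}$ (see Appendix~\ref{app:A-eigenvalue}), every summand is the zero matrix at the origin, so $\vH^{(1)}(\vzero)=\vzero$ and consequently $\vH(\vzero)=\vH^{(1)}(\vzero)+\vH^{(2)}(\vzero)=\vH^{(2)}(\vzero)$.

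\emph{Step 3: conclude via Lemma~\ref{oldlem:eigenvectors_H2}.} At $\vtheta_c=\vzero$ the residual matrix is $\delta\vM=(\vzero-\vM)_{S_{\vx}}$, and by the Kronecker computation in the proof of Lemma~\ref{oldlem:eigenvectors_H2},
\[
\vH(\vzero)=\vH^{(2)}(\vzero)=\begin{bmatrix}\vzero & -\delta\vM\otimes\vI_d\\ -\delta\vM^{\top}\otimes\vI_d & \vzero\end{bmatrix}.
\]
Lemma~\ref{oldlem:eigenvectors_H2} then asserts that every eigenvalue $\lambda$ of $\vH^{(2)}(\vzero)$ admits at least $d$ eigenvectors of the form $\vv\otimes\ve_1,\dots,\vv\otimes\ve_d$ for a common $\vv\in\sR^{2d}$ solving the reduced eigenproblem for $\begin{bmatrix}-\lambda\vI_d & -\delta\vM\\ -\delta\vM^{\top} & -\lambda\vI_d\end{bmatrix}$. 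Since $\vH(\vzero)=\vH^{(2)}(\vzero)$, the identical statement holds for $\vH$, which is exactly the claim of the proposition.
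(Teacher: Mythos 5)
Your proof takes essentially the same route as the paper's: establish that $\vH^{(1)}(\vzero)=\vzero$ because $\nabla_{\vtheta}\vf_{\vtheta}$ vanishes at the origin, so $\vH$ reduces to $\vH^{(2)}(\vzero)$, and then invoke Lemma~\ref{oldlem:eigenvectors_H2}. You are in fact a bit more careful than the paper in explicitly checking that $\vzero$ is a critical point (which Lemma~\ref{oldlem:eigenvectors_H2} requires), but the argument is otherwise identical.
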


\begin{proof}
In the matrix factorization model, at the origin the gradient $\nabla_{\boldsymbol{\theta}}\left(\boldsymbol{f}_{\boldsymbol{\theta}}\right) = \vzero$ and thus  $\vH^{(1)}(\vzero)=0$ and the Hessian matrix reduces to $\vH^{(2)}(\vzero)$, making $\vH = -\nabla^2R_S(\vzero) =  -\vH^{(2)}(\vzero)$. 

Let's denote the residual matrix at the origin as $\delta \vM = (\vA_c \vB_c - \vM)_{S_{\vx}}$, where at the origin $(\vA_c, \vB_c) = (\vzero, \vzero)$. For the vectorized parameter $\vtheta_c$, the matrix $\vH := -\nabla^2 R_S(\vzero)$ can be formulated as a block matrix, with the diagonal blocks being $0$. The specific format is as follows:
\begin{equation}
\vH = \vH^{(2)} = \left[\begin{matrix}
\boldsymbol{0} & -\boldsymbol{\delta \vM}\otimes \boldsymbol{I_d}\\
-\boldsymbol{\delta \vM}^\top\otimes \boldsymbol{I_d} & \boldsymbol{0}
\end{matrix}\right].    
\end{equation}
By Lem.~\ref{oldlem:eigenvectors_H2}, the proof is completed.
\end{proof}

\begin{oldlemma}[\textbf{Eigenvectors Structure of $\vH$ at Second-order Stationary Point}]
Let $\vOmega$ denote an $\vOmega_k$ invariant manifold or sub-$\vOmega_k$ invariant manifold defined in Prop.~\ref{oldprop:invariance1} and \ref{oldprop:invariance2}, and consider a second-order stationary point $\boldsymbol{\theta}_c$ within $\vOmega$,  i.e., $\nabla R_S(\vtheta_c)=0$ and $\vtheta^\top \nabla^2 R_S(\vtheta_c)\vtheta \geq 0$ for all $\vtheta \in \vOmega$. Then, the eigenvectors corresponding to the negative eigenvalues of the Hessian matrix $\boldsymbol{H}(\boldsymbol{\theta}_c)$ are contained within the span of the eigenvectors corresponding to the negative eigenvalues of $\boldsymbol{H}^{(2)}(\boldsymbol{\theta}_c)$.
\label{oldlem:eigenvector_structures_H}
\end{oldlemma}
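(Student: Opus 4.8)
The plan is to split the parameter space $\sR^{2d^2}=P\oplus N$ orthogonally, with $P=T_{\vtheta_c}\vOmega$ the tangent space of $\vOmega$ at $\vtheta_c$ and $N=P^{\perp}$, to show that \emph{both} $\vH^{(1)}(\vtheta_c)$ and $\vH^{(2)}(\vtheta_c)$ are block diagonal for this splitting, and then to read off the negative spectrum of $\vH(\vtheta_c)=\vH^{(1)}(\vtheta_c)+\vH^{(2)}(\vtheta_c)$ block by block. Since $R_S$ depends only on $\vA\vB$ and the manifolds of Prop.~\ref{oldprop:invariance1}--\ref{oldprop:invariance2} are equivariant under a linear change of basis of $\sR^d$, I first pass to a basis in which $\rspan\{\valpha_1,\dots,\valpha_k\}=\rspan\{\ve_1,\dots,\ve_k\}$. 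Then at $\vtheta_c=(\vA_c,\vB_c)$ the columns $k+1,\dots,d$ of $\vA_c$ and the rows $k+1,\dots,d$ of $\vB_c$ vanish, and in the sub-$\vOmega_k$ case $\vomega_k^{L_p}$ additionally the rows of $\vA_c$ outside $R_p$ and the columns of $\vB_c$ outside $C_p$ vanish. The defining (open) rank conditions of $\vOmega$ hold at $\vtheta_c$, so near $\vtheta_c$ the set $\vOmega$ is an open subset of the coordinate subspace spanned by the surviving parameter entries; in particular $P=T_{\vtheta_c}\vOmega$ is exactly that coordinate subspace, and $N$ is spanned by the remaining (``frozen'') coordinate directions.

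The two key inputs are: (a) every model gradient $\nabla_{\vtheta}(\vA\vB)_{ij}|_{\vtheta_c}$ lies in $P$, and (b) $\vH^{(2)}(\vtheta_c)$ maps $P$ into $P$. For (a): $\partial(\vA\vB)_{ij}/\partial\vA_{pq}=\delta_{ip}(\vB_c)_{qj}$ and $\partial(\vA\vB)_{ij}/\partial\vB_{pq}=(\vA_c)_{ip}\delta_{jq}$; using the support of $\vA_c,\vB_c$ above together with the disjointness of the row/column index sets of distinct connected components, one checks that for $(i,j)\in S_{\vx}$ these gradients either vanish identically (when $(i,j)$ belongs to a ``foreign'' component) or are supported on surviving coordinates, hence lie in $P$. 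Then $\vH^{(1)}(\vtheta_c)=\frac{2}{n}\sum_{(i,j)\in S_{\vx}}\nabla_{\vtheta}(\vA\vB)_{ij}\,\nabla_{\vtheta}(\vA\vB)_{ij}^{\top}$ is a sum of outer products of vectors of $P$, so (being also $\succeq 0$) it is block diagonal for $P\oplus N$ with vanishing $N$-block. For (b): by Lem.~\ref{oldlem:eigenvectors_H2}, $\vH^{(2)}(\vtheta_c)$ couples the $\vA$-coordinate $(a,b)$ to the $\vB$-coordinate $(c,d)$ only when $b=c$, with weight $\propto\delta\vM_{ad}$ where $\delta\vM=(\vA_c\vB_c-\vM)_{S_{\vx}}$; since $\delta\vM$ is supported on $S_{\vx}$ (so $\delta\vM_{ad}\neq 0$ forces $(a,d)\in S_{\vx}$, hence e.g. $a\in R_p\Rightarrow d\in C_p$), this coupling preserves $P$, and therefore also $N=P^{\perp}$.

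Combining, $\vH(\vtheta_c)$ is block diagonal for $P\oplus N$: its $P$-block is $\vH^{(1)}(\vtheta_c)|_P+\vH^{(2)}(\vtheta_c)|_P$ and its $N$-block is exactly $\vH^{(2)}(\vtheta_c)|_N$. By Assump.~\ref{assumption:second-order}, $\vH(\vtheta_c)\succeq 0$ on $T_{\vtheta_c}\vOmega=P$, so the $P$-block has no negative eigenvalue. Hence every negative eigenvalue of $\vH(\vtheta_c)$ is a negative eigenvalue of the $N$-block $\vH^{(2)}(\vtheta_c)|_N$, with eigenvector lying in $N$; and since $\vH^{(2)}(\vtheta_c)$ is itself block diagonal for $P\oplus N$, such a vector is also an eigenvector of $\vH^{(2)}(\vtheta_c)$ with a negative eigenvalue. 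This is precisely the claimed inclusion.

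I expect the main obstacle to be the bookkeeping in steps (a)--(b): one must pin down exactly which parameter coordinates are tangent to $\vOmega$ (respectively to $\vomega_k^{L_p}$) at $\vtheta_c$, and then verify, component by component, that both the model gradients and the $\delta\vM$-coupling respect the resulting coordinate split---this is where the combinatorics of the observation pattern, in particular the disjointness of the index sets of distinct connected components, is essential. The remainder is routine: simultaneous block-diagonalization together with $\vH^{(1)}(\vtheta_c)\succeq 0$, and reading the negative spectrum off the blocks.
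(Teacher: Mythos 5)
Your proof is correct and reaches the same conclusion by the same overall scheme as the paper: decompose $\sR^{2d^2}$ into the tangent directions $P$ of $\vOmega$ at $\vtheta_c$ and their orthogonal complement $N$, observe that $\vH(\vtheta_c)$ is block diagonal for $P\oplus N$ with the $P$-block $\succeq 0$ by Assump.~\ref{assumption:second-order}, so negative eigenvectors live in $N$; then use that the model gradients $\nabla_{\vtheta}(\vA\vB)_{ij}|_{\vtheta_c}$ lie in $P$ to conclude $\vH^{(1)}(\vtheta_c)\vv=0$ for $\vv\in N$, hence $\vH(\vtheta_c)\vv=\vH^{(2)}(\vtheta_c)\vv$. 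The one genuine difference is how you justify the block diagonality of $\vH(\vtheta_c)$. The paper derives it in one line from the gradient-flow invariance of $\vOmega$ (so $-\nabla^2 R_S(\vtheta_c)$ maps $T_{\vtheta_c}\vOmega$ to itself, forcing the off-diagonal block to vanish, and then symmetry of $\vH$ closes it), whereas you prove block diagonality of $\vH^{(1)}$ and $\vH^{(2)}$ separately and add. For $\vH^{(1)}$ this is the same calculation the paper also makes (outer products of tangent vectors), but for $\vH^{(2)}$ you add a separate combinatorial argument via the $\delta_{bc}\,\delta\vM_{ad}$ coupling pattern and the disjointness of component index sets. That extra check is not needed once $\vH$ is known block diagonal and $\vH^{(1)}|_N=0$ (the paper stops there), so your route is slightly longer; on the other hand it is more explicit and self-contained, does not invoke the invariance result Prop.~\ref{oldprop:invariance1}/\ref{oldprop:invariance2} as a black box, and along the way fixes two small imprecisions in the paper's exposition (you make explicit the passage to a basis in which $\rspan\{\valpha_i\}=\rspan\{\ve_1,\dots,\ve_k\}$ and why $T_{\vtheta_c}\vOmega$ is the relevant coordinate subspace despite $\vOmega$ being defined by an open rank condition, and you write $\vH^{(1)}$ with the correct single-index sum rather than the paper's double sum).
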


\begin{proof}
Recall the definitions of $\boldsymbol{H}^{(1)}(\boldsymbol{\theta})$ and $\boldsymbol{H}^{(2)}(\boldsymbol{\theta})$ given by:
\begin{align*}
\boldsymbol{H}^{(1)}(\boldsymbol{\theta}) &:= \sum_{i, j=1}^{d^2} \nabla_{\boldsymbol{\theta}} \left(\boldsymbol{f}_{\boldsymbol{\theta}}\right)_i \left(\nabla_{\boldsymbol{\theta}} \left(\boldsymbol{f}_{\boldsymbol{\theta}}\right)_j\right)^{\top}, \\
\boldsymbol{H}^{(2)}(\boldsymbol{\theta}) &:= \sum_{i=1}^{d^2} \mathbb{E}_S \left[\left(\boldsymbol{f}_{\boldsymbol{\theta}} - \boldsymbol{f}^*\right)_i \nabla_{\boldsymbol{\theta}}^2 \left(\boldsymbol{f}_{\boldsymbol{\theta}}\right)_i\right].
\end{align*}
The Hessian matrix $\boldsymbol{H}(\boldsymbol{\theta})$ at $\boldsymbol{\theta}$ is $\boldsymbol{H}(\boldsymbol{\theta}) := \boldsymbol{H}^{(1)}(\boldsymbol{\theta}) + \boldsymbol{H}^{(2)}(\boldsymbol{\theta})$.

The manifold $\vOmega$ is an affine subspace with orthogonal complement denoted by $\vOmega^\perp$. Let $\vH$ have the following block representation in the bases of $\vOmega$ and $\vOmega^\perp$:
\begin{equation}
    \vH = \begin{bmatrix}
        \vH_{11} & \vH_{12} \\
        \vH_{21} & \vH_{22}
    \end{bmatrix}.
\end{equation}

Since $\vOmega$ is an invariant subspace under the gradient flow, we have $\vH\vtheta\in \vOmega$ for all $\vtheta\in \vOmega$, which implies that $\vH_{12}=\vzero$. Since $\vH$ is symmetry, we have $\vH_{21}=\vzero$.

Let $\lambda < 0$ be a negative eigenvalue of $\vH := \boldsymbol{H}(\boldsymbol{\theta}_c)$ with $\boldsymbol{v}$ as the corresponding eigenvector. Since $\vH_{11}$ is positive semi-definite, $\vv$ must lie in $\vOmega^\perp$. 

At a critical point $\vtheta_c = (\vA_c, \vB_c)$, by direct calculation, the gradient $\nabla_{\boldsymbol{\theta}}\boldsymbol{f}_{\boldsymbol{\theta}_c}$ can be structured as:
\begin{equation}
\nabla_{\vtheta}\boldsymbol{f}_{\vtheta_c}=\left[\begin{array}{cccc}
\boldsymbol{B}_c & & & \\
& \boldsymbol{B}_c & & \\
& & \ddots & \\
a_{11} \boldsymbol{I} & a_{21} \boldsymbol{I} & \cdots & a_{d 1} \boldsymbol{I} \\
a_{12} \boldsymbol{I} & a_{22} \boldsymbol{I} & \cdots & a_{d 2} \boldsymbol{I} \\
\vdots & \vdots & \ddots & \vdots \\
a_{1 d} \boldsymbol{I} & a_{2 d} \boldsymbol{I} & \cdots & a_{d d} \boldsymbol{I}
\end{array}\right]=\left[\begin{array}{c}
\boldsymbol{I} \otimes \boldsymbol{B}_c \\
\boldsymbol{A}_c^\top \otimes \boldsymbol{I}
\end{array}\right]_{2d^2\times d^2},    
\end{equation}
where $\otimes$ denotes the Kronecker product.

Note that $\nabla_{\boldsymbol{\theta}}\left(\boldsymbol{f}_{\boldsymbol{\theta}^*}\right)_j$ is the $j$-th column of matrix $\nabla_{\vtheta}f_{\vtheta_c}$ and it falls precisely within the defined $\vOmega_k$ invariant manifold or sub-$\vOmega_k$ invariant manifold $\vOmega$. Therefore, we have:
\begin{equation}
\left(\nabla_{\boldsymbol{\theta}}\left(\boldsymbol{f}_{\boldsymbol{\theta}_c}\right)_j\right)^{\top} \boldsymbol{v} = 0 \quad \forall 1 \leq j \leq d^2,
\end{equation}
which implies that $\boldsymbol{v}$ is orthogonal to the image of $\nabla_{\boldsymbol{\theta}}\left(\boldsymbol{f}_{\boldsymbol{\theta}_c}\right)$, placing it in the null space of $\boldsymbol{H}^{(1)}(\boldsymbol{\theta}_c)$.

As a result, we have:
\[
\boldsymbol{H}(\boldsymbol{\theta}_c) \boldsymbol{v} = \left(\boldsymbol{H}^{(1)}(\boldsymbol{\theta}_c) + \boldsymbol{H}^{(2)}(\boldsymbol{\theta}_c)\right)\boldsymbol{v} = \boldsymbol{H}^{(2)}(\boldsymbol{\theta}_c)\boldsymbol{v} = \lambda \boldsymbol{v}.
\]

Thus, the eigenvector $\boldsymbol{v}$ of the Hessian $\boldsymbol{H}(\boldsymbol{\theta}_c)$, corresponding to the negative eigenvalue $\lambda$, is also an eigenvector of $\boldsymbol{H}^{(2)}(\boldsymbol{\theta}_c)$, confirming that $\boldsymbol{v}$ is within the span of the eigenvectors of $\boldsymbol{H}^{(2)}(\boldsymbol{\theta}_c)$.
\end{proof}

\subsection{Transition to the Next Rank-level Invariant Manifold}

% By definition, the  invariant manifold $\vOmega_k$ is given by:

% \begin{equation*}
% \vOmega_k := \{\vtheta = (\mA, \mB) \mid \rrank(\vA) = \rrank(\vB^{\top}) = \rrank(\vW_{\mathrm{aug}}) = k\}.
% \end{equation*}

% For any $\vtheta \in \vOmega_k$, there always exist $k$ linearly independent vectors $\boldsymbol{\alpha}_1, \ldots, \boldsymbol{\alpha}_k \in \sR^d$ such that $\boldsymbol{a}_i, \boldsymbol{b}_{\cdot, j} \in \mathrm{span}\{\boldsymbol{\alpha}_1, \ldots, \boldsymbol{\alpha}_k\}$ for all $1 \leq i, j \leq d$. For convenience, we will use $\vOmega_k(\boldsymbol{\alpha}_1, \ldots, \boldsymbol{\alpha}_k)$ to represent the specific invariant manifold where the parameters are located, i.e.,

% \begin{equation*}
% \vOmega_k(\boldsymbol{\alpha}_1, \ldots, \boldsymbol{\alpha}_k) = \{\vtheta = (\mA, \mB) \mid \boldsymbol{a}_i, \boldsymbol{b}_{\cdot, j} \in \mathrm{span}\{\boldsymbol{\alpha}_1, \ldots, \boldsymbol{\alpha}_k\}, \forall 1 \leq i, j \leq d\}.
% \end{equation*}

\begin{oldproposition}
The linear combination of the eigenvectors of $\boldsymbol{H}^{(2)}$: $c_1(\boldsymbol{v} \otimes \boldsymbol{e}_1) + c_2 (\boldsymbol{v} \otimes \boldsymbol{e}_2) + \cdots + c_d (\boldsymbol{v} \otimes \boldsymbol{e}_d)$ falls within the invariant manifold $\vOmega_1(\boldsymbol{c})$, where $\boldsymbol{c} = (c_1, c_2, \cdots, c_d)^\top$. 
\label{oldprop:escaping_result}
\end{oldproposition}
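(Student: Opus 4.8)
The plan is to prove this by unwinding the Kronecker-product structure of the eigenvectors $\vv\otimes\ve_i$ and matching the resulting parameter matrices directly against the definition of $\vOmega_1(\boldsymbol{c})$ in Prop.~\ref{oldprop:invariance1}. First I would split the ambient parameter space $\sR^{2d^2}=\sR^{d^2}\oplus\sR^{d^2}$ into its $\vA$-block and $\vB$-block according to the vectorization fixed in Appendix~\ref{app:A-eigenvalue} (the one under which the cross block of $\vH^{(2)}$ is $-\delta\vM\otimes\vI_d$), and write the shared factor as $\vv=\begin{bmatrix}\vv_A\\ \vv_B\end{bmatrix}$ with $\vv_A,\vv_B\in\sR^d$. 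Under this splitting $\vv\otimes\ve_i$ decomposes as the pair of block vectors $(\vv_A\otimes\ve_i,\ \vv_B\otimes\ve_i)$.

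Next I would identify each block vector with the matrix it represents. A short computation with the vectorization convention of Appendix~\ref{app:A-eigenvalue} shows that the $\vA$-block vector $\vv_A\otimes\ve_i$ is the rank-one matrix supported on its $i$-th column (equal to $\vv_A$ there), while the $\vB$-block vector $\vv_B\otimes\ve_i$ is the rank-one matrix supported on its $i$-th row (equal to $\vv_B^\top$ there). Taking the linear combination with coefficients $c_i$ and using linearity of the vectorization together with $\sum_{i=1}^d c_i\ve_i=\boldsymbol{c}$, the parameter represented by $\sum_{i=1}^d c_i(\vv\otimes\ve_i)$ collapses to the single pair
\begin{equation}
\Bigl(\ \vv_A\,\boldsymbol{c}^\top\ ,\ \boldsymbol{c}\,\vv_B^\top\ \Bigr).
\end{equation}
Since every row of $\vv_A\,\boldsymbol{c}^\top$ is a multiple of $\boldsymbol{c}^\top$ and every column of $\boldsymbol{c}\,\vv_B^\top$ is a multiple of $\boldsymbol{c}$, we obtain $\rrow(\vv_A\,\boldsymbol{c}^\top)=\rcol(\boldsymbol{c}\,\vv_B^\top)=\rspan\{\boldsymbol{c}\}$, so this pair lies in $\vOmega_1(\boldsymbol{c})=\{(\vA,\vB)\mid\rrow(\vA)=\rcol(\vB)=\rspan\{\boldsymbol{c}\}\}$, which is the claim.

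The only points requiring a word beyond the bookkeeping are the non-degeneracy conditions that make these spans genuinely one-dimensional, namely $\boldsymbol{c}\neq\vzero$ and $\vv_A,\vv_B\neq\vzero$. The first holds by the standing hypothesis that some $c_j\neq0$. The second follows from the eigenvector equation $\begin{bmatrix}-\lambda\vI_d & -\delta\vM\\ -\delta\vM^\top & -\lambda\vI_d\end{bmatrix}\vv=\vzero$ established in the proof of Lem.~\ref{oldlem:eigenvectors_H2}: for the top eigenvalue $\lambda=\lambda_1>0$ it forces $\vv_A=-\delta\vM\,\vv_B/\lambda_1$ and $\vv_B=-\delta\vM^\top\vv_A/\lambda_1$, so if either $\vv_A$ or $\vv_B$ vanished the other would too, contradicting $\vv\neq\vzero$. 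I expect the only genuine obstacle to be keeping the vectorization and Kronecker conventions exactly consistent with those used earlier in Appendix~\ref{app:A-eigenvalue}; once the correspondence between $\vv\otimes\ve_i$ and its matrix form is pinned down, the statement reduces to the one-line outer-product identity $\sum_i c_i\ve_i=\boldsymbol{c}$.
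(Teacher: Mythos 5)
Your proof is correct and takes essentially the same route as the paper's: both collapse $\sum_i c_i(\vv\otimes\ve_i)$ to $\vv\otimes\vc$ via the Kronecker identity and then match this against the parametric form of a point in $\vOmega_1(\vc)$. You are somewhat more careful than the paper — you explicitly unwind the $\vA$/$\vB$ block structure to obtain $(\vv_A\vc^\top,\ \vc\vv_B^\top)$ and verify the non-degeneracy conditions ($\vc\neq\vzero$ and $\vv_A,\vv_B\neq\vzero$) needed for $\rrow(\vA)$ and $\rcol(\vB)$ to equal $\rspan\{\vc\}$ rather than merely lie inside it, a point the paper's two-line proof passes over.
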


\begin{proof}
Notice that
\begin{equation}
\begin{aligned}
& c_1 (\boldsymbol{v}\otimes \boldsymbol{e}_1) + c_2 (\boldsymbol{v}\otimes \boldsymbol{e}_2) + \cdots + c_d (\boldsymbol{v}\otimes \boldsymbol{e}_d)\\
& = \boldsymbol{v}\otimes [c_1 \boldsymbol{e}_1 + c_2 \boldsymbol{e}_2 + \cdots + c_d \boldsymbol{e}_d]\\
& = \boldsymbol{v}\otimes \boldsymbol{c}.
\end{aligned}    
\end{equation}  

By Definition~\ref{oldprop:invariance1}, the data-independent invariant manifold generated by $\boldsymbol{c}$ is $\vOmega_1(\boldsymbol{c})=\{(\boldsymbol{A}, \boldsymbol{B})|\boldsymbol{a}_i, \boldsymbol{b}_j \in \mathrm{span}\{\boldsymbol{c}\}, \forall 1\leq i, j \leq d\}$. If $\boldsymbol{\theta}=(\boldsymbol{A}, \boldsymbol{B})\in \vOmega_1(\boldsymbol{c})$, then $\boldsymbol{A}, \boldsymbol{B}$ must take the form
\begin{equation}
\boldsymbol{A} = \left[\begin{matrix}
\beta_1 \boldsymbol{c}\\
\beta_2 \boldsymbol{c}\\
\vdots\\
\beta_d \boldsymbol{c}
\end{matrix}\right], \quad \boldsymbol{B} = \left[\begin{matrix}
\beta_{d+1} \boldsymbol{c}\\
\beta_{d+2} \boldsymbol{c}\\
\vdots\\
\beta_{2d} \boldsymbol{c}
\end{matrix}\right], 
\end{equation}
for some $\boldsymbol{\beta}= [\beta_1, \beta_2, \cdots, \beta_{2d}]^\top\in \mathbb{R}^{2d}$, and the vectorized parameter $\boldsymbol{\theta} = \mathrm{vec}((\boldsymbol{A}, \boldsymbol{B}))\in \mathbb{R}^{2d}$ takes the form $\boldsymbol{\beta}\otimes \boldsymbol{c}$. Let $\boldsymbol{\beta} = \boldsymbol{v}$, and the proof is complete.
\end{proof}

\begin{oldlemma}
Suppose $\valpha_1, \valpha_2, \cdots, \valpha_{k+1}\in \mathbb{R}^d$ are linearly independent, the data-independent invariant manifold exhibits the property $\vOmega_k(\valpha_1, \valpha_2, \cdots, \valpha_k) + \vOmega_1(\valpha_{k+1})=\vOmega_{k+1}(\valpha_1, \valpha_2, \cdots, \valpha_{k+1})$.
\label{lem:invariant_manifold_structure}
\end{oldlemma}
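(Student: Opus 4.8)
The plan is to establish the set identity by proving the two inclusions separately, after fixing the (standard, and already implicit in Prop.~\ref{oldprop:invariance1}(iii)) reading of the defining condition for $\vOmega_k(\valpha_1,\dots,\valpha_k)$ as $\rrow(\vA)\subseteq\rspan\{\valpha_1,\dots,\valpha_k\}$ together with $\rcol(\vB)\subseteq\rspan\{\valpha_1,\dots,\valpha_k\}$, and reading ``$+$'' as the Minkowski sum in the vectorized parameter space $\sR^{2d^2}$, i.e.\ $(\vA_1,\vB_1)+(\vA_2,\vB_2)=(\vA_1+\vA_2,\vB_1+\vB_2)$. Write $V_k=\rspan\{\valpha_1,\dots,\valpha_k\}$ and $V_{k+1}=\rspan\{\valpha_1,\dots,\valpha_{k+1}\}$. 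The only structural input is that, by linear independence of $\valpha_1,\dots,\valpha_{k+1}$, one has the internal direct sum $V_{k+1}=V_k\oplus\rspan\{\valpha_{k+1}\}$.

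First I would prove $\vOmega_k(\valpha_1,\dots,\valpha_k)+\vOmega_1(\valpha_{k+1})\subseteq\vOmega_{k+1}(\valpha_1,\dots,\valpha_{k+1})$. Given $(\vA_1,\vB_1)\in\vOmega_k$ and $(\vA_2,\vB_2)\in\vOmega_1(\valpha_{k+1})$, every row of $\vA_1+\vA_2$ is the sum of a vector in $V_k\subseteq V_{k+1}$ and a vector in $\rspan\{\valpha_{k+1}\}\subseteq V_{k+1}$, hence lies in $V_{k+1}$; the same reasoning applied to the columns of $\vB_1+\vB_2$ gives $\rcol(\vB_1+\vB_2)\subseteq V_{k+1}$. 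Therefore $(\vA_1+\vA_2,\vB_1+\vB_2)\in\vOmega_{k+1}$.

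Next I would prove the reverse inclusion $\vOmega_{k+1}(\valpha_1,\dots,\valpha_{k+1})\subseteq\vOmega_k(\valpha_1,\dots,\valpha_k)+\vOmega_1(\valpha_{k+1})$. Take $(\vA,\vB)\in\vOmega_{k+1}$. Using the direct sum $V_{k+1}=V_k\oplus\rspan\{\valpha_{k+1}\}$, decompose each row $\va_i$ of $\vA$ uniquely as $\va_i=\va_i^{(1)}+\va_i^{(2)}$ with $\va_i^{(1)}\in V_k$ and $\va_i^{(2)}\in\rspan\{\valpha_{k+1}\}$, and assemble the $\va_i^{(1)}$ into a matrix $\vA^{(1)}$ and the $\va_i^{(2)}$ into $\vA^{(2)}$; perform the analogous column-wise split of $\vB$ into $\vB^{(1)}$ and $\vB^{(2)}$. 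Then $\rrow(\vA^{(1)}),\rcol(\vB^{(1)})\subseteq V_k$, so $(\vA^{(1)},\vB^{(1)})\in\vOmega_k$, while $\rrow(\vA^{(2)}),\rcol(\vB^{(2)})\subseteq\rspan\{\valpha_{k+1}\}$, so $(\vA^{(2)},\vB^{(2)})\in\vOmega_1(\valpha_{k+1})$, and $(\vA,\vB)=(\vA^{(1)},\vB^{(1)})+(\vA^{(2)},\vB^{(2)})$, which is exactly what is required.

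The argument is elementary linear algebra, so I do not anticipate a genuine obstacle; the one point needing care is the interpretation of the defining condition of $\vOmega_k$ as a containment rather than a literal equality of spans. Under the literal-equality reading the left-hand Minkowski sum would fail even to be contained in the right-hand side (a rank-$k$ piece and a rank-$1$ piece can sum to a matrix of rank strictly below $k+1$, e.g.\ when their leading components align), and the chain $\vOmega_0\subsetneqq\cdots\subsetneqq\vOmega_k$ of Prop.~\ref{oldprop:invariance1} would likewise break; so the containment reading is the consistent one. Once it is adopted, the decomposition in the $\supseteq$ direction is the only place where linear independence of the $\valpha_i$ enters (to split each row/column across the direct sum $V_k\oplus\rspan\{\valpha_{k+1}\}$), and the whole proof reduces to the two short verifications above.
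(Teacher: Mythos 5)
Your proof is correct and takes essentially the same route as the paper's: both exploit linear independence to split any $(\vA,\vB)\in\vOmega_{k+1}$ into a $V_k$-part and an $\rspan\{\valpha_{k+1}\}$-part (the paper writes $\vA=\sum_{i=1}^{k}\vbeta_i\valpha_i^\top+\vbeta_{k+1}\valpha_{k+1}^\top$, vectorizes, and separates the last term, which is exactly your row-wise direct-sum decomposition). You are more thorough in that you verify both inclusions explicitly, whereas the paper proves only $\vOmega_{k+1}\subseteq\vOmega_k+\vOmega_1$ and asserts the rest as ``straightforward.'' Your remark about the definitional reading is also well taken and worth keeping: the paper's Prop.~\ref{oldprop:invariance1} states the condition as a literal equality $\rrow(\vA)=\rcol(\vB)=\rspan\{\valpha_1,\dots,\valpha_k\}$, under which the hierarchy $\vOmega_0\subsetneqq\cdots\subsetneqq\vOmega_k$ and the present lemma both fail, yet elsewhere (e.g.\ the proofs of Prop.~\ref{oldprop:escaping_result} and Thm.~\ref{oldthm:minimum_rank}) the paper silently switches to the containment form $\va_i,\vb_{\cdot,j}\in\rspan\{\valpha_1,\dots,\valpha_k\}$. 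Your adoption of the containment reading is the one consistent with the paper's actual usage, and your proof is sound under it.
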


\begin{proof}
Assume that $\boldsymbol{\theta}=(\vA, \vB)\in \vOmega_{k+1}(\valpha_1, \valpha_2, \cdots, \valpha_{k+1})$. Then $\vA, \vB$ should adopt the form:
\begin{equation}
A = \sum\limits_{i=1}^{k}\boldsymbol{\beta}_i \boldsymbol{\alpha}_i^\top + \boldsymbol{\beta}_{k+1} \boldsymbol{\alpha }_{k+1}^\top, B = \sum\limits_{i=1}^{k}\boldsymbol{\gamma}_{i} \boldsymbol{\alpha}_i^\top + \boldsymbol{ \gamma}_{k+1} \boldsymbol{\alpha}_{k+1}^\top.  
\end{equation}
Denote
$\boldsymbol{c}_i=\left[\begin{matrix}
 \boldsymbol{\beta}_i\\
 \boldsymbol{\gamma}_i
 \end{matrix}\right]\in \mathbb{R}^{2d}$, then the vectorized parameters $\vtheta := \rvec(\vtheta)$ can be expressed as:
\begin{equation}
\boldsymbol{\theta} = \sum\limits_{i=1}^{k }\boldsymbol{c}_i \otimes \boldsymbol{\alpha}_i + \boldsymbol{c}_{k+1}\otimes \boldsymbol{\alpha}_{k+1}.    
\end{equation}
Since we know that $\sum\limits_{i=1}^{k}\boldsymbol{c}_i \otimes \boldsymbol{\alpha}_i\in \vOmega_k(\valpha_1, \valpha_2, \cdots, \valpha_k)$ and $\boldsymbol{c}_{k+1}\otimes \boldsymbol{\alpha}_{k+1}\in \vOmega_{1}(\valpha)$, it is straightforward to validate that $\vOmega_{k+1}(\valpha_1, \valpha_2, \cdots, \valpha_{k+1}) = \vOmega_k(\valpha_1, \valpha_2, \cdots, \valpha_k) + \vOmega_1(\valpha_{k+1})$.
\end{proof}

\begin{oldassumption}[\textbf{Unique Top Eigenvalue}]
Let $\delta \vM = (\vA_c\vB_c - \vM)_{S_{\vx}}$ be the residual matrix at the critical point $\vtheta_c=(\vA_c, \vB_c)$. Assume that the top eigenvalue of the matrix $\begin{bmatrix}
    \vzero & -\delta \vM\\
    -\delta\vM^\top & \vzero
\end{bmatrix}$ is unique.
\label{oldassumption:eigenvalue_unique}
\end{oldassumption}

\begin{oldassumption}[\textbf{Second-order Stationary Point}]
Let $\vOmega$ be an $\vOmega_k$ invariant manifold or sub-$\vOmega_k$ invariant manifold defined in Prop.~\ref{oldprop:invariance1} or~\ref{oldprop:invariance2}.
Assume $\vtheta_c$ is a second-order stationary point within $\vOmega$, i.e., $\nabla R_S(\vtheta_c)=0$ and $\vtheta^{\top} \nabla^2 R_S(\vtheta_c)\vtheta \geq 0$ for all $\vtheta \in \vOmega$.  
\label{oldassumption:second-order}
\end{oldassumption}

\begin{oldtheorem}[\textbf{Transition to the Next Rank-level Invariant Manifold}] Consider the dynamics $\dot{\vtheta} = -\nabla R_S(\vtheta)$. Let $\varphi(\boldsymbol{\theta}_0, t)$ denote the value of $\boldsymbol{\theta}(t)$ when $\boldsymbol{\theta}(0) = \boldsymbol{\theta}_0$.
Let $\vOmega$ be a  $\vOmega_k$ invariant manifold or sub-$\vOmega_k$ invariant manifold. Let $\vtheta_c \in \vOmega$ be a critical point satisfying Assump.~\ref{oldassumption:eigenvalue_unique} and \ref{oldassumption:second-order}. Then, for randomly selected $\vtheta_0$, with probability 1 with respect to $\vtheta_0$, the limit
\begin{equation}
\tilde{\varphi}(\vtheta_c, t) := \lim_{\alpha \to 0}\varphi\left(\vtheta_c +\alpha\boldsymbol{\theta}_{0}, t+\frac{1}{\lambda_1} \log \frac{1}{\alpha}\right)
\end{equation}
exists and falls into an invariant manifold $\vOmega_{k+1}$. Here $\lambda_1$ is the top eigenvalue of negative Hessian $-\nabla^2 R_S(\vtheta_c)$.
\label{oldthm:transition_to_next_level}
\end{oldtheorem}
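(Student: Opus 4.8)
The plan is to combine the saddle-escape result of Prop.~\ref{oldprop:condense} with the Kronecker structure of the Hessian (Lem.~\ref{oldlem:eigenvectors_H2}, Lem.~\ref{oldlem:eigenvector_structures_H}) to show that the renormalized escape trajectory is trapped inside a rank-$(k{+}1)$ invariant manifold. First I would dispose of the degenerate cases: if $k=d$ there is no higher-rank manifold and nothing to prove, and if $\vtheta_c$ is a global minimum there is nothing to escape; otherwise, by the Loss Landscape theorem (Thm.~\ref{oldthm:global_convergence}) $\vtheta_c$ is a strict saddle, so $\vH := -\nabla^2 R_S(\vtheta_c)$ has a positive top eigenvalue $\lambda_1$. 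Throughout, write $\vOmega = \vOmega_k(\valpha_1,\dots,\valpha_k)$, $V = \rspan\{\valpha_1,\dots,\valpha_k\}$, $\delta\vM = (\vA_c\vB_c-\vM)_{S_{\vx}}$, and $\boldsymbol{B} := \begin{bmatrix}\vzero & -\delta\vM \\ -\delta\vM^\top & \vzero\end{bmatrix}$ as in Assump.~\ref{oldassumption:eigenvalue_unique}.

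The key structural step is to pin down the $\lambda_1$-eigenspace. The set $\overline{\vOmega}$ of parameters whose $\vA$-row space and $\vB$-column space lie inside $V$ is a \emph{linear} subspace; under the natural identification of $\vtheta$ with the stacked matrix it is $\overline{\vOmega} = \sR^{2d}\otimes V$, hence $\vOmega^{\perp} = \sR^{2d}\otimes V^{\perp}$. By Assump.~\ref{oldassumption:second-order}, $\nabla^2 R_S(\vtheta_c)$ is positive semidefinite on $\vOmega$, so all of its negative eigenvectors lie in $\vOmega^{\perp}$; there, by the proof of Lem.~\ref{oldlem:eigenvector_structures_H}, the Gauss--Newton part $\vH^{(1)}$ vanishes and $\nabla^2 R_S(\vtheta_c)$ coincides with $\vH^{(2)}(\vtheta_c) = \boldsymbol{B}\otimes\vI_d$ restricted to $\sR^{2d}\otimes V^{\perp}$, i.e.\ with $\boldsymbol{B}\otimes\vI_{d-k}$. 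Therefore $\lambda_1$ equals the largest singular value $\sigma_1(\delta\vM)$, and since Assump.~\ref{oldassumption:eigenvalue_unique} makes $\sigma_1(\delta\vM)$ a simple singular value, the $\lambda_1$-eigenspace of $\vH$ is exactly $\{\vv\}\otimes V^{\perp}$, where $\vv\in\sR^{2d}$ is the (essentially unique) singular-vector pair; this is the incarnation of Lem.~\ref{oldlem:eigenvectors_H2} on $\vOmega^{\perp}$. In particular the next distinct eigenvalue of $\vH$ is $\max(\sigma_2(\delta\vM),0) < \lambda_1$, giving the spectral gap that Prop.~\ref{oldprop:condense} needs.

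With the escape geometry in hand I would apply Prop.~\ref{oldprop:condense}. For random $\vtheta_0$ the coefficients $c_j = \langle\vtheta_0,\vv\otimes\vu_j\rangle$ (for an orthonormal basis $\{\vu_j\}$ of $V^{\perp}$) are almost surely not all zero, so the stated limit $\tilde\varphi(\vtheta_c,t)$ exists and equals the gradient-flow trajectory $\vz(t) = \lim_{\alpha\to0}\varphi(\vtheta_c+\alpha\vv_1, t+\tfrac{1}{\lambda_1}\log\tfrac1\alpha)$, with escape direction $\vv_1 = \sum_j c_j(\vv\otimes\vu_j) = \vv\otimes\vc$, $\vc := \sum_j c_j\vu_j \in V^{\perp}\setminus\{\vzero\}$. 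By Prop.~\ref{oldprop:escaping_result}, $\vv_1\in\vOmega_1(\vc)$, and since $\vc\in V^{\perp}$ the vectors $\valpha_1,\dots,\valpha_k,\vc$ are linearly independent. Hence every perturbed point $\vtheta_c+\alpha\vv_1$ has $\vA$-row space and $\vB$-column space inside $\rspan\{\valpha_1,\dots,\valpha_k,\vc\}$, i.e.\ lies in $\overline{\vOmega}_k+\vOmega_1(\vc) = \overline{\vOmega}_{k+1}(\valpha_1,\dots,\valpha_k,\vc)$ (Lem.~\ref{lem:invariant_manifold_structure}), a closed set that is forward-invariant under the gradient flow by the argument of Prop.~\ref{oldprop:invariance1}. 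Consequently $\varphi(\vtheta_c+\alpha\vv_1,s)$ stays in this set for all $s\ge0$, hence so does $\vz_\alpha(t)$ for small $\alpha$, and, by closedness, so does $\vz(t)=\tilde\varphi(\vtheta_c,t)$; for generic $\vtheta_0$ the limit trajectory attains rank $k{+}1$, placing it in the rank-$(k{+}1)$ invariant manifold as claimed. The sub-$\vOmega_k$ case is verbatim the same, using Prop.~\ref{oldprop:invariance2} for invariance of the sub-manifolds and noting that the escape direction $\vv\otimes\vc$ is supported on the rows and columns of the relevant connected component, so that $\vtheta_c+\alpha\vv_1$ sits in a sub-$\vOmega_{k+1}\subseteq\vOmega_{k+1}$.

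The main obstacle is the second step: showing that the top eigendirection of $-\nabla^2 R_S(\vtheta_c)$ is a \emph{pure} Kronecker vector $\vv\otimes\vc$ rather than a mixture $\vv_1\otimes\vc_1+\vv_2\otimes\vc_2$. This is precisely what makes the escape ``rank-one'' and keeps the trajectory inside the hierarchy, and it rests on three ingredients acting together — second-order stationarity (so the escape lives in $\vOmega^{\perp}$, where $\vH^{(1)}$ is absent), the identification $\overline{\vOmega}=\sR^{2d}\otimes V$ (so $\vOmega^{\perp}$ is compatible with the $\otimes\vI_d$ structure of $\vH^{(2)}$), and the simplicity of $\sigma_1(\delta\vM)$; dropping any one of them could make the escape eigenspace fail to be a single Kronecker block. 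A secondary, purely bookkeeping, point is that for very negative $t$ the limit trajectory returns to $\vtheta_c$ and thus only lies in the \emph{closure} $\overline{\vOmega}_{k+1}$; this is consistent with the theorem's phrasing once ``falls into an invariant manifold $\vOmega_{k+1}$'' is read as ``its row and column spaces remain within a fixed $(k{+}1)$-dimensional subspace''.
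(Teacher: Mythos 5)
Your proposal follows essentially the same route as the paper's proof: apply Prop.~\ref{oldprop:condense} for the renormalized escape trajectory, use Assumps.~\ref{oldassumption:eigenvalue_unique}--\ref{oldassumption:second-order} with Lems.~\ref{oldlem:eigenvectors_H2} and~\ref{oldlem:eigenvector_structures_H} to show the escape eigenspace has Kronecker form, then invoke Prop.~\ref{oldprop:escaping_result} and Lem.~\ref{lem:invariant_manifold_structure} to place $\vtheta_c+\alpha\vv_1$ in a rank-$(k{+}1)$ manifold and finish by invariance and closedness. Your version is slightly more careful than the paper's on two points the paper handles only implicitly: pinning down the escape eigenspace as $\{\vv\}\otimes V^{\perp}$ by restricting $\vH$ to $\vOmega^{\perp}$ (where $\vH^{(1)}$ vanishes and $\vH|_{\vOmega}\preceq 0$ rules out escape directions inside $\vOmega$), and noting that $\vc\in V^{\perp}$ is what guarantees the linear independence required to invoke Lem.~\ref{lem:invariant_manifold_structure}.
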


\begin{proof}
At the critical point $\boldsymbol{\theta}_c$, we denote the negative Hessian as $\boldsymbol{H} := -\nabla^2 R_S(\boldsymbol{\theta}_c)$. Let $\lambda_1 \in \mathbb{R}$ be the largest eigenvalue of $\boldsymbol{H}$, with corresponding eigenvectors $\boldsymbol{q}_{11}, \boldsymbol{q}_{12}, \cdots, \boldsymbol{q}_{1l_1}$.

Denote $c_j = \langle \boldsymbol{\theta}_0 - \boldsymbol{\theta}_c, \boldsymbol{q}_{1j} \rangle, \forall 1 \leq j \leq l_1$, and $\boldsymbol{v}_1 = \sum_{j=1}^{l_1} c_j \boldsymbol{q}_{1j}$. For a randomly selected $\boldsymbol{\theta}_0$, with probability 1, there exists at least one $j$ such that $c_j \neq 0$.

Consider the path $\boldsymbol{z}_\alpha(t) := \varphi\left(\boldsymbol{\theta}_c + \alpha \boldsymbol{v}_1, t + \frac{1}{\lambda_1} \log \frac{1}{\alpha}\right)$ for every $\alpha > 0$. By Prop.~\ref{oldprop:condense}, the limit $\boldsymbol{z}(t) := \lim_{\alpha \to 0} \boldsymbol{z}_\alpha(t)$ exists and satisfies the dynamics $\boldsymbol{z}(t) = \varphi(\boldsymbol{z}(0), t)$.

Furthermore, $\forall t \in \mathbb{R}$, there exists a constant $C > 0$ such that
\[
\left\|\varphi\left(\vtheta_c +\alpha\boldsymbol{\theta}_0, t+\frac{1}{\lambda_1} \log \frac{1}{\alpha}\right) - \boldsymbol{z}(t)\right\|_2 \leq C \alpha^{\frac{\lambda_1 - \lambda_2}{2\lambda_1 - \lambda_2}}
\]
for every sufficiently small $\alpha$, where $\lambda_1 - \lambda_2 > 0$ is the eigenvalue gap.

This implies that the limit $\lim_{\alpha \to 0}\varphi\left(\boldsymbol{\theta}_c +\alpha \boldsymbol{\theta}_{0}, t+\frac{1}{\lambda_1} \log \frac{1}{\alpha}\right)$ exists and
\[
\tilde{\varphi}(\boldsymbol{\theta}_c, t) := \lim_{\alpha \to 0}\varphi\left(\boldsymbol{\theta}_c +\alpha \boldsymbol{\theta}_{0}, t+\frac{1}{\lambda_1} \log \frac{1}{\alpha}\right) = \lim_{\alpha \to 0}\varphi\left(\boldsymbol{\theta}_c + \alpha \boldsymbol{v}_1, t+\frac{1}{\lambda_1} \log \frac{1}{\alpha}\right).
\]

Assuming $\boldsymbol{\theta}_c\in \boldsymbol{\Omega}_k$ and satisfies Assumps.~\ref{oldassumption:eigenvalue_unique} and \ref{oldassumption:second-order}, we aim to show the existence of a rank-$(k+1)$ invariant manifold $\boldsymbol{\Omega}_{k+1}$ containing $\boldsymbol{\theta}_c + \alpha \boldsymbol{v}_1$.

Define the following matrices:
\begin{align*}
\boldsymbol{H}^{(1)}(\boldsymbol{\theta}_c) &:= \sum_{i, j=1}^{d^2} \nabla_{\boldsymbol{\theta}_c} \left(\boldsymbol{f}_{\boldsymbol{\theta}_c}\right)_i \left(\nabla_{\boldsymbol{\theta}_c} \left(\boldsymbol{f}_{\boldsymbol{\theta}_c}\right)_j\right)^{\top}, \\
\boldsymbol{H}^{(2)}(\boldsymbol{\theta}_c) &:= \sum_{i=1}^{d^2} \mathbb{E}_S \left[\left(\boldsymbol{f}_{\boldsymbol{\theta}_c} - \boldsymbol{f}^*\right)_i \nabla_{\boldsymbol{\theta}_c}^2 \left(\boldsymbol{f}_{\boldsymbol{\theta}_c}\right)_i\right].
\end{align*}
The Hessian matrix $\boldsymbol{H}(\boldsymbol{\theta}_c)$ at $\boldsymbol{\theta}_c$ can be expressed as $\boldsymbol{H}(\boldsymbol{\theta}_c) := \boldsymbol{H}^{(1)}(\boldsymbol{\theta}_c) + \boldsymbol{H}^{(2)}(\boldsymbol{\theta}_c)$, and we have $\boldsymbol{H} = -\boldsymbol{H}(\boldsymbol{\theta}_c)$.

Assump.~\ref{oldassumption:eigenvalue_unique} and Lem.~\ref{oldlem:eigenvectors_H2} imply that there exist exactly $d$ eigenvectors associated with the top eigenvalue $\lambda_1$ of $-\boldsymbol{H}^{(2)}(\boldsymbol{\theta}_c)$. These eigenvectors are of the form $\boldsymbol{v} \otimes \boldsymbol{e}_i$ for $i = 1, \ldots, d$, where $\boldsymbol{v} \in \mathbb{R}^{2d}$ is a vector to be determined and $\boldsymbol{e}_i$ is the $i$-th standard basis vector in $\mathbb{R}^d$. By Assump.~\ref{oldassumption:second-order} and Lem.~\ref{oldlem:eigenvector_structures_H}, the eigenvectors corresponding to $\lambda_1$ of $\boldsymbol{H}$ are contained within the span of the eigenvectors associated with the negative eigenvalues of $\boldsymbol{H}^{(2)}(\boldsymbol{\theta}_c)$.

Prop.~\ref{oldprop:escaping_result} ensures that the escaping direction $\boldsymbol{v}_1$ lies within a rank-1 invariant manifold $\boldsymbol{\Omega}_1$. Lem.~\ref{lem:invariant_manifold_structure} then guarantees the existence of an invariant manifold $\boldsymbol{\Omega}_{k+1}$ that includes $\boldsymbol{\theta}_c + \alpha \boldsymbol{v}_1$. Since $\boldsymbol{\Omega}_{k+1}$ is invariant under the gradient flow, the trajectory $\varphi\left(\boldsymbol{\theta}_c + \alpha \boldsymbol{v}_1, t+\frac{1}{\lambda_1} \log \frac{1}{\alpha}\right)$ remains within $\boldsymbol{\Omega}_{k+1}$.

Finally, since $\vOmega_{k+1}$ is a closed subspace, the limit $\tilde{\varphi}(\boldsymbol{\theta}_c, t)$ lies in $\bar{\boldsymbol{\Omega}}_{k+1}$, concluding the proof.
\end{proof}

\subsection{Example of Coincident Top Eigenvalues}
\label{Ex:Coincident_Top_Eigenvalues}
Consider the $2\times 2$ matrix completion problem: $\boldsymbol{M} = \begin{bmatrix}2 & \star\\ \star & 2\end{bmatrix}$. In this case, the two numbers on the diagonal are identical, which causes the maximum singular value of the residual matrix at the origin to be non-unique, violating Assump.~\ref{assumption:eigenvalue_unique}. Consequently, the training process will jump directly from the rank 0 to the rank 2 invariant manifold, thereby missing the lowest rank solution of rank 1. This behavior is demonstrated in Fig.~\ref{fig:2x2}, which shows experimental results for this scenario.

\begin{figure}[h]
	\centering
	\subfloat[Training loss]{\includegraphics[width=0.25\textwidth]{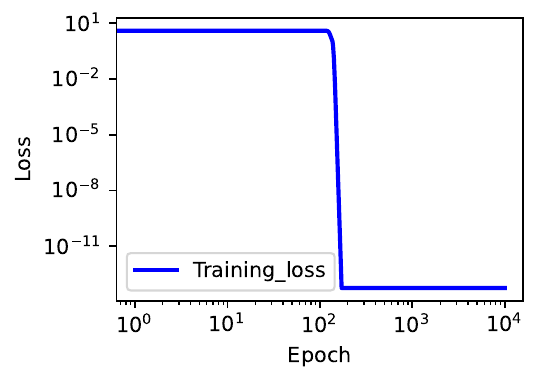}}
	\subfloat[Singular values of $\vA$]{\includegraphics[width=0.25\textwidth]{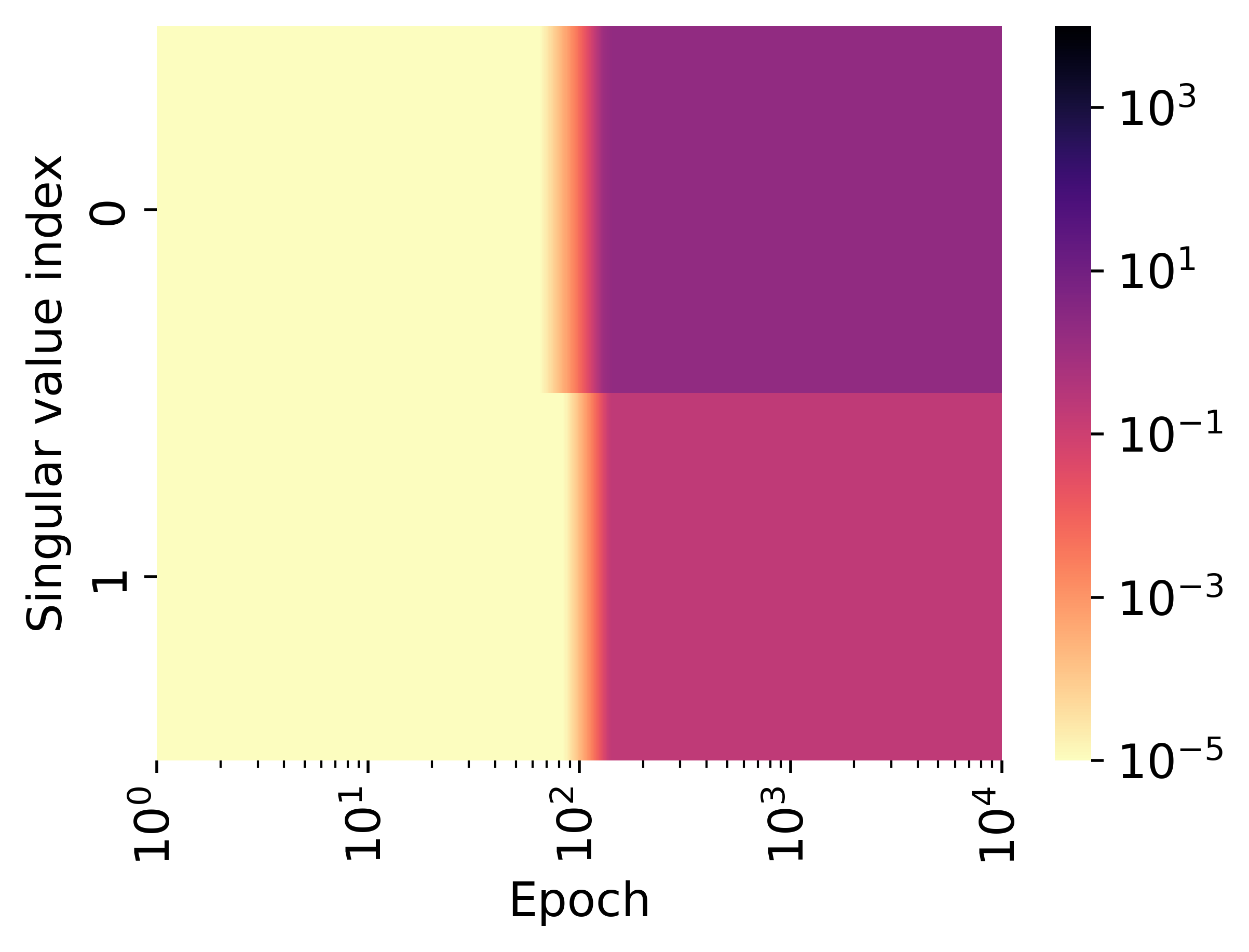}}
	\subfloat[Singular values of $\vB$]{\includegraphics[width=0.25\textwidth]{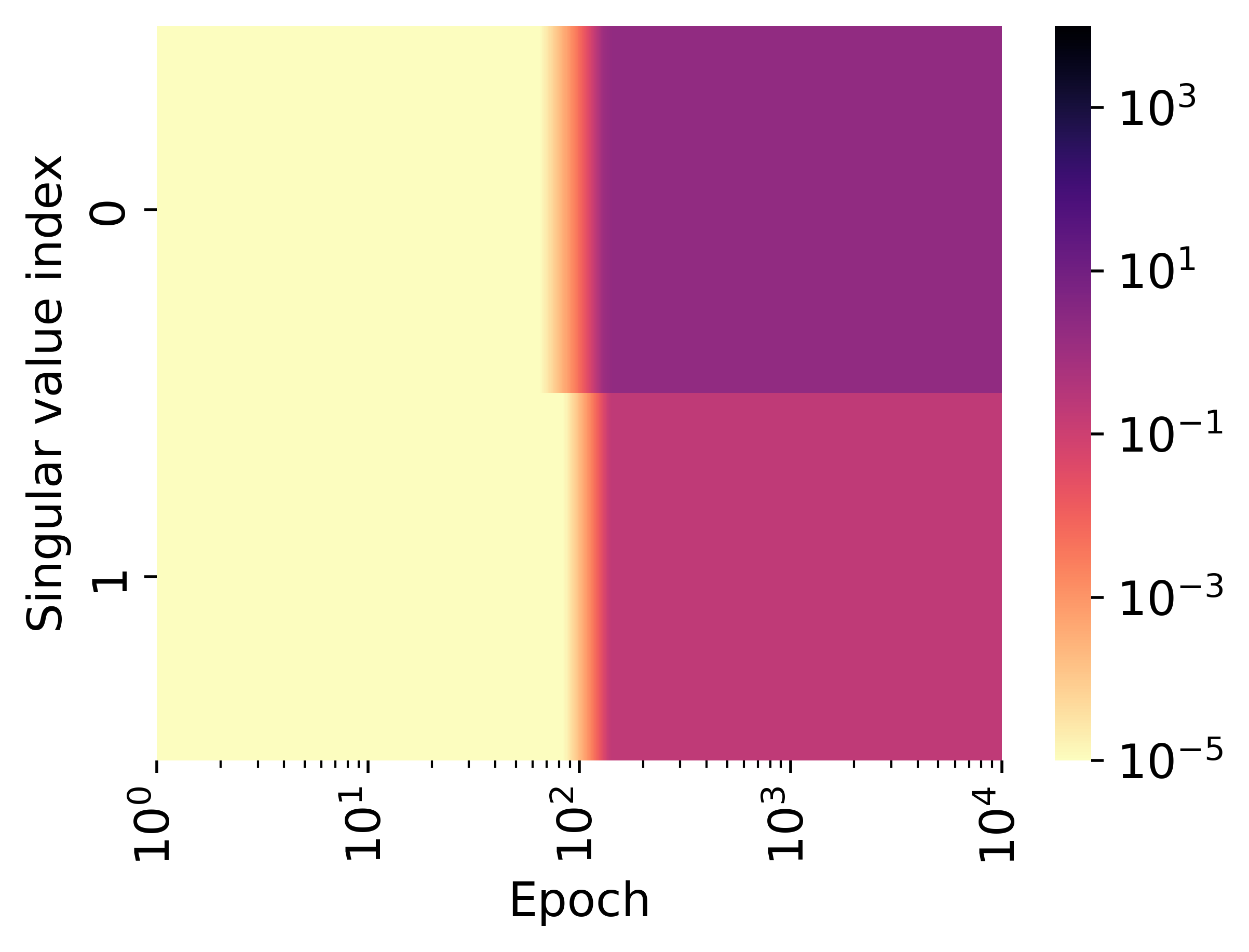}}
	\subfloat[Singular values of $\vW_{\text{aug}}$]{\includegraphics[width=0.25\textwidth]{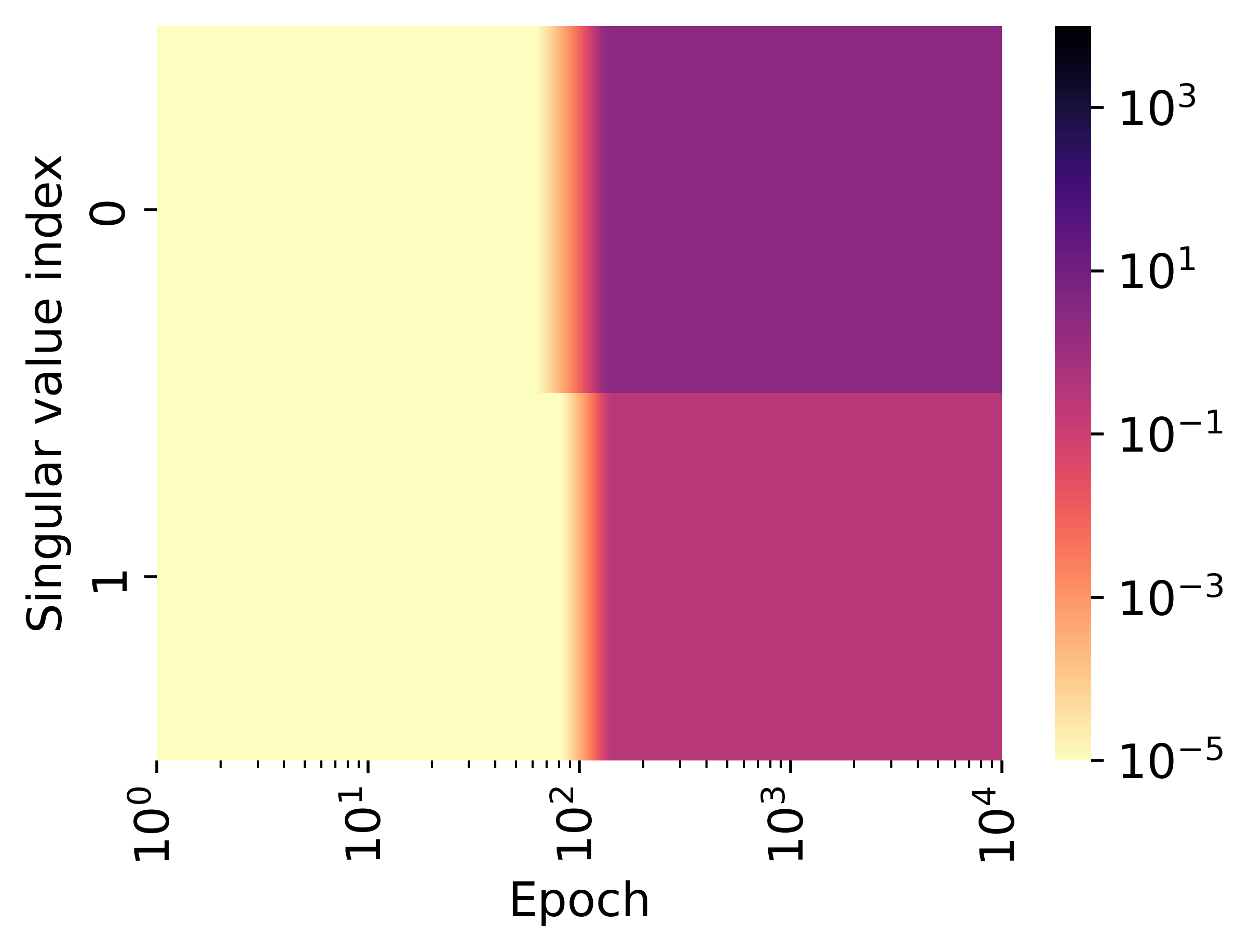}}
\caption{\textbf{Analysis of matrix completion for $\boldsymbol{M}$ with identical diagonal elements.} (a) Training loss under small initialization. (b-d) Singular value evolution for $\boldsymbol{A}, \boldsymbol{B}, \boldsymbol{W}_{\text{aug}}$. Simultaneous growth of singular values results in direct convergence to a rank-2 invariant manifold.}
 \label{fig:2x2}
\end{figure} 

\subsection{Minimum Rank}

\begin{oldtheorem}[\textbf{Minimum Rank}]
Let $\vOmega$ denote an invariant as defined previously. Assume $\vW_t$ achieves a global minimum within each  invariant manifold $\vOmega_k$. If the limit $\widehat{\vW}=\lim _{\alpha \rightarrow 0} \vW_{\infty}(\alpha I)$ exists and is a global optimum with $\widehat{\vW}(i, j)=\vM(i, j)$ for all $(i, j)\in S_{\vx}$, then 
\begin{equation}
\widehat{\vW} \in \operatorname{argmin}_{\vW} \operatorname{rank}(\vW) \quad \text{s.t.} \quad \vW(i, j)=\vM(i, j), \forall (i, j)\in S_{\vx}.    
\end{equation}
\label{oldthm:minimum_rank}
\end{oldtheorem}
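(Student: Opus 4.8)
The plan is to convert the Hierarchical Invariant Manifold Traversal picture into a rank bound: in the vanishing-initialization limit the trajectory climbs through $\vOmega_0\subsetneqq\vOmega_1\subsetneqq\cdots$ one rank at a time and, by hypothesis, settles at the best in-manifold completion at each level, so it can only come to rest once that best completion attains zero loss, which happens exactly when the rank reaches the minimum-rank value. To make this precise I would introduce $r^{\star}:=\min\{\rrank(\vW):\vW_{ij}=\vM_{ij}\ \forall(i,j)\in S_{\vx}\}$ and, for $k\ge 0$, $L_k:=\inf\{R_S(\vtheta):\vtheta\in\vOmega_k\}$, and record three elementary facts. (a) Any $\vtheta=(\vA,\vB)\in\vOmega_k$ has all rows of $\vA$ in a fixed $k$-dimensional subspace, hence $\rrank(\vA\vB)\le k$; conversely, factoring any rank-$\le k$ completion of $\vM$ yields a point of $\overline{\vOmega_k}$, so $L_k=\min\{\tfrac1n\|(\vW-\vM)_{S_{\vx}}\|_F^2:\rrank(\vW)\le k\}$, the minimum being attained on the closed variety of rank-$\le k$ matrices. (b) Hence $L_0\ge L_1\ge\cdots\ge 0$, and $L_k=0$ iff some rank-$\le k$ matrix agrees with $\vM$ on $S_{\vx}$, i.e.\ iff $k\ge r^{\star}$; in particular $L_k>0$ for all $k<r^{\star}$ while $L_{r^{\star}}=0$. (c) $R_S$ attains the value $0$ (complete $\vM$ arbitrarily and factor it), so the global minima of $R_S$ are exactly its zero-loss points; consequently, by Thm.~\ref{oldthm:global_convergence}, every critical point of $R_S$ with positive loss is a strict saddle.

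Next I would run the traversal. Since $I$, hence $\alpha I$, is full rank, the initialization scale tends to $\vzero\in\vOmega_0$ as $\alpha\to0$; $\vzero$ is a critical point that is a second-order stationary point inside $\vOmega_0$, and its full rank makes the perturbation have a nonzero component along the top eigendirection, so the genericity hypothesis of Thm.~\ref{oldthm:transition_to_next_level} is met. Suppose inductively that the limiting trajectory has entered $\vOmega_k$ with $k<r^{\star}$ and (by the standing hypothesis that $\vW_t$ reaches a global minimum within $\vOmega_k$) settles there at $\vtheta_c^{(k)}$ with $R_S(\vtheta_c^{(k)})=L_k>0$. Because $\vOmega_k$ is gradient-flow invariant (Prop.~\ref{oldprop:invariance1}), $-\nabla R_S$ is tangent to $\vOmega_k$ at $\vtheta_c^{(k)}$, and minimality over $\vOmega_k$ kills the tangential part of $\nabla R_S$ there, so $\nabla R_S(\vtheta_c^{(k)})=0$: thus $\vtheta_c^{(k)}$ is a genuine critical point of $R_S$ and a second-order stationary point within $\vOmega_k$. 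By fact (c) it is a strict saddle of $R_S$, so the top eigenvalue of $-\nabla^2R_S(\vtheta_c^{(k)})$ is strictly positive, and Thm.~\ref{oldthm:transition_to_next_level} applies: the limiting trajectory escapes $\vtheta_c^{(k)}$ along a rank-one direction and lands in some $\vOmega_{k+1}$. Iterating for $k=0,1,\dots,r^{\star}-1$ shows the trajectory reaches $\vOmega_{r^{\star}}$, where its in-manifold optimum has loss $L_{r^{\star}}=0$; being a global minimum of $R_S$ it is a fixed point of the flow (and the escape mechanism cannot fire, its top-eigenvalue positivity failing at a minimum), so the traversal terminates there with limiting output of rank $\le r^{\star}$.

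Finally I would pass to the $\alpha\to0$ limit. Throughout the terminal stage the output lies in the image of $\overline{\vOmega_{r^{\star}}}$, which sits inside the closed variety $\{\vW:\rrank(\vW)\le r^{\star}\}$; since $\widehat{\vW}=\lim_{\alpha\to0}\vW_{\infty}(\alpha I)$ is assumed to exist, it lies in this variety, so $\rrank(\widehat{\vW})\le r^{\star}$. On the other hand $\widehat{\vW}$ is assumed to be a global optimum, hence $\widehat{\vW}_{ij}=\vM_{ij}$ for all $(i,j)\in S_{\vx}$, which forces $\rrank(\widehat{\vW})\ge r^{\star}$ by the definition of $r^{\star}$. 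Therefore $\rrank(\widehat{\vW})=r^{\star}$ and $\widehat{\vW}$ solves the rank-minimization problem, as claimed.

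The main obstacle is the inductive step: chaining Thm.~\ref{oldthm:transition_to_next_level} through every level and, once the trajectory escapes $\vtheta_c^{(k)}$, arguing that it again settles at the in-manifold optimum inside $\vOmega_{k+1}$ — which is exactly what the hypothesis ``$\vW_t$ achieves a global minimum within each $\vOmega_k$'' provides, so that no separate convergence analysis inside the $\vOmega_k$'s is needed — together with the interchange of the $t\to\infty$ and $\alpha\to0$ limits, which is absorbed into the assumption that $\widehat{\vW}$ exists. A secondary technical point is fact (a): the image of $\vOmega_k$ itself is the set of matrices of rank \emph{exactly} $k$, so the loss bookkeeping must be carried out on the closures $\overline{\vOmega_k}$, where the relevant minimum is attained; in the degenerate case where some $L_k=L_{k-1}$ one simply notes that the trajectory still escapes the resulting lower-rank critical point (it still has positive loss, hence is still a strict saddle), leaving the conclusion unchanged.
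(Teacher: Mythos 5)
Your proposal is correct and takes essentially the same route as the paper's proof: show that the output image of the $\vOmega_k$ hierarchy (up to closure) exhausts all rank-$\le k$ matrices, invoke the hierarchical traversal from Thm.~\ref{oldthm:transition_to_next_level}, and conclude from the in-manifold-optimality hypothesis that the limiting trajectory must halt at the first level $r^\star$ at which the in-manifold minimum loss is zero, giving the rank bound $\rrank(\widehat{\vW})\le r^\star$ which the feasibility of $\widehat{\vW}$ upgrades to equality. Your accounting via $L_k$, the explicit strict-saddle escape argument, and your observation that the image of $\vOmega_k$ proper consists of matrices of rank \emph{exactly} $k$ (so that the bookkeeping must pass to the closures $\overline{\vOmega_k}$) are more careful than the paper's two-sentence sketch; the paper sidesteps the closure issue by silently restating $\vOmega_k$ inside its proof with the relaxed condition $\va_i,\vb_{\cdot,j}\in\rspan\{\valpha_1,\dots,\valpha_k\}$ in place of the equality $\rrow(\vA)=\rcol(\vB)=\rspan\{\valpha_1,\dots,\valpha_k\}$ used in Prop.~\ref{oldprop:invariance1}, which is precisely the closure you work with explicitly.
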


\begin{proof}
Consider the  invariant manifold $\vOmega_k$, which is defined as follows:
\[
\vOmega_k := \vOmega _k(\boldsymbol{\alpha}_1, \cdots, \boldsymbol{\alpha}_k)=\{(\mA, \mB)|\boldsymbol{a}_i, \boldsymbol{b}_{\cdot, j}\in \mathrm{span}\{\boldsymbol{\alpha}_1, \cdots, \boldsymbol{\alpha}_k\}, \forall 1\leq i, j \leq d\},
\]
where $\va_{i}$ denotes the $i$-th row of $\mA$, $\vb_{\cdot, j}$ denotes the $j$-th column of $\mB$, and $\valpha_1, \ldots, \valpha_k$ are independent vectors that span the invariant subspace associated with $\vOmega_k$.

According to Thm.~\ref{oldthm:transition_to_next_level}, the training trajectory adheres to a hierarchical traversal across invariant manifolds. For any matrix $\vC$ with $\rrank(\vC)\leq k$, we will show that there always exists $\vtheta =(\vA, \vB)\in \vOmega_k$ such that $\vA \vB = \vC$. Therefore, $\vOmega_k$ contains all matrices of rank $k$. 

In fact, Since $\rrank(\vC)\leq k$, we can express $\vC$ as a sum of $k$ rank-one matrices:
$\vC = \sum_{i=1}^{k} \vu_i \vv_i^\top$
where $\vu_i$ and $\vv_i$ are column vectors.
By the definition of $\vOmega_k$, for any $\vtheta =(\vA, \vB)\in \vOmega_k$, each row of $\mA$ and each column of $\mB$ can be expressed as a linear combination of $\{\boldsymbol{\alpha}_1, \cdots, \boldsymbol{\alpha}_k\}$:
$\boldsymbol{a}_i = \sum_{j=1}^{k} c_{ij} \boldsymbol{\alpha}_j$
$\boldsymbol{b}_{\cdot, j} = \sum_{i=1}^{k} d_{ij} \boldsymbol{\alpha}_i$
where $c_{ij}$ and $d_{ij}$ are scalars. We can write:
\[
\vA = \begin{bmatrix} \sum_{j=1}^{k} c_{1j} \boldsymbol{\alpha}_j \\ \vdots \\ \sum_{j=1}^{k} c_{dj} \boldsymbol{\alpha}_j \end{bmatrix}, \vB = \begin{bmatrix} \sum_{i=1}^{k} d_{i1} \boldsymbol{\alpha}_i & \cdots & \sum_{i=1}^{k} d_{id} \boldsymbol{\alpha}_i \end{bmatrix}.
\]
Now, we can express the product $\vA \vB$ as:
$\vA \vB = \sum_{i=1}^{k} \sum_{j=1}^{k} \left(\sum_{l=1}^{d} c_{li} d_{jl}\right) \boldsymbol{\alpha}_i \boldsymbol{\alpha}_j^\top$
By choosing appropriate values for $c_{ij}$ and $d_{ij}$, we can make $\vA \vB = \vC$. This is possible because the outer products $\boldsymbol{\alpha}_i \boldsymbol{\alpha}_j^\top$ span the same subspace as the rank-one matrices $\vu_i \vv_i^\top$ in the expression of $\vC$.

Therefore, for any matrix $\vC$ with $\rrank(\vC)\leq k$, there always exists $\vtheta =(\vA, \vB)\in \vOmega_k$ such that $\vA \vB = \vC$.

If the output matrix $\vW_t$ attains optimums within each $\vOmega_k$, it suggests that the optimization process is selecting the best approximation from the set of all possible rank-$k$ matrices. Provided that each step in the optimization is optimal, the resulting solution will naturally be the matrix with the lowest feasible rank that satisfies the matrix completion criteria, thereby completing the proof.
\end{proof}

\subsection{Minimum Nuclear Norm Guarantee}

% 这一节证明对角的 case

\begin{oldlemma}[\textbf{Minimal Nuclear Norm Computation}]
Given a matrix $\vM$ to be completed with observed diagonal entries, i.e., $\diag(\vM) = \vv$, the minimal nuclear norm solution among all possible completions is $\|\vv\|_1$.
\label{lem:min_nuclear_norm}
\end{oldlemma}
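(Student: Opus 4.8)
The plan is to establish the two matching bounds
\[
\min_{\vW}\|\vW\|_* \;\ge\; \|\vv\|_1
\qquad\text{and}\qquad
\min_{\vW}\|\vW\|_* \;\le\; \|\vv\|_1,
\]
where the minimum is taken over all matrices $\vW$ consistent with the observations, which here forces $\vW_{ii}=v_i$ for every $i$. Taken together these give the claimed value $\|\vv\|_1$ and, as a by-product, an explicit minimizer.

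For the lower bound I would use the duality between the nuclear norm and the spectral norm: $\|\vW\|_* = \sup\{\langle \vW, \vZ\rangle : \|\vZ\|_2 \le 1\}$, where $\langle \vW,\vZ\rangle = \rtr(\vW^\top\vZ)$ is the Frobenius inner product and $\|\vZ\|_2$ is the largest singular value of $\vZ$. The key is to pick the right test matrix: take $\vZ = \rdiag(\rsign(v_1),\dots,\rsign(v_d))$ with the convention $\rsign(0):=1$ (harmless, since a zero diagonal entry contributes nothing). Then $\vZ$ has $\pm 1$ diagonal entries, so $\|\vZ\|_2 = 1$, and because $\vZ$ is diagonal, $\langle \vW,\vZ\rangle = \sum_i \vW_{ii}\,\rsign(v_i) = \sum_i v_i\,\rsign(v_i) = \sum_i |v_i| = \|\vv\|_1$. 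Since this computation only uses $\diag(\vW) = \vv$, it yields $\|\vW\|_* \ge \|\vv\|_1$ for \emph{every} admissible completion $\vW$.

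For the upper bound I would exhibit a completion attaining the value: the diagonal matrix $\vW^{\star} := \rdiag(\vv)$ satisfies $\vW^{\star}_{ii}=v_i$ and vanishes off the diagonal (where no observation constrains it), so it is an admissible completion; its singular values are exactly $|v_1|,\dots,|v_d|$, whence $\|\vW^{\star}\|_* = \sum_i |v_i| = \|\vv\|_1$. Combining the two bounds gives $\min_{\vW}\|\vW\|_* = \|\vv\|_1$, with the minimum attained at $\rdiag(\vv)$.

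There is no substantial obstacle in this argument; it reduces to the standard nuclear-/spectral-norm duality together with one well-chosen diagonal test matrix. The only points that require a little care are the direction of the duality and the sign bookkeeping in the choice of $\vZ$, and the verification that $\rdiag(\vv)$ is genuinely an admissible completion — i.e.\ that the observed set is (equivalent to) the diagonal, so no off-diagonal constraint excludes it. Note the asymmetry: the lower-bound step is insensitive to the presence of additional observations, whereas the upper-bound step relies precisely on this admissibility, which is exactly what the hypothesis $\diag(\vM)=\vv$ (with no further constraints) supplies.
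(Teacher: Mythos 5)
Your argument is correct and follows essentially the same route as the paper: the lower bound comes from nuclear-/spectral-norm duality tested against $\rdiag(\rsign(\vv))$, and the upper bound from observing that $\rdiag(\vv)$ is an admissible completion whose nuclear norm is exactly $\|\vv\|_1$. Your treatment is slightly more explicit about the $\rsign(0)$ convention and about why the diagonal completion is admissible, but these are presentational refinements rather than a different proof.
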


\begin{proof}
The nuclear norm of a matrix is the dual of the spectral norm $\|\cdot\|_2$, defined as:
$$
\|\vA\|_* = \max_{\|\vX\|_2 \leq 1} \langle \vA, \vX\rangle.
$$

Given that $\|\operatorname{diag}(\operatorname{sign}(\vv))\|_2 \leq 1$, for any matrix $\vA$ with $\operatorname{diag}(\vA) = \vv$, it follows that:
$$
\|\vA\|_* \geq \langle \vA, \operatorname{diag}(\operatorname{sign}(\vv))\rangle = \langle \vv, \operatorname{sign}(\vv)\rangle = \|\vv\|_1.
$$

Specifically, the nuclear norm of the diagonal matrix with $\vv$ on its diagonal is $\|\operatorname{diag}(\vv)\|_* = \|\vv\|_1$, which establishes that the diagonal matrix with $\vv$ is indeed a minimizer for the nuclear norm.
\end{proof}

\subsubsection*{Diagonal Observations}

\begin{oldproposition}[\textbf{Minimum Nuclear Norm Guarantee in Diagonal Case}]
Consider the dynamics $\dot{\vtheta} = -\nabla R_S(\vtheta)$, where $\vtheta(t) = (\vA(t), \vB(t))$ and denote $\vW_t = \vA(t) \vB(t)$. If the observation data is diagonal, and if for a full rank initialization $\vW_0$, the limit $\widehat{\vW}=\lim_{\alpha \rightarrow 0} \vW_{\infty}(\alpha \vW_0)$ exists and is a global optimum with $\widehat{\vW}_{ij}=\vM_{ij}$ for all $(i, j)\in S_{\vx}$, then
\begin{equation}
\widehat{\vW} \in \operatorname{argmin}_{\vW}\|\vW\|_* \quad \text{s.t.} \quad \vW_{ij}=\vM_{ij}, \forall (i, j)\in S_{\vx}.    
\end{equation}
\label{oldprop:diag_min_nuclear_norm}
\end{oldproposition}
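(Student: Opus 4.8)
The plan is to reduce everything to Lemma~\ref{oldlem:min_nuclear_norm}. Since the observed index set is exactly the diagonal, write $\vv=\diag(\vM)$ (all $\vM_{ii}\neq 0$). That lemma says every completion with prescribed diagonal $\vv$ has nuclear norm at least $\|\vv\|_1$, so it suffices to prove $\|\widehat\vW\|_*=\|\vv\|_1$. The crucial structural observation is that diagonal observations make the gradient flow \eqref{oldeq:dynamics_a_b} split into $d$ mutually independent two-vector subsystems, the $i$-th one governing the pair $(\va_i,\vb_{\cdot,i})$ with the single scalar target $\vM_{ii}$; this is the extreme instance of the decoupling of Prop.~\ref{oldprop:coupling_equivalence} in which each connected component is a single edge (hence a complete bipartite subgraph). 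Consequently $\vA_\infty$ and $\vB_\infty$ are assembled row-by-row and column-by-column from the per-pair limits, and it is enough to analyze one pair.

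For a fixed pair, started from the scaled initialization $(\sqrt\alpha\,\va_i^0,\sqrt\alpha\,\vb_{\cdot,i}^0)$, I would establish two facts. First, balancedness: a one-line computation shows $\|\va_i\|^2-\|\vb_{\cdot,i}\|^2$ is conserved along the flow, hence is $O(\alpha)$ throughout and vanishes in the limit, so $\|\va_i(\infty)\|=\|\vb_{\cdot,i}(\infty)\|$. Second, alignment: the origin of the $i$-th subsystem is a critical point whose negative Hessian there equals $\tfrac2n\begin{bmatrix}\vzero & \vM_{ii}\vI_d\\ \vM_{ii}\vI_d & \vzero\end{bmatrix}$ — the rank-one instance of the structure in Lemma~\ref{oldlem:eigenvectors_H2} and Prop.~\ref{prop:eigenvectors_origin} — whose top eigenvalue $\tfrac2n|\vM_{ii}|$ has eigenspace $\{(\vx,\operatorname{sign}(\vM_{ii})\vx):\vx\in\sR^d\}$. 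Feeding this into the escape-from-saddle analysis (Prop.~\ref{prop:dominant_eig}, Prop.~\ref{oldprop:condense}, Prop.~\ref{oldprop:escaping_result}), the $\alpha\to 0$ trajectory escapes the origin into the rank-one manifold $\vOmega_1(\vn_i)$, where $\vn_i$ is the unit vector along the projection of $(\va_i^{0\top},\operatorname{sign}(\vM_{ii})\vb_{\cdot,i}^0)$ onto that eigenspace (nonzero for generic full-rank $\vW_0$), and inside $\vOmega_1(\vn_i)$ both $\va_i^\top$ and $\vb_{\cdot,i}$ stay collinear with $\vn_i$, aligned up to the sign of $\vM_{ii}$. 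Combining balancedness, alignment, and the hypothesis that $\widehat\vW$ is a global optimum (so $\va_i(\infty)\vb_{\cdot,i}(\infty)=\vM_{ii}\neq 0$) pins down $\va_i(\infty)=\sqrt{|\vM_{ii}|}\,\vn_i^\top$ and $\vb_{\cdot,i}(\infty)=\operatorname{sign}(\vM_{ii})\sqrt{|\vM_{ii}|}\,\vn_i$.

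Assembling, put $N=[\vn_1\ \cdots\ \vn_d]$, $\Sigma=\diag(\sqrt{|\vM_{11}|},\dots,\sqrt{|\vM_{dd}|})$, and $P=\diag(\operatorname{sign}\vM_{11},\dots,\operatorname{sign}\vM_{dd})$; then $\vA_\infty=\Sigma N^\top$, $\vB_\infty=NP\Sigma$, so $\widehat\vW=\Sigma(N^\top N)\Sigma P$. Since $\Sigma(N^\top N)\Sigma\succeq\vzero$ and $P$ is orthogonal, $\|\widehat\vW\|_*=\rtr\!\big(\Sigma(N^\top N)\Sigma\big)=\sum_i|\vM_{ii}|\,\|\vn_i\|^2=\|\vv\|_1$, which with Lemma~\ref{oldlem:min_nuclear_norm} gives the claim. (Alternatively, skip the explicit form of $\widehat\vW$: balancedness and alignment already give $\tfrac12(\|\vA_\infty\|_F^2+\|\vB_\infty\|_F^2)=\|\vv\|_1$, hence $\|\widehat\vW\|_*\le\|\vv\|_1$ via the variational formula for the nuclear norm.)

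The main obstacle is the alignment step: turning "small initialization" into genuine collinearity of $\va_i^\top$ and $\vb_{\cdot,i}$ in each decoupled rank-one subproblem. This is where the real work lies — one must run the paper's escape-from-critical-point machinery on each subsystem (the rectangular analogue of Prop.~\ref{prop:eigenvectors_origin}), verify the top eigenspace is $\{(\vx,\operatorname{sign}(\vM_{ii})\vx)\}$ and that the escape direction sits in a rank-one manifold, and then manage the $\alpha\to 0$ double limit: the different pairs escape the origin on different time scales $\sim\frac1{|\vM_{ii}|}\log\frac1\alpha$, so the argument must be organized pair-by-pair, using the standing assumption that $\widehat\vW=\lim_{\alpha}\vW_\infty$ exists to glue the per-pair limits. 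The genericity caveat on $\vW_0$ (excluding the measure-zero set where some projection vanishes) should also be recorded; everything after alignment — the conserved balancedness identity, the Gram-matrix bookkeeping, and the appeal to Lemma~\ref{oldlem:min_nuclear_norm} — is routine.
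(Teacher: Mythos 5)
Your proposal follows essentially the same route as the paper's proof: decouple the diagonal-observation dynamics into $d$ independent $(\va_i,\vb_{\cdot,i})$ subsystems, use the escape-from-saddle Hessian structure (Lem.~\ref{oldlem:eigenvectors_H2}, Prop.~\ref{oldprop:escaping_result}) to show the limiting trajectory aligns $\vb_{\cdot,i}$ with $\operatorname{sign}(\vM_{ii})\va_i^\top$ so that effectively $\vW=\vA\vA^\top\vQ$ for a diagonal sign matrix $\vQ$, and then read off the nuclear norm as $\operatorname{tr}(\vA\vA^\top)=\|\operatorname{diag}(\vM)\|_1$ via Lem.~\ref{lem:min_nuclear_norm}. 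Your presentation is somewhat more careful than the paper's — you make the conserved balancedness $\|\va_i\|^2-\|\vb_{\cdot,i}\|^2$ explicit, track the per-pair Gram matrix $\Sigma N^\top N\Sigma$ rather than asserting $\vB_{,i}=(\vA^\top)_{,i}$ outright, and you correctly flag the different escape timescales $\sim\frac{1}{|\vM_{ii}|}\log\frac{1}{\alpha}$ and the measure-zero genericity caveat on $\vW_0$ that the paper leaves implicit — but the underlying argument and the key reductions are the same.
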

\begin{proof}
Without loss of generality, assume that $\vM$ is a diagonal matrix given by:
\[
\vM = \begin{bmatrix}
    \mu_1 & & &\\
    & \mu_2 & & \\
    & & \ddots &\\
    & & & \mu_d\\
\end{bmatrix}.
\]

By Lem.~\ref{lem:min_nuclear_norm}, the minimal nuclear norm among all possible completions is $|\mu_1| + |\mu_2| +\cdots + |\mu_d|$. When the matrix to be completed is diagonal, the evolution of the $i$-th row of $\vA$ is influenced only by the $i$-th column of $\vB$. Hence, the dynamics decouple into $d$ independent parts, each equivalent to learning a scalar $\mu_i$. The learning process thus unfolds in $d$ stages, with each stage passing through a critical point to learn a respective $\mu_i$.

By Lem.~\ref{oldlem:eigenvectors_H2}, the second term of the Hessian matrix can be expressed as:
\[
\vH^{(2)} = \left[\begin{matrix}
\boldsymbol{0} & -\boldsymbol{\delta \vM}\otimes \boldsymbol{I_d}\\
-\boldsymbol{\delta \vM}^\top\otimes \boldsymbol{I_d} & \boldsymbol{0}
\end{matrix}\right].    
\]
According to Lem.~\ref{oldlem:eigenvectors_H2}, the $d$ eigenvectors of $\vH^{(2)}$ take the form $\vv \otimes \ve_1, \vv \otimes \ve_2, \ldots, \vv \otimes \ve_d\in \mathbb{R}^{2d^2}$, where $\vv\in \mathbb{R}^{2d}$ is a vector to be determined and $\ve_i$ is the unit vector corresponding to the $i$-th column of the identity matrix $\vI_d\in \mathbb{R}^{d\times d}$.

(i) Suppose $\mu_1 > \mu_2 > \cdots >\mu_d>0$. 

With a infinitesimal initialization, the training dynamics first focus on the element with the largest singular value, then proceed sequentially to blocks with smaller singular values. This pattern of learning is consistent with the concept of ``sequential learning'' as reported in the literature \citep{gidel2019implicit,gissin2019implicit,jiang2023algorithmic}.

For a diagonal observation matrix, the residual matrix $\delta \vM$ at any critical point remains a diagonal matrix. While starting to learn $\mu_i$ from a critical point, direct calculation confirms that vector $\vv = [\ve_i, \ve_i]^\top\in \mathbb{R}^{2d}$. Escaping from each saddle point $\vtheta_c$, the trajectory $\vtheta(t) - \vtheta_c$ approximates $\sum_{i=1}^{d}c_i (\vv\otimes \ve_i)$, which satisfies $\vB_{,i} = (\vA^\top)_{,i}$. Thus, learning a diagonal matrix $\vM$ using the asymmetric model $\vA\vB$ is equivalent to using a symmetric model $\vA\vA^\top$. The final outcome ensures that $\diag(\vA\vB) = \diag(\vA\vA^\top) = \diag(\vM)$.

The nuclear norm of $\vA\vA^\top$ equals the sum of its eigenvalues, which is precisely the trace of the matrix, and $\operatorname{tr}(\vA\vA^\top) = \operatorname{tr}(\vM) = \mu_1 + \mu_2 +\cdots + \mu_d$. Therefore, the nuclear norm of the learned matrix $\vW = \vA\vB = \vA\vA^\top$ remains $|\mu_1| + |\mu_2| +\cdots + |\mu_d|$.

(ii) If some $\mu_i < 0$, assume without loss of generality that $|\mu_1|>|\mu_2|>\cdots >|\mu_n|>0$.

While starting to learn $\mu_i$ from a critical point, direct calculation confirms that $\vv = [\ve_i, \operatorname{sign}(\mu_i)\ve_i]^\top\in \mathbb{R}^{2d}$. Escaping from each saddle point $\vtheta_c$, the trajectory $\vtheta(t) - \vtheta_c$ approximates $\sum_{i=1}^{d}c_i (\vv\otimes \ve_i)$, satisfying $\vB_{,i} = \operatorname{sign}(\mu_i)(\vA^\top)_{,i}$. Hence, $\vA\vB = \vA\vA^\top \vQ$, where $\vQ$ is an orthogonal matrix given by:
\[
\vQ = \begin{bmatrix}
    \operatorname{sign}(\mu_1) & & &\\
    & \operatorname{sign}(\mu_2) & & \\
    & & \ddots &\\
    & & & \operatorname{sign}(\mu_d)\\
\end{bmatrix}.
\]

The final result ensures that $\diag(\vA\vB) = \diag(\vA\vA^\top\vQ) = \diag(\vM)$, meaning $\diag(\vA\vA^\top) = \diag(\vQ\vM)$. The nuclear norm of $\vA\vA^\top$ equals the sum of its eigenvalues, which is the trace of the matrix, and $\operatorname{tr}(\vA\vA^\top) = \operatorname{tr}(\vQ\vM) = |\mu_1| + |\mu_2| +\cdots + |\mu_d|$.

Since an orthogonal transformation does not change the nuclear norm of a matrix, the nuclear norm of the final learned matrix $\vW = \vA\vB = \vA\vA^\top \vQ$ is still $|\mu_1| + |\mu_2| +\cdots + |\mu_d|$.
\end{proof}

\subsubsection*{Disconnected with Complete Bipartite Components}

\begin{oldtheorem}[\textbf{Minimum Nuclear Norm Guarantee}]
Consider the dynamics $\dot{\vtheta} = -\nabla R_S(\vtheta)$, where $\vtheta(t) = (\vA(t), \vB(t))$, and let $\vW_t = \vA(t) \vB(t)$. If the observation graph associated with the matrix $\vM$ to be completed is disconnected with complete bipartite components, and if for a full rank initialization $\vW_0$, the limit $\widehat{\vW}=\lim_{\alpha \rightarrow 0} \vW_{\infty}(\alpha \vW_0)$ exists and is a global optimum with $\widehat{\vW}_{ij}=\vM_{ij}$ for all $(i, j)\in S_{\vx}$, then
\begin{equation}
\widehat{\vW} \in \operatorname{argmin}_{\vW}\|\vW\|_* \quad \text{s.t.} \quad \vW_{ij}=\vM_{ij}, \forall (i, j)\in S_{\vx}.    
\end{equation}
\label{oldthm:nuclear_norm}
\end{oldtheorem}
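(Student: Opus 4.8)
The plan is to exploit the fact that ``disconnected with complete bipartite components'' means, after relabelling, the observation pattern is block‑diagonal with \emph{fully observed} blocks, so that the dynamics decouple into a collection of full‑observation matrix factorization problems, each of which converges to a \emph{balanced} factorization in the infinitesimal‑initialization limit. First I would permute the rows and columns of $\vM$ so that (ignoring the all‑unobserved rows/columns, which stay at $\vzero$ under infinitesimal initialization by the standing remark) the observed set is $S_{\vx}=\bigsqcup_{\ell=1}^{m} R_\ell\times C_\ell$ with each block $R_\ell\times C_\ell$ fully observed; write $\vM^{(\ell)}:=\vM_{R_\ell,C_\ell}$, $\vA^{(\ell)}:=\vA_{R_\ell,:}$, $\vB^{(\ell)}:=\vB_{:,C_\ell}$. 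By Prop.~\ref{oldprop:coupling_equivalence} the gradient flow splits into $m$ independent subsystems, the $\ell$-th being exactly the gradient flow of $\|\vA^{(\ell)}\vB^{(\ell)}-\vM^{(\ell)}\|_F^2$; this is where the complete‑bipartite hypothesis enters essentially, since it makes every subsystem a full factorization problem rather than a nontrivial completion.

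Next I would record two elementary facts. (a) Within each subsystem, gradient flow conserves $(\vA^{(\ell)})^{\top}\vA^{(\ell)}-\vB^{(\ell)}(\vB^{(\ell)})^{\top}$ (a one‑line computation). Factoring and rescaling $\vW_0$ by $\sqrt{\alpha}$ makes the per‑subsystem balance defect at initialization $O(\alpha)$, hence $O(\alpha)$ for all time, so the rescaled limiting factors $\widehat{\vA}^{(\ell)},\widehat{\vB}^{(\ell)}$ satisfy $(\widehat{\vA}^{(\ell)})^{\top}\widehat{\vA}^{(\ell)}=\widehat{\vB}^{(\ell)}(\widehat{\vB}^{(\ell)})^{\top}=:\vG_\ell$, and $\widehat{\vA}^{(\ell)}\widehat{\vB}^{(\ell)}=\widehat{\vW}_{R_\ell,C_\ell}=\vM^{(\ell)}$ by the interpolation hypothesis. (b) If $\vA^{\top}\vA=\vB\vB^{\top}$ then $\|\vA\vB\|_{*}=\rtr(\vA^{\top}\vA)$: writing $\vA^{\top}\vA=\vB\vB^{\top}=\vO\vLambda\vO^{\top}$ with $\vO^{\top}\vO=\vI$ and $\vLambda\succeq\vzero$, balancedness forces $\vA=\vP\vLambda^{1/2}\vO^{\top}$ and $\vB=\vO\vLambda^{1/2}\vQ^{\top}$ with $\vP^{\top}\vP=\vQ^{\top}\vQ=\vI$, so $\vA\vB=\vP\vLambda\vQ^{\top}$ has singular values $\rdiag(\vLambda)$ and $\|\vA\vB\|_{*}=\rtr(\vLambda)=\rtr(\vA^{\top}\vA)$. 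Applying (b) inside block $\ell$ gives $\|\vM^{(\ell)}\|_{*}=\rtr(\vG_\ell)$.

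Then I would assemble the global picture. Since the rows of $\vA$ partition as $\bigsqcup_\ell R_\ell$ and the columns of $\vB$ as $\bigsqcup_\ell C_\ell$, we get $\widehat{\vA}^{\top}\widehat{\vA}=\sum_\ell(\widehat{\vA}^{(\ell)})^{\top}\widehat{\vA}^{(\ell)}=\sum_\ell\vG_\ell=\widehat{\vB}\widehat{\vB}^{\top}$, so the global factors are balanced and fact (b) yields $\|\widehat{\vW}\|_{*}=\rtr(\widehat{\vA}^{\top}\widehat{\vA})=\sum_\ell\rtr(\vG_\ell)=\sum_\ell\|\vM^{(\ell)}\|_{*}$. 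It remains to show $\sum_\ell\|\vM^{(\ell)}\|_{*}$ equals $\min_{\vW}\|\vW\|_{*}$ over all completions of $\vM$. The upper bound is witnessed by the block‑diagonal completion (zero off the blocks), whose nuclear norm is $\sum_\ell\|\vM^{(\ell)}\|_{*}$. For the matching lower bound I would mimic Lem.~\ref{lem:min_nuclear_norm}: with $\vM^{(\ell)}=\tilde{\vU}^{(\ell)}\tilde{\vLambda}^{(\ell)}(\tilde{\vV}^{(\ell)})^{\top}$, form $\vX$ placing $\tilde{\vU}^{(\ell)}(\tilde{\vV}^{(\ell)})^{\top}$ in block $R_\ell\times C_\ell$ and $\vzero$ elsewhere; since the blocks occupy disjoint rows and disjoint columns, $\|\vX\|_{2}\le 1$, so for every completion $\vW$, $\|\vW\|_{*}\ge\langle\vW,\vX\rangle=\sum_\ell\langle\vM^{(\ell)},\tilde{\vU}^{(\ell)}(\tilde{\vV}^{(\ell)})^{\top}\rangle=\sum_\ell\|\vM^{(\ell)}\|_{*}$. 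Combining, $\widehat{\vW}$ attains the minimum nuclear norm among all completions, which is the claim.

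\textbf{Main obstacle.} The one point requiring care is Step (a): justifying that the rescaled factors $\widehat{\vA}^{(\ell)},\widehat{\vB}^{(\ell)}$ actually have limits that reconstruct $\widehat{\vW}$ and that balancedness survives the $\alpha\to 0$ limit. This is exactly the bookkeeping underlying $\lim_{\alpha\to 0}\vW_\infty(\alpha\vW_0)$, and I would handle it as in the proof of Prop.~\ref{oldprop:diag_min_nuclear_norm}, invoking the hierarchical‑traversal / escaping‑direction analysis of Thm.~\ref{oldthm:transition_to_next_level}: at each saddle the trajectory escapes along a direction whose $\vA$- and $\vB$-increments are matched (the generalization of the ``$\vB_{\cdot,i}=\pm(\vA^{\top})_{\cdot,i}$'' structure used in the diagonal case), so the limiting per‑subsystem factorization is balanced. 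Everything else in the argument is linear algebra.
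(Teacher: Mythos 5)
Your proof is correct, but it takes a genuinely different route from the paper's. The paper reduces the problem to its diagonal case, Prop.~\ref{oldprop:diag_min_nuclear_norm}: it first rotates each fully observed block $\vM_i$ to $\vSigma_i$ via block-diagonal orthogonal matrices $\vU,\vV$, then invokes the diagonal proposition, whose proof in turn extracts the alignment $\tilde{\vB}_{\cdot,i}=\pm(\tilde{\vA}^\top)_{\cdot,i}$ from the escaping-direction analysis of the Hessian at each saddle (the machinery of Lem.~\ref{oldlem:eigenvectors_H2} and Thm.~\ref{oldthm:transition_to_next_level}). You instead establish the crucial symmetry directly from the exact conservation law $\tfrac{\D}{\D t}\left(\vA^\top\vA-\vB\vB^\top\right)=\vzero$ of the gradient flow, which makes the $O(\alpha)$ balance defect transparent without invoking the HIMT saddle-escape analysis at all; this is cleaner and avoids the more delicate dynamical argument. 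You also keep the block structure and prove the matching lower bound $\|\vW\|_*\ge\sum_\ell\|\vM^{(\ell)}\|_*$ directly via the dual-norm witness $\vX$ built from $\tilde{\vU}^{(\ell)}(\tilde{\vV}^{(\ell)})^\top$, whereas the paper rotates to a diagonal observation pattern and applies Lem.~\ref{lem:min_nuclear_norm}; the two are equivalent, but yours is more self-contained. The remaining gap you flagged (extracting convergent factor limits) is handled more easily than you suggest: the factors are bounded (by approximate balancedness plus boundedness of $\vW_\infty(\alpha)$), so a convergent subsequence gives exactly balanced limiting factors whose product is $\widehat{\vW}$, and your fact (b) then applies; no appeal to Thm.~\ref{oldthm:transition_to_next_level} is actually needed for balancedness, only for the (assumed) existence of the limit $\widehat{\vW}$.
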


% \begin{proof}[Proof Sketch]
% Prop.~\ref{oldprop:diag_min_nuclear_norm} implies that for the disconnected case, if for each connected component model learns the minimum nuclear norm solution, then the combination of these components also yields the minimum nuclear norm solution for the entire matrix. In the case of diagonal observations, each connected component naturally learns the diagonal elements, which inherently results in the minimum nuclear norm solution.
% When the matrix $\vM$ is disconnected and each connected component forms a complete bipartite graph, the matrix can be transformed into a block diagonal matrix through row and column exchange. In this transformed representation, each block contains full observations corresponding to its connected component. Consequently, the solution learned by each block must be the minimum nuclear norm solution for that specific component. Since the combination of minimum nuclear norm solutions for each component also minimizes the nuclear norm of the entire matrix, the overall solution obtained for the disconnected matrix $\vM$ is the minimum nuclear norm solution.
% \end{proof}

\begin{proof}
Consider a matrix $\vM \in \mathbb{R}^{d \times d}$ composed of $m$ connected components, with each component forming a complete bipartite subgraph. Since $\vM$ is disconnected, it can be represented in a block diagonal form without loss of generality:
\[
\vM = \begin{bmatrix}
    \vM_1 & & &\\
    & \vM_2 & & \\
    & & \ddots &\\
    & & & \vM_m\\
\end{bmatrix},
\]
where each block $\vM_i \in \mathbb{R}^{d_{i} \times d_{i}'}$, and $\sum_{i=1}^m d_{i} = d, \sum_{i=1}^m d_{i}' = d$, representing the sum of the dimensions of the blocks.

Each block $\vM_i$ corresponds some singular values of the corresponding Hessian matrix at a critical point. With a infinitesimal initialization, the training dynamics first focus on the block with the largest singular value, then proceed sequentially to blocks with smaller singular values. This pattern of learning is consistent with the concept of ``sequential learning'' as reported in the literature \citep{gidel2019implicit,gissin2019implicit,jiang2023algorithmic}.

Since each connected component of $\vM$ forms a complete bipartite subgraph, the block $\vM_i$ is fully observed. We can do singular value decomposition (SVD) on each sub-block $\vM_i$ as $\vM_i = \vU_i \Sigma_i \vV_i^\top$, where $\vU_i$ and $\vV_i$ are orthogonal matrices, and $\Sigma_i$ is a diagonal matrix with the singular values of $\vM_i$.

Construct block diagonal matrices $\vU$ and $\vV$ as follows:
\[
\vU = \begin{bmatrix}
    \vU_1 & & &\\
    & \vU_2 & & \\
    & & \ddots &\\
    & & & \vU_m\\
\end{bmatrix}, \quad \vV = \begin{bmatrix}
    \vV_1 & & &\\
    & \vV_2 & & \\
    & & \ddots &\\
    & & & \vV_m\\
\end{bmatrix}.
\]

This leads to the diagonal matrix:
\[
\vU \vM \vV^\top = \begin{bmatrix}
     \vSigma_1 & & &\\
    & \vSigma_2 & & \\
    & & \ddots &\\
    & & & \vSigma_m\\   
\end{bmatrix} = \begin{bmatrix}
    \mu_1 & & &\\
    & \mu_2 & & \\
    & & \ddots &\\
    & & & \mu_d\\
\end{bmatrix}.
\]

Orthogonal transformations preserve the nuclear norm, so by Lem.~\ref{lem:min_nuclear_norm}, the minimal nuclear norm among all possible completions is the sum of the absolute values of the diagonal entries, i.e., $|\mu_1| + |\mu_2| +\cdots + |\mu_d|$.

Consider an incomplete matrix $\vM$ whose associated observational graph is divided into $m$ connected components, denoted by $L_1, L_2, \ldots, L_m$. For each component $L_p$, we define $S_{\vx}^{L_p}$ as the subset of observed indices within $L_p$, where $1 \leq p \leq m$ and $S_{\vx}$ is the set of all observed indices.

For each $L_p$, we can identify row indices $R_p$ and column indices $C_p$ corresponding to the observed entries in $L_p$ as follows:
\begin{equation*}
R_p = \{i | \exists j: (i, j)\in S_{\vx}^{L_p}\}, \quad C_p = \{j | \exists i: (i, j)\in S_{\vx}^{L_p}\}.
\end{equation*}
Here, $R_p$ includes the row indices and $C_p$ includes the column indices of the entries observed in $L_p$. 
% We denote $n^R_p$ as the element number of $R_p$ and $n^C_p$ as the element number of $C_p$ and $n_p = \min\{n^R_p, n^C_p\}$.

Define $\vA^{L_p}$ and $\vB^{L_p}$ as the submatrices of $\vA$ and $\vB$ corresponding to $R_p$ and $C_p$, respectively. The evolution of $\vA^{L_p}$ is influenced only by $\vB^{L_p}$. Thus, the dynamics decouple into $m$ independent parts, each equivalent to learning a fully observed matrix $\vM_i$. 

Accordingly, we partition $\vA$ and $\vB$ into $m$ blocks:
\begin{equation*}
\vA = \begin{bmatrix}
    \vA_1\\
    \vdots\\
    \vA_m
\end{bmatrix}, \quad \vB =\begin{bmatrix}
    \vB_1 & \cdots & \vB_m
\end{bmatrix},
\end{equation*}
where $\vA_i = \vA^{L_i}$ and $\vB_j = \vB^{L_j}$.

Denote $L_i = \|\vA_i \vB_i - \vM_i\|_2^2$. The overall loss function $L = \sum_{i=1}^m L_i$ can be decomposed into $m$ independent parts. Performing orthogonal transformations $\tilde{\vA}_i = \vU_i^\top \vA_i$ and $\tilde{\vB}_i = \vB_i \vV_i$, we obtain a diagonal loss $\tilde{L}_i = \|\tilde{\vA}_i \tilde{\vB}_i - \vSigma_i\|_2^2$ for $1\leq i \leq C$.

Since gradient descent is the steepest descent in the $l_2$ norm and orthogonal transformations preserve this norm, the dynamics of optimizing $\tilde{L}_i$ are equivalent to those of optimizing $L_i$.

Without loss of generality, assume $\mu_1 > \mu_2 > \cdots > \mu_d >0$. Otherwise, as with Prop.~\ref{oldprop:diag_min_nuclear_norm}, a sign orthogonal transformation $\vQ$ can be applied without changing the nuclear norm.

By Prop.~\ref{oldprop:diag_min_nuclear_norm}, the learning result for a diagonal matrix implies $\tilde{\vB}_i = \tilde{\vA}_i^\top$, which means $\vB_i \vV_i = \vU_i^\top \vA_i^\top$.

The final learning result
\begin{equation*}
\tilde{\vA} = \begin{bmatrix}
    \tilde{\vA}_1\\
    \vdots\\
    \tilde{\vA}_m
\end{bmatrix}, \quad \tilde{\vB} =\begin{bmatrix}
    \tilde{\vB}_1 & \cdots & \tilde{\vB}_m
\end{bmatrix},
\end{equation*}
satisfies $\tilde{\vB} = \tilde{\vA}^\top$. 

The result ensures that $\diag(\tilde{\vA} \tilde{\vB}) = \diag(\tilde{\vA} \tilde{\vA}^\top) = \diag(\vSigma)$. The nuclear norm of $\tilde{\vA} \tilde{\vA}^\top$ equals the sum of its eigenvalues, which is the trace of the matrix, and $\rtr(\tilde{\vA} \tilde{\vA}^\top) = \rtr(\Sigma) = |\mu_1| + |\mu_2| +\cdots + |\mu_d|$.

Since orthogonal transformations do not alter the nuclear norm of a matrix, the nuclear norm of $\vW = \vA\vB$ is also $|\mu_1| + |\mu_2| +\cdots + |\mu_d|$, concluding the proof.
\end{proof}

\section{Experimental Setup and Supplementary Experiments}
\label{app:B}
\renewcommand\thefigure{B\arabic{figure}} 
\setcounter{figure}{0} 

In this section, we present the supplementary experiments mentioned in the main text and the details of experiments.

\subsection{Experimental Setup}
\label{app:B:experimental_setup}
For all our experiments, we employ gradient descent with a carefully chosen small learning rate. A learning rate is deemed suitable when it yields a smooth, monotonically decreasing training trajectory for the loss function, free from any abrupt fluctuations or oscillations. We initialize all model parameters using a Gaussian distribution with a mean of zero and a variance that is detailed for each specific experiment. Because of the small size of the experiment, the experiment can be completed on a single CPU.

The criterion for the sufficiency of training in all cases is a training loss that falls below $10^{-10}$. To ascertain the rank of the matrix produced by the learning process, we utilize a technique of extrapolation with an infinitesimally small initialization. As depicted in Fig.~\ref{fig:adaptive-learning}(b), if a singular value persistently diminishes in response to decreasing initialization magnitudes, it is then inferred that such a singular value will not contribute to the rank in the context of an infinitesimal initialization.

We have included code in the Supplementary Material that determines the connectivity of a partially observed matrix and provides specific examples illustrating the implicit regularization effects. This code can be used to reproduce our results and explore the relationship between data connectivity and the implicit biases of matrix factorization models in various matrix completion scenarios.

\subsection{Connectivity Experiments}
In the connectivity experiments corresponding to Fig.~\ref{fig:Nuclear norm rank}, we explore the behavior of randomly generated $4 \times 4$ matrices with intrinsic ranks of 1, 2, and 3. To investigate the impact of sampling density on matrix reconstruction, we sample matrices at three different levels: $2rd-r^2$, which meets the threshold for exact reconstruction, $2rd-r^2-1$, which is just below the threshold, and $2rd-r^2+1$, which exceeds the threshold.

For each sampling size, we randomly generate 10 sets of sampling positions. We then assess the connectivity of the sampled positions and compute both the rank and the nuclear norm of the solutions obtained through gradient descent. As an illustration, in Fig.~\ref{fig:4x4_connectivity_examples}, panel (a) presents a scenario with connected sampling positions, panel (b) shows disconnected sampling positions, and panel (c) depicts disconnected sampling with each disconnected component forming a complete bipartite graph.

\begin{figure}[htb]
    \centering
    \subfloat[Connected]{\includegraphics[width=0.3\textwidth]{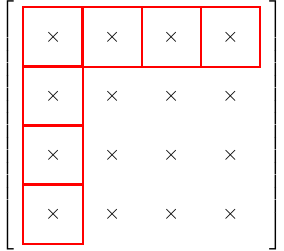}}\quad
    \subfloat[Disconnected]{\includegraphics[width=0.3\textwidth]{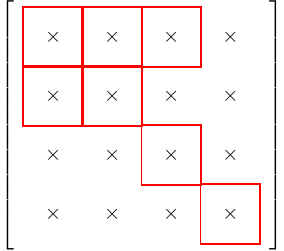}}\quad
    \subfloat[Disconnected with complete bipartite components]{\includegraphics[width=0.3\textwidth]{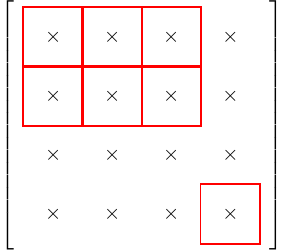}}
    \caption{Examples of connected sampling and disconnected sampling patterns in Fig.~\ref{fig:Nuclear norm rank}.}
\label{fig:4x4_connectivity_examples}
\end{figure}

In the connectivity experiments depicted in Figs.~\ref{fig:examples}(c-d), we examine the behavior of randomly generated matrices of size $4 \times 4$ and $10 \times 10$ with a rank of 1. The matrices are sampled at a size of $2rd-r^2$, which corresponds to the threshold for exact reconstruction. We evaluate two connected and one disconnected sampling patterns.

Fig.~\ref{fig:4x4_reconstruction_examples}(a) displays the first connected sampling pattern, where all entries in the first row and the first column are sampled. Fig.~\ref{fig:4x4_reconstruction_examples}(b) illustrates the second connected sampling pattern, which forms a ``Z'' shape across the matrix. Fig.~\ref{fig:4x4_reconstruction_examples}(c) shows the disconnected sampling pattern, where the samples are split into two unconnected blocks, one in the top-left and the other in the bottom-right of the matrix. A similar approach is taken for the $10 \times 10$ matrices.

\begin{figure}[htb]
    \centering
    \subfloat[Connected sampling 1]{\includegraphics[width=0.3\textwidth]{figures/4x4_rank1_connected.pdf}}\quad
    \subfloat[Connected sampling 2]{\includegraphics[width=0.3\textwidth]{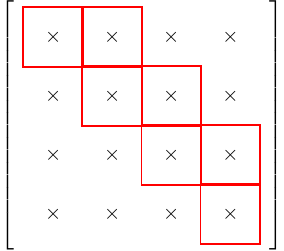}}\quad
    \subfloat[Disconnected sampling]{\includegraphics[width=0.3\textwidth]{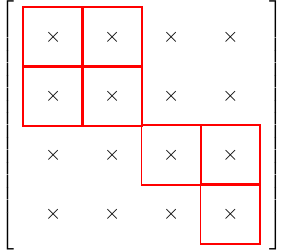}}
    \caption{Examples of connected sampling and disconnected sampling patterns in Fig.~\ref{fig:examples}(c).}
\label{fig:4x4_reconstruction_examples}
\end{figure}

For Figs.~\ref{fig:examples} (a-b), we performed 100 random initializations for each initialization scale, recorded the mean and standard deviation, and plotted them on the figure. For a sequence $x_1, x_2, \cdots, x_n$, the mean is calculated by $\bar{x} = \frac{1}{n} \sum_{i=1}^n x_i$,
and the standard deviation is calculated by:
\[
\sigma = \sqrt{\frac{1}{n-1} \sum_{i=1}^n (x_i - \bar{x})^2}.
\]

Figs.~\ref{fig:examples}(c-d) demonstrate that when the target matrix has a rank of 1 and the number of samples meets the minimum requirement for reconstruction with connected sampling positions, the matrix factorization model is capable of accurately reconstructing the original target matrix.

In scenarios where the target matrix has a higher rank, we have extended our experiments accordingly. For a randomly chosen $4 \times 4$ matrix with rank 2, we selected a sample count less than the threshold of $2rd-r^2=12$, specifically 10 samples, while ensuring that the sampling pattern is connected, as shown in Fig.~\ref{fig:sampling_of_rank-2}. The resulting solution from the matrix completion has a rank of 2, which is the minimal rank that fits the sampled data.

\begin{figure}[htb]
  \centering
  \includegraphics[width=0.3\textwidth]{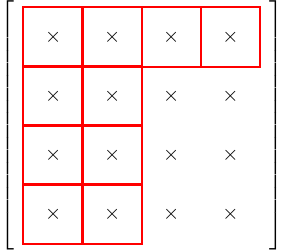}
  \caption{The sample pattern in Fig.~\ref{fig:4x4_singular_values}.}
  \label{fig:sampling_of_rank-2}
\end{figure}

Fig.~\ref{fig:4x4_singular_values} reveals distinct behaviors of the matrix completion depending on the scale of initialization. With a larger initialization, the third and fourth singular values of the completed matrix remain relatively significant, suggesting that the model does not converge to the lowest rank solution. On the other hand, with a smaller initialization, the third and fourth singular values are uniformly small, indicating that the model successfully converges to the lowest rank solution.
\begin{figure}[htb]
    \centering
    \subfloat[Initialization scale $\sigma = 10^{-1}$]{\includegraphics[height=0.23\textwidth]{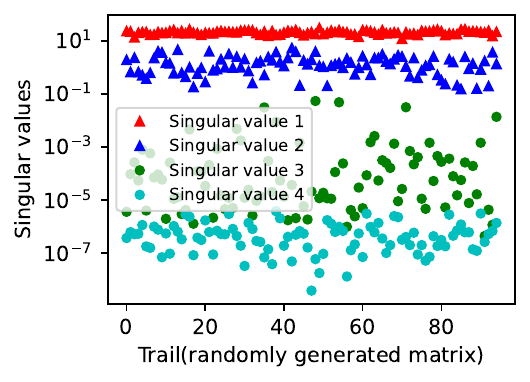}}
    \subfloat[Initialization scale $\sigma = 10^{-7}$]{\includegraphics[height=0.23\textwidth]{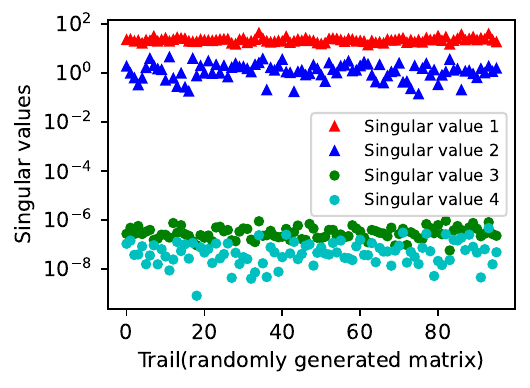}} 
    \subfloat[Reconstruction error]{\includegraphics[height=0.23\textwidth]{figures/4x4_reconstruction_error.pdf}}
    \caption{For a randomly selected $4 \times 4$ matrix with rank 2, we chose 10 samples, fewer than the threshold $2rd-r^2=12$, ensuring connected sampling positions, as shown in Fig.~\ref{fig:sampling_of_rank-2}. The figures show the experimental results for the four singular values of the matrix learned under Gaussian initialization with mean 0 and standard deviations of $10^{-1}$ (a) and $10^{-7}$ (b), respectively. (c) Reconstruction error of the solutions for a $4\times 4$ matrix reconstruction problem with $\vM^*$ randomly sampled at rank $r=1$ and sample size set to the minimum reconstruction setting $n = 2rd-r^2$. 
    Red and blue scatter points represent two connected sampling patterns, while green points represent a disconnected pattern.}
    \label{fig:4x4_singular_values}
\end{figure}

\begin{figure}[htb]
\centering
\subfloat[Matrix completion]{
\includegraphics[height=0.18\textwidth]{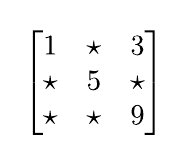}
}
\subfloat[Singular values of $\mA$]{
\includegraphics[height=0.18\textwidth]{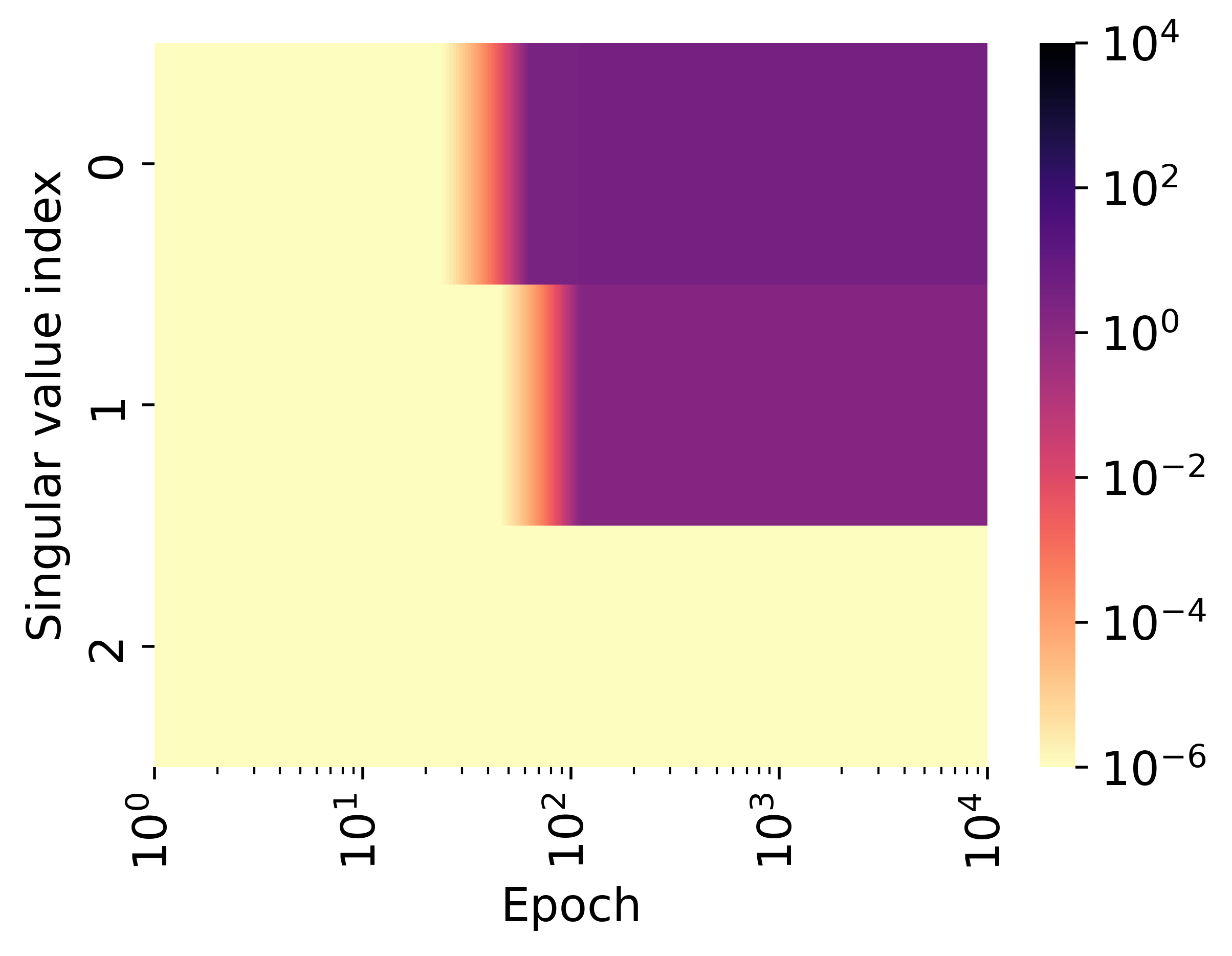}
}
\subfloat[Singular values of $\mB$]{
\includegraphics[height=0.18\textwidth]{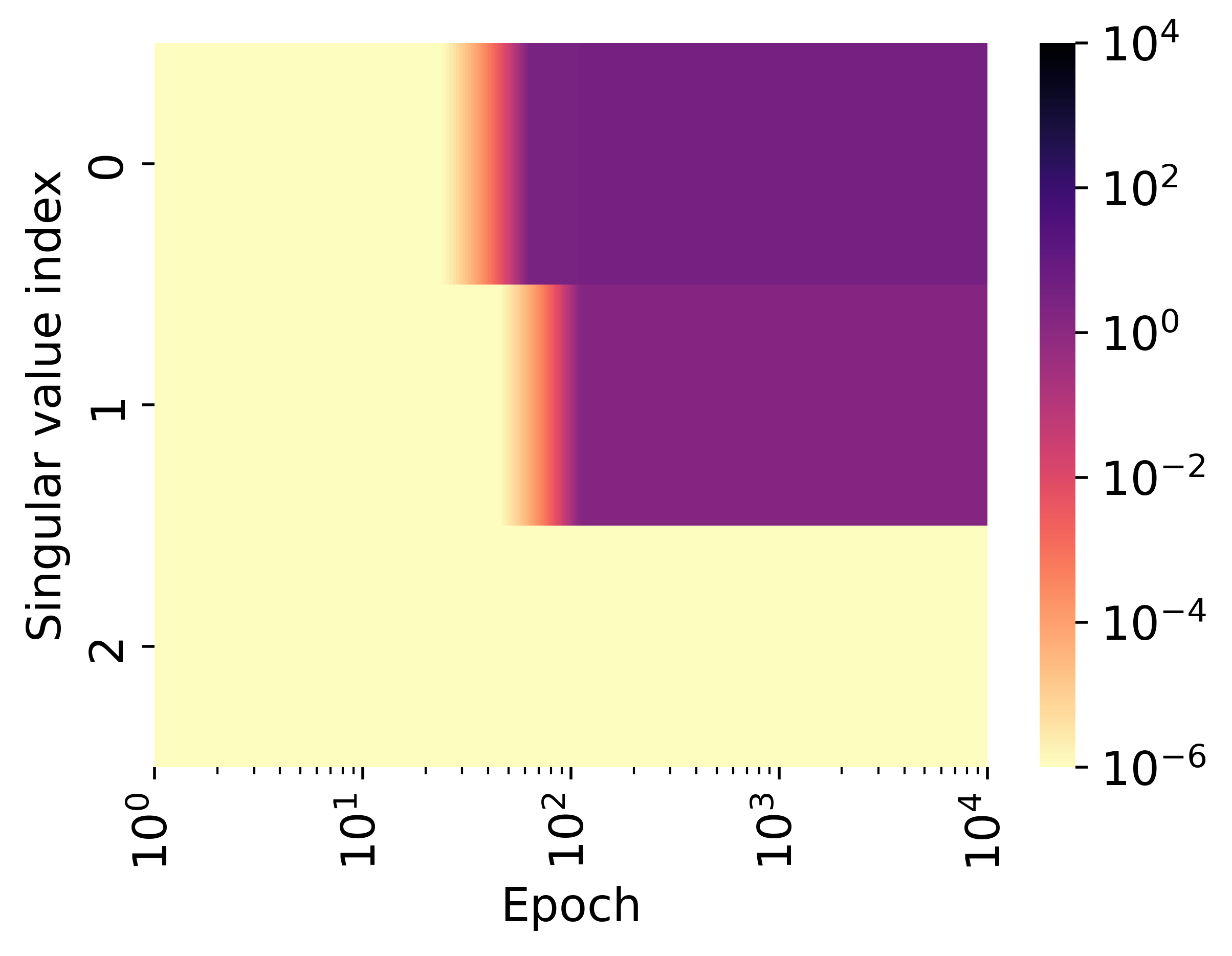}
}
\subfloat[Singular values of $\mW_{\text{aug}}$]{
\includegraphics[height=0.18\textwidth]{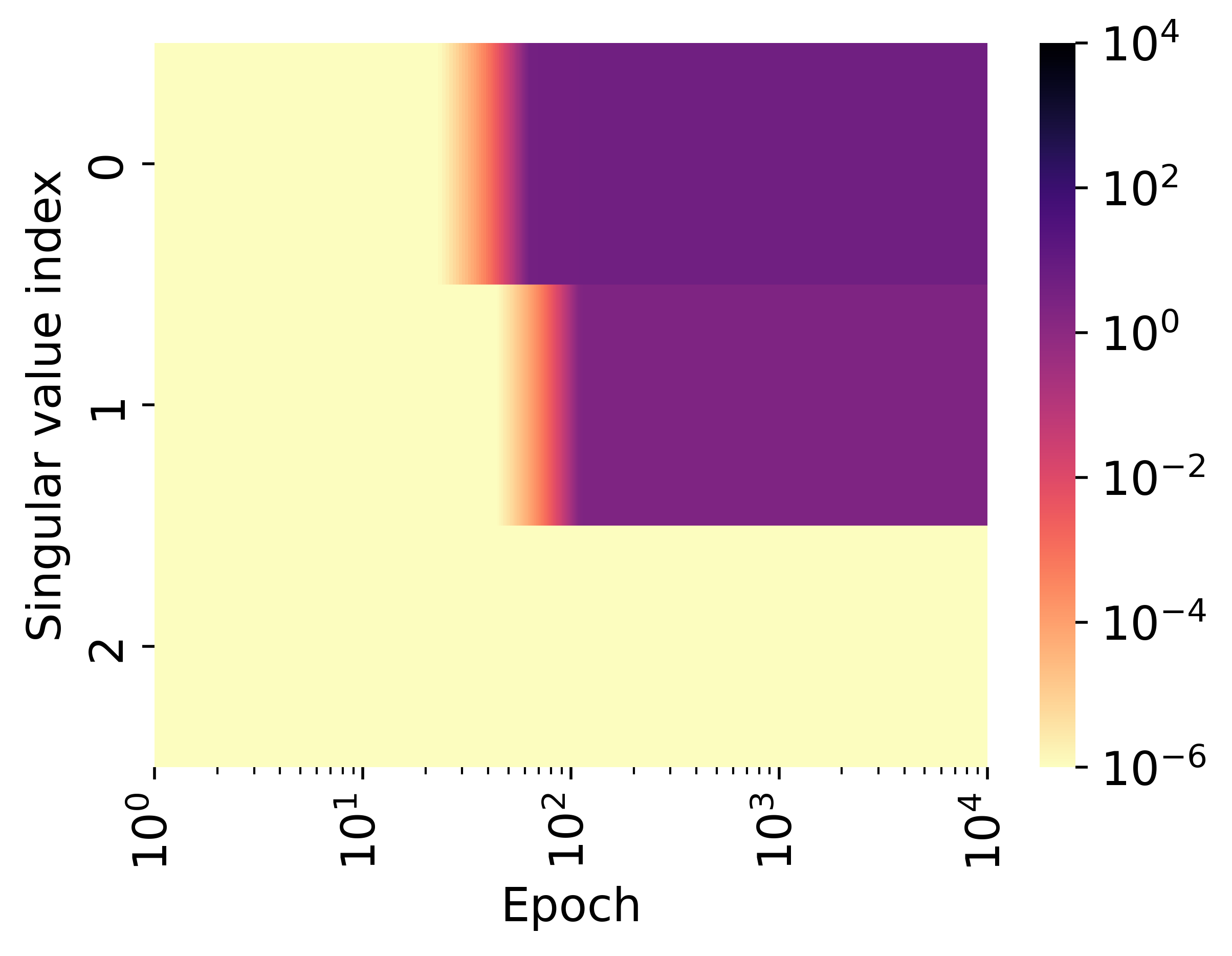}
}
\\
\subfloat[Training loss]{
\includegraphics[height=0.18\textwidth]{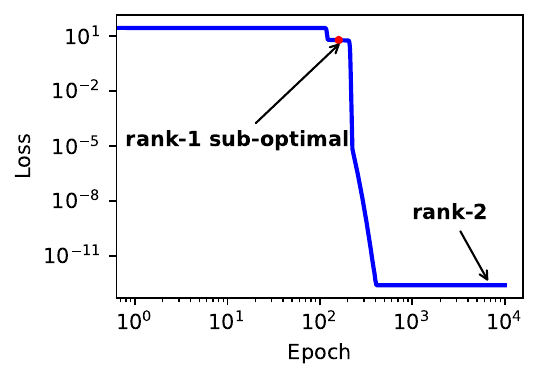}
}
\subfloat[Output at $\star$]{
\includegraphics[height=0.175\textwidth]{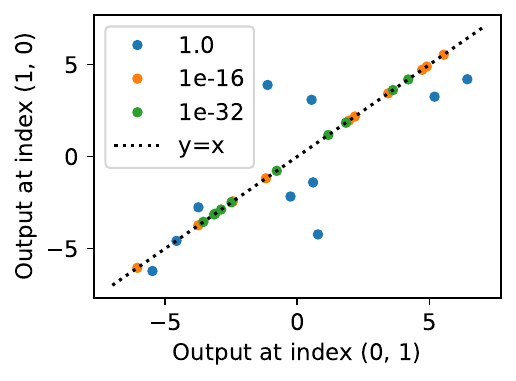}
}
\subfloat[Saddle point]{
\includegraphics[height=0.18\textwidth]{figures/GLRL_saddle1.png}
}
\subfloat[GLRL solution]{
\includegraphics[height=0.18\textwidth]{figures/GLRL_saddle2.png}
}
\caption{
(a) The matrix to be completed, with unknown entries marked by $\star$. (b-d) Evolution of singular values for $\mA$, $\mB$, and $\mW_{\mathrm{aug}}$ during training. (e) Training loss for the disconnected sampling pattern. (f) Learned values at symmetric positions $(0, 1)$ and $(1, 0)$ under varying initialization scales (zero mean, varying variance). Each point represents one of ten random experiments per variance; labels show initialization variance. Other symmetric positions exhibit similar behavior. (g) Learned output at the saddle point corresponding to the red dot in (e). (h) Final learned solution of the GLRL algorithm \citep{li2020towards}.}
\label{fig:adaptive-learning3}
\end{figure}

\subsection{Equivalent Sampling Patterns}
For a given sample size, there are different sampling models corresponding to connected or disconnected sampling. As shown in Fig.~\ref{fig:4x4_connectivity_examples}, 7 observations are sampled, but different sampling positions affect connectivity or disconnection. To thoroughly study all possible cases, we examine all sampling cases of a $3\times 3$ matrix completion, as illustrated in Figure~\ref{fig:examples}(c).

For a $3\times 3$ matrix, the sample size varies from 1 to 9. When using the matrix decomposition model $\vf_{\vtheta}=\vA\vB$ for matrix completion, the dynamics obtained by exchanging rows or columns or transposing the matrix to be completed are equivalent. These three operations allow us to divide all sampling patterns equally.

For sample size = 1, there is only one sampling pattern in the equivalent sense, and the observation matrix $\vP$ is:
\[
\vP_1 = \begin{bmatrix}
    1 & 0 & 0\\
    0 & 0 & 0\\
    0 & 0 & 0
\end{bmatrix}.
\]
where $1$ indicates that the position is observed and non-zero, and $0$ means that the position is not observed or is $0$.

For sample size = 2, there are only 2 sampling patterns in the equivalent sense:
\[
\vP_1 = \begin{bmatrix}
    1 & 1 & 0\\
    0 & 0 & 0\\
    0 & 0 & 0
\end{bmatrix}, \vP_2 = \begin{bmatrix}
    1 & 0 & 0\\
    0 & 1 & 0\\
    0 & 0 & 0
\end{bmatrix}
\]

For sample size = 3, there are only 4 sampling patterns in the equivalent sense:
\[
\vP_1 = \begin{bmatrix}
    1 & 1 & 1\\
    0 & 0 & 0\\
    0 & 0 & 0
\end{bmatrix}, \vP_2 = \begin{bmatrix}
    1 & 1 & 0\\
    1 & 0 & 0\\
    0 & 0 & 0
\end{bmatrix}, \vP_3 = \begin{bmatrix}
    1 & 1 & 0\\
    0 & 0 & 1\\
    0 & 0 & 0
\end{bmatrix}, \vP_4 = \begin{bmatrix}
    1 & 0 & 0\\
    0 & 1 & 0\\
    0 & 0 & 1
\end{bmatrix}
\]

For sample size = 4, there are only 5 sampling patterns in the equivalent sense:
\[
\vP_1 = \begin{bmatrix}
    1 & 1 & 1\\
    1 & 0 & 0\\
    0 & 0 & 0
\end{bmatrix}, \vP_2 = \begin{bmatrix}
    1 & 1 & 0\\
    1 & 1 & 0\\
    0 & 0 & 0
\end{bmatrix}, \vP_3 = \begin{bmatrix}
    1 & 1 & 0\\
    0 & 1 & 1\\
    0 & 0 & 0
\end{bmatrix}, \vP_4 = \begin{bmatrix}
    1 & 1 & 0\\
    1 & 0 & 0\\
    0 & 0 & 1
\end{bmatrix}, \vP_5 = \begin{bmatrix}
    1 & 1 & 0\\
    0 & 0 & 1\\
    0 & 0 & 1
\end{bmatrix}
\]

For sample size = 5, there are only 3 sampling patterns in the equivalent sense:
\[
\vP_1 = \begin{bmatrix}
    1 & 1 & 1\\
    1 & 1 & 0\\
    0 & 0 & 0
\end{bmatrix}, \vP_2 = \begin{bmatrix}
    1 & 1 & 1\\
    1 & 0 & 0\\
    1 & 0 & 0
\end{bmatrix}, \vP_3 = \begin{bmatrix}
    1 & 1 & 0\\
    1 & 1 & 0\\
    0 & 0 & 1
\end{bmatrix}
\]

For sample size = 6, there are only 4 sampling patterns in the equivalent sense:
\[
\vP_1 = \begin{bmatrix}
    1 & 1 & 1\\
    1 & 1 & 1\\
    0 & 0 & 0
\end{bmatrix}, \vP_2 = \begin{bmatrix}
    1 & 1 & 1\\
    1 & 1 & 0\\
    0 & 0 & 1
\end{bmatrix}, \vP_3 = \begin{bmatrix}
    1 & 1 & 1\\
    1 & 1 & 0\\
    1 & 0 & 0
\end{bmatrix}, \vP_4 = \begin{bmatrix}
    1 & 1 & 0\\
    0 & 1 & 1\\
    1 & 0 & 1
\end{bmatrix}
\]

For sample size = 7, there are only 2 sampling patterns in the equivalent sense:
\[
\vP_1 = \begin{bmatrix}
    1 & 1 & 1\\
    1 & 1 & 1\\
    1 & 0 & 0
\end{bmatrix}, \vP_2 = \begin{bmatrix}
    1 & 1 & 1\\
    1 & 1 & 0\\
    0 & 1 & 1
\end{bmatrix}
\]

For sample size = 8, there is only 1 sampling pattern in the equivalent sense:
\[
\vP_1 = \begin{bmatrix}
    1 & 1 & 1\\
    1 & 1 & 1\\
    1 & 1 & 0
\end{bmatrix}
\]

For sample size = 9, there is only 1 sampling pattern in the equivalent sense:
\[
\vP_1 = \begin{bmatrix}
    1 & 1 & 1\\
    1 & 1 & 1\\
    1 & 1 & 1
\end{bmatrix}
\]

\subsection{Initialization Scale Analysis}
\label{app:B_initialization_scale}
Our experimental findings indicate that when the observational data is connected, matrix factorization models often learn the lowest-rank solution starting from a small initialization. However, the required scale of initialization is not constant across different instances. We empirically observed that if the magnitude of the numerical values in the matrix to be completed varies significantly, an extremely small initialization is necessary, which, in some cases, can exceed machine precision.

Consider the following two simple $2 \times 2$ matrix completion problems, with the only difference being that the number 3 in the first row is replaced by 20. When training begins from a small initialization, for $\vM_4$, the fourth element only needs to learn the value 6 to be a rank-1 solution. However, for $\vM_5$, the fourth element needs to learn the value 40 to achieve rank-1.
\[
\vM_4 = \begin{bmatrix}
    1 & 2\\
    3 & \times
\end{bmatrix}, \qquad \vM_5 = \begin{bmatrix}
    1 & 2\\
    20 & \times
\end{bmatrix}.
\]
Fig.~\ref{fig:2x2_initialization_scale} illustrates the difficulty in learning these two examples. For $\vM_4$, an initialization variance of approximately $10^{-7}$ is sufficient to learn the lowest-rank solution. In contrast, for $\vM_5$, an extremely small initialization variance is required, making it challenging to learn a rank-1 solution. Yet, with an exceedingly small initialization variance of $10^{-83}$, we can still observe the second singular value plummeting to zero. The origin is a saddle point. The smaller the initialization, the longer it will stay at the origin. If initialization continues to decrease, training will stagnate. Therefore, if the magnitude difference is even greater, such as replacing 20 with 100 in $\vM_5$, then with the small initialization allowed by machine precision, it is nearly impossible to learn the value 200 completely.

For the matrix factorization model $\vf_\theta = \vA\vB$, the Hessian matrix at 0 has strictly negative eigenvalues, making the origin a strict saddle point. Under small random initialization, gradient descent escapes this saddle at an exponential rate. Our Theorem 1 ensures that only strict saddle points and global minima exist as critical points in the loss landscape. Subsequent saddle points on invariant manifolds are also strict, with exponential escape speeds, maintaining acceptable optimization process.

Reaching the lowest rank solution requires parameters to escape the saddle point along the unique top eigen-direction. Fig.~\ref{fig:2x2_initialization_scale} illustrates the relationship between required initialization scale and observation magnitude differences. For lowest possible rank with large numerical magnitude differences in observations, extremely small initialization is necessary. However, for approximately low rank solutions, a relatively small initialization suffices.

\begin{figure}[htb]
    \centering
    \subfloat[Output: $\vM_4$]{\includegraphics[height=0.175\textwidth]{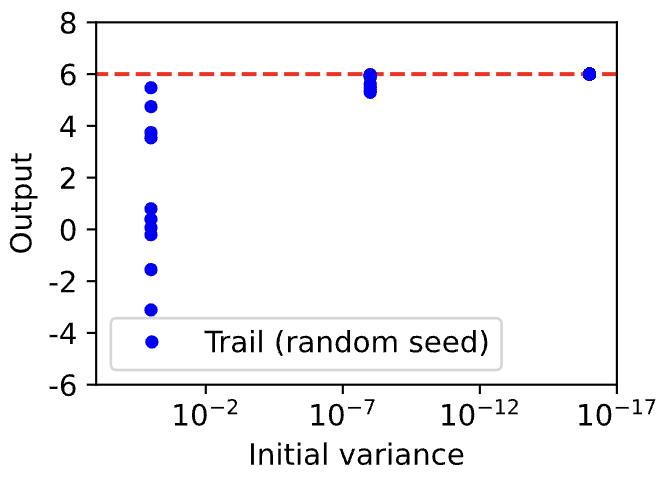}}
    \subfloat[Singular values: $\vM_4$]{\includegraphics[height=0.175\textwidth]{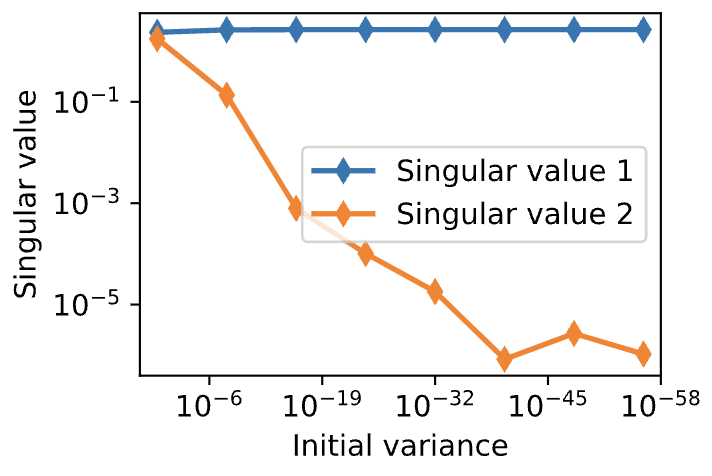}}
    \subfloat[Output: $\vM_5$]{\includegraphics[height=0.175\textwidth]{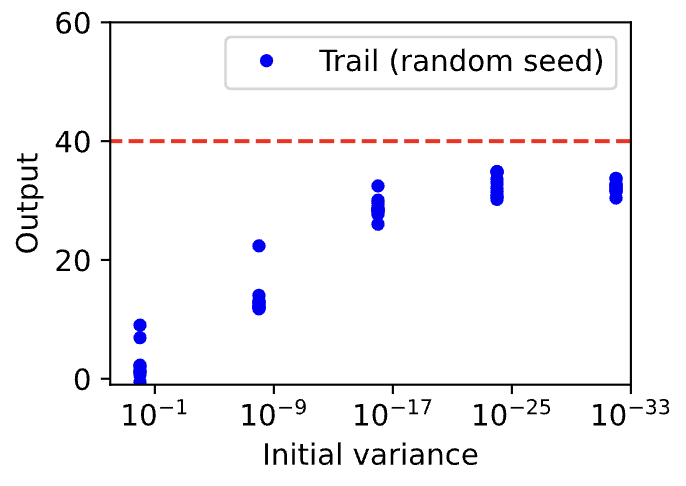}}
    \subfloat[Singular values: $\vM_5$]{\includegraphics[height=0.175\textwidth]{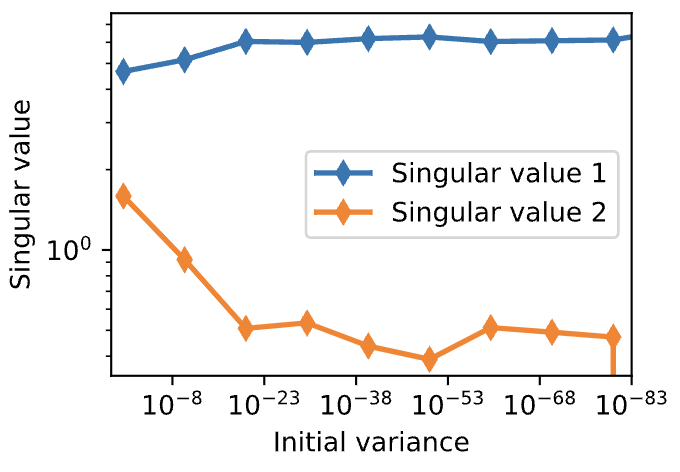}}
    \caption{(a, c) The value of the $(2, 2)$ element learned by the model as the initialization decreases. (b, d) The singular values of the matrix learned by the model with decreasing initialization.}
    \label{fig:2x2_initialization_scale}
\end{figure}

\subsection{High-dimensional Experiments}
To validate the scalability of our findings, we extended our experiments to higher-dimensional matrices. We conducted tests on $20\times 20$ matrices, employing both connected (Fig.~\ref{fig:20x20-connected}) and disconnected (Fig.~\ref{fig:20x20-disconnected}) sampling patterns, while monitoring rank evolution during training. Our results consistently corroborated the main findings: 

(i) Connected observations converged to optimal low-rank solutions.

(ii) Disconnected observations yielded higher-rank solutions.

(iii) The Hierarchical Invariant Manifold Traversal (HIMT) process was observed in both connected and disconnected scenarios.

\begin{figure}[h]
	\centering
 	\subfloat[Training loss]{\includegraphics[width=0.25\textwidth]{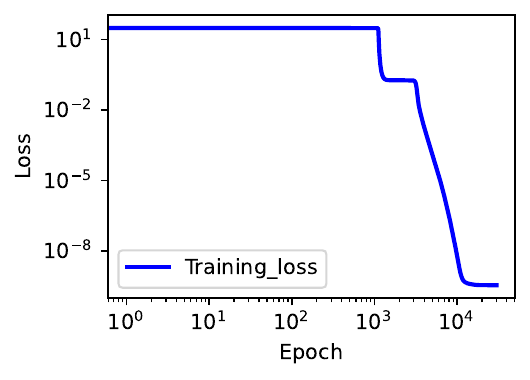}}
	\subfloat[Singular values of $\vA$]{\includegraphics[width=0.25\textwidth]{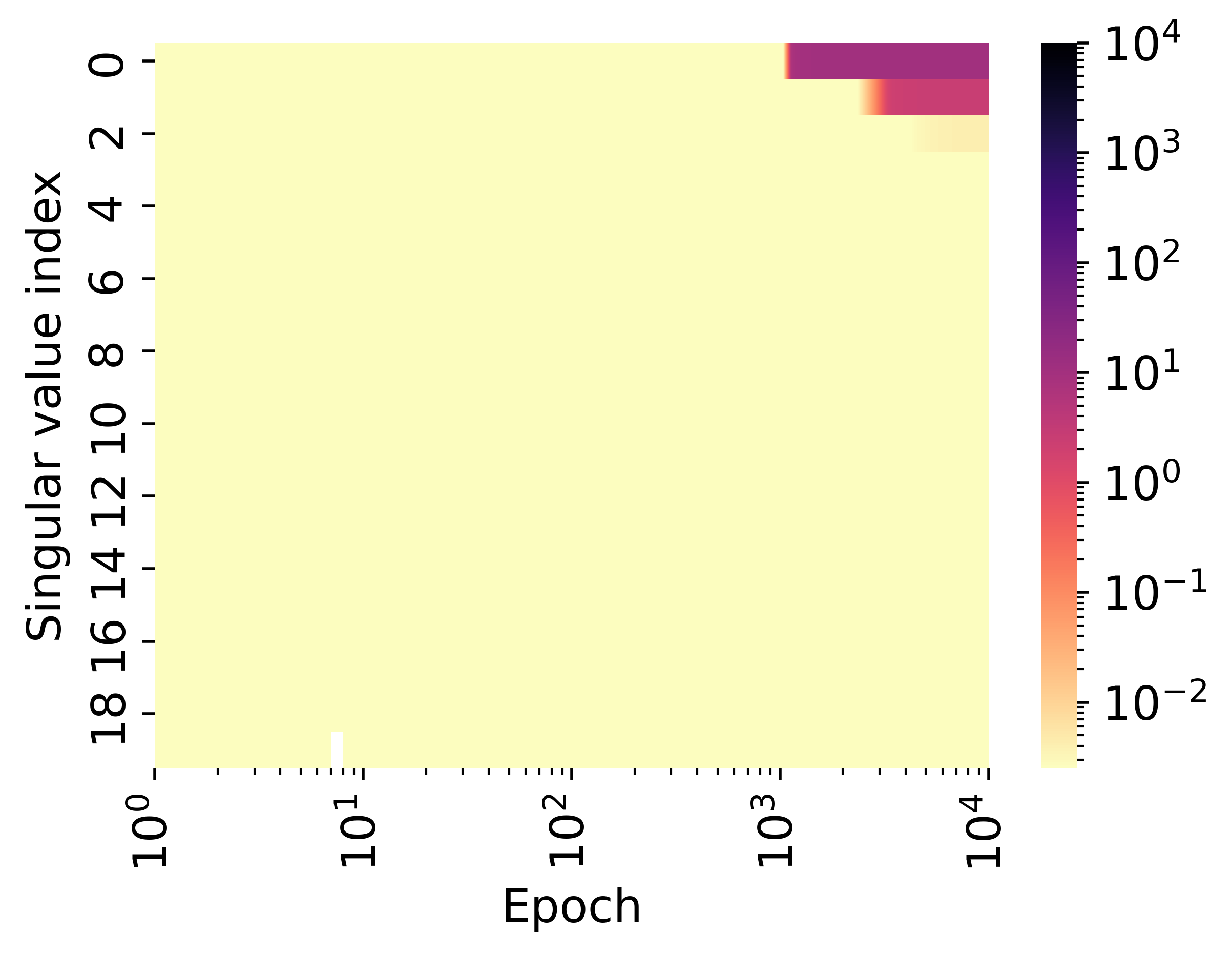}}
	\subfloat[Singular values of $\vB$]{\includegraphics[width=0.25\textwidth]{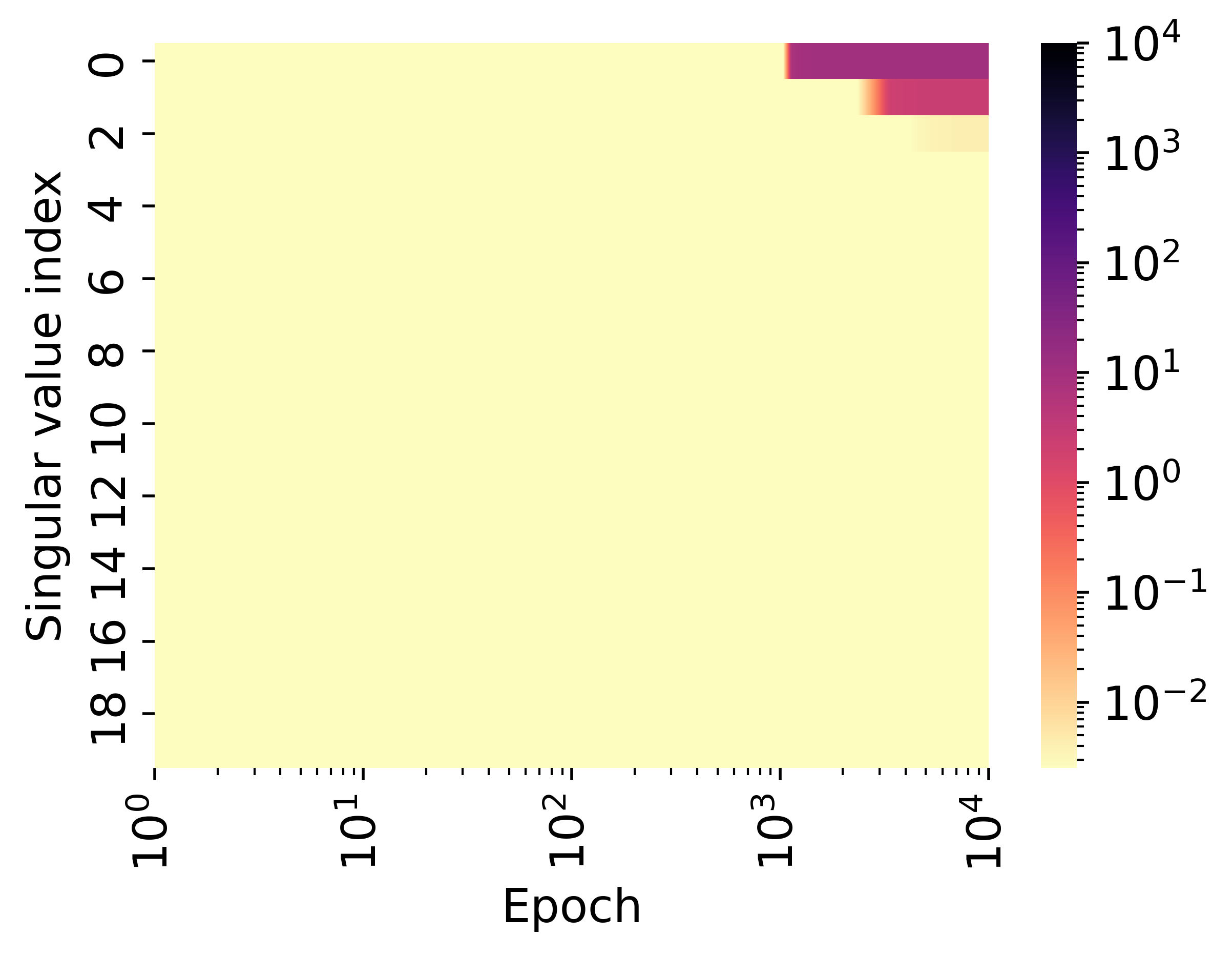}}
	\subfloat[Singular values of $\vW_{\text{aug}}$]{\includegraphics[width=0.25\textwidth]{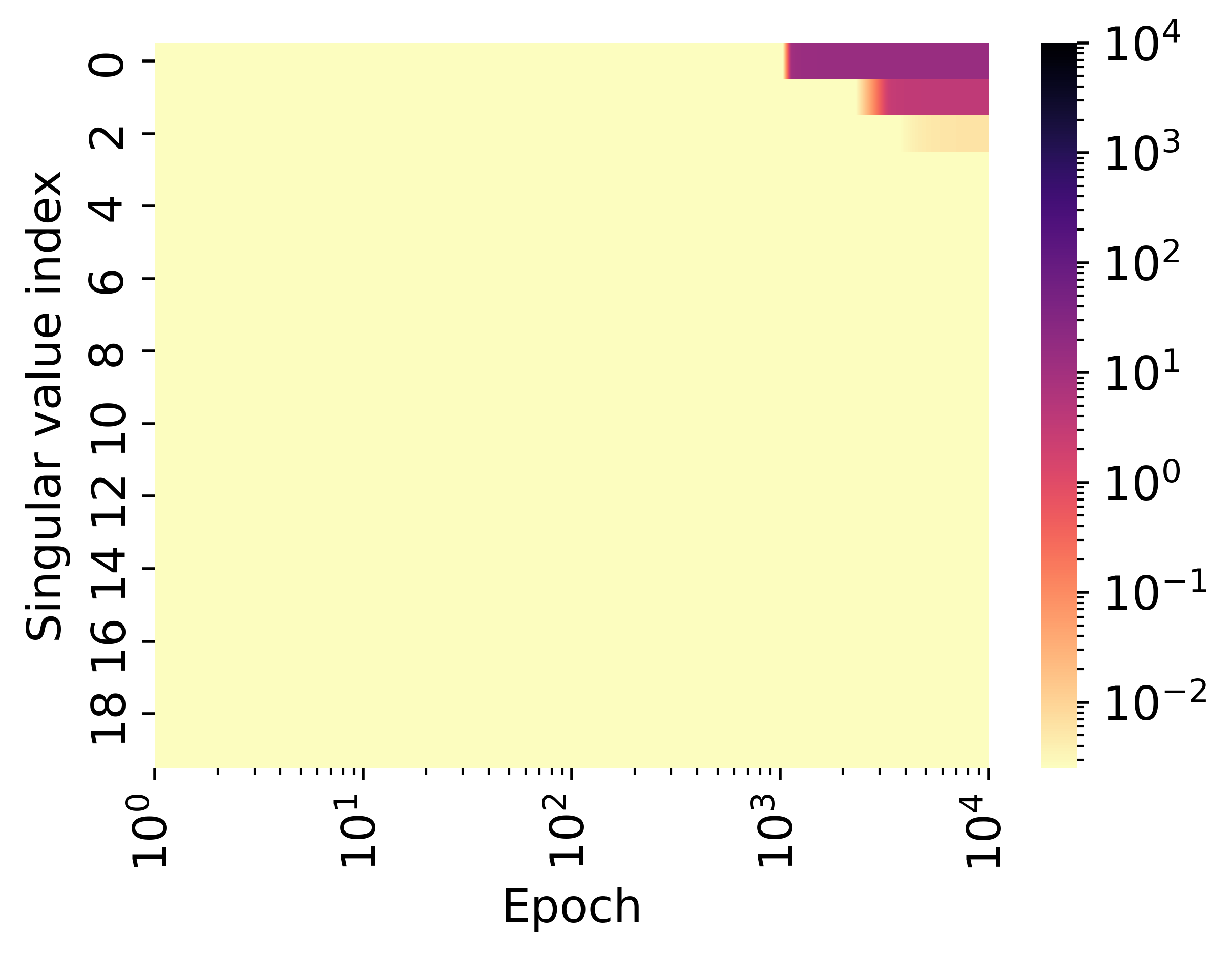}}
	\caption{\textbf{Connected sampling pattern analysis for a $20\times 20$ random rank-2 matrix completion problem.} (a) Training loss under small initialization. (b-d) Singular value evolution for $\boldsymbol{A}, \boldsymbol{B}, \boldsymbol{W}_{\text{aug}}$. }
 \label{fig:20x20-connected}
\end{figure}

\begin{figure}[h]
	\centering
 	\subfloat[Training loss]{\includegraphics[width=0.25\textwidth]{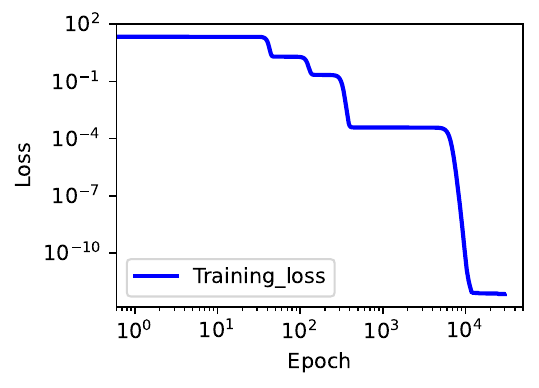}}
	\subfloat[Singular values of $\vA$]{\includegraphics[width=0.25\textwidth]{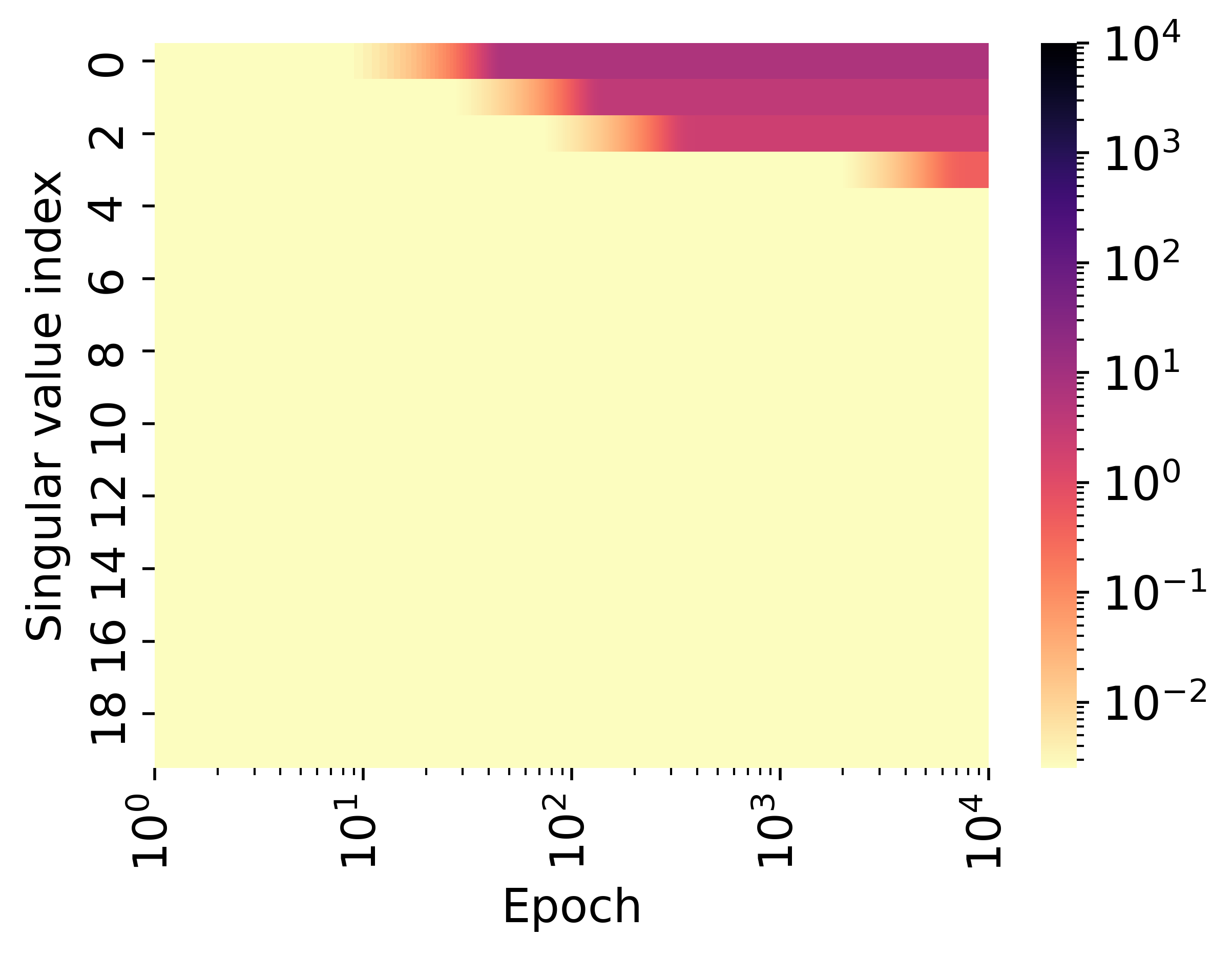}}
	\subfloat[Singular values of $\vB$]{\includegraphics[width=0.25\textwidth]{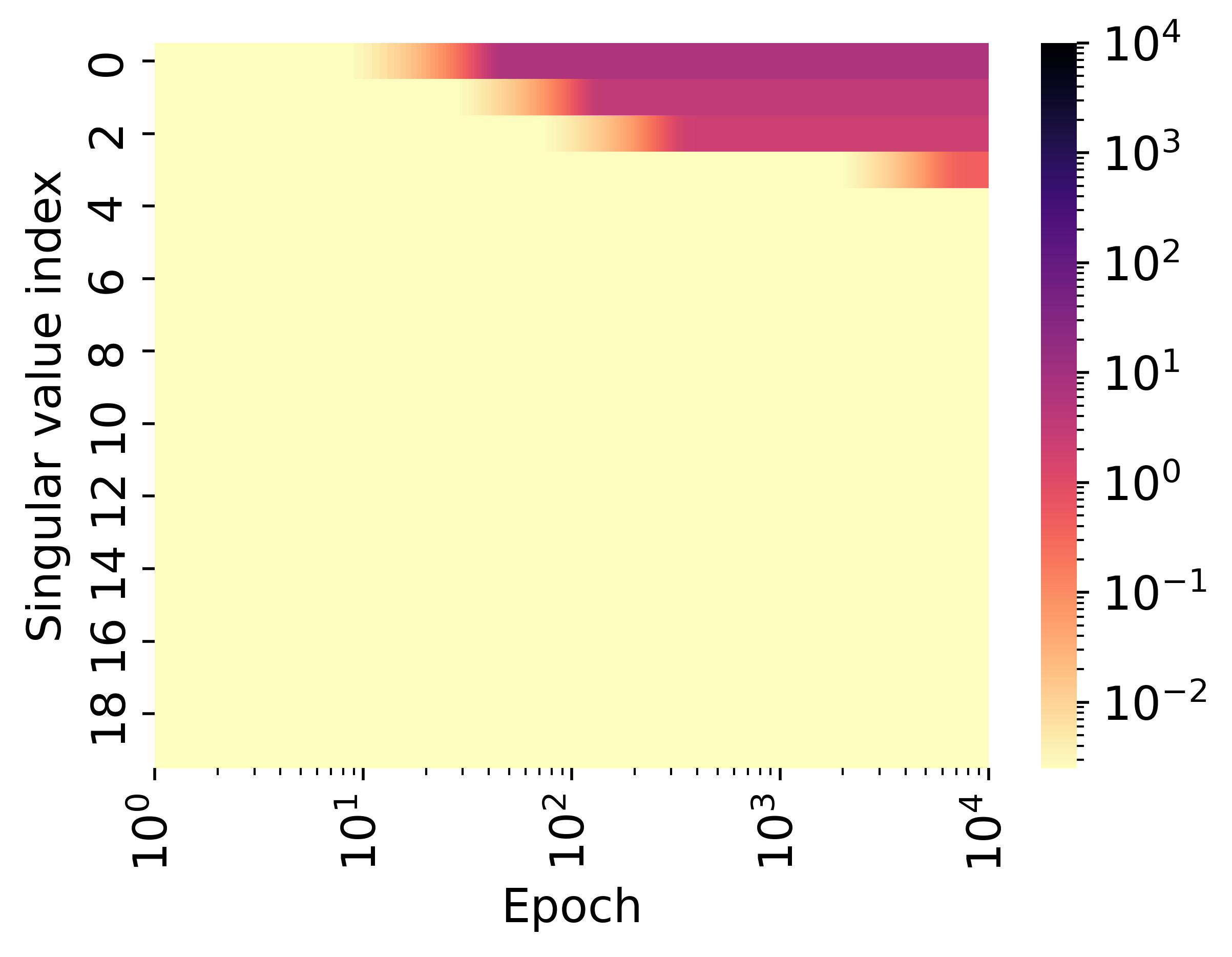}}
	\subfloat[Singular values of $\vW_{\text{aug}}$]{\includegraphics[width=0.25\textwidth]{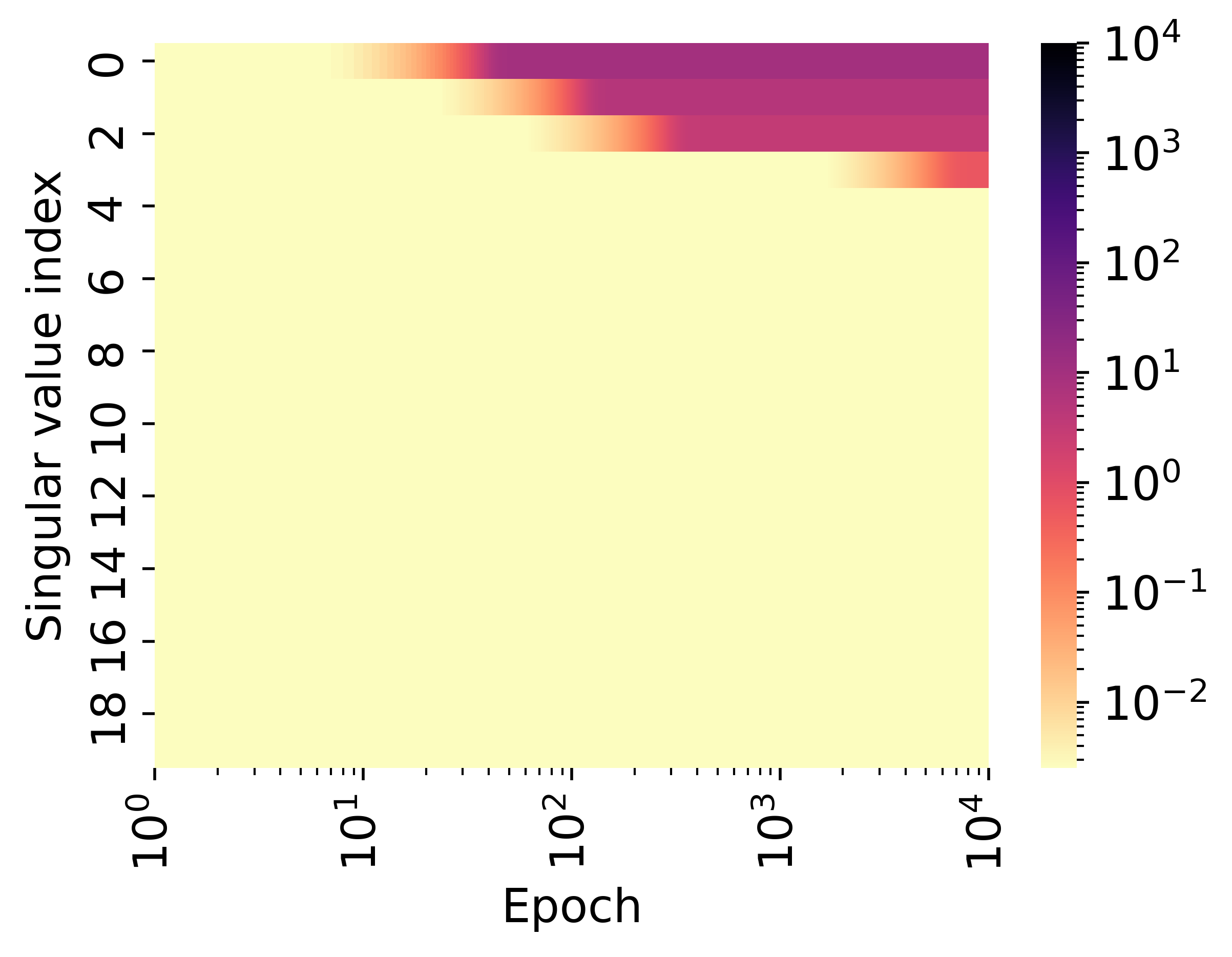}}
	\caption{\textbf{Disconnected sampling pattern analysis for a $20\times 20$ random rank-2 matrix completion problem.} (a) Training loss under small initialization. (b-d) Singular value evolution for $\boldsymbol{A}, \boldsymbol{B}, \boldsymbol{W}_{\text{aug}}$.}
 \label{fig:20x20-disconnected}
\end{figure}

\subsection{Dynamics of Deep Matrix Factorization}

In the context of depth-3 matrix factorization models, we consider the functional form:
\[
\vf_{\vtheta} = \vA \vB \vC, \quad \text{where} \quad \vA, \vB, \vC \in \mathbb{R}^{d\times d}.
\]

Figs.~\ref{fig:adaptive-learning_deep} and \ref{fig:invariant_manifold_deep} suggest that even for a depth-3 model, the learning process exhibits a progression from low rank to high rank structures.

\begin{figure}[htb]
\centering
\subfloat[Matrix completion]{
\includegraphics[height=0.15\textwidth]{figures/matrix_completion.pdf}
}
\subfloat[Output at $\star$]{
\includegraphics[height=0.17\textwidth]{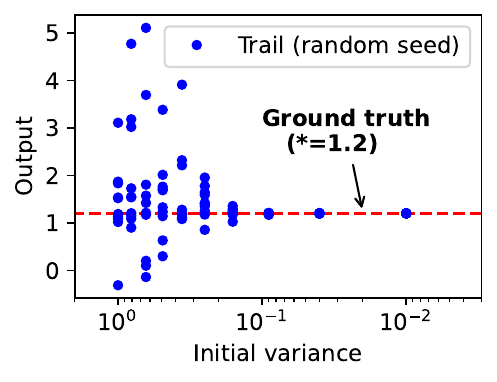}
}
\subfloat[Training loss]{
\includegraphics[height=0.17\textwidth]{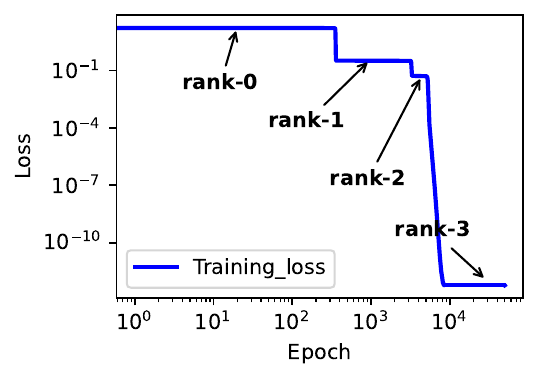}
}
\subfloat[Adaptive rank]{
\includegraphics[height=0.17\textwidth]{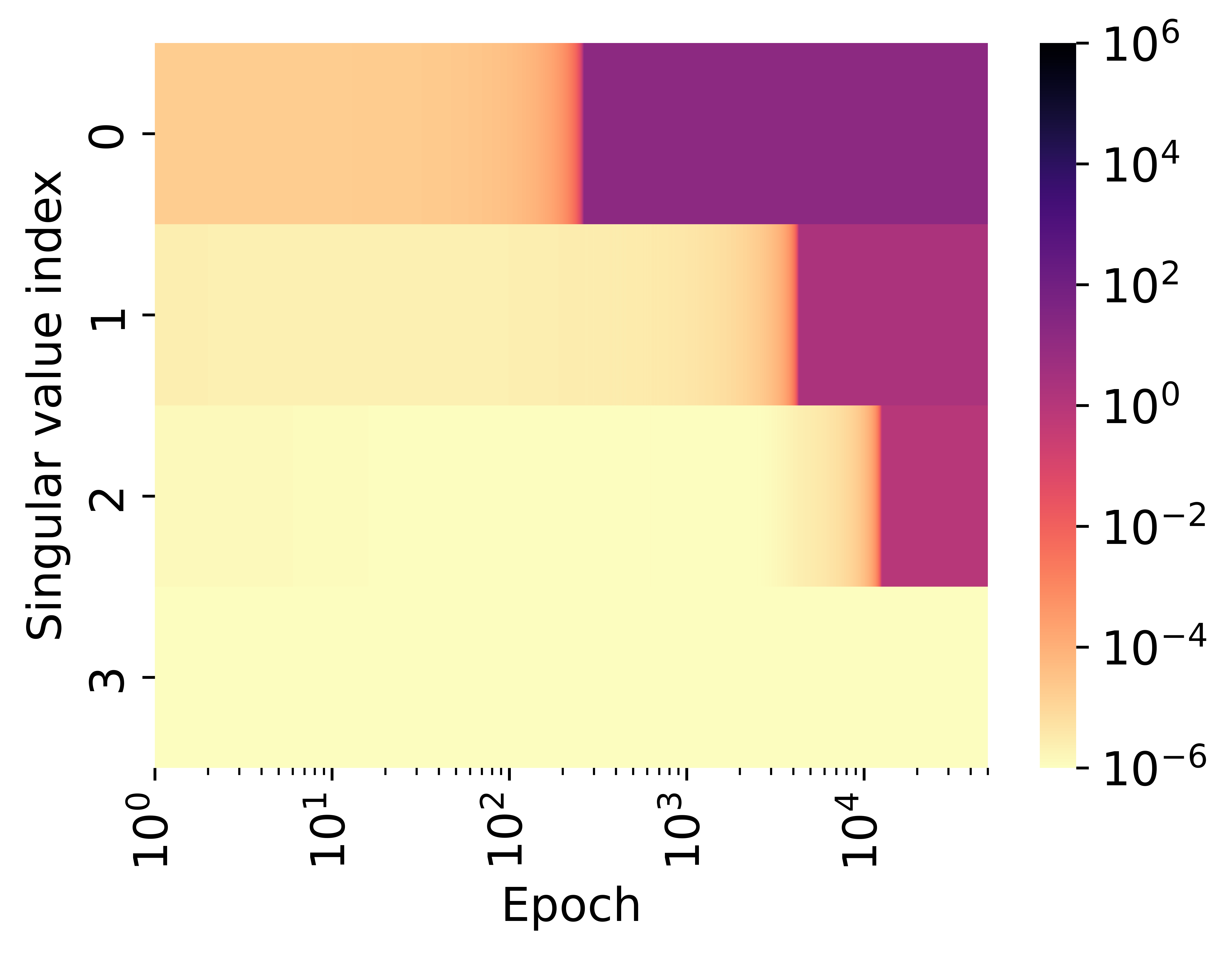}
}
\caption{
\textbf{Deep Matrix factorization models learn adaptively from low rank to high rank with small initialization.}
(a) The matrix to be completed, the $\star$ position is unknown. (b) The value at $\star$ learned under different initialization scales (mean is zero, variance changes), 10 random experiments were done under each variance, and each blue point represents an experiment. (c) The training loss curve with an initial variance of $10^{-14}$. (d) Evolution of singular values for $\vf_{\vtheta}=\mA\mB\mC$ during training. The count of significantly non-zero singular values is indicative of the rank.}
\label{fig:adaptive-learning_deep}
\end{figure}

\begin{figure}[htbp]
\centering
\subfloat[Gradient norm]{
\includegraphics[height=0.17\textwidth]{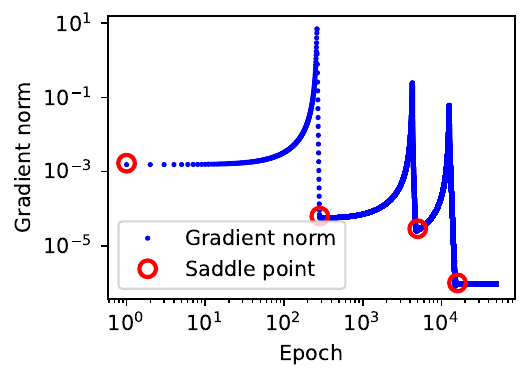}
}
\subfloat[Singular values of $\mA$]{
\includegraphics[height=0.175\textwidth]{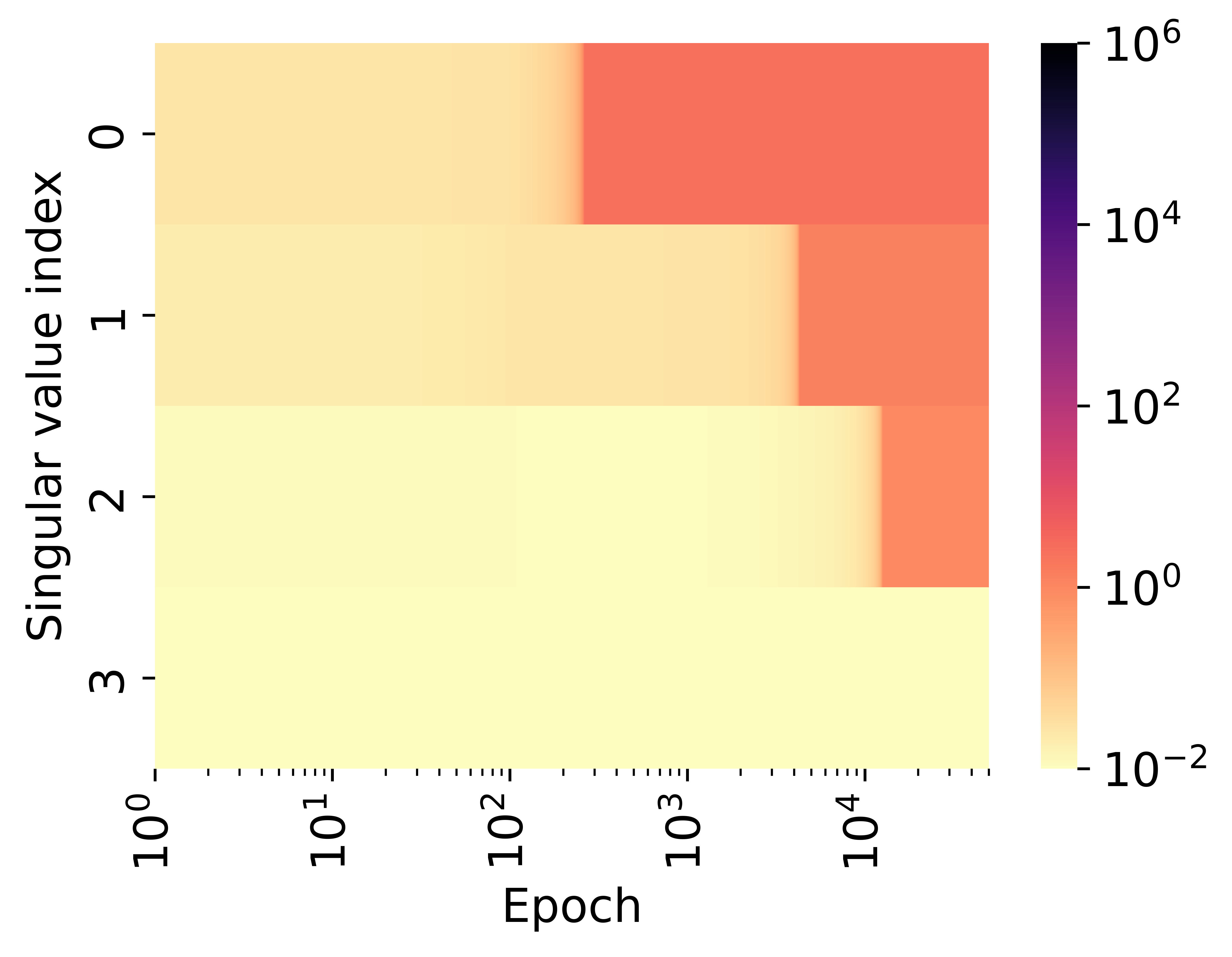}
}
\subfloat[Singular values of $\mB$]{
\includegraphics[height=0.175\textwidth]{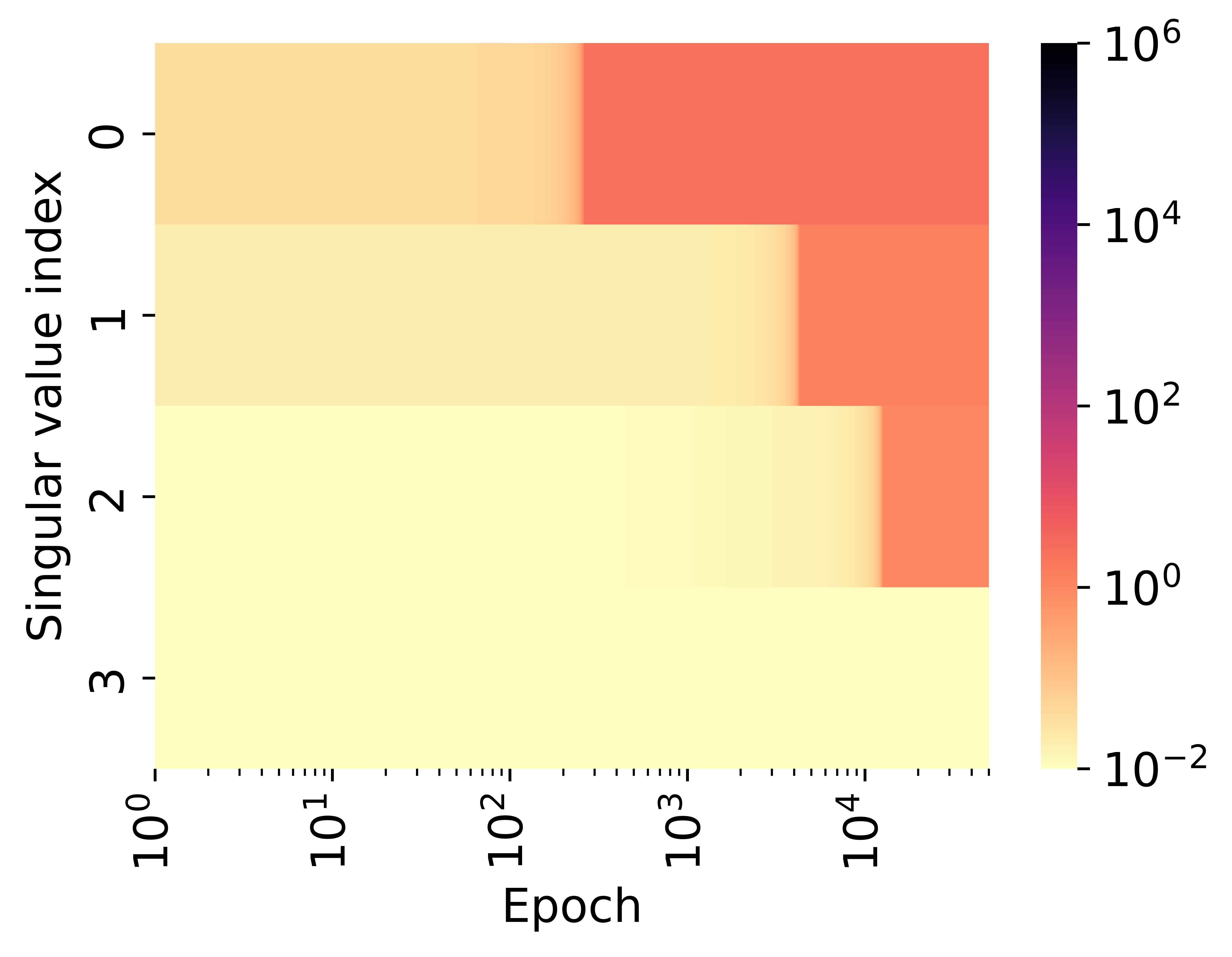}
}
\subfloat[Singular values of $\mC$]{
\includegraphics[height=0.175\textwidth]{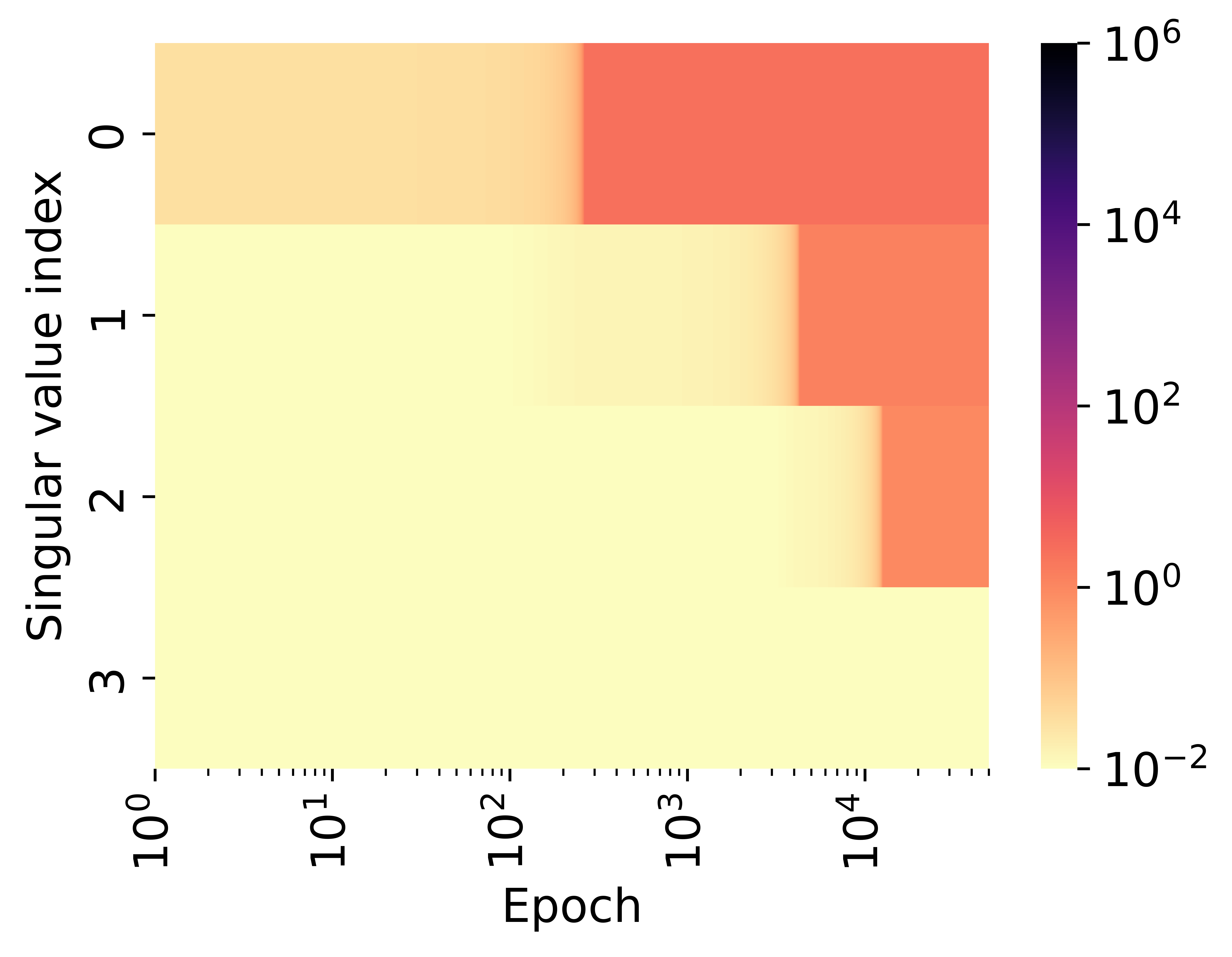}
}
\caption{\textbf{Deep Matrix factorization models learn adaptively from low rank to high rank with small initialization.}
(a) Evolution of the $l_2$-norm of the gradients for all parameters throughout the training process. (b-d) Evolution of singular values for matrices $\mA, \mB$, and $\mC$ during training. The count of non-zero singular values is indicative of the rank.}
\label{fig:invariant_manifold_deep}
\end{figure}

\subsection{Incorporating Attention Mechanisms}

Within the Transformer architecture, the matrix factorization component retains its significance. The attention mechanism is formalized as follows:
\begin{equation*}
\vf_{\vtheta}(\vX) = \sum_{i=1}^h \text{softmax}_{\text{row}}\left(\frac{\vX\vW_{\vQ_i}\vW_{\vK_{i}}^\top \vX^\top}{\sqrt{d_k}}\right)\vX \vW_{\vV_i} \vW_{\vO_i},
\end{equation*}
where the row-wise softmax operation is applied to the attention scores, and the sum is over the $h$ different attention heads, with $\vW_{\vQ_i}, \vW_{\vK_i}, \vW_{\vV_i}, \vW_{\vO_i}$ representing the learnable weight matrices for queries, keys, values, and output transformations, respectively, and $d_k$ is the dimensionality of the key vectors.

The attention module's ability to capture low-rank representations is reflected in the depth-2 matrix factorization model. As illustrated in Fig.~\ref{fig:invariant_manifold_attention}, the attention models consistently learns representations that evolve from lower to higher ranks. 

\begin{figure}[htbp]
\centering
\subfloat[Singular values of $\mW_{\vQ}$]{
\includegraphics[height=0.176\textwidth]{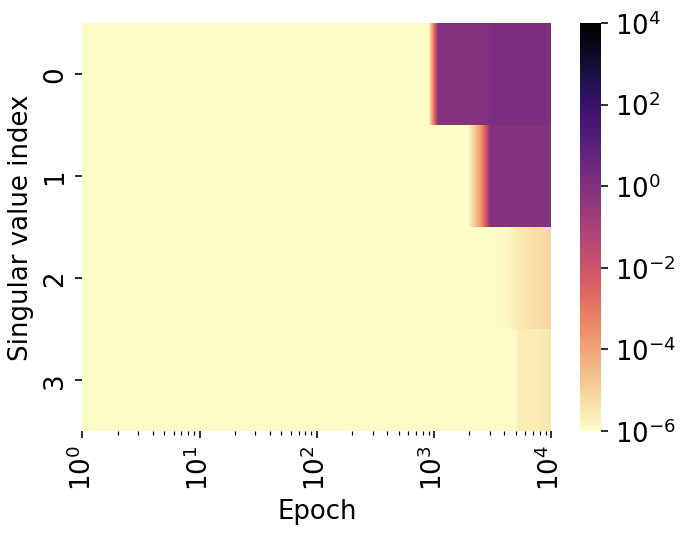}
}
\subfloat[Singular values of $\mW_{\vK}$]{
\includegraphics[height=0.176\textwidth]{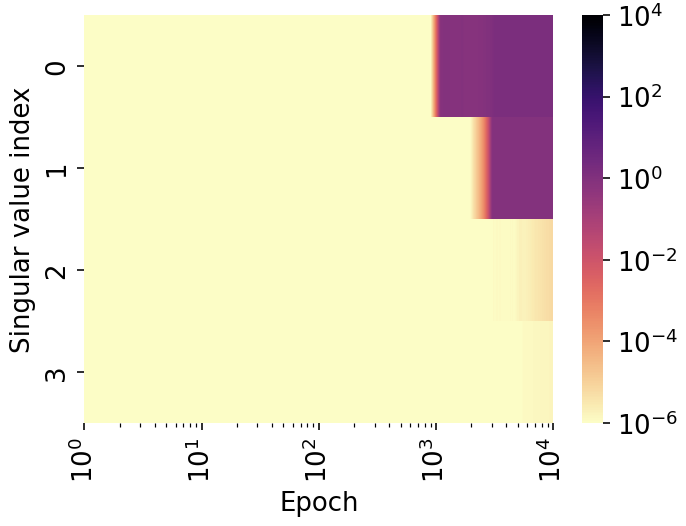}
}
\subfloat[Singular values of $\mW_{\vV}$]{
\includegraphics[height=0.176\textwidth]{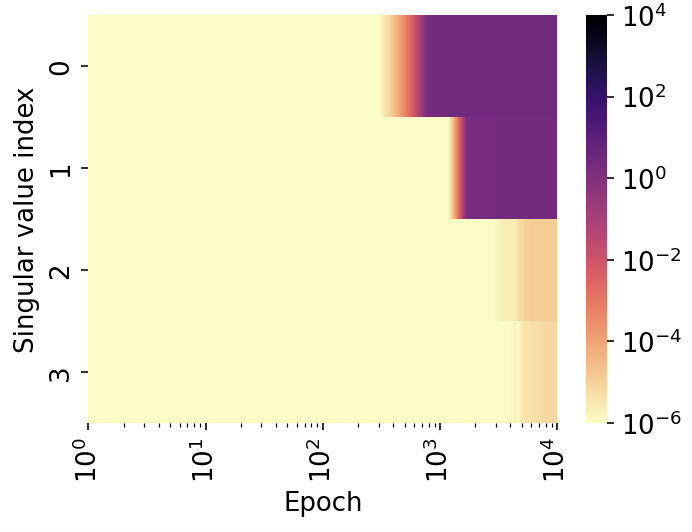}
}
\subfloat[Singular values of $\mW_{\vO}$]{
\includegraphics[height=0.176\textwidth]{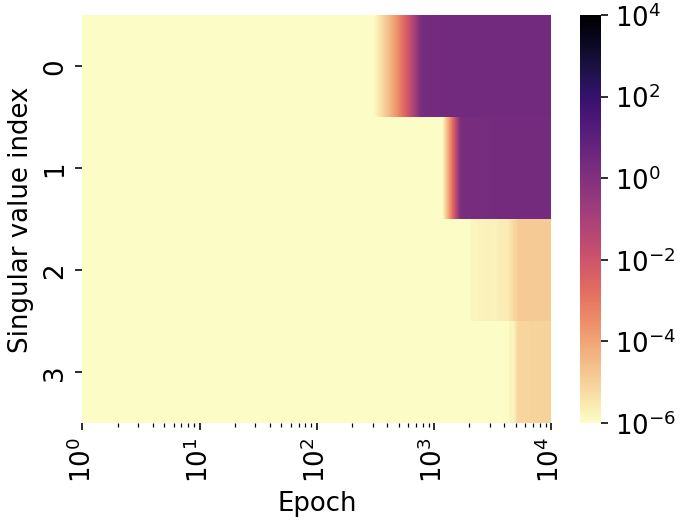}
}
\caption{\textbf{The attention modules in Transformer learn adaptively from low rank to high rank with small initialization.}
(a-d) The evolution of singular values for the matrices $\mW_{\vQ}, \mW_{\vK}, \mW_{\vV}$, and $\mW_{\vO}$ throughout the training process. The number of significantly non-zero singular values suggests the effective rank of each matrix.}
\label{fig:invariant_manifold_attention}
\end{figure}

% \section{Details of Experiments}
% In this section, we present the details of numerical experiments.

%%%%%%%%%%%%%%%%%%%%%%%%%%%%%%%%%%%%%%%%%%%%%%%%%%%%%%%%%%%%

\clearpage
\section*{NeurIPS Paper Checklist}

\begin{enumerate}

\item {\bf Claims}
    \item[] Question: Do the main claims made in the abstract and introduction accurately reflect the paper's contributions and scope?
    \item[] Answer: \answerYes{} % Replace by \answerYes{}, \answerNo{}, or \answerNA{}.
    \item[] Justification:
    Please see the Abstract.
    % \justificationTODO{}
    \item[] Guidelines:
    \begin{itemize}
        \item The answer NA means that the abstract and introduction do not include the claims made in the paper.
        \item The abstract and/or introduction should clearly state the claims made, including the contributions made in the paper and important assumptions and limitations. A No or NA answer to this question will not be perceived well by the reviewers. 
        \item The claims made should match theoretical and experimental results, and reflect how much the results can be expected to generalize to other settings. 
        \item It is fine to include aspirational goals as motivation as long as it is clear that these goals are not attained by the paper. 
    \end{itemize}

\item {\bf Limitations}
    \item[] Question: Does the paper discuss the limitations of the work performed by the authors?
    \item[] Answer: \answerYes{}
    % \answerTODO{} % Replace by \answerYes{}, \answerNo{}, or \answerNA{}.
    \item[] Justification: Please see Section~\ref{sec:conclusion} for the limitation discussion and see Section~\ref{sec:theoretical_dynamics} and Section~\ref{sec:implicit_bias} for the discussion of assumptions.
    % \justificationTODO{}
    \item[] Guidelines:
    \begin{itemize}
        \item The answer NA means that the paper has no limitation while the answer No means that the paper has limitations, but those are not discussed in the paper. 
        \item The authors are encouraged to create a separate "Limitations" section in their paper.
        \item The paper should point out any strong assumptions and how robust the results are to violations of these assumptions (e.g., independence assumptions, noiseless settings, model well-specification, asymptotic approximations only holding locally). The authors should reflect on how these assumptions might be violated in practice and what the implications would be.
        \item The authors should reflect on the scope of the claims made, e.g., if the approach was only tested on a few datasets or with a few runs. In general, empirical results often depend on implicit assumptions, which should be articulated.
        \item The authors should reflect on the factors that influence the performance of the approach. For example, a facial recognition algorithm may perform poorly when image resolution is low or images are taken in low lighting. Or a speech-to-text system might not be used reliably to provide closed captions for online lectures because it fails to handle technical jargon.
        \item The authors should discuss the computational efficiency of the proposed algorithms and how they scale with dataset size.
        \item If applicable, the authors should discuss possible limitations of their approach to address problems of privacy and fairness.
        \item While the authors might fear that complete honesty about limitations might be used by reviewers as grounds for rejection, a worse outcome might be that reviewers discover limitations that aren't acknowledged in the paper. The authors should use their best judgment and recognize that individual actions in favor of transparency play an important role in developing norms that preserve the integrity of the community. Reviewers will be specifically instructed to not penalize honesty concerning limitations.
    \end{itemize}

\item {\bf Theory Assumptions and Proofs}
    \item[] Question: For each theoretical result, does the paper provide the full set of assumptions and a complete (and correct) proof?
    \item[] Answer: \answerYes{}
    % \answerTODO{} % Replace by \answerYes{}, \answerNo{}, or \answerNA{}.
    \item[] Justification: 
    % \justificationTODO{}
    Please see Section~\ref{sec:theoretical_dynamics} and Section~\ref{sec:implicit_bias} for assumptions. Please see Appendix~\ref{app:A} for proofs of all theoretical results.
    \item[] Guidelines:
    \begin{itemize}
        \item The answer NA means that the paper does not include theoretical results. 
        \item All the theorems, formulas, and proofs in the paper should be numbered and cross-referenced.
        \item All assumptions should be clearly stated or referenced in the statement of any theorems.
        \item The proofs can either appear in the main paper or the supplemental material, but if they appear in the supplemental material, the authors are encouraged to provide a short proof sketch to provide intuition. 
        \item Inversely, any informal proof provided in the core of the paper should be complemented by formal proofs provided in appendix or supplemental material.
        \item Theorems and Lemmas that the proof relies upon should be properly referenced. 
    \end{itemize}

    \item {\bf Experimental Result Reproducibility}
    \item[] Question: Does the paper fully disclose all the information needed to reproduce the main experimental results of the paper to the extent that it affects the main claims and/or conclusions of the paper (regardless of whether the code and data are provided or not)?
    \item[] Answer: \answerYes{}
    % \answerTODO{} % Replace by \answerYes{}, \answerNo{}, or \answerNA{}.
    \item[] Justification: 
    % \justificationTODO{}
Please refer to Appendix~\ref{app:B} for comprehensive details of the experiments. The design ensures that all experiments can be readily replicated.
    \item[] Guidelines:
    \begin{itemize}
        \item The answer NA means that the paper does not include experiments.
        \item If the paper includes experiments, a No answer to this question will not be perceived well by the reviewers: Making the paper reproducible is important, regardless of whether the code and data are provided or not.
        \item If the contribution is a dataset and/or model, the authors should describe the steps taken to make their results reproducible or verifiable. 
        \item Depending on the contribution, reproducibility can be accomplished in various ways. For example, if the contribution is a novel architecture, describing the architecture fully might suffice, or if the contribution is a specific model and empirical evaluation, it may be necessary to either make it possible for others to replicate the model with the same dataset, or provide access to the model. In general. releasing code and data is often one good way to accomplish this, but reproducibility can also be provided via detailed instructions for how to replicate the results, access to a hosted model (e.g., in the case of a large language model), releasing of a model checkpoint, or other means that are appropriate to the research performed.
        \item While NeurIPS does not require releasing code, the conference does require all submissions to provide some reasonable avenue for reproducibility, which may depend on the nature of the contribution. For example
        \begin{enumerate}
            \item If the contribution is primarily a new algorithm, the paper should make it clear how to reproduce that algorithm.
            \item If the contribution is primarily a new model architecture, the paper should describe the architecture clearly and fully.
            \item If the contribution is a new model (e.g., a large language model), then there should either be a way to access this model for reproducing the results or a way to reproduce the model (e.g., with an open-source dataset or instructions for how to construct the dataset).
            \item We recognize that reproducibility may be tricky in some cases, in which case authors are welcome to describe the particular way they provide for reproducibility. In the case of closed-source models, it may be that access to the model is limited in some way (e.g., to registered users), but it should be possible for other researchers to have some path to reproducing or verifying the results.
        \end{enumerate}
    \end{itemize}

\item {\bf Open access to data and code}
    \item[] Question: Does the paper provide open access to the data and code, with sufficient instructions to faithfully reproduce the main experimental results, as described in supplemental material?
    \item[] Answer: \answerYes{}
    % \answerTODO{} % Replace by \answerYes{}, \answerNo{}, or \answerNA{}.
    \item[] Justification: 
    % \justificationTODO{}
We have included example code in the supplementary material to facilitate replication of the matrix completion experiment. Given the simplicity of both the model and the data in the matrix factorization context, this code enables straightforward verification of our experimental results.
    
    \item[] Guidelines:
    \begin{itemize}
        \item The answer NA means that paper does not include experiments requiring code.
        \item Please see the NeurIPS code and data submission guidelines (\url{https://nips.cc/public/guides/CodeSubmissionPolicy}) for more details.
        \item While we encourage the release of code and data, we understand that this might not be possible, so “No” is an acceptable answer. Papers cannot be rejected simply for not including code, unless this is central to the contribution (e.g., for a new open-source benchmark).
        \item The instructions should contain the exact command and environment needed to run to reproduce the results. See the NeurIPS code and data submission guidelines (\url{https://nips.cc/public/guides/CodeSubmissionPolicy}) for more details.
        \item The authors should provide instructions on data access and preparation, including how to access the raw data, preprocessed data, intermediate data, and generated data, etc.
        \item The authors should provide scripts to reproduce all experimental results for the new proposed method and baselines. If only a subset of experiments are reproducible, they should state which ones are omitted from the script and why.
        \item At submission time, to preserve anonymity, the authors should release anonymized versions (if applicable).
        \item Providing as much information as possible in supplemental material (appended to the paper) is recommended, but including URLs to data and code is permitted.
    \end{itemize}

\item {\bf Experimental Setting/Details}
    \item[] Question: Does the paper specify all the training and test details (e.g., data splits, hyperparameters, how they were chosen, type of optimizer, etc.) necessary to understand the results?
    \item[] Answer: \answerYes{}
    % \answerTODO{} % Replace by \answerYes{}, \answerNo{}, or \answerNA{}.
    \item[] Justification: 
    % \justificationTODO{}
    Please refer to Appendix~\ref{app:B} for comprehensive details of the experiments. The design ensures that all experiments can be readily replicated.
    \item[] Guidelines:
    \begin{itemize}
        \item The answer NA means that the paper does not include experiments.
        \item The experimental setting should be presented in the core of the paper to a level of detail that is necessary to appreciate the results and make sense of them.
        \item The full details can be provided either with the code, in appendix, or as supplemental material.
    \end{itemize}

\item {\bf Experiment Statistical Significance}
    \item[] Question: Does the paper report error bars suitably and correctly defined or other appropriate information about the statistical significance of the experiments?
    \item[] Answer: \answerYes{}
    % \answerTODO{} % Replace by \answerYes{}, \answerNo{}, or \answerNA{}.
    \item[] Justification: 
    For information on error bars, please consult Section~\ref{sec:connectivity_and_implicit}, and for a detailed calculation, refer to Appendix~\ref{app:B}.
    % \justificationTODO{}
    \item[] Guidelines:
    \begin{itemize}
        \item The answer NA means that the paper does not include experiments.
        \item The authors should answer "Yes" if the results are accompanied by error bars, confidence intervals, or statistical significance tests, at least for the experiments that support the main claims of the paper.
        \item The factors of variability that the error bars are capturing should be clearly stated (for example, train/test split, initialization, random drawing of some parameter, or overall run with given experimental conditions).
        \item The method for calculating the error bars should be explained (closed form formula, call to a library function, bootstrap, etc.)
        \item The assumptions made should be given (e.g., Normally distributed errors).
        \item It should be clear whether the error bar is the standard deviation or the standard error of the mean.
        \item It is OK to report 1-sigma error bars, but one should state it. The authors should preferably report a 2-sigma error bar than state that they have a 96\% CI, if the hypothesis of Normality of errors is not verified.
        \item For asymmetric distributions, the authors should be careful not to show in tables or figures symmetric error bars that would yield results that are out of range (e.g. negative error rates).
        \item If error bars are reported in tables or plots, The authors should explain in the text how they were calculated and reference the corresponding figures or tables in the text.
    \end{itemize}

\item {\bf Experiments Compute Resources}
    \item[] Question: For each experiment, does the paper provide sufficient information on the computer resources (type of compute workers, memory, time of execution) needed to reproduce the experiments?
    \item[] Answer: \answerYes{}
    % \answerTODO{} % Replace by \answerYes{}, \answerNo{}, or \answerNA{}.
    \item[] Justification: Please see Appendix~\ref{app:B} for the setup of experiments.
    % \justificationTODO{}
    \item[] Guidelines:
    \begin{itemize}
        \item The answer NA means that the paper does not include experiments.
        \item The paper should indicate the type of compute workers CPU or GPU, internal cluster, or cloud provider, including relevant memory and storage.
        \item The paper should provide the amount of compute required for each of the individual experimental runs as well as estimate the total compute. 
        \item The paper should disclose whether the full research project required more compute than the experiments reported in the paper (e.g., preliminary or failed experiments that didn't make it into the paper). 
    \end{itemize}
    
\item {\bf Code Of Ethics}
    \item[] Question: Does the research conducted in the paper conform, in every respect, with the NeurIPS Code of Ethics \url{https://neurips.cc/public/EthicsGuidelines}?
    \item[] Answer: \answerYes{}
    % \answerTODO{} % Replace by \answerYes{}, \answerNo{}, or \answerNA{}.
    \item[] Justification: We have reviewed the NeurIPS Code of Ethics.
    % \justificationTODO{}
    \item[] Guidelines:
    \begin{itemize}
        \item The answer NA means that the authors have not reviewed the NeurIPS Code of Ethics.
        \item If the authors answer No, they should explain the special circumstances that require a deviation from the Code of Ethics.
        \item The authors should make sure to preserve anonymity (e.g., if there is a special consideration due to laws or regulations in their jurisdiction).
    \end{itemize}

\item {\bf Broader Impacts}
    \item[] Question: Does the paper discuss both potential positive societal impacts and negative societal impacts of the work performed?
    \item[] Answer:\answerYes{}
    % \answerTODO{} % Replace by \answerYes{}, \answerNo{}, or \answerNA{}.
    \item[] Justification: Please see Section~\ref{sec:conclusion}.
    % \justificationTODO{}
    \item[] Guidelines:
    \begin{itemize}
        \item The answer NA means that there is no societal impact of the work performed.
        \item If the authors answer NA or No, they should explain why their work has no societal impact or why the paper does not address societal impact.
        \item Examples of negative societal impacts include potential malicious or unintended uses (e.g., disinformation, generating fake profiles, surveillance), fairness considerations (e.g., deployment of technologies that could make decisions that unfairly impact specific groups), privacy considerations, and security considerations.
        \item The conference expects that many papers will be foundational research and not tied to particular applications, let alone deployments. However, if there is a direct path to any negative applications, the authors should point it out. For example, it is legitimate to point out that an improvement in the quality of generative models could be used to generate deepfakes for disinformation. On the other hand, it is not needed to point out that a generic algorithm for optimizing neural networks could enable people to train models that generate Deepfakes faster.
        \item The authors should consider possible harms that could arise when the technology is being used as intended and functioning correctly, harms that could arise when the technology is being used as intended but gives incorrect results, and harms following from (intentional or unintentional) misuse of the technology.
        \item If there are negative societal impacts, the authors could also discuss possible mitigation strategies (e.g., gated release of models, providing defenses in addition to attacks, mechanisms for monitoring misuse, mechanisms to monitor how a system learns from feedback over time, improving the efficiency and accessibility of ML).
    \end{itemize}
    
\item {\bf Safeguards}
    \item[] Question: Does the paper describe safeguards that have been put in place for responsible release of data or models that have a high risk for misuse (e.g., pretrained language models, image generators, or scraped datasets)?
    \item[] Answer: \answerNA{}
    % \answerTODO{} % Replace by \answerYes{}, \answerNo{}, or \answerNA{}.
    \item[] Justification: The paper poses no such risks.
    % \justificationTODO{}
    \item[] Guidelines:
    \begin{itemize}
        \item The answer NA means that the paper poses no such risks.
        \item Released models that have a high risk for misuse or dual-use should be released with necessary safeguards to allow for controlled use of the model, for example by requiring that users adhere to usage guidelines or restrictions to access the model or implementing safety filters. 
        \item Datasets that have been scraped from the Internet could pose safety risks. The authors should describe how they avoided releasing unsafe images.
        \item We recognize that providing effective safeguards is challenging, and many papers do not require this, but we encourage authors to take this into account and make a best faith effort.
    \end{itemize}

\item {\bf Licenses for existing assets}
    \item[] Question: Are the creators or original owners of assets (e.g., code, data, models), used in the paper, properly credited and are the license and terms of use explicitly mentioned and properly respected?
    \item[] Answer: \answerNA{}
    % \answerTODO{} % Replace by \answerYes{}, \answerNo{}, or \answerNA{}.
    \item[] Justification: The paper does not use existing assets.
    % \justificationTODO{}
    \item[] Guidelines:
    \begin{itemize}
        \item The answer NA means that the paper does not use existing assets.
        \item The authors should cite the original paper that produced the code package or dataset.
        \item The authors should state which version of the asset is used and, if possible, include a URL.
        \item The name of the license (e.g., CC-BY 4.0) should be included for each asset.
        \item For scraped data from a particular source (e.g., website), the copyright and terms of service of that source should be provided.
        \item If assets are released, the license, copyright information, and terms of use in the package should be provided. For popular datasets, \url{paperswithcode.com/datasets} has curated licenses for some datasets. Their licensing guide can help determine the license of a dataset.
        \item For existing datasets that are re-packaged, both the original license and the license of the derived asset (if it has changed) should be provided.
        \item If this information is not available online, the authors are encouraged to reach out to the asset's creators.
    \end{itemize}

\item {\bf New Assets}
    \item[] Question: Are new assets introduced in the paper well documented and is the documentation provided alongside the assets?
    \item[] Answer: \answerNA{}
    % \answerTODO{} % Replace by \answerYes{}, \answerNo{}, or \answerNA{}.
    \item[] Justification: The paper does not release new assets.
    % \justificationTODO{}
    \item[] Guidelines:
    \begin{itemize}
        \item The answer NA means that the paper does not release new assets.
        \item Researchers should communicate the details of the dataset/code/model as part of their submissions via structured templates. This includes details about training, license, limitations, etc. 
        \item The paper should discuss whether and how consent was obtained from people whose asset is used.
        \item At submission time, remember to anonymize your assets (if applicable). You can either create an anonymized URL or include an anonymized zip file.
    \end{itemize}

\item {\bf Crowdsourcing and Research with Human Subjects}
    \item[] Question: For crowdsourcing experiments and research with human subjects, does the paper include the full text of instructions given to participants and screenshots, if applicable, as well as details about compensation (if any)? 
    \item[] Answer: \answerNA{}
    % \answerTODO{} % Replace by \answerYes{}, \answerNo{}, or \answerNA{}.
    \item[] Justification: The paper does not involve crowdsourcing nor research with human subjects.
    % \justificationTODO{}
    \item[] Guidelines:
    \begin{itemize}
        \item The answer NA means that the paper does not involve crowdsourcing nor research with human subjects.
        \item Including this information in the supplemental material is fine, but if the main contribution of the paper involves human subjects, then as much detail as possible should be included in the main paper. 
        \item According to the NeurIPS Code of Ethics, workers involved in data collection, curation, or other labor should be paid at least the minimum wage in the country of the data collector. 
    \end{itemize}

\item {\bf Institutional Review Board (IRB) Approvals or Equivalent for Research with Human Subjects}
    \item[] Question: Does the paper describe potential risks incurred by study participants, whether such risks were disclosed to the subjects, and whether Institutional Review Board (IRB) approvals (or an equivalent approval/review based on the requirements of your country or institution) were obtained?
    \item[] Answer: \answerNA{}
    % \answerTODO{} % Replace by \answerYes{}, \answerNo{}, or \answerNA{}.
    \item[] Justification: The paper does not involve crowdsourcing nor research with human subjects.
    % \justificationTODO{}
    \item[] Guidelines:
    \begin{itemize}
        \item The answer NA means that the paper does not involve crowdsourcing nor research with human subjects.
        \item Depending on the country in which research is conducted, IRB approval (or equivalent) may be required for any human subjects research. If you obtained IRB approval, you should clearly state this in the paper. 
        \item We recognize that the procedures for this may vary significantly between institutions and locations, and we expect authors to adhere to the NeurIPS Code of Ethics and the guidelines for their institution. 
        \item For initial submissions, do not include any information that would break anonymity (if applicable), such as the institution conducting the review.
    \end{itemize}

\end{enumerate}

\end{document}